\pgfplotsset{compat=1.18}
\definecolor{darkgreen}{rgb}{0,0.5,0}
\author{
	Loay Mualem\thanks{Computer Science Department, University of Haifa. Email: \href{mailto:loaymua@gmail.com}{loaymua@gmail.com}}
	\and
	Murad Tukan\thanks{{DataHeroes}. Email: \href{mailto:muradtuk@gmail.com}{murad@dataheroes.ai}}
	\and
	Moran Feldman\thanks{Computer Science Department, University of Haifa. Email: \href{mailto:moranfe@cs.haifa.ac.il}{moranfe@cs.haifa.ac.il}}
}
\title{Bridging the Gap Between General and Down-Closed Convex Sets in Submodular Maximization}
\newcommand{\say}[1]{``#1''}
\begin{document}

\maketitle

\begin{abstract}

Optimization of DR-submodular functions has experienced a notable surge in significance in recent times, marking a pivotal development within the domain of non-convex optimization. Motivated by real-world scenarios, some recent works have delved into the maximization of non-monotone DR-submodular functions over general (not necessarily down-closed) convex set constraints. Up to this point, these works have all used the minimum $\ell_\infty$ norm of any feasible solution as a parameter. Unfortunately, a recent hardness result due to Mualem \& Feldman~\cite{mualem2023resolving} shows that this approach cannot yield a smooth interpolation between down-closed and non-down-closed constraints. In this work, we suggest novel offline and online algorithms that provably provide such an interpolation based on a natural decomposition of the convex body constraint into two distinct convex bodies: a down-closed convex body and a general convex body. We also empirically demonstrate the superiority of our proposed algorithms across three offline and two online applications.

\medskip

\noindent \textbf{Keywords:} submodular maximization, general convex body, down-closed convex body, revenue maximization, location summarization, quadratic programming
\end{abstract}

\pagenumbering{arabic}

\section{Introduction}

Optimization of continuous DR-submodular functions (and the strongly related discrete submodular set functions) has experienced a notable surge in significance in recent times, marking a pivotal development within the domain of non-convex optimization. The tools developed in this context adaptively tackle challenges related to real-world applications at the forefront of various fields such as data summarization~\cite{mualem2023submodular, hassani2017gradient,bian2019optimal,mitra2021submodular,soma2017nonmonotone}, robotics~\cite{shi2021communication,tukan2023orbslam3}, and human brain mapping~\cite{salehi2017submodular}, among many others. Most of the works in literature focused either on DR-submodular optimization for monotone objective functions, or subject to a down-closed convex set constraint.\footnote{A set $\cK$ is \emph{down-closed} with respect to a domain $\cX$ if, for every two vectors $\vx, \vy \in \cX$, $\vx \in \cK$ whenever $\vy \in \cK$ and $\vy$ coordinate-wise dominates $\vx$. As is standard, we usually omit the domain when talking about down-closed sets, and implicitly assume it to be the domain of the objective function.} However, real-world problems are often naturally captured as optimization of a non-monotone DR-submodular function over a constraint convex set that is not down-closed. For example, 
imagine an online shopping platform optimizing its product recommendations within strict interface size constraints. The challenge faced by the store involves designing concise summaries that respect upper and lower bounds on product inclusion (these bounds are imposed by the user interface, and they form a non-down-closed constraint), and are good with respect to an objective function balancing between diversity and relevance in the displayed recommendations (which naturally yields a non-monotone objective). 

Motivated by scenarios similar to the one described above, and optimization under fairness constraints~\cite{el2023fairness,el2023bfairness,yuan2023group}, some recent works have delved into the maximization of non-monotone DR-submodular functions over general (not necessarily down-closed) convex set constraints. Unfortunately, in general, no constant approximation can be obtained for this problem in sub-exponential time due to a hardness result by Vondr\'{a}k~\cite{vondrak2013symmetry}. However, Durr et al.~\cite{durr2021non} observed that Vondr\'{a}k's proof of the hardness result was based on a class of instances whose convex sets $\cK$ include no point whose $\ell_\infty$-norm is less than $1$.\footnote{For simplicity of the exposition, we implicitly assume that the domain of the objective function and the convex set constraint is $[0, 1]^n$. This assumption is without loss of generality (see Section~\ref{sec:Preliminaries}).} In light of this observation, Durr et al.~\cite{durr2021non} considered a parametrization of the problem based on the minimum $\ell_\infty$ norm of any vector in $\cK$ (this minimum is usually denoted by $m$), and were able to provide a sub-exponential time $\tfrac{1}{3\sqrt{3}}(1 - m)$-approximation for the problem of optimizing a non-monotone DR-submodular function subject to a general convex set $\cK$. 


The work Durr et al.~\cite{durr2021non} has inspired a new line of research. Th\twodias{\'}{\u}{a}ng \& Srivastav~\cite{thang2021online} showed how to obtain a similar result in an online (regret minimization) setting, and Du et al.~\cite{du2022improved} improved the approximation ratio in the offline setting to $\tfrac{1}{4}(1 - m)$ (while still using sub-exponential time). More recently, the first polynomial time algorithm for the problem, guaranteeing the same approximation ratio as~\cite{du2022improved}, has been provided by Du~\cite{du2022lyapunov}; and subsequently, Mualem \& Feldman~\cite{mualem2023resolving} obtained the same result in the online setting. In their work, Mualem \& Feldman also proved a hardness result, showing that $\tfrac{1}{4}(1 - m)$-approximation is the best approximation ratio that can be obtained in sub-exponential time, which shows that the last algorithms are optimal, and settles the approximability of the problem with respect to this parametrization.

\paragraph{Our contribution}

Every down-closed convex body has $m = 0$. The reverse is not true, but one could hope that convex bodies having $m = 0$ admit as good approximation as down-closed convex bodies. Unfortunately, the hardness result of Mualem \& Feldman disproves this hope since the state-of-the-art approximation ratio for down-closed bodies is $0.401$~\cite{buchbinder2023constrained}. This prompts the question of whether there is a different way to look at the problem (beyond parametrization by $m$) that will provide a smooth interpolation between the approximability obtainable for down-closed and general convex bodies. In this work, we suggest such an interpolation based on a decomposition of the convex body constraint into two distinct convex bodies: a down-closed convex body and a general convex body. Our key results based on this decomposition are summarized as follows.
\begin{itemize}
    \item We provide a novel polynomial time (offline) algorithm for maximizing DR-submodular functions over convex sets given as a composition of a down-closed convex body $\DM$ and a general convex body $\NDM$. Our algorithm always recovers at least the $\tfrac{1}{4}(1 - m)$-approximation of Du~\cite{du2022lyapunov}. However, the approximation guarantee smoothly improves when a significant fraction of the value of the optimal solution belongs to the down-closed part $\DM$ of the decomposition. In particular, when the convex body happens to be entirely down-closed, our algorithm guarantees $e^{-1} \approx 0.367$-approximation, which recovers the approximation ratio for down-closed convex bodies obtained by the Measured Continuous Greedy technique~\cite{bian2017nonmonotone,feldman2011unified}.\footnote{As mentioned above, the state-of-the-art approximation ratio for down-closed convex body constraints is $0.401$. However, all the algorithms for such convex bodies that improve over $e^{-1}$ are built on the Measured Continuous Greedy technique, but add additional components to get the improved approximation ratio, often resulting in impractical algorithms~\cite{buchbinder2019constrained,buchbinder2023constrained,chen2023continuous,ene2016constrained}. For the sake of keeping our algorithms simple and our message clear, we only aimed to recover $e^{-1}$-approximation for down-closed convex bodies. We believe that this ratio can be easily improved by combining our ideas with the components suggested by the above works.}
    \item We provide a novel online (regret minimization) algorithm for the same problem that replicates the guarantees of our offline algorithm, up to a regret term that is proportional to the square root of the number of time steps.
    \item We demonstrate that a decomposition of the kind that we use can be naturally obtained for various machine-learning applications, and use this fact to empirically demonstrate the superiority of our proposed algorithms compared to existing methods across various offline and online applications, namely, offline and online revenue maximization, location summarization and quadratic programming. Our result also have theoretical implications for fairness settings, and we give one example in Appendix~\ref{app:fairness}.
\end{itemize}

\subsection{Related work}
DR-submodular maximization has recently emerged as a key tool in numerous applications within the realms of machine learning and statistics. This surge in relevance has prompted a growing number of studies in this field. In what follows, we provide a brief overview of the main results in this field.

\paragraph{Offline DR-submodular optimization.} 
Bian et al.~\cite{bian2017nonmonotone} initiated the study of maximization of DR-submodular functions subject to down-closed convex sets. Their work showcased the effectiveness of a modified Frank-Wolfe algorithm, grounded in the greedy method introduced by~\cite{calinescu2011maximizing} in the context of set functions, ensuring $(1-\nicefrac{1}{e})$-approximation for the problem when the objective function is guaranteed to be monotone (this is optimal due to~\cite{nemhauser1978best}). Hassani et al.~\cite{hassani2017gradient} subsequently identified a limitation in the algorithm proposed by~\cite{bian2017nonmonotone}, revealing its lack of robustness in stochastic settings where only an unbiased estimator of the gradient is available. To mitigate this limitation, Hassani et al.~\cite{hassani2017gradient} demonstrated that gradient methods exhibit robustness in such scenarios, and still achieves $\nicefrac{1}{2}$-approximation.

When the DR-submodular function is not guaranteed to be monotone, the approximation task becomes notably more challenging. In separate works, Bian et al.~\cite{bian2019optimal} and Niazadeh et al.~\cite{niazadeh2020optimal} introduced distinct algorithms, both ensuring $\nicefrac{1}{2}$-approximation for the maximization of non-monotone DR-submodular functions over a hypercube constraint, which is the best possible~\cite{feige2011maximizing}. It is noteworthy that (one version of) the algorithm proposed by~\cite{niazadeh2020optimal} applies also to (non-DR) submodular functions. As mentioned above, the state-of-the-art approximation for maximizing non-monotone DR-submdoular functions subject to a down-closed convex body constraint is $0.401$~\cite{buchbinder2023constrained}. 

Recently, Pedramfar et al.~\cite{pedramfar2023unified} introduced a unified approach for maximization of continuous DR-submodular functions that encompasses a range of settings and oracle access
types, while Mualem \& Feldman~\cite{mualem2022using} suggested a parameter termed \emph{monotonicity-ratio} allowing for a smooth interpolation between $e^{-1}$-approximation for non-monotone objectives and $(1 - \nicefrac{1}{e})$-approximation for monotone objectives.

\paragraph{Online (regret minimization) DR-submodular optimization.}
Chen et al.~\cite{chen2018online} were the first to address online maximization of monotone DR-submodular functions over a convex set (for monotone objective functions the distinction between down-closed and general convex sets is irrelevant). They introduced two algorithms: one ensuring $(1-\nicefrac{1}{e})$-approximation with approximately $O(\sqrt{T})$-regret, and another algorithm that is resilient to stochastic settings, but guarantees only $\nicefrac{1}{2}$-approximation up to the same regret.
Later, Chen et al.~\cite{chen2019projection} proposed an algorithm that combines $(1-\nicefrac{1}{e})$-approximation with roughly $O(\sqrt{T})$-regret and robustness, and Zhang et al.~\cite{zhang2019online} demonstrated how to reduce the number of gradient calculations per time step to one at the expense of increasing the regret to roughly $O(T^{4/5})$. The last reduction is particularly relevant for bandit versions of the problem (we refer the reader to~\cite{pedramfar2023unified} for a detailed overview of such versions).

In the context of online optimization of DR-submodular functions that are not guaranteed to be monotone, Th\twodias{\'}{\u}{a}ng \& Srivastav~\cite{thang2021online} introduced three distinct algorithms. One of these algorithms is applicable to general convex set constraints, and was later improved over by Mualem and Feldman~\cite{mualem2023resolving}, as discussed above. Another is specifically designed for maximization over the entire hypercube, achieving $\nicefrac{1}{2}$-approximation with approximately $O(\sqrt{T})$-regret. The last algorithm of~\cite{thang2021online} addresses down-closed convex set constraints, and attains $e^{-1}$-approximation with roughly $O(T^{2/3})$-regret. 

\subsection{Paper organization}
In Section~\ref{sec:Preliminaries}, we formally describe the problem we consider and present known properties of DR-submodular functions that we use. Then, in Section~\ref{sec:technique}, we discuss the technique underlying our results. Our offline and online algorithms, which are based on this technique, can be found in Sections~\ref{sec:Offline} and~\ref{sec:Online}, respectively. Finally, Section~\ref{sec:Experiments} compares the empirical performance of our algorithms on multiple machine learning applications with the performance of previously suggested algorithms from the literature.
\section{Preliminaries}\label{sec:Preliminaries}

In this section, we formally present the problem we consider in this paper as well as the notation and known results that we use. We begin with the definition of DR-submodular functions, which are continuous analogs of submodular set functions first defined by Bian et al.~\cite{bian2017guaranteed}. Formally, given a domain $\cX = \prod_{i=1}^n\mathcal{X}_i$, where $\mathcal{X}_i$ is a closed range in $\bR$ for every $i \in [n]$, a function $F \colon \mathcal{X}\rightarrow\mathbb{R}$ is called \emph{DR-submodular} if the inequality
\[
	F(\va+k\ve_i)-F(\va)\geq F(\vb+k\ve_i)-F(\vb)
\]
holds for every two vectors $\va,\vb\in \cX$, positive value $k$ and coordinate $i\in[n]$ obeying $\va\leq \vb$ and $\vb+k\ve_i\in \mathcal{X}$ (here and throughout the paper, $\ve_i$ denotes the standard $i$-th basis vector, and comparison between two vectors should be understood to hold coordinate-wise).
Bian et al.~\cite{bian2017guaranteed} observed that for continuously differentiable functions $F$, the above definition of DR-submodulrity is equivalent to 
\[
    \nabla F(\vx)\leq\nabla F(\vy)\quad \forall\;\vx,\vy\in \cX, \vx\geq\vy
		\enspace,
\]
and for twice differentiable functions $F$, it is equivalent to the Hessian being non-positive at every vector $\vx\in\mathcal{X}$.

In this work, we study the problem of maximizing a non-negative DR-submodular function $F\colon \cX \to \nnR$ subject to a convex body $\cK \subseteq \cX$ constraint. We are interested in the approximation guarantee that can be obtained for this problem based on a particular decomposition of $\cK$ into two other convex bodies: a convex body $\DM$ that is down-closed with respect to $\cX$ and a (not necessary down-closed) convex body $\NDM$. Formally, by saying that $\DM$ and $\NDM$ are a \emph{decomposition} of $\cK$, we mean that $\cK = (\NDM + \DM) \cap \cX$, where $\NDM + \DM \triangleq \{\vy + \vz \mid \vy \in \NDM, \vz \in \DM\}$.

For simplicity, we assume (throughout the paper) that the domain $\mathcal{X}$ of our objective functions is $[0,1]^n$. This assumption is without loss of generality since there is a natural linear mapping from $\mathcal{X}$ to $[0,1]^n$ preserving the above discussed properties of $F$ and $\cK$. Additionally, as is standard in the field, we assume that $F$ is $\beta$-smooth for some parameter $\beta > 0$. A function $F \colon [0, 1]^n \to \bR$ is call \emph{$\beta$-smooth} if it is continuously differentiable, and obeys
\[
    \norm{\grad{F\term{\vx}} -\grad{F\term{\vy}}}{2} \leq \beta \norm{\vx-\vy}{2} \quad \forall\; \vx, \vy \in [0, 1]^n\enspace.
\]
Another standard assumption in the field is that the relevant convex-bodies ($\NDM$ and $\DM$ in our case) are solvable, i.e., that one can efficiently optimize linear functions over them. We take a step further, and assume the ability to optimize linear functions over any convex body defined by the intersection of a polynomial number of linear constraints and constraints requiring particular vectors to belong either to $\NDM$ or to $\DM$. This assumption appeared (often implicitly) in many previous works (see, for example,~\cite{buchbinder2023constrained,ene2016constrained,mualem2023resolving}), and is theoretically justified by the well-known equivalence between separability and solvability.\footnote{Technically, the general equivalence between separability and solvability only guarantees that one can obtain nearly-feasible solutions that nearly-optimize linear functions over convex bodies of the kind described, which would make our algorithms produce solutions that are only nearly-feasible. However, for many cases of interest (for example, when $\NDM$ and $\DM$ are polytopes defined by coefficients with a polynomial size representation) the equivalence between separability and solvability implies the exact statement of our assumption.}

We often refer below to the \emph{diameter} $D$ of $\cK$. This diameter is defined as $D \triangleq \max_{\vx, \vy \in \cK} \|\vx - \vy\|_2$.

\subsection{Known Results and Additional Notation for DR-submodular Functions}

Bian et al.~\cite{bian2017guaranteed} observed that DR-submodular functions are concave along non-negative directions. This implies the following important lemma. The notation $\vzero$ and $\vone$ used in this lemma represents the all-zeros and all-ones vectors, respectively.
\begin{lemma} \label{lem:DR_properties}
Let $F\colon [0, 1]^n \to \nnR$ be a non-negative differentiable DR-submodular function. Then,
\begin{enumerate}
	\item $\inner{\nabla F(\vx)}{\vy} \geq F(\vx+\vy) - F(\vx)$ for every $\vx\in [0, 1]^n$ and $\vy\geq \vzero$ such that $\vx+\vy \leq \vone$. \label{prop:dr_bound2_up}
	\item $\inner{\nabla F(\vx)}{\vy} \leq F(\vx) - F(\vx-\vy)$ for every $\vx\in [0, 1]^n$ and $\vy\geq \vzero$ such that $\vx-\vy \geq \vzero$. \label{prop:dr_bound2_down}
\end{enumerate}
\end{lemma}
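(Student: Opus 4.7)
The plan is to exploit the observation of Bian et al.~\cite{bian2017guaranteed}, recalled just before the lemma, that a DR-submodular function is concave along any non-negative direction. For part~\ref{prop:dr_bound2_up}, define the univariate function $g(t) \triangleq F(\vx + t\vy)$ on $t \in [0,1]$. Since $\vy \geq \vzero$ and $\vx + \vy \leq \vone$, the entire segment $\{\vx + t\vy : t \in [0,1]\}$ lies inside $[0,1]^n$, and concavity of $F$ along the non-negative direction $\vy$ makes $g$ a concave differentiable function on $[0,1]$. Hence $g'$ is non-increasing, and in particular $g'(0) \geq g'(t)$ for all $t \in [0,1]$.

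The chain rule gives $g'(t) = \inner{\nabla F(\vx + t\vy)}{\vy}$, so I would integrate this inequality to get
\[
    \inner{\nabla F(\vx)}{\vy} = g'(0) \geq \int_0^1 g'(t)\, dt = g(1) - g(0) = F(\vx + \vy) - F(\vx),
\]
which is exactly the first claim.

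For part~\ref{prop:dr_bound2_down}, the symmetric argument is to shift the starting point: set $\vx' \triangleq \vx - \vy$, which lies in $[0,1]^n$ by the assumption $\vx - \vy \geq \vzero$, and consider $\tilde g(t) \triangleq F(\vx' + t\vy)$ on $t \in [0,1]$. Again the segment lies inside $[0,1]^n$ (its endpoints are $\vx - \vy$ and $\vx$), and $\tilde g$ is concave and differentiable, so $\tilde g'$ is non-increasing. Now I use the upper endpoint instead of the lower one: $\tilde g'(1) \leq \tilde g'(t)$ for every $t \in [0,1]$, whence
\[
    \inner{\nabla F(\vx)}{\vy} = \tilde g'(1) \leq \int_0^1 \tilde g'(t)\, dt = \tilde g(1) - \tilde g(0) = F(\vx) - F(\vx - \vy),
\]
which is the second claim.

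I do not anticipate a real obstacle here: the only thing to be careful about is verifying that all intermediate points of the segments remain in the domain $[0,1]^n$ so that $F$ (and $\nabla F$) are defined along them, which is guaranteed by the hypotheses $\vx + \vy \leq \vone$ and $\vx - \vy \geq \vzero$ together with $\vy \geq \vzero$. Everything else is the standard ``integrate a monotone derivative and bound by the endpoint value'' computation enabled by concavity along non-negative directions.
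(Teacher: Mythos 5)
Your proof is correct and relies on exactly the ingredient the paper cites — concavity of $F$ along non-negative directions — and since the paper states the lemma as an immediate consequence of that fact without spelling out the details, your argument is simply the standard calculus computation the authors intend (one could equivalently phrase it via the tangent-line inequality $g(1)\leq g(0)+g'(0)$ and $\tilde g(0)\leq \tilde g(1)-\tilde g'(1)$, but integrating the non-increasing derivative as you do is the same idea).
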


Following Buchbinder and Feldman~\cite{buchbinder2023constrained}, we use the following coordinate-wise vector operations. To reduce the number of parentheses necessary, we assume that both these operations have a higher precedence compared to vector addition and subtraction.
\begin{definition}
Given two vectors $\vx, \vy \in [0, 1]^n$,
\begin{itemize}
	\item we denote by $\vx \odot \vy$ their coordinate-wise multiplication (also known as the Hadamard product).
	\item we denote by $\vx \psum \vy$ their coordinate-wise probabilistic sum. In other words, for every $i \in [n]$, $(\vx \psum \vy)_i \triangleq x_i + y_i - x_iy_i = 1 - (1 - x_i)(1 - y_i)$.
\end{itemize}
\end{definition}
As was noted by~\cite{buchbinder2023constrained}, the operation $\psum$ is symmetric and associative. Thus, given vectors $\vx^{(1)}, \vx^{(2)},\allowbreak \dotsc, \vx^{(r)}$, one can define $\PSum_{i = 1}^k  \vx^{(i)} \triangleq \vx^{(1)} \psum \vx^{(2)} \psum \dotso \psum \vx^{(r)}$. Using this notation, it is possible to state the following lemma, which generalizes Lemma~2.3 of~\cite{feige2011maximizing}.
\begin{lemma}[Lemma~4.3 of~\cite{buchbinder2023constrained}] \label{lem:basic_bound_discrete}
Given a DR-submodular function $F\colon [0, 1]^n \to \bR$, integer value $r \geq 1$, vectors $\vx^{(1)}, \vx^{(2)}, \dotsc, \vx^{(r)} \in [0, 1]^n$, and values $p_1, p_2, \dotsc, p_r \in [0, 1]$,
\[
    F\left(\PSum_{i = 1}^r (p_i \cdot \vx^{(i)}) \right)
    \geq
    \sum_{S \subseteq [r]} \left[\prod_{i \in S} p_i \cdot \mspace{-9mu} \prod_{i \in [r] \setminus S} \mspace{-9mu} (1 - p_i) \cdot F\left(\PSum_{i \in S} \vx^{(i)}\right)\right]
    \enspace.
\]
\end{lemma}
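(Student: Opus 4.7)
The plan is induction on $r$.

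\textbf{Base case} ($r=0$): both sides reduce to $F(\vzero)$.

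\textbf{Inductive step}: Set $\va \triangleq \PSum_{i=1}^{r-1}(p_i \cdot \vx^{(i)})$ and $\vb \triangleq (\vone - \va) \odot \vx^{(r)} \geq \vzero$. Expanding the definition of $\psum$ on the last factor gives
\[
\PSum_{i=1}^r(p_i \cdot \vx^{(i)}) \;=\; \va \psum (p_r \vx^{(r)}) \;=\; \va + p_r \vb.
\]
Since $\vb \geq \vzero$ and DR-submodular functions are concave along non-negative directions (this is the one-dimensional consequence of the observation of Bian et al.\ reproduced above), I obtain
\[
F(\va + p_r \vb) \;\geq\; p_r F(\va + \vb) + (1 - p_r) F(\va) \;=\; p_r F(\va \psum \vx^{(r)}) + (1 - p_r) F(\va).
\]

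Next I apply the induction hypothesis to each of the two terms on the right. For $F(\va)$ the hypothesis applies verbatim with the $r-1$ vectors $\vx^{(1)}, \dotsc, \vx^{(r-1)}$ and probabilities $p_1, \dotsc, p_{r-1}$, producing a weighted sum of $F(\PSum_{i \in S} \vx^{(i)})$ over $S \subseteq [r-1]$. For $F(\va \psum \vx^{(r)})$ I introduce the auxiliary function $G(\vz) \triangleq F(\vz \psum \vx^{(r)}) = F(\vx^{(r)} + (\vone - \vx^{(r)}) \odot \vz)$ and check that $G$ is itself DR-submodular. The verification relies on the identity $(\vz + k \ve_i) \psum \vx^{(r)} = \vz \psum \vx^{(r)} + k(1 - x^{(r)}_i) \ve_i$, which expresses every marginal gain of $G$ as a non-negative scalar multiple of a marginal gain of $F$ evaluated at a strictly larger base point; coordinate-wise monotonicity of $\psum$ in its first argument, together with DR-submodularity of $F$, then transfers to $G$. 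The induction hypothesis, applied to $G$ with the same $r-1$ vectors and probabilities, bounds $G(\va) = F(\va \psum \vx^{(r)})$ from below by a weighted sum of $G(\PSum_{i \in S} \vx^{(i)}) = F(\PSum_{i \in S \cup \{r\}} \vx^{(i)})$ over $S \subseteq [r-1]$.

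Combining the two bounds with the outer weights $1-p_r$ and $p_r$, the subsets from the $F(\va)$ bound match bijectively those $S \subseteq [r]$ with $r \notin S$ (acquiring the extra factor $1-p_r$), while the subsets from the $G(\va)$ bound match those with $r \in S$ (acquiring the extra factor $p_r$). Summing reproduces precisely the right-hand side of the lemma. The only non-bookkeeping obstacle is verifying DR-submodularity of the auxiliary function $G$; once that short calculation is in place, the rest is the single concavity-along-$\vb$ step and recombination of the two partial sums.
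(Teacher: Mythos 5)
Your proof is correct. The paper itself does not prove this lemma---it cites it as Lemma~4.3 of Buchbinder and Feldman (2023)---so there is no in-paper argument to compare against, but the inductive structure you use (peel off the last coordinate, apply concavity along the non-negative direction $\vb = (\vone - \va)\odot\vx^{(r)}$, then apply the induction hypothesis separately to $F$ and to the composed function $G(\vz) = F(\vz\psum\vx^{(r)})$) is the natural route, and every step checks out: the identity $\va\psum(p_r\vx^{(r)}) = \va + p_r\vb$ is right, the two auxiliary weight calculations recombine correctly into the full sum over $S\subseteq[r]$, and $G$ is indeed DR-submodular because $G(\vz + k\ve_i) - G(\vz) = F\bigl(\vz\psum\vx^{(r)} + k(1-x^{(r)}_i)\ve_i\bigr) - F\bigl(\vz\psum\vx^{(r)}\bigr)$, and the map $\vz\mapsto\vz\psum\vx^{(r)}$ is coordinate-wise monotone, so DR-submodularity of $F$ transfers directly. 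One small imprecision in the exposition: you describe the marginal gain of $G$ as ``a non-negative scalar multiple of a marginal gain of $F$,'' but it is really a marginal gain of $F$ with a rescaled step size $k(1-x^{(r)}_i)$ and a shifted base point---the distinction does not affect the argument, since DR-submodularity is quantified over all positive step sizes, but the phrasing could mislead a reader. Otherwise the proof is complete and self-contained, which is a nice bonus given the paper only references the result.
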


One important consequence of the last lemma is given by the next corollary. We note that this corollary can be viewed as an extension of Lemma~2.2 of~\cite{feldman2011unified}.
\begin{corollary} \label{cor:norm_bound}
Given a non-negative DR-submodular function $F\colon [0, 1]^n \to \nnR$ and two vectors $\vx, \vy \in [0, 1]^n$, $F(\vx \psum \vy) \geq (1 - \|\vy\|_\infty) \cdot F(\vx)$.
\end{corollary}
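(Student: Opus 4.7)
The plan is to derive the corollary directly from Lemma~\ref{lem:basic_bound_discrete} by a careful choice of the parameters $p_1, p_2$ and vectors $\vx^{(1)}, \vx^{(2)}$. If $\|\vy\|_\infty = 0$, then $\vy = \vzero$, $\vx \psum \vy = \vx$, and the inequality is trivially an equality, so from here on I assume $p \triangleq \|\vy\|_\infty > 0$.

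Set $r = 2$, and apply Lemma~\ref{lem:basic_bound_discrete} with $\vx^{(1)} = \vx$, $p_1 = 1$, $\vx^{(2)} = \vy/p$, $p_2 = p$. The choices are legal: $\vx \in [0,1]^n$, every coordinate of $\vy/p$ lies in $[0,1]$ by definition of $p = \|\vy\|_\infty$, and $p_1, p_2 \in [0,1]$. With these choices, $p_1 \vx^{(1)} = \vx$ and $p_2 \vx^{(2)} = \vy$, so the left-hand side of the lemma becomes exactly $F(\vx \psum \vy)$.

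On the right-hand side, I expand over the four subsets $S \subseteq \{1,2\}$. The subsets $S = \emptyset$ and $S = \{2\}$ both contribute $0$ because they carry the factor $1 - p_1 = 0$. The subset $S = \{1\}$ contributes $p_1(1-p_2) F(\vx^{(1)}) = (1-p) F(\vx)$, and the subset $S = \{1,2\}$ contributes $p_1 p_2 F(\vx^{(1)} \psum \vx^{(2)}) = p \cdot F(\vx \psum \vy/p)$. Combining, one obtains
\[
F(\vx \psum \vy) \;\geq\; (1-p)\, F(\vx) + p \cdot F\bigl(\vx \psum \vy/p\bigr) \;\geq\; (1 - \|\vy\|_\infty)\, F(\vx),
\]
where the second inequality uses non-negativity of $F$ to discard the second term.

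I do not anticipate any real obstacle here: the argument is essentially a bookkeeping reduction to Lemma~\ref{lem:basic_bound_discrete}. The only mildly delicate point is recognizing that rescaling $\vy$ by $1/\|\vy\|_\infty$ yields a vector in $[0,1]^n$, which lets the probabilistic-sum decomposition apply cleanly and isolates the factor $(1 - \|\vy\|_\infty)$ on the surviving term.
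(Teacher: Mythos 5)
Your proof is correct and matches the paper's argument: the paper likewise invokes Lemma~\ref{lem:basic_bound_discrete} on $\vx$ with probability $1$ and $\vy/\|\vy\|_\infty$ with probability $\|\vy\|_\infty$, then discards the nonnegative $S=\{1,2\}$ term. Your version just spells out the bookkeeping that the paper's one-line proof leaves implicit.
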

\begin{proof}
If $\|\vy\|_\infty = 0$, then $\vy = \vzero$, which makes the corollary trivial. Otherwise, Lemma~\ref{lem:basic_bound_discrete} and the non-negativity of $F$ imply together that
\[
	F(\vx \psum \vy)
	=
	F\left(\vx \psum \left(\|\vy\|_\infty \cdot \frac{\vy}{\|\vy\|_\infty}\right)\right)
	\geq
	(1 - \|\vy\|_\infty) \cdot F(\vx)
	\enspace.
	\qedhere
\]
\end{proof}

\subsection{Online Optimization} \label{ssc:online}

In the online (regret minimization) version of the problem we consider in this work, there are $L$ time steps.\footnote{The number of time steps is usually denoted by $T$ in the literature. However, we use $L$ in this paper to avoid confusion with the parameter $T$ traditionally used by continuous submodular maximization algorithms, including the algorithms we present in this paper.} In every time step $\ell \in [L]$, the adversary selects a non-negative $\beta$-smooth DR-submodular function $F_t$, and then the algorithm should select a distribution $P_\ell$ of points in $\cK = (\NDM + \DM) \cap [0, 1]^n$ without knowing $F_t$ (the function $F_t$ is revealed to the algorithm only after $P_\ell$ is selected). The objective of the algorithm is to maximize $\sum_{\ell = 1}^L \bE_{\vx \sim P_\ell}[F_\ell(\vx)]$, and its success in doing so is measured compared to the best fixed solution (i.e., any two vectors $\optq \in \NDM$ and $\optp \in \DM$ such that $\optq + \optp \in [0, 1]^n$).

Let us elaborate on the last point. If the functions $F_1, F_2, \dotsc, F_L$ were known upfront, one could execute the offline algorithm we develop to get a set of solutions $\vx^{(1)}, \vx^{(2)}, \dotsc, \vx^{(L)}$ such that $\sum_{\ell = 1}^L F_\ell(\vx^{(\ell)}) \geq \psi(\sum_{\ell = 1}^L F_\ell(\optq + \optp), \sum_{\ell = 1}^L F_\ell(\optp))$ for some function $\psi$ (the structure of the function $\psi$ is determined by the guarantee of Theorem~\ref{thm:Offline} below). Since an online algorithm has to select the output distribution $P_\ell$ before seeing the function $F_\ell$, it can only guarantee
\[
	\sum_{\ell = 1}^L \bE_{\vx \sim P_\ell}[F_\ell(\vx)]
	\geq
	\psi\bigg(\sum_{\ell = 1}^L F_\ell(\optq + \optp), \sum_{\ell = 1}^L F_\ell(\optp)\bigg) - \cR(L)
\]
for some regret function $\cR(L)$. Asymptotically, for our online algorithm, $\cR(L)$ grows like $\sqrt{L}$, and therefore, for large $L$ values, the average guarantee of our online algorithm per function $F_\ell$ approaches the one of our offline algorithm. As usual for online settings, we assume that the range of the functions $F_1, F_2, \dotsc, F_L$ is $[0, 1]$.
\section{Our Technique} \label{sec:technique}

Our algorithms maintain vectors $\vy \in \NDM$ and $\vz \in \DM$. Intuitively, the vector $\vy$ is maintained by the Frank-Wolfe variant developed by Mualem and Feldman~\cite{mualem2023resolving} for non-down-closed polytopes, and the vector $\vz$ is maintained by the continuous-greedy-like variant of Frank-Wolfe developed by Bian~\cite{bian2017nonmonotone} for down-closed polytopes. Combining the two algorithms requires us to solve some technical issues. For example, it is necessary to run the two algorithms in parallel since they both depend on the coordinates of the solution growing at a bounded rate, and it is necessary to create a correlation between the algorithms to guarantee that $\vy + \vz$ remains within $[0, 1]^n$. However, it turns out that the more interesting question is regarding the best way to combine the two vectors $\vy$ and $\vz$ into the output solution of the algorithm.

The most natural approach is to consider the sum $\vy + \vz$ as the output solution. Unfortunately, this does not work well since it results in coordinates of the solution growing too fast. To make this more concrete, we note that our algorithms, as well as the algorithms of~\cite{bian2017nonmonotone} and~\cite{mualem2023resolving}, simulate continuous algorithms working from time $t = 0$ until time $t = 1$. Consider now a particular coordinate $j \in [n]$. Up until time $t \in [0, 1]$, our algorithms spend (up to) $t$ units of ``energy'' on this coordinate. A fraction $x \in [0, t]$ of this ``energy'' is invested in growing $y_j$, and the remaining $t - x$ ``energy'' is invested in growing $z_j$. By the properties of the algorithms of~\cite{bian2017nonmonotone} and~\cite{mualem2023resolving}, this investment of ``energy'' leads to $y_i = 1 - e^{-x}$ and $z_i = 1 - e^{x - t}$, which in the worst case can make $(\vy + \vz)_i$ as large as $2(1 - e^{-t/2})$. To get a better upper bound on the coordinates of the solution, we have to use $\vy \psum \vz$ as the output solution. Note that this choice guarantees that $(\vy \psum \vz)_j \leq 1 - [1 - (1 - e^{-x})] \cdot [1 - (1 - e^{x - t})] = 1 - e^{-t}$, which is always better (for $t > 0$) compared to the bound of $2(1 - e^{-t/2})$ obtained above.

While the use of $\vy \psum \vz$ is useful, it does not come without a cost. As mentioned above, our algorithms simulate continuous algorithms, which is a common practice in the literature about submodular maximization. To discretize these algorithms, one has to split time into steps, and then do in each step a single modification of the vectors $\vy$ and $\vz$ simulating all the modifications done by the continuous algorithm throughout the step. The standard way in which this is done is as follows. Assume that, at the beginning of the step, the continuous algorithm increases $\vy$ at a rate of $\vy'$ and $\vz$ at a rate of $\vz'$, then the discrete algorithm should increase $\vy$ by $\eps \vy'$ and $\vz$ by $\eps \vz'$, where $\eps$ is the size of the step. Unfortunately, this standard practice results in $\vy \psum \vz$ changing by $\eps \vy' \odot (\vone - \vz) + \eps \vz' \odot (\vone - \vy) - \eps^2 \cdot (\vy' \odot \vz')$. To see why this is problematic, note that in the continuous algorithm, when $\vy$ and $\vz$ increase at rates of $\vy'$ and $\vz'$, respectively, $\vy \psum \vz$ increases at a rate of $\vy' \odot (\vone - \vz) + \vz' \odot (\vone - \vy)$. Thus, the term $-\eps^2 \cdot (\vy' \odot \vz')$ from the previous expression represents a new kind of discretization error that we need to handle.

Another hurdle worth mentioning is that the vectors $\vy$ and $\vz$ are updated using two different update rules inherited from the algorithms of~\cite{bian2017nonmonotone} and~\cite{mualem2023resolving}, and the interaction between these update rules results in a guarantee on the output of the algorithm that depends also on the value of $F(\vz)$. Thus, it is necessary to make sure that our algorithms maintain $\vz$ in a way that also guarantees that $F(\vz)$ has a good value. In the first version of our offline algorithm (Section~\ref{ssc:known_optp}), we do that by assuming that we know the value $v$ of the part of the optimal solution that belongs to $\DM$. This knowledge allows us to force the algorithm to increase $\vz$ in a way guaranteed to make $F(\vz)$ competitive with $v$. In the other versions of our offline algorithm (Section~\ref{ssc:unknown_optp} and Appendix~\ref{app:practical_offline}) and in our online algorithm (Section~\ref{sec:Online}), we use a potential function argument to avoid the need to know $v$. This potential function argument is similar to an argument used by Feldman~\cite{feldman2021guess} in a different submodular maximization setting.
\section{Offline Maximization} \label{sec:Offline}

In this section, we present and analyze our offline algorithm, whose guarantee is given by the next theorem. 

\begin{theorem} \label{thm:Offline}
Let $\NDM \subseteq [0,1]^n$ be a general solvable convex set, $\DM \subseteq [0,1]^n$ be a down-closed solvable convex set, and $F\colon [0, 1]^n \rightarrow \nnR$ be a non-negative $\beta$-smooth DR-submodular function. Then, there exists a polynomial time algorithm that, given an error parameter $\eps \in (0, 1)$, outputs vectors $\vw \in (\NDM + \DM) \cap [0, 1]^n$ such that
\begin{align*}
	F(\vw) \geq{} & (1 - m) \cdot \max_{t_s \in [0, 1]} \max_{T \in [t_s, 1]} \bigg\{((T - t_s) e^{-T} - O(\eps)) \cdot F(\optptwo) + \left(\frac{t_s^2\cdot e^{-t_s - T}}{2} - O(\eps)\right) \cdot F(\optpone) \\&\mspace{200mu}+ (e^{-T}-e^{-t_s - T} - O(\eps)) \cdot F(\optq + \optpone)\bigg\} -O(\tfrac{\eps \beta D^2}{1 - m})\enspace,
\end{align*}
where $m = \min_{\vx\in \NDM}\norm{\vx}{}$, $D$ is the diameter of $(\NDM + \DM) \cap [0, 1]^n$, $\optq \in \NDM$ and $\optpone \in \DM$ are any vectors whose sum belongs to $(\NDM + \DM) \cap [0, 1]^n$, and $\optptwo$ is any vector in $\DM$.\footnote{The vectors $\optpone$ and $\optptwo$ can be identical.} 
\end{theorem}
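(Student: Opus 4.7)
The plan is to construct an algorithm that simulates two coupled continuous-time Frank-Wolfe-style processes: a vector $\vy(t) \in \NDM$ evolving via the non-down-closed update rule of~\cite{mualem2023resolving}, and a vector $\vz(t) \in \DM$ evolving via Bian's Measured-Continuous-Greedy~\cite{bian2017nonmonotone}. Both trajectories run from $t = 0$ to $t = T$, and the algorithm outputs $\vw = \vy(T) \psum \vz(T)$. The parameter $T \in (0, 1]$ and an auxiliary time parameter $t_s \in [0, T]$ are analytic parameters over which we optimize at the end: $t_s$ separates the run into two phases in which different parts of the progress are attributed. Since the algorithm needs a target to force $F(\vz)$ to become competitive with $F(\optptwo)$, we enumerate polynomially many guesses of this value and return the best candidate. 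The continuous process is discretized into $\lceil T/\eps \rceil$ steps of size $\eps$, with the two updates coupled by a factor of $(\vone - \vy - \vz)$ to guarantee $\vy(t) + \vz(t) \in [0, 1]^n$ throughout.

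First, I would verify feasibility (convex combinations keep $\vy \in \NDM$ and $\vz \in \DM$, and the coupling ensures $\vy + \vz \leq \vone$) and bound the per-step change in $\phi(t) := F(\vy(t) \psum \vz(t))$. Applying Lemma~\ref{lem:DR_properties}(1) to the $\vy$-direction, Lemma~\ref{lem:DR_properties}(2) to the $\vz$-direction, $\beta$-smoothness to translate the continuous step into a discrete one, and Corollary~\ref{cor:norm_bound} to relate $F(\vy \psum \vz)$ to values of $F$ at partial points, I would derive a per-step recursion of the form
\[
    \phi^{(i+1)} - \phi^{(i)} \geq \eps\bigl(a_i \cdot (1-m) F(\optq + \optpone) + b_i \cdot F(\optpone) + c_i \cdot F(\optptwo) - \phi^{(i)}\bigr) - O(\eps^2 \beta D^2),
\]
where $a_i, b_i, c_i$ are deterministic coefficients whose values change at the step index corresponding to $t = t_s$.

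Second, I would pass to the continuous-time limit to get a piecewise first-order linear ODE $\phi'(t) + \phi(t) \geq A(t) + B(t) + C(t)$, and then solve it via the integrating factor $e^t$. The Mualem--Feldman-style analysis of the $\vy$-component naturally produces a $(1-m)$ factor on the driving term $F(\optq + \optpone)$, which after integrating from $0$ to $T$ yields the coefficient $e^{-T}(1 - e^{-t_s}) = e^{-T} - e^{-t_s - T}$. The Bian-style analysis of the $\vz$-component, evaluated against the guess $F(\optptwo)$, produces $(T - t_s) e^{-T}$ through integration over the corresponding sub-interval. The cross-contribution of $\optpone$ arises because the $\vy$-update is applied at a point that already carries a fraction of $\vz$ (this is the ``$F(\vz)$-dependent'' interaction flagged in Section~\ref{sec:technique}); it is double-integrated across both phases, producing the quadratic coefficient $t_s^2 e^{-t_s - T}/2$.

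The hard part will be careful accounting of three discretization error sources. The first is the standard Frank-Wolfe error of order $O(\eps \beta D^2)$ per step from $\beta$-smoothness. The second is the additional second-order cross-term $-\eps^2 (\vy' \odot \vz')$ highlighted in Section~\ref{sec:technique}: using $\|\vy'\|_2, \|\vz'\|_2 \leq D$ and smoothness, I would show it contributes only $O(\eps \beta D^2)$ overall. The third is the $O(\eps)$ slack that accumulates multiplicatively inside each of the three exponential coefficients when we go from the discrete recursion to the continuous ODE. The global $(1-m)$ scaling of the Mualem--Feldman update---which appears in the numerator of the approximation guarantee and in the denominator of the additive error---explains the $O(\eps \beta D^2 / (1-m))$ term. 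Taking the maximum over $t_s \in [0, 1]$, $T \in [t_s, 1]$, and the polynomially many guesses of $F(\optptwo)$ then yields the bound stated in the theorem.
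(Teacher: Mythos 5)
Your outline captures the correct high-level architecture of the paper's proof: two coupled Frank--Wolfe-style trajectories $\vy^{(i)} \in \NDM$ and $\vz^{(i)} \in \DM$, an output of the form $\vy \psum \vz$, a phase split at time $t_s$, and the three final coefficients $e^{-T}-e^{-t_s-T}$ on $F(\optq+\optpone)$, $(T-t_s)e^{-T}$ on $F(\optptwo)$, and $\tfrac{t_s^2 e^{-t_s-T}}{2}$ on $F(\optpone)$ all match the paper's Lemmata~\ref{lem:bound1} and~\ref{lem:bound2}. You also correctly identify the $-\eps^2\,\vy'\odot\vz'$ discretization term and the role the Mualem--Feldman $(1-m)$ factor plays in both the multiplicative guarantee and the additive error.

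However, two of your mechanisms differ from (and are weaker than) what the paper actually does, and would need to be repaired. First, the coupling: you propose a factor $(\vone-\vy-\vz)$ on the updates, ``to guarantee $\vy(t)+\vz(t)\in[0,1]^n$.'' The paper never maintains $\vy+\vz\le\vone$; that invariant is neither needed nor preserved. What it maintains instead is (i) the $\vz$-update rate $(\vone-\vz^{(i-1)})\odot\vb^{(i)}$ (pure measured-greedy, with no dependence on $\vy$), and (ii) an LP constraint $\va^{(i)}+\vb^{(i)}\odot(\vone-\vy^{(i-1)})\le\vone$. Feasibility of the output then comes from $\comb{i}=\vy^{(i)}+(\vone-\vy^{(i)})\odot\vz^{(i)}$ being a sum of an $\NDM$-vector and (by down-closedness) a $\DM$-vector, and the LP constraint is what drives the $1-(1-\eps)^i(1-m)$ bound on $\|\comb{i}\|_\infty$ in Lemma~\ref{lem:normxy}. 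Your $(\vone-\vy-\vz)$ coupling would change the ODE and the resulting exponentials, and it is not clear it yields the stated coefficients.

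Second, the guess-free mechanism. You propose enumerating guesses of ``$F(\optptwo)$,'' but the quantity that needs to be known/guessed is $F(\optpone)$: it is the value $(1-\eps)^{i-1}F(\optpone)-F(\vz^{(i-1)})$ that appears as a hard constraint in the LP of Algorithm~\ref{alg:Offline}, forcing $F(\vz^{(i)})$ to stay on schedule (Lemma~\ref{lem:z_value}). The vector $\optptwo$ enters separately, as an alternative feasible LP solution (Lemma~\ref{lem:feasible_solution2}), and does not need to be guessed. Moreover, enumerating a polynomial grid of guesses for a real number $F(\optpone)$ requires a known bound on the range of $F$ (which Theorem~\ref{thm:Offline} does not assume) and care about LP feasibility if the guess overshoots; this can be patched, but it is not as clean as you suggest. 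The paper avoids all of this: its guess-free Algorithm~\ref{alg:OfflineGF} replaces the hard constraint by a potential function $\phi(i)=e^{2(\eps i-t_s)}F(\comb{i})+(1-m)(1-\eps)e^{\eps i-2t_s}(t_s-\eps i)F(\vz^{(i)})$ (following~\cite{feldman2021guess}) whose increase is tracked directly; the only outer enumeration is over $t_s\in\{\eps i\}$, which has a bounded domain $[0,1]$ by construction. If you want to pursue your route, you should switch the guessed quantity to $F(\optpone)$, obtain an a priori range for it (e.g.\ via a preliminary Frank--Wolfe pass over $\DM$), and use a multiplicative grid; but you should be aware this adds complexity the paper's potential-function argument sidesteps.
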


It is interesting to note that Theorem~\ref{thm:Offline} recovers two guarantees of previous works. Specifically, by setting $T=1$, $t_s=0$ and $\NDM = \{\vzero\}$, the theorem implies $e^{-1}$-approximation for maximizing a DR-submodular function subject to a down-closed polytope $\DM$, recovering the result of~\cite{bian2017nonmonotone}. Similarly, by setting $T=t_s=\ln 2$ and $\DM = \{\vzero\}$, Theorem~\ref{thm:Offline} implies $\nicefrac{1}{4}(1 - m)$-approximation for maximizing a DR-submodular function subject to a general polytope $\NDM$, recovering the result of~\cite{mualem2023resolving}.

For ease of the presentation, we present three versions of our offline algorithm. The first version, appearing in Section~\ref{ssc:known_optp}, proves Theorem~\ref{thm:Offline} under the assumption that $F(\optpone)$ is known. We then show, in Section~\ref{ssc:unknown_optp}, a modified algorithm that proves Theorem~\ref{thm:Offline} without making this assumption. The algorithm of Section~\ref{ssc:unknown_optp} is theoretically natural, and is also the base for our online algorithm described in Section~\ref{sec:Online}. However, to get the best results in practice, it is natural to make some modifications to this algorithm (including ones motivated by the work of~\cite{bian2017nonmonotone}), which do not improve the theoretical guarantee of the algorithm and cannot be extended to the online version of the algorithm. The resulting modified algorithm is discussed in Appendix~\ref{app:practical_offline}, and is the main version of our algorithm used by the offline experiments described in Section~\ref{sec:Experiments}.

For ease of the reading, we use $\vo$ below to denote the sum $\optq + \optpone$. We also assume, without loss of generality, that $F(\optptwo) \geq F(\optpone)$. If this inequality is violated, then the guarantee of Theorem~\ref{thm:Offline} follows from the guarantee of the same theorem for the case in which $\optptwo$ is replaced with $\optpone$ (which is a case in which the inequality $F(\optptwo) \geq F(\optpone)$ trivially holds).

\subsection{Algorithm Knowing \texorpdfstring{$F(\optpone)$}{F(\textoptpone)}} \label{ssc:known_optp}

In this section, we give Algorithm~\ref{alg:Offline}, which proves Theorem~\ref{thm:Offline} under the assumption that $F(\optpone)$ is known. In the description of Algorithm~\ref{alg:Offline}, we assume for simplicity that $\eps^{-1}$ is integral. If this is not the case, $\eps$ can be replaced with $\lceil \eps^{-1} \rceil^{-1}$, which is smaller than $\eps$ by at most a factor of $2$.

\begin{algorithm}
\DontPrintSemicolon
Let $\vy^{(0)} \leftarrow \argmin_{\vx\in \NDM}\norm{\vx}{}$ and $\vz^{(0)} \gets \vzero$.\\
\For{$i=1$ \KwTo $\eps^{-1}$}
{
	Solve the following linear program. The variables in this program are the vectors $\va^{(i)}$ and $\vb^{(i)}$.
    \begin{alignat*}{2}
  & \text{maximize}   & \quad & \langle\grad{F(\comb{i-1}) \odot (\vone-\vz^{(i-1)}),\va^{(i)}+\vb^{(i)} \odot (\vone-\vy^{(i - 1)})}\rangle         \nonumber \\
  & \text{subject to} &       & \va^{(i)} \in \NDM \\ \nonumber
  &                   &       & \vb^{(i)} \in  \DM\\ \nonumber
  &                   &       & \langle \vb^{(i)} \odot (\vone-\vz^{(i-1)}), \nabla F(\vz^{(i-1)}) \rangle\geq (1 - \eps)^{i - 1} \cdot F(\optpone)- F(\vz^{(i-1)})\\ \nonumber
	&										&				& \va^{(i)} + \vb^{(i)} \odot (\vone - \vy^{(i - 1)}) \leq \vone
\end{alignat*}\\
    Let $\vy^{(i)}\leftarrow(1-\eps)\cdot\vy^{(i-1)}+\eps\cdot\va^{(i)}$.\\
    Let $\vz^{(i)}\leftarrow\vz^{(i-1)} + \eps\cdot(1-\vz^{(i-1)})\odot\vb^{(i)}$
}
\Return a vector maximizing $F$ among all the vectors in $\{\comb{i} \mid i \in \bZ, 0 \leq i \leq \eps^{-1}\}$.
\caption{\texttt{Frank-Wolfe/Continuous-Greedy Hybrid for Known $F(\optpone)$}\label{alg:Offline}}
\end{algorithm}

It is clear that algorithm Algorithm~\ref{alg:Offline} runs in polynomial time. Therefore, we concentrate on proving that the output vector of Algorithm~\ref{alg:Offline} obeys the properties stated in Theorem~\ref{thm:Offline}. We begin by showing that this vector belongs to $(\NDM + \DM) \cap [0, 1]^n$.
\begin{lemma} \label{lem:membership}
For every integer $0 \leq i \leq \eps^{-1}$, $\vy^{(i)} \in \NDM$ and $\vz^{(i)} \in \eps i \cdot \DM$, where $\eps i \cdot \DM \triangleq \{\eps i \cdot \vx \mid \vx \in \DM\}$. Hence, $\comb{i} \in (\NDM + \DM) \cap [0, 1]^n$.
\end{lemma}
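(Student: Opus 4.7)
The plan is to proceed by induction on $i$. The base case $i = 0$ is immediate from the initializations: $\vy^{(0)} = \argmin_{\vx \in \NDM} \norm{\vx}{}$ lies in $\NDM$ by construction, and $\vz^{(0)} = \vzero$ lies in $0 \cdot \DM = \{\vzero\}$ (using that $\vzero \in \DM$ by down-closedness), so $\vz^{(0)} \in \eps \cdot 0 \cdot \DM$ trivially.

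For the inductive step, the claim $\vy^{(i)} \in \NDM$ is an immediate consequence of convexity: the update $\vy^{(i)} = (1 - \eps)\vy^{(i-1)} + \eps \va^{(i)}$ is a convex combination of $\vy^{(i-1)} \in \NDM$ (by the inductive hypothesis) and $\va^{(i)} \in \NDM$ (an explicit LP constraint). The more substantial claim is $\vz^{(i)} \in \eps i \cdot \DM$. To handle this, I would analyze the Hadamard-style update coordinate-wise: setting $u_j = 1 - z_j$, unrolling the recursion gives $1 - z^{(i)}_j = \prod_{k=1}^{i}(1 - \eps b^{(k)}_j)$. Applying the elementary inequality $\prod_k(1 - x_k) \geq 1 - \sum_k x_k$ with $x_k = \eps b^{(k)}_j \in [0,1]$ yields the coordinate-wise linear upper bound $\vz^{(i)} \leq \eps \sum_{k=1}^{i} \vb^{(k)} = \eps i \cdot \bigl(\tfrac{1}{i} \sum_{k=1}^{i} \vb^{(k)}\bigr)$. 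Since $\DM$ is convex and each $\vb^{(k)} \in \DM$, the average lies in $\DM$, so the right-hand side lies in $\eps i \cdot \DM$. Because $\DM$ is down-closed, so is $\eps i \cdot \DM$; combined with $\vz^{(i)} \geq \vzero$ (which follows from the inductive hypothesis, $\vb^{(i)} \geq \vzero$, and $\vz^{(i-1)} \leq \vone$, the latter since $\eps(i-1) \leq 1$), this places $\vz^{(i)}$ itself in $\eps i \cdot \DM$.

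For the final conclusion that $\comb{i} \in (\NDM + \DM) \cap [0,1]^n$, I would first note that $\eps i \leq 1$ together with the down-closedness of $\DM$ gives $\eps i \cdot \DM \subseteq \DM$ (write $\eps i \vx$ as the convex combination $\eps i \cdot \vx + (1 - \eps i) \cdot \vzero$ with $\vzero \in \DM$), so $\vz^{(i)} \in \DM$. Then, rewriting $\comb{i} = \vy^{(i)} \psum \vz^{(i)} = \vy^{(i)} + (\vone - \vy^{(i)}) \odot \vz^{(i)}$, the second summand is coordinate-wise dominated by $\vz^{(i)} \in \DM$, and down-closedness of $\DM$ puts it in $\DM$ as well, yielding the required decomposition; membership in $[0,1]^n$ is automatic from the definition of $\psum$. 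The main obstacle I anticipate is precisely the coordinate-wise analysis of $\vz^{(i)}$: a multiplicative/Hadamard update of this form does not \emph{a priori} respect the linear scaled-polytope structure of $\eps i \cdot \DM$, and the key move is passing through the product $\prod_{k}(1 - \eps b^{(k)}_j)$ and invoking the product-versus-sum inequality, after which convexity and down-closedness do the rest.
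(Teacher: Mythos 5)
Your proof is correct and shares the paper's overall structure: induction on $i$, convexity for $\vy^{(i)}$, an upper bound on $\vz^{(i)}$ by a vector in $\eps i \cdot \DM$ followed by down-closedness, and the same decomposition $\comb{i} = \vy^{(i)} + (\vone - \vy^{(i)}) \odot \vz^{(i)}$ for the final conclusion. The one place you deviate is the key bound on $\vz^{(i)}$: the paper stays strictly within the inductive frame, writing $\vz^{(i-1)} = \eps(i-1)\vx$ for some $\vx \in \DM$ (from the hypothesis) and then bounding $\vz^{(i)} \leq \vz^{(i-1)} + \eps\vb^{(i)} = \eps i \bigl[(1 - i^{-1})\vx + i^{-1}\vb^{(i)}\bigr]$, whereas you fully unroll the multiplicative recursion to $\vone - \vz^{(i)} = \prod_{k=1}^i (\vone - \eps\vb^{(k)})$ coordinate-wise and then invoke the Weierstrass product inequality $\prod_k(1-x_k) \geq 1 - \sum_k x_k$ to obtain $\vz^{(i)} \leq \eps\sum_{k=1}^i \vb^{(k)}$. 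Both arrive at an upper bound of the form $\eps i \cdot (\text{convex combination of } \vb^{(k)}\text{'s or }\vx)$, so the two routes are close in spirit; yours is slightly more explicit about the convex combination that witnesses membership, and it makes the nonnegativity of $\vz^{(i)}$ (needed to invoke down-closedness) explicit, which the paper's proof leaves implicit. Neither argument buys anything the other lacks; the paper's stays closer to the inductive hypothesis while yours produces a cleaner closed-form upper bound.
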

\begin{proof}
We begin the proof by showing that the first part of the lemma implies its second part. Assume that $\vy^{(i)} \in \NDM$ and $\vz^{(i)} \in \eps i \cdot \DM \subseteq \DM$ (the inclusion holds by the down-monotonicity of $\DM$). The definition of $\psum$ guarantees that we always have $\comb{i} \in [0, 1]^n$. Thus, to prove that $\comb{i} \in (\NDM + \DM) \cap [0, 1]^n$, it suffices to show that $\comb{i}$ is the sum of a vector in $\NDM$ and a vector in $\DM$, which is the case since $\comb{i} = \vy^{(i)} + (\vone - \vy^{(i)}) \odot \vz^{(i)}$ and $(\vone - \vy^{(i)}) \odot \vz^{(i)} \in \DM$ by the down-closeness of $\DM$.

In the rest of the proof, we prove the first part of the lemma by induction. The base of the induction holds by the initializations of $\vy^{(0)}$ and $\vz^{(0)}$. Assume now that both $\vy^{(i - 1)} \in \NDM$ and $\vz^{(i - 1)} \in \eps(i - 1) \cdot \DM$ hold for some integer $1 \leq i \leq \eps^{-1}$, and let us prove that we also have $\vy^{(i)} \in \NDM$ and $\vz^{(i)} \in \DM$. For $\vy^{(i)}$ this is true since $\NDM$ is convex and $\vy^{(i)}$ is defined as a convex combination of $\vy^{(i - 1)}$ and $\va^{(i)}$, which are both vectors in $\NDM$. Additionally, since $\vz^{(i-1)} \in \eps(i - 1) \cdot \DM$, there must exist a vector $\vx \in \DM$ such that $\vz^{(i-1)} = \eps(i - 1) \cdot \vx$. Hence,
\begin{align*}
	\vz^{(i)}
	={} &
	\vz^{(i-1)} + \eps\cdot(\vone-\vz^{(i-1)})\odot\vb^{(i)}
	\leq
	\vz^{(i-1)} + \eps \cdot \vb^{(i)}\\
	={} &
	\eps (i - 1) \cdot \vx + \eps \cdot \vb^{(i)}
	=
	\eps i \cdot [(1 - i^{-1}) \cdot \vx + i^{-1} \cdot \vb^{(i)}]
	\in
	\eps i \cdot \DM
	\enspace,
\end{align*}
where the inclusion holds since the convexity of $\DM$ and the fact that both $\vx$ and $\vb^{(i)}$ are vectors in $\DM$ imply together that $(1 - i^{-1}) \cdot \vx + i^{-1} \cdot \vb^{(i)} \in \DM$. Thus, $\vz^{(i)}$ is upper bounded by a vector in $\eps i \cdot \DM$, which implies that $\vz^{(i)}$ itself also belongs to $\eps i \cdot \DM$ because the down-closeness of $\DM$ implies that $\eps i \cdot \DM$ is also down-closed.
\end{proof}

Our next goal is to lower bound the value of the output vector of Algorithm~\ref{alg:Offline}. We begin with the following lemma, which bounds the infinity norm of $\comb{i}$.
\begin{restatable}{lemma}{lemNormXY}\label{lem:normxy}
For every integer $0 \leq i \leq \eps^{-1}$, $\|\vz^{(i)}\|_\infty \leq 1 - (1 - \eps)^i$ and $\|\vy^{(i)}\|_\infty \leq \|\comb{i}\|_\infty \leq 1 - \term{1-\eps}^{i}(1 - m)$.
\end{restatable}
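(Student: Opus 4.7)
}

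The plan is to prove the three inequalities in the order: first the bound on $\|\vz^{(i)}\|_\infty$, then the easy coordinate-wise relation $\vy^{(i)} \leq \comb{i}$, and finally the bound on $\|\comb{i}\|_\infty$, which is the main content. All three will be proved by induction on $i$, with base case $i = 0$ following from the initializations $\vz^{(0)} = \vzero$ and $\|\vy^{(0)}\|_\infty = m$ (recall that $\vy^{(0)} = \argmin_{\vx \in \NDM} \|\vx\|$).

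For the $\vz^{(i)}$ bound, I would rewrite the update coordinate-wise as
\[
    \vone - \vz^{(i)} = (\vone - \vz^{(i-1)}) \odot (\vone - \eps \vb^{(i)})\enspace,
\]
and use the fact that $\vb^{(i)} \in \DM \subseteq [0,1]^n$ to conclude that every coordinate of $\vone - \eps \vb^{(i)}$ is at least $1 - \eps$. Iterating yields $\vone - \vz^{(i)} \geq (1 - \eps)^i \vone$ coordinate-wise, which is the desired bound.

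The middle inequality $\|\vy^{(i)}\|_\infty \leq \|\comb{i}\|_\infty$ is immediate from the definition of $\psum$: for every $j \in [n]$,
\[
	\comb{i}_j = y^{(i)}_j + (1 - y^{(i)}_j) \cdot z^{(i)}_j \geq y^{(i)}_j\enspace,
\]
since $z^{(i)}_j \geq 0$ and $y^{(i)}_j \leq 1$ (the latter follows from the LP constraint $\va^{(i)} + \vb^{(i)} \odot (\vone - \vy^{(i-1)}) \leq \vone$ combined with the induction hypothesis).

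The heart of the argument is the bound on $\|\comb{i}\|_\infty$, and here the key is that the LP constraint $\va^{(i)} + \vb^{(i)} \odot (\vone - \vy^{(i-1)}) \leq \vone$ was built precisely to control the joint growth of $\vy$ and $\vz$ through $\psum$ (not just each component separately). I would use $1 - \comb{i}_j = (1 - y^{(i)}_j)(1 - z^{(i)}_j)$ and substitute the two updates:
\[
	1 - y^{(i)}_j = (1-\eps)(1 - y^{(i-1)}_j) + \eps(1 - a^{(i)}_j)\enspace,
	\qquad
	1 - z^{(i)}_j = (1 - z^{(i-1)}_j)(1 - \eps b^{(i)}_j)\enspace.
\]
The LP constraint gives $1 - a^{(i)}_j \geq b^{(i)}_j(1 - y^{(i-1)}_j)$, so after substituting this lower bound and letting $q_j := (1 - y^{(i-1)}_j)(1 - z^{(i-1)}_j) = 1 - \comb{i-1}_j$, the expression collapses to
\[
	(1 - y^{(i)}_j)(1 - z^{(i)}_j) \geq q_j \cdot \bigl(1 - \eps + \eps b^{(i)}_j\bigr)\bigl(1 - \eps b^{(i)}_j\bigr)\enspace.
\]
A short expansion shows $(1 - \eps + \eps b^{(i)}_j)(1 - \eps b^{(i)}_j) = (1-\eps) + \eps^2 b^{(i)}_j(1 - b^{(i)}_j) \geq 1 - \eps$. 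Hence $1 - \comb{i}_j \geq (1-\eps)(1 - \comb{i-1}_j)$, and the induction hypothesis $1 - \comb{i-1}_j \geq (1-\eps)^{i-1}(1-m)$ completes the step.

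The only nontrivial obstacle is the algebraic step that combines the two different update rules with the LP constraint; once one spots that the cross-term $\eps^2 b^{(i)}_j(1 - b^{(i)}_j)$ is non-negative, the induction goes through cleanly. Everything else is bookkeeping.
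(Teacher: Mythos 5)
Your proof is correct and follows essentially the same route as the paper's: both reduce the $\comb{i}$ bound to showing $\vone - \comb{i} \geq (1-\eps)(\vone - \comb{i-1})$ coordinate-wise, using the LP constraint $\va^{(i)} + \vb^{(i)} \odot (\vone - \vy^{(i-1)}) \leq \vone$ to bound $1 - a^{(i)}_j$ from below, and then the observation that $(1 - \eps + \eps b)(1 - \eps b) = 1 - \eps + \eps^2 b(1-b) \geq 1 - \eps$. The $\vz^{(i)}$ bound is obtained in the paper via $\|\vz^{(i)}\|_\infty \leq \eps + (1-\eps)\|\vz^{(i-1)}\|_\infty$ rather than your multiplicative form $\vone - \vz^{(i)} \geq (1-\eps)(\vone - \vz^{(i-1)})$, but these are the same inequality read in two ways.
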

\begin{proof}
We prove the lemma by induction. For $i = 0$, the lemma holds since our choice of values for $\vy^{(0)}$ and $\vz^{(0)}$ guarantees that $\|\vz^{(0)}\|_\infty = \|\vzero\|_\infty = 0$ and
$
	\|\comb{0}\|_\infty
	=
	\|\vy^{(0)}\|_\infty
	=
	m
$. Let us now prove the lemma for $i \geq 1$ assuming it holds for $i - 1$. Note that
\begin{align*}
\vone - \comb{i}
={} &
(\vone-\vy^{(i)}) \odot (\vone-\vz^{(i)})\\
={} & \term{\vone-(1 - \eps) \cdot \vy^{(i-1)}-\varepsilon\va^{(i)}} \odot \term{\vone-\vz^{(i-1)}-\eps(\vone-\vz^{(i-1)}) \odot \vb^{(i)}}\\
\geq{} & (\vone-\vy^{(i-1)}) \odot (\vone-\varepsilon (\vone - \vb^{(i)})) \odot (\vone-\vz^{(i-1)}) \odot (\vone-\varepsilon\vb^{(i)})\\
\geq{} &
(\vone-\vy^{(i-1)}) \odot (\vone-\vz^{(i-1)}) \cdot (1-\varepsilon)
=
(1-\varepsilon) \cdot (\vone - \comb{i - 1})
\enspace,
\end{align*}
where the first inequality uses the fact that the inequality $\va^{(i)} + \vb^{(i)} \odot (\vone - \vy^{(i - 1)}) \leq \vone$ is one of the conditions of the linear program of Algorithm~\ref{alg:Offline}, which implies $\va^{(i)} - \vy^{(i - 1)} \leq (\vone - \vy^{(i - 1)}) \odot (\vone - \vb^{(i)})$. Hence, by the induction hypothesis,
\begin{align*}
	\|\comb{i}\|_\infty
	={} &
	\max_{j \in [n]} (\comb{i})_j
	\leq
	\max_{j \in [n]} [1 - (1-\varepsilon) \cdot (1 - (\comb{i - 1})_j)]\\
	={} &
	1 - (1 - \varepsilon) \cdot (1 - \max_{j \in [n]} (\comb{i - 1})_j)
	=
	1 - (1 - \varepsilon) \cdot (1 - \|\comb{i - 1}\|_\infty)\\
	\leq{} &
	1 - (1 - \varepsilon) \cdot [\term{1-\eps}^{i - 1}(1 - m)]
	=
	1 - \term{1-\eps}^{i}(1 - m)
	\enspace.
\end{align*}
Similarly, the induction hypothesis also implies that
\begin{align*}
	\|\vz^{(i)}\|_\infty
	={} &
	\|\vz^{(i - 1)} + \eps (\vone - \vz^{(i - 1)}) \odot \vb^{(i)}\|_\infty
	\leq
	\|\vz^{(i - 1)} + \eps (\vone - \vz^{(i - 1)})\|_\infty\\
	={} &
	\eps + (1 - \eps) \cdot \|\vz^{(i - 1)}\|_\infty
	\leq
	\eps + (1 - \eps) \cdot (1 - (1 - \eps)^{i - 1})
	=
	1 - (1 - \eps)^i
	\enspace.
	\qedhere
\end{align*}
\end{proof}

Using the last lemma, we prove two lower bounds on the optimal value of the linear program solved by Algorithm~\ref{alg:Offline}. Each one of these lower bounds is based on one possible solution for this linear program. The first such solution is given by the next lemma.

\begin{lemma} \label{lem:feasible_solution}
For every integer $1 \leq i \leq \eps^{-1}$, the assignment $\va^{(i)} = \optq$ and $\vb^{(i)} = \optpone$ is a feasible solution for the linear program solved by Algorithm~\ref{alg:Offline} in iteration number $i$.
\end{lemma}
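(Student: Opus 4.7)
\medskip
\noindent\textbf{Proof plan.} I will verify the four constraints of the linear program one by one for the proposed assignment $\va^{(i)} = \optq$ and $\vb^{(i)} = \optpone$. The membership constraints $\optq \in \NDM$ and $\optpone \in \DM$ are immediate from how these vectors are defined in the statement of Theorem~\ref{thm:Offline}, so no work is needed there.

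For the capacity constraint $\va^{(i)} + \vb^{(i)} \odot (\vone - \vy^{(i-1)}) \leq \vone$, I would use the fact that $\vo = \optq + \optpone$ lies in $(\NDM + \DM) \cap [0,1]^n$, and in particular $\optq + \optpone \leq \vone$. Substituting and rewriting gives
\[
    \optq + \optpone \odot (\vone - \vy^{(i-1)}) = \optq + \optpone - \optpone \odot \vy^{(i-1)} \leq \vone - \optpone \odot \vy^{(i-1)} \leq \vone,
\]
where the last step uses that $\optpone, \vy^{(i-1)} \geq \vzero$.

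The interesting constraint is the gradient constraint
\[
    \langle \optpone \odot (\vone - \vz^{(i-1)}), \nabla F(\vz^{(i-1)}) \rangle \geq (1-\eps)^{i-1} F(\optpone) - F(\vz^{(i-1)}).
\]
My plan is to set $\vy := \optpone \odot (\vone - \vz^{(i-1)}) \geq \vzero$ and observe $\vz^{(i-1)} + \vy = \vz^{(i-1)} \psum \optpone \leq \vone$, which lets me invoke Property~\ref{prop:dr_bound2_up} of Lemma~\ref{lem:DR_properties} to obtain
\[
    \langle \nabla F(\vz^{(i-1)}), \optpone \odot (\vone - \vz^{(i-1)}) \rangle \geq F(\vz^{(i-1)} \psum \optpone) - F(\vz^{(i-1)}).
\]
Then Corollary~\ref{cor:norm_bound}, together with the bound $\|\vz^{(i-1)}\|_\infty \leq 1 - (1-\eps)^{i-1}$ from Lemma~\ref{lem:normxy}, yields
\[
    F(\vz^{(i-1)} \psum \optpone) \geq (1 - \|\vz^{(i-1)}\|_\infty) \cdot F(\optpone) \geq (1-\eps)^{i-1} F(\optpone),
\]
and chaining these two inequalities gives the required bound.

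I do not anticipate any real obstacle; the mild subtlety is just picking the right direction $\vy$ to plug into Lemma~\ref{lem:DR_properties}, namely the ``absorbed'' version $\optpone \odot (\vone - \vz^{(i-1)})$ rather than $\optpone$ itself, which is what makes the result align cleanly with Corollary~\ref{cor:norm_bound} and with the left-hand side of the LP constraint.
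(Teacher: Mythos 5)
Your proposal is correct and follows essentially the same route as the paper: the membership constraints are immediate, the capacity constraint $\optq + \optpone \odot (\vone - \vy^{(i-1)}) \leq \vone$ is reduced to $\optq + \optpone \leq \vone$ exactly as the paper does, and the gradient constraint is verified by the same chain of Property~\ref{prop:dr_bound2_up}, Corollary~\ref{cor:norm_bound}, and Lemma~\ref{lem:normxy}. There is no gap and nothing to flag.
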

\begin{proof}
The definitions of $\optq$ and $\optpone$ immediately implies that the first two constraints of the linear program hold for the solution stated in the lemma. Thus, we concentrate on proving that this solution obeys also the other two constraints of the linear program. The third constraint of the linear program is
\[
	\langle \vb^{(i)} \odot (\vone-\vz^{(i-1)}), \nabla F(\vz^{(i-1)}) \rangle\geq (1 - \eps)^{i - 1} \cdot F(\optpone)- F(\vz^{(i-1)})
	\enspace.
\]
To see that this constraint is satisfied by our proposed solution, notice that, by Property~\ref{prop:dr_bound2_up} of Lemma~\ref{lem:DR_properties},
\begin{align*}
	\langle \optpone \odot (\vone-\vz^{(i-1)}), \nabla F(&\vz^{(i-1)}) \rangle
	\geq
	F(\optpone \psum \vz^{(i-1)})- F(\vz^{(i-1)})\\
	\geq{} &
	(1 - \|\vz^{(i - 1)}\|_\infty) \cdot F(\optpone) - F(\vz^{(i-1)})
	\geq
	\term{1-\eps}^{i - 1} \cdot F(\optpone) - F(\vz^{(i-1)})
	\enspace,
\end{align*}
where the second inequality holds by Corollary~\ref{cor:norm_bound}, and the last inequality follows from Lemma~\ref{lem:normxy}. 
The last constraint of the linear program is
\[
	\va^{(i)} + \vb^{(i)} \odot (\vone - \vy^{(i - 1)}) \leq \vone
	\enspace.
\]
This constraint is also satisfied by our proposed solution since 
$
	\optq + \optpone \odot (\vone - \vy^{(i - 1)})
	\leq
	\optq + \optpone
	\leq
	\vone
$.
\end{proof}

As promised, we can now get a lower bound on the optimal value of the linear program solved by Algorithm~\ref{alg:Offline}. Recall that the objective function of this linear program is given (in iteration $i$ of the algorithm) by $\langle\grad{F(\comb{i-1}) \odot (\vone-\vz^{(i-1)}),\va^{(i)}+\vb^{(i)} \odot (\vone-\vy^{(i - 1)})}\rangle$.

\begin{lemma}\label{lemma:right}
For every integer $1 \leq i \leq \eps^{-1}$,
\begin{align*}
	\langle\nabla F(\comb{i-1})& \odot (\vone-\vz^{(i-1)}),\va^{(i)}+\vb^{(i)} \odot (\vone-\vy^{(i - 1)})\rangle\\
	\geq{} &
	(1 - \eps)^{i - 1}(1 - m) \cdot F(\vo) + \term{1-\eps}^{i - 1}(1 - m) \cdot F(\vz^{(i-1)}) \\&+ \langle\nabla F(\comb{i-1}), \vy^{(i - 1)} \odot (\vone-\vz^{(i-1)})\rangle - 2F(\comb{i-1})
	\enspace.
\end{align*}
\end{lemma}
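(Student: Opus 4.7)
The plan is to use the LP optimality of $(\va^{(i)}, \vb^{(i)})$ together with the feasibility of $(\va,\vb) = (\optq,\optpone)$ established in Lemma~\ref{lem:feasible_solution}, which reduces the task to lower bounding the LP objective at $(\optq, \optpone)$. Using the identity $(\vone-\vy^{(i-1)})\odot(\vone-\vz^{(i-1)}) = \vone-\comb{i-1}$ on the $\vb$-coefficient, this objective equals $\langle\nabla F(\comb{i-1}), \optq\odot(\vone-\vz^{(i-1)})\rangle + \langle\nabla F(\comb{i-1}), \optpone\odot(\vone-\comb{i-1})\rangle$.

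To produce the $F(\vo)$ term, I would split $\optq\odot(\vone-\vz^{(i-1)}) = \optq\odot(\vone-\comb{i-1}) + \optq\odot\vy^{(i-1)}\odot(\vone-\vz^{(i-1)})$ and merge the first summand with the $\optpone$-part to obtain $\langle\nabla F(\comb{i-1}), \vo\odot(\vone-\comb{i-1})\rangle$. Since $\comb{i-1}+\vo\odot(\vone-\comb{i-1}) = \comb{i-1}\psum\vo \leq \vone$, Property~\ref{prop:dr_bound2_up} of Lemma~\ref{lem:DR_properties} bounds this below by $F(\comb{i-1}\psum\vo) - F(\comb{i-1})$; Corollary~\ref{cor:norm_bound} together with Lemma~\ref{lem:normxy} yields $F(\comb{i-1}\psum\vo) \geq (1-\|\comb{i-1}\|_\infty)F(\vo) \geq (1-\eps)^{i-1}(1-m)F(\vo)$, accounting for the first RHS term and one $-F(\comb{i-1})$.

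The delicate part, which produces the $F(\vz^{(i-1)})$ contribution, is the leftover summand $\langle\nabla F(\comb{i-1}), \optq\odot\vy^{(i-1)}\odot(\vone-\vz^{(i-1)})\rangle$. I would rewrite $\optq\odot\vy^{(i-1)}\odot(\vone-\vz^{(i-1)}) = \vy^{(i-1)}\odot(\vone-\vz^{(i-1)}) - (\vone-\optq)\odot\vy^{(i-1)}\odot(\vone-\vz^{(i-1)})$; the first component is exactly the $\langle\nabla F(\comb{i-1}),\vy^{(i-1)}\odot(\vone-\vz^{(i-1)})\rangle$ term that appears on the RHS. For the correction, the key observation is that $(\vone-\optq)\odot\vy^{(i-1)}\odot(\vone-\vz^{(i-1)}) \leq \vy^{(i-1)}\odot(\vone-\vz^{(i-1)}) \leq \comb{i-1}$, so Property~\ref{prop:dr_bound2_down} bounds it above by $F(\comb{i-1}) - F(\comb{i-1} - (\vone-\optq)\odot\vy^{(i-1)}\odot(\vone-\vz^{(i-1)}))$, and a direct algebraic simplification shows that the subtrahend in this last $F$ equals $\vz^{(i-1)} + \optq\odot\vy^{(i-1)}\odot(\vone-\vz^{(i-1)}) = \vz^{(i-1)} \psum (\optq\odot\vy^{(i-1)})$.

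The main obstacle, and the only non-obvious step, is to resist discarding this $F$ value via mere non-negativity (which would lose the $F(\vz^{(i-1)})$ term entirely). Instead, I would apply Corollary~\ref{cor:norm_bound} once more to get $F(\vz^{(i-1)}\psum(\optq\odot\vy^{(i-1)})) \geq (1-\|\optq\odot\vy^{(i-1)}\|_\infty)F(\vz^{(i-1)}) \geq (1-\|\vy^{(i-1)}\|_\infty)F(\vz^{(i-1)}) \geq (1-\eps)^{i-1}(1-m)F(\vz^{(i-1)})$, where the middle step uses $\optq\leq\vone$ coordinate-wise and the last uses Lemma~\ref{lem:normxy}. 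Collecting the four contributions — noting that each of Properties~\ref{prop:dr_bound2_up} and~\ref{prop:dr_bound2_down} brings one $-F(\comb{i-1})$, jointly giving the $-2F(\comb{i-1})$ term — delivers the claimed inequality.
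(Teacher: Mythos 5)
Your proposal is correct and follows essentially the same route as the paper's proof: both reduce to the LP value at the feasible point $(\optq,\optpone)$ from Lemma~\ref{lem:feasible_solution}, decompose the resulting inner product into the $\vo\odot(\vone-\comb{i-1})$ piece and the $\optq\odot\vy^{(i-1)}\odot(\vone-\vz^{(i-1)})$ piece, and handle the first via Property~\ref{prop:dr_bound2_up} and the second via Property~\ref{prop:dr_bound2_down}, each time invoking Corollary~\ref{cor:norm_bound} and Lemma~\ref{lem:normxy} to convert the $\psum$-values into the desired $(1-\eps)^{i-1}(1-m)$ coefficients. The only cosmetic difference is that you arrive at the decomposition by a slightly longer algebraic detour (first splitting on $\vone-\comb{i-1}$, then refactoring), whereas the paper writes it in one step.
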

\begin{proof}
Since Lemma~\ref{lem:feasible_solution} guarantees that $\va^{(i)} = \optq$ and $\vb^{(i)} = \optpone$ is one feasible solution for the linear program solved in iteration $i$ of Algorithm~\ref{alg:Offline}, we get
\begin{align} \label{eq:opt_value}
	&
	\langle\nabla F(\comb{i-1}) \odot (\vone-\vz^{(i-1)}),\va^{(i)}+\vb^{(i)} \odot (\vone-\vy^{(i - 1)})\rangle\\\nonumber
	\geq{} &
	\langle\nabla F(\comb{i-1}) \odot (\vone-\vz^{(i-1)}),\optq+\optpone \odot (\vone-\vy^{(i - 1)})\rangle\\\nonumber
	={} &
	\langle\nabla F(\comb{i-1}), \vo \odot (\vone-\vy^{(i - 1)}) \odot (\vone-\vz^{(i-1)})\rangle \\&\mspace{300mu}\nonumber+ \langle\nabla F(\comb{i-1}), \optq \odot \vy^{(i - 1)} \odot (1-\vz^{(i-1)})\rangle
	\enspace.
\end{align}
The first term on the rightmost side of this inequality can be lower bounded, by Property~\ref{prop:dr_bound2_up} of Lemma~\ref{lem:DR_properties}, as follows.
\begin{align*}
	\langle\nabla F(\comb{i-1}), \vo \odot (\vone-\vy^{(i - 1)}) \odot {}&(\vone-\vz^{(i-1)})\rangle
	\geq
	F(\comb{i-1} \psum \vo) - F(\comb{i-1})\\
	\geq{} &
	(1 - \|\comb{i - 1}\|_\infty) \cdot F(\vo) - F(\comb{i-1})\\
	\geq{} &
	(1 - \eps)^{i - 1}(1 - m) \cdot F(\vo) - F(\comb{i-1})
	\enspace,
\end{align*}
where the second inequality holds by Corollary~\ref{cor:norm_bound}, and the last inequality follows from Lemma~\ref{lem:normxy}. 

Next, we need to lower bound the second term on the rightmost side of Inequality~\eqref{eq:opt_value}.
\begin{align*}
	\langle\nabla F(\comb{i-1}), \optq \odot \vy^{(i - 1)} &{}\odot (\vone-\vz^{(i-1)})\rangle - \langle\nabla F(\comb{i-1}), \vy^{(i - 1)} \odot (\vone-\vz^{(i-1)})\rangle\\
	={} &
	- \langle\nabla F(\comb{i-1}), \comb{i-1} - \vz^{(i-1)} \psum (\optq \odot \vy^{(i-1)}) \rangle\\
	\geq{} &
	F(\vz^{(i-1)} \psum (\optq \odot \vy^{(i - 1)})) - F(\comb{i-1})\\
	\geq{} &
	(1-\|\optq \odot \vy^{i - 1}\|_\infty)\cdot F(\vz^{(i-1)}) - F(\comb{i-1})\\
	\geq{} &
	(1-\|\vy^{i - 1}\|_\infty)\cdot F(\vz^{(i-1)}) - F(\comb{i-1})\\
	\geq{} &
	\term{1-\eps}^{i - 1}(1 - m) \cdot F(\vz^{(i-1)}) - F(\comb{i-1})
	\enspace,
\end{align*}
where the first inequality holds by Property~\eqref{prop:dr_bound2_down} of Lemma~\ref{lem:DR_properties} since $\comb{i-1} - \vz^{(i-1)} \psum (\optq \odot \vy^{(i-1)} \geq \comb{i-1} - \vz^{(i-1)} \psum (\vone \odot \vy^{(i-1)}) = \vzero$, the second inequality follows from Corollary~\ref{cor:norm_bound}, and the last inequality holds due to Lemma~\ref{lem:normxy}.
\end{proof}

The lower bound given by the last lemma depends on the term $F(\vz^{(i - 1)})$. Thus, to make this lower bound useful, we need to prove also a lower bound on this term, which is done by the next lemma.
\begin{lemma} \label{lem:z_value}
For every integer $0 \leq i \leq \eps^{-1}$, $F(\vz^{(i)}) \geq \eps i \cdot (1 - \eps)^{i - 1} \cdot F(\optpone) - i \cdot \eps^2 \beta D^2/[2(1 - m)^2]$.
\end{lemma}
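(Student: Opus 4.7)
The proof will proceed by induction on $i$. The base case $i=0$ is immediate, since $\vz^{(0)} = \vzero$ gives $F(\vz^{(0)}) \geq 0$ by non-negativity, while the right-hand side vanishes.

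For the inductive step, I will apply $\beta$-smoothness of $F$ to the pair $\vz^{(i-1)}$ and $\vz^{(i)} = \vz^{(i-1)} + \eps (\vone - \vz^{(i-1)}) \odot \vb^{(i)}$, which yields
\[
    F(\vz^{(i)}) \geq F(\vz^{(i-1)}) + \eps\langle \nabla F(\vz^{(i-1)}), (\vone - \vz^{(i-1)}) \odot \vb^{(i)}\rangle - \tfrac{\beta \eps^2}{2} \|(\vone - \vz^{(i-1)}) \odot \vb^{(i)}\|_2^2.
\]
The linear term is lower bounded, via the third constraint of the linear program, by $\eps[(1 - \eps)^{i-1} F(\optpone) - F(\vz^{(i-1)})]$. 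Combining the inequality obtained this way with the induction hypothesis applied to $F(\vz^{(i - 1)})$ gives, after telescoping, the bound $\eps i (1-\eps)^{i-1} F(\optpone) - [(1 - \eps)(i-1) + 1] \cdot \eps^2 \beta \cdot \text{(error)}$, and since $(1-\eps)(i-1) + 1 \leq i$, this matches the claimed bound provided the quadratic term is at most $\eps^2 \beta D^2 / [2(1-m)^2]$.

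The main obstacle, therefore, is bounding the quadratic error by $D^2/(1-m)^2$. Since $(\vone - \vz^{(i-1)}) \leq \vone$ componentwise, it suffices to show $\|\vb^{(i)}\|_2 \leq D/(1-m)$. For this, I will use the initial point $\vy^{(0)} \in \NDM$, which by construction satisfies $\|\vy^{(0)}\|_\infty = m$. Consequently $\vone - \vy^{(0)} \geq (1-m)\vone$, and for any $\vb \in \DM \subseteq [0,1]^n$ we have $(1-m)\vb \leq (1-m)\vone \leq \vone - \vy^{(0)}$, so that $\vy^{(0)} + (1-m)\vb \in [0,1]^n$. Moreover $(1-m)\vb = (1-m)\vb + m \cdot \vzero \in \DM$ by convexity of $\DM$ (since $\vzero \in \DM$ by down-closedness), so $\vy^{(0)} + (1-m)\vb \in (\NDM + \DM) \cap [0,1]^n$. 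Since $\vy^{(0)}$ itself also lies in this set, the diameter bound yields $(1-m)\|\vb\|_2 = \|(1-m)\vb\|_2 \leq D$, i.e., $\|\vb\|_2 \leq D/(1-m)$. Applied to $\vb = \vb^{(i)}$, this gives exactly the required estimate.

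Putting these pieces together, the inductive step produces
\[
    F(\vz^{(i)}) \geq (1-\eps) F(\vz^{(i-1)}) + \eps(1-\eps)^{i-1} F(\optpone) - \tfrac{\eps^2 \beta D^2}{2(1-m)^2},
\]
and substituting the induction hypothesis into the first term completes the proof. The only subtle point is the geometric argument relating $\|\vb^{(i)}\|_2$ to $D$; the remainder is a mechanical smoothness-plus-LP-constraint computation.
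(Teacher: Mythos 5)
Your proof is correct and follows essentially the same route as the paper's: induction on $i$, lower bounding the linear term via the third LP constraint, controlling the quadratic error via $\beta$-smoothness and the observation that both $\vy^{(0)}$ and $\vy^{(0)}+(1-m)\vb^{(i)}$ lie in $(\NDM+\DM)\cap[0,1]^n$ (so $\|\vb^{(i)}\|_2 \leq D/(1-m)$), and then telescoping with the crude bound $(1-\eps)(i-1)+1 \leq i$. The only cosmetic difference is that you invoke the descent lemma directly where the paper expands the same inequality via the chain rule and Cauchy--Schwarz, and you spell out the feasibility of $\vy^{(0)}+(1-m)\vb^{(i)}$ in a bit more detail.
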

\begin{proof}
We prove the lemma by induction on $i$. For $i = 0$, the lemma trivially holds by the non-negativity of $F$. Assume now that the lemma holds for $i - 1$, and let us prove it for $i \geq 1$. By the chain rule,
\begin{align*}
	F(\vz^{(i)})
	={} &
	F(\vz^{(i - 1)} + \eps(\vone - \vz^{(i - 1)}) \odot \vb^{(i)})\\
	={} &
	F(\vz^{(i - 1)}) + \eps \cdot \inner{(\vone - \vz^{(i - 1)}) \odot \vb^{(i)}}{\nabla F(\vz^{(i - 1)})} \\&\mspace{90mu}+ \int_0^\eps \inner{(\vone - \vz^{(i - 1)}) \odot \vb^{(i)}}{\nabla F(\vz^{(i - 1)} + \tau(\vone - \vz^{(i - 1)}) \odot \vb^{(i)}) - \nabla F(\vz^{(i - 1)})} d\tau\\
	\geq{} &
	F(\vz^{(i - 1)}) + \eps \cdot [(1 - \eps)^{i - 1} \cdot F(\optpone)- F(\vz^{(i-1)})] \\&\mspace{50mu} - \int_0^\eps \|(\vone - \vz^{(i - 1)}) \odot \vb^{(i)}\|_2 \cdot \|\nabla F(\vz^{(i - 1)} + \tau(\vone - \vz^{(i - 1)}) \odot \vb^{(i)}) - \nabla F(\vz^{(i - 1)})\|_2 d\tau\\
	\geq{} &
	(1 - \eps) \cdot F(\vz^{(i - 1)}) + \eps(1 - \eps)^{i - 1} \cdot F(\optpone) - \int_0^\eps \tau \cdot \beta \|(\vone - \vz^{(i - 1)}) \odot \vb^{(i)}\|_2^2 d\tau\\
	\geq{}&
	(1 - \eps) \cdot F(\vz^{(i - 1)}) + \eps(1 - \eps)^{i - 1} \cdot F(\optpone) - \eps^2 \beta D^2 / [2(1 - m)^2]
	\enspace,
\end{align*}
where the first inequality follows from the Cauchy–Schwarz inequality and fact that $\vb^{(i)}$ is part of a feasible solution for the linear program that Algorithm~\ref{alg:Offline} solves at iteration $i$, the second inequality holds by the $\beta$-smoothness of $F$, and the last inequality uses the observation that since $\|\vy^{(0)}\|_\infty = m$ and $\DM$ is down-closed, both $\vy^{(0)}$ and $\vy^{(0)} + (1 - m) \cdot \vb^{(i)}$ are vectors in $(\NDM + \DM) \cap [0, 1]^n$, and thus,
\[
	\|(\vone - \vz^{(i - 1)}) \odot \vb^{(i)}\|_2
	\leq
	\|\vb^{(i)}\|_2
	= \frac{\|(\vy^{(0)} + (1 - m) \cdot \vb^{(i)}) - \vy^{(0)}\|_2}{1 - m}
	\leq
	\frac{D}{1 - m}
	\enspace.
\]

Plugging the induction hypothesis into the last inequality yields
\begin{align*}
	F(\vz^{(i)})
	\geq{} &
	(1 - \eps) \cdot [\eps (i - 1) \cdot (1 - \eps)^{i - 2} \cdot F(\optpone)] + \eps(1 - \eps)^{i - 1} \cdot F(\optpone) - i \cdot \eps^2 \beta D^2 /[2(1 - m)^2]\\
	={} &
	\eps i \cdot (1 - \eps)^{i - 1} \cdot F(\optpone)  - i \cdot \eps^2 \beta D^2 / [2(1 - m)^2]
	\enspace.
	\qedhere
\end{align*}
\end{proof}

We now present, in Lemma~\ref{lem:feasible_solution2}, another possible solution for the linear program solved by Algorithm~\ref{alg:Offline}. Corollary~\ref{cor:advance_second_half} then states the lower bound implied by this solution for the optimal value of the objective function of this linear program.
\begin{lemma} \label{lem:feasible_solution2}
For every integer $1 \leq i \leq \eps^{-1}$, the assignment $\va^{(i)} = \vy^{(i - 1)}$ and $\vb^{(i)} = \optptwo$ is a feasible solution for the linear program solved by Algorithm~\ref{alg:Offline} in iteration number $i$.
\end{lemma}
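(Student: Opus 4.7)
\medskip

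\noindent\textbf{Proof plan for Lemma~\ref{lem:feasible_solution2}.} There are four constraints to verify for the candidate assignment $\va^{(i)} = \vy^{(i-1)}$, $\vb^{(i)} = \optptwo$. The first two, namely $\va^{(i)} \in \NDM$ and $\vb^{(i)} \in \DM$, are immediate: membership of $\vy^{(i-1)}$ in $\NDM$ was already proved in Lemma~\ref{lem:membership}, and $\optptwo \in \DM$ holds by hypothesis. So the real content of the lemma is checking constraints three and four.

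For the third constraint, I would mirror the argument used in the proof of Lemma~\ref{lem:feasible_solution}, but apply Corollary~\ref{cor:norm_bound} with the roles of the two arguments swapped. Concretely, Property~\ref{prop:dr_bound2_up} of Lemma~\ref{lem:DR_properties} yields
\[
    \langle \optptwo \odot (\vone-\vz^{(i-1)}), \nabla F(\vz^{(i-1)}) \rangle \geq F(\vz^{(i-1)} \psum \optptwo) - F(\vz^{(i-1)}).
\]
Then applying Corollary~\ref{cor:norm_bound} so that the bound is in terms of $F(\optptwo)$ (rather than $F(\vz^{(i-1)})$) gives $F(\vz^{(i-1)} \psum \optptwo) \geq (1 - \|\vz^{(i-1)}\|_\infty) \cdot F(\optptwo)$, after which Lemma~\ref{lem:normxy} provides $1 - \|\vz^{(i-1)}\|_\infty \geq (1-\eps)^{i-1}$. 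Finally, invoking the standing assumption that $F(\optptwo) \geq F(\optpone)$ converts the right-hand side into $(1-\eps)^{i-1} \cdot F(\optpone) - F(\vz^{(i-1)})$, as required.

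For the fourth constraint, the substitution turns it into the pointwise inequality $\vy^{(i-1)} + \optptwo \odot (\vone - \vy^{(i-1)}) \leq \vone$, which is exactly $\vy^{(i-1)} \psum \optptwo \leq \vone$. Since both $\vy^{(i-1)}$ and $\optptwo$ lie in $[0,1]^n$ (the former by Lemma~\ref{lem:membership}, the latter because $\optptwo \in \DM \subseteq [0,1]^n$), one may write, for each coordinate $j$,
\[
    (\vy^{(i-1)})_j + (\optptwo)_j \cdot (1-(\vy^{(i-1)})_j) = 1 - (1-(\vy^{(i-1)})_j)(1-(\optptwo)_j) \leq 1,
\]
which finishes the verification.

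\medskip

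\noindent\textbf{Where the subtlety lies.} None of the four checks involve a technical obstacle individually; the only non-trivial point is recognising that Corollary~\ref{cor:norm_bound} should be applied in the direction that lower bounds $F(\vz^{(i-1)} \psum \optptwo)$ by a multiple of $F(\optptwo)$ (not of $F(\vz^{(i-1)})$), and that the assumption $F(\optptwo) \geq F(\optpone)$ is what makes the right-hand side of the third constraint come out correctly. This is precisely the role of the additional vector $\optptwo$: it provides an alternative ``target'' for $\vz^{(i-1)}$ that forces the down-closed iterate to accumulate value, without demanding that the original $\optpone$ be used in the LP.
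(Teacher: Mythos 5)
Your proof is correct and matches the paper's argument essentially step for step: verify the first two constraints by Lemma~\ref{lem:membership} and the definition of $\optptwo$, check the third constraint by combining Property~\ref{prop:dr_bound2_up} of Lemma~\ref{lem:DR_properties}, Corollary~\ref{cor:norm_bound}, Lemma~\ref{lem:normxy}, and the assumption $F(\optptwo) \geq F(\optpone)$, and check the fourth constraint by the coordinate-wise bound $\vy^{(i-1)} + \optptwo \odot (\vone - \vy^{(i-1)}) \leq \vone$. One small note: your remark that Corollary~\ref{cor:norm_bound} is applied ``with the roles of the two arguments swapped'' relative to Lemma~\ref{lem:feasible_solution} is a misdescription --- in both proofs the corollary is used in the same direction, lower bounding $F(\vz^{(i-1)} \psum \cdot)$ by $(1-\|\vz^{(i-1)}\|_\infty)\cdot F(\cdot)$ --- but this has no bearing on correctness.
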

\begin{proof}
Lemma~\ref{lem:membership} shows that $\va^{(i)} = \vy^{(i - 1)} \in \NDM$, and by definition we have $\optptwo \in \DM$. Thus, the first two constraints of the linear program are satisfied by the solution stated in the lemma. 
The third constraint of this linear program is
\[
	\langle \vb^{(i)} \odot (\vone-\vz^{(i-1)}), \nabla F(\vz^{(i-1)}) \rangle\geq (1 - \eps)^{i - 1} \cdot F(\optpone)- F(\vz^{(i-1)})
	\enspace.
\]
Repeating the part of the proof of Lemma~\ref{lem:feasible_solution} related to this constraint, with $\optptwo$ taking the role of $\optpone$, we get
\begin{align*}
	\langle \optptwo \odot (\vone-\vz^{(i-1)}), \nabla F(\vz^{(i-1)}) \rangle
	\geq{} &
	\term{1-\eps}^{i - 1} \cdot F(\optptwo) - F(\vz^{(i-1)})\\
	\geq{} &
	\term{1-\eps}^{i - 1} \cdot F(\optpone) - F(\vz^{(i-1)})
	\enspace,
\end{align*}
where the second inequality holds by our assumption that $F(\optptwo) \geq F(\optpone)$.
Hence, the above stated third constraint is satisfied by our solution, and it only remains to prove that this solution also satisfies the last constraint of the linear program, which is
\[
	\va^{(i)} + \vb^{(i)} \odot (\vone - \vy^{(i - 1)}) \leq \vone
	\enspace.
\]
This is indeed the case since 
$
	\vy^{(i - 1)} + \optptwo \odot (\vone - \vy^{(i - 1)})
	\leq
	\vy^{(i - 1)} + (\vone - \vy^{(i - 1)})
	=
	\vone
$.
\end{proof}
\begin{corollary} \label{cor:advance_second_half}
For every integer $1 \leq i \leq \eps^{-1}$,
\begin{align*}
	\langle\nabla F(\comb{i-1}) \odot (\vone-{}&\vz^{(i-1)}),\va^{(i)}+\vb^{(i)} \odot (\vone-\vy^{(i - 1)})\rangle
	\geq{}
	(1 - \eps)^{i - 1}(1 - m) \cdot F(\optptwo) \\&- F(\comb{i-1}) + \langle\nabla F(\comb{i-1}), \vy^{(i - 1)} \odot (\vone-\vz^{(i-1)})\rangle
	\enspace.
\end{align*}
\end{corollary}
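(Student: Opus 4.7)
The plan is to obtain the lower bound by evaluating the LP objective at the specific feasible assignment supplied by Lemma~\ref{lem:feasible_solution2}, namely $\va^{(i)} = \vy^{(i-1)}$ and $\vb^{(i)} = \optptwo$. Since the LP is a maximization problem, the value of its objective at this feasible point is a lower bound on the optimum.

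First I would expand the objective at this choice. Using the fact that Hadamard product associates inside an inner product, one gets
\[
    \inner{\nabla F(\comb{i-1}) \odot (\vone-\vz^{(i-1)})}{\vy^{(i-1)}+\optptwo \odot (\vone-\vy^{(i-1)})}
    = A + B,
\]
where $A = \inner{\nabla F(\comb{i-1})}{\vy^{(i-1)} \odot (\vone-\vz^{(i-1)})}$ already matches the desired term on the right-hand side, and $B = \inner{\nabla F(\comb{i-1})}{\optptwo \odot (\vone-\vy^{(i-1)}) \odot (\vone-\vz^{(i-1)})}$. The key algebraic observation is that $(\vone-\vy^{(i-1)}) \odot (\vone-\vz^{(i-1)}) = \vone - \comb{i-1}$, so $B = \inner{\nabla F(\comb{i-1})}{\optptwo \odot (\vone-\comb{i-1})}$.

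Next I would lower-bound $B$ by applying Property~\ref{prop:dr_bound2_up} of Lemma~\ref{lem:DR_properties} with $\vx = \comb{i-1}$ and $\vy = \optptwo \odot (\vone - \comb{i-1})$ (which is non-negative and keeps $\vx+\vy = \comb{i-1} \psum \optptwo \leq \vone$), giving $B \geq F(\comb{i-1} \psum \optptwo) - F(\comb{i-1})$. Then Corollary~\ref{cor:norm_bound}, applied with the roles $\vx = \optptwo$ and $\vy = \comb{i-1}$, yields $F(\comb{i-1} \psum \optptwo) \geq (1 - \|\comb{i-1}\|_\infty) \cdot F(\optptwo)$, and Lemma~\ref{lem:normxy} bounds $1 - \|\comb{i-1}\|_\infty \geq (1-\eps)^{i-1}(1-m)$. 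Chaining these gives $B \geq (1-\eps)^{i-1}(1-m) \cdot F(\optptwo) - F(\comb{i-1})$, which combined with $A$ and the LP optimality completes the proof.

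There is essentially no real obstacle here; the argument is a direct rerun of the structure used in Lemma~\ref{lemma:right}, but simpler because only one $\psum$-term needs to be handled (no second term involving $\vz^{(i-1)}$ appears). The only thing to be careful about is the direction in which Corollary~\ref{cor:norm_bound} is invoked: one must put the $\ell_\infty$-norm bound on $\comb{i-1}$, not on $\optptwo$, so $\optptwo$ plays the role of the first argument and $\comb{i-1}$ the role of the second.
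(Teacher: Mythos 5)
Your proposal is correct and follows essentially the same route as the paper's proof: plug the feasible solution $(\vy^{(i-1)}, \optptwo)$ from Lemma~\ref{lem:feasible_solution2} into the LP objective, split it into the two inner-product terms, then bound the $\optptwo$-term by chaining Property~\ref{prop:dr_bound2_up} of Lemma~\ref{lem:DR_properties}, Corollary~\ref{cor:norm_bound}, and Lemma~\ref{lem:normxy}. Your explicit use of the identity $(\vone-\vy^{(i-1)}) \odot (\vone-\vz^{(i-1)}) = \vone - \comb{i-1}$ and your remark about which vector plays which role in Corollary~\ref{cor:norm_bound} are both accurate and only make the paper's implicit steps more transparent.
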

\begin{proof}
Recall that the left hand side of the inequality of the lemma is the objective function of the linear program solved by Algorithm~\ref{alg:Offline} in iteration $i$. Thus, its value is at least the value obtained by plugging in the feasible solution described by Lemma~\ref{lem:feasible_solution2}. Hence,
\begin{align*}
	\langle\nabla F(\comb{i-1})& \odot (\vone-\vz^{(i-1)}),\va^{(i)}+\vb^{(i)} \odot (\vone-\vy^{(i - 1)})\rangle\\
	\geq{} &
	\langle\nabla F(\comb{i-1}) \odot (\vone-\vz^{(i-1)}),\vy^{(i - 1)}+\optptwo \odot (\vone-\vy^{(i - 1)})\rangle\\
	={} &
	\langle\nabla F(\comb{i-1}),\optptwo \odot (\vone-\vy^{(i - 1)}) \odot (\vone-\vz^{(i-1)})\rangle\\
	&\mspace{200mu}+\langle\nabla F(\comb{i-1}), \vy^{(i - 1)} \odot (\vone-\vz^{(i-1)})\rangle
	\enspace.
\end{align*}
To complete the proof of the corollary, it remains to observe that
\begin{align*}
	\langle\nabla F(\comb{i-1}),\optptwo \odot (\vone-\vy^{(i - 1)}&) \odot (\vone-\vz^{(i-1)})\\
	\geq{} &
	F(\comb{i-1} \psum \optptwo) - F(\comb{i-1})\\
	\geq{} &
	(1 - \|\comb{i-1}\|_\infty) \cdot F(\optptwo) - F(\comb{i-1})\\
	\geq{} &
	(1 - \eps)^{i - 1}(1 - m) \cdot F(\optptwo) - F(\comb{i-1})
	\enspace,
\end{align*}
where the first inequality holds by Property~\ref{prop:dr_bound2_up} of Lemma~\ref{lem:DR_properties}, the second inequality holds by Corollary~\ref{cor:norm_bound}, and the last inequality follows from Lemma~\ref{lem:normxy}.
\end{proof}

Using the above results, we can now prove the following lemma about the rate in which the value of $F(y^{(i)})$ increases as a function of $i$.
\begin{lemma}\label{lem:offline}
For every integer $1\leq i\leq \eps^{-1}$, the value of $F(\comb{i})- F(\comb{i-1})$ can be lower bounded by both expressions
\begin{align*}
	\eps(1 - m) \cdot [(1 - \eps)^{i} \cdot{}& F(\vo) + \eps (1-\eps)^{2i - 2} (i - 1) \cdot F(\optpone)] \\&- 2\eps \cdot F(\comb{i-1}) - 3\eps^2 \cdot F(\comb{i-1}) - O(\tfrac{\eps^2\beta D^2}{1 - m})
	\enspace,
\end{align*}
and
\[
	\eps(1 - m) \cdot (1 - \eps)^{i} \cdot F(\optptwo) - \eps \cdot F(\comb{i-1}) - 3\eps^2 \cdot F(\comb{i-1}) - O(\tfrac{\eps^2\beta D^2}{1 - m})
	\enspace.
\]
\end{lemma}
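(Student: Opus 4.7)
The plan is to lower-bound $F(\comb{i}) - F(\comb{i-1})$ by combining $\beta$-smoothness of $F$ with an explicit expansion of the single-step change $\comb{i} - \comb{i-1}$, and then invoking Lemma~\ref{lemma:right} (for the first claim) or Corollary~\ref{cor:advance_second_half} (for the second claim) to exploit the LP solved by Algorithm~\ref{alg:Offline}.

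Concretely, I would first use $\beta$-smoothness to write $F(\comb{i}) - F(\comb{i-1}) \geq \inner{\nabla F(\comb{i-1})}{\comb{i} - \comb{i-1}} - \tfrac{\beta}{2}\|\comb{i} - \comb{i-1}\|_2^2$. A direct algebraic expansion, starting from $\comb{i} = \vy^{(i)} \psum \vz^{(i)}$ and the updates $\vy^{(i)} - \vy^{(i-1)} = \eps(\va^{(i)} - \vy^{(i-1)})$ and $\vz^{(i)} - \vz^{(i-1)} = \eps(\vone - \vz^{(i-1)}) \odot \vb^{(i)}$, yields the decomposition foreshadowed in Section~\ref{sec:technique}:
\[
	\comb{i} - \comb{i-1} = \eps(\vone - \vz^{(i-1)}) \odot [\va^{(i)} + \vb^{(i)} \odot (\vone - \vy^{(i-1)}) - \vy^{(i-1)}] - \eps^2 (\va^{(i)} - \vy^{(i-1)}) \odot \vb^{(i)} \odot (\vone - \vz^{(i-1)}).
\]
Taking the inner product with $\nabla F(\comb{i-1})$, the first-order piece equals $\eps$ times the LP objective of iteration $i$ minus $\eps \inner{\nabla F(\comb{i-1})}{\vy^{(i-1)} \odot (\vone - \vz^{(i-1)})}$. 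Invoking Lemma~\ref{lemma:right} on the LP value cancels this latter correction exactly against the matching $\inner{\nabla F(\comb{i-1})}{\vy^{(i-1)} \odot (\vone - \vz^{(i-1)})}$ term sitting on the right-hand side of that lemma, leaving $\eps(1-\eps)^{i-1}(1-m)[F(\vo) + F(\vz^{(i-1)})] - 2\eps F(\comb{i-1})$; the analogous calculation with Corollary~\ref{cor:advance_second_half} in place of Lemma~\ref{lemma:right} produces $\eps(1-\eps)^{i-1}(1-m) F(\optptwo) - \eps F(\comb{i-1})$. Since $(1-\eps)^{i-1} \geq (1-\eps)^i$, these already imply the $F(\vo)$ and $F(\optptwo)$ factors in the statement of the lemma.

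For the first bound I would then substitute Lemma~\ref{lem:z_value}, which gives $F(\vz^{(i-1)}) \geq \eps(i-1)(1-\eps)^{i-2} F(\optpone) - O((i-1)\eps^2 \beta D^2/(1-m)^2)$; after being multiplied by $\eps(1-\eps)^{i-1}(1-m)$ and using $i \leq \eps^{-1}$, the residual error is of order $\eps^2 \beta D^2/(1-m)$, and the main contribution matches the stated $\eps^2(1-m)(1-\eps)^{2i-2}(i-1) F(\optpone)$ piece (again, $(1-\eps)^{2i-3} \geq (1-\eps)^{2i-2}$). The second bound requires no such substitution.

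The hardest step is the second-order cross term $-\eps^2 \inner{\nabla F(\comb{i-1})}{(\va^{(i)} - \vy^{(i-1)}) \odot \vb^{(i)} \odot (\vone - \vz^{(i-1)})}$, which is exactly the novel $\psum$-discretization artifact highlighted in Section~\ref{sec:technique}. My plan is to split it as a difference of two inner products of $\nabla F(\comb{i-1})$ with non-negative vectors and bound each of them by a constant multiple of $F(\comb{i-1})$ via Lemma~\ref{lem:DR_properties}: the vector $\vy^{(i-1)} \odot \vb^{(i)} \odot (\vone - \vz^{(i-1)})$ is coordinate-wise dominated by $\vy^{(i-1)} \odot (\vone - \vz^{(i-1)}) \leq \comb{i-1}$, so Property~\ref{prop:dr_bound2_down} of Lemma~\ref{lem:DR_properties} together with $F \geq \vzero$ bounds its inner product with $\nabla F(\comb{i-1})$ by $F(\comb{i-1})$, and an analogous argument using the LP feasibility controls the $\va^{(i)} \odot \vb^{(i)} \odot (\vone - \vz^{(i-1)})$ piece by an additional $O(F(\comb{i-1}))$; the net effect is the $-3\eps^2 F(\comb{i-1})$ slack in the statement. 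Finally, the smoothness contribution $\tfrac{\beta}{2}\|\comb{i} - \comb{i-1}\|_2^2$ is $O(\eps^2 \beta D^2/(1-m))$, since the first-order part of $\comb{i} - \comb{i-1}$ has $\ell_2$-norm at most $\eps D$ (both $\vy^{(i-1)}$ and $\va^{(i)} + \vb^{(i)} \odot (\vone - \vy^{(i-1)})$ lie in $(\NDM + \DM) \cap [0,1]^n$ by LP feasibility and the down-closedness of $\DM$), while the $\eps^2$ part is controlled using $\|\vb^{(i)}\|_2 \leq D/(1-m)$ exactly as in the proof of Lemma~\ref{lem:z_value}. Collecting all the pieces yields the two claimed lower bounds.
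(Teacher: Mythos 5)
Your overall skeleton matches the paper's: a Taylor/chain-rule step yielding a first-order inner product plus a smoothness error, then Lemma~\ref{lemma:right} (resp.\ Corollary~\ref{cor:advance_second_half}) applied to the LP value, then Lemma~\ref{lem:z_value} to control $F(\vz^{(i-1)})$. The expansion $\comb{i} - \comb{i-1} = \eps(\vone - \vz^{(i-1)}) \odot [\va^{(i)} - \vy^{(i-1)} + \vb^{(i)} \odot (\vone - \vy^{(i-1)})] - \eps^2 (\va^{(i)} - \vy^{(i-1)}) \odot \vb^{(i)} \odot (\vone - \vz^{(i-1)})$ is correct, and the cancellation against the $\langle \nabla F(\comb{i-1}), \vy^{(i-1)} \odot (\vone - \vz^{(i-1)}) \rangle$ term is exactly right.

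The gap is in your treatment of the $\eps^2$ cross term. You propose to split $(\va^{(i)} - \vy^{(i-1)}) \odot \vb^{(i)} \odot (\vone - \vz^{(i-1)})$ into the two non-negative vectors $\va^{(i)} \odot \vb^{(i)} \odot (\vone - \vz^{(i-1)})$ and $\vy^{(i-1)} \odot \vb^{(i)} \odot (\vone - \vz^{(i-1)})$ and control each inner product via Lemma~\ref{lem:DR_properties}. After distributing the leading $-\eps^2$, the $\vy^{(i-1)}$ piece appears with sign $+\eps^2$, so you need a \emph{lower} bound on $\langle \nabla F(\comb{i-1}), \vy^{(i-1)} \odot \vb^{(i)} \odot (\vone - \vz^{(i-1)}) \rangle$; Property~\ref{prop:dr_bound2_down} gives only an upper bound, and Property~\ref{prop:dr_bound2_up} would require $\comb{i-1} + \vy^{(i-1)} \odot \vb^{(i)} \odot (\vone - \vz^{(i-1)}) \leq \vone$, i.e.\ $\vy^{(i-1)} \odot \vb^{(i)} \leq \vone - \vy^{(i-1)}$, which fails whenever some $y^{(i-1)}_j > 1/2$ and $b^{(i)}_j$ is large. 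Symmetrically, the $\va^{(i)}$ piece appears with sign $-\eps^2$, so you need an \emph{upper} bound on $\langle \nabla F(\comb{i-1}), \va^{(i)} \odot \vb^{(i)} \odot (\vone - \vz^{(i-1)}) \rangle$ via Property~\ref{prop:dr_bound2_down}, which requires $\va^{(i)} \odot \vb^{(i)} \odot (\vone - \vz^{(i-1)}) \leq \comb{i-1}$; taking $i=1$, $\vz^{(0)} = \vzero$, a coordinate with $y^{(0)}_j$ small (e.g.\ $0$) and $a^{(1)}_j, b^{(1)}_j$ moderate (consistent with the LP constraint $a^{(1)}_j + b^{(1)}_j(1-y^{(0)}_j) \leq 1$) shows this domination fails. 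So neither piece can be bounded as you claim.

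The paper circumvents this by not isolating the cross term: it writes (in your Taylor language) $\eps X - \eps^2\,\text{(cross)} = \eps(1-\eps)X + \eps^2\,[X - \text{(cross)}]$, and the combined $\eps^2$-piece equals $(\vone - \vz^{(i-1)}) \odot [(\vone - \vy^{(i-1)}) \odot \vb^{(i)} + (\va^{(i)} - \vy^{(i-1)}) \odot (\vone - \vb^{(i)})]$. The crucial structural feature is the factor $\vone - \vb^{(i)}$ (rather than $\vb^{(i)}$): splitting $(\va^{(i)} - \vy^{(i-1)}) \odot (\vone - \vb^{(i)}) = \va^{(i)} \odot (\vone - \vy^{(i-1)}) \odot (\vone - \vb^{(i)}) - \vy^{(i-1)} \odot (\vone - \va^{(i)}) \odot (\vone - \vb^{(i)})$ yields, together with $(\vone - \vy^{(i-1)}) \odot \vb^{(i)}$, three vectors each of which \emph{does} satisfy the preconditions of Lemma~\ref{lem:DR_properties} (two dominated by $\vone - \comb{i-1}$ for Property~\ref{prop:dr_bound2_up}, one dominated by $\comb{i-1}$ for Property~\ref{prop:dr_bound2_down}), and $F \geq \vzero$ then gives $\geq -3F(\comb{i-1})$. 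If you redo your step with the $\eps(1-\eps) + \eps^2$ recombination, the rest of your argument (including the $(1-\eps)^{i-1} \to (1-\eps)^i$ adjustment and the $F(\vz^{(i-1)})$ substitution) goes through.
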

\begin{proof} By the chain rule,
\begin{align} \label{eq:increase_comb}
F(\comb{i})&{}-F(\comb{i-1})\\\nonumber
={} &
F((\term{1-\eps}\vy^{(i-1)}+\eps\va^{(i)}) \psum (\vz^{(i-1)}+\eps(\vone-\vz^{(i-1)}) \odot \vb^{(i)}))-F(\comb{i-1})\\\nonumber
={} &
\int_0^\eps \Big\langle \frac{d[((1-\tau)\vy^{(i-1)}+\tau\va^{(i)}) \psum (\vz^{(i-1)}+\tau(\vone-\vz^{(i-1)}) \odot \vb^{(i)})]}{d\tau}, \\\nonumber&\mspace{150mu} \nabla F(((1-\tau)\vy^{(i-1)}+\tau\va^{(i)}) \psum (\vz^{(i-1)}+\tau(\vone-\vz^{(i-1)}) \odot \vb^{(i)}))\Big\rangle d\tau\\\nonumber
={} &
\int_0^\eps \langle(\vone - \vz^{(i - 1)}) \odot [(\vone - \vy^{(i - 1)}) \odot \vb^{(i)} + (\va^{(i)} - \vy^{(i - 1)}) \odot (\vone - 2\tau \cdot \vb^{(i)})], \\[-2mm]\nonumber&\mspace{150mu} \nabla F(((1-\tau)\vy^{(i-1)}+\tau\va^{(i)}) \psum (\vz^{(i-1)}+\tau(\vone-\vz^{(i-1)}) \odot \vb^{(i)}))\rangle d\tau
\enspace.
\end{align}
At this point, we need to lower bound the integrand on the rightmost side of the last equality. The first step towards obtaining this lower bound is the following inequality.
\begin{align} \label{eq:part1}
&
\langle(\vone - \vz^{(i - 1)}) \odot [(\vone - \vy^{(i - 1)}) \odot \vb^{(i)} + (\va^{(i)} - \vy^{(i - 1)}) \odot (\vone - 2\tau \cdot \vb^{(i)})], \\&\mspace{100mu}\nonumber \nabla F(((1-\tau)\vy^{(i-1)}+\tau\va^{(i)}) \psum (\vz^{(i-1)}+\tau(\vone-\vz^{(i-1)}) \odot \vb^{(i)})) - \nabla F(\comb{i - 1})\rangle\\\nonumber
\geq{}&
-\|(\vone - \vz^{(i - 1)}) \odot [(\vone - \vy^{(i - 1)}) \odot \vb^{(i)} + (\va^{(i)} - \vy^{(i - 1)}) \odot (\vone - 2\tau \cdot \vb^{(i)})]\|_2 \cdot \\&\mspace{80mu}\nonumber \|\nabla F(((1-\tau)\vy^{(i-1)}+\tau\va^{(i)}) \psum (\vz^{(i-1)}+\tau(\vone-\vz^{(i-1)}) \odot \vb^{(i)})) - \nabla F(\comb{i - 1})\|_2\\\nonumber
\geq{} &
-3D \cdot \beta \|(((1-\tau)\vy^{(i-1)}+\tau\va^{(i)}) \psum (\vz^{(i-1)}+\tau(\vone-\vz^{(i-1)}) \odot \vb^{(i)})) - (\comb{i - 1})\|_2\\\nonumber
={} &
-3\beta D \cdot \|\tau(\va^{(i)} - \vy^{(i - 1)}) \odot (\vone - \vz^{(i - 1)}) + \tau(\vone-\vz^{(i-1)}) \odot \vb^{(i)} \odot (\vone - \vy^{(i - 1)}) \\\nonumber&\mspace{300mu}- \tau^2(\vone-\vz^{(i-1)}) \odot (\va^{(i)} - \vy^{(i - 1)}) \odot \vb^{(i)} \|_2
\geq
-6\tau \beta D^2
\enspace,
\end{align}
where the first inequality follows from the Cauchy-Schwarz inequality, and the second inequality uses the $\beta$-smoothness of $F$ and the observation that since $\va^{(i)} + (\vone - \vy^{(i - 1)}) \odot \vb^{(i)}$, $\va^{(i)}$ and $\vy^{(i - 1)}$ are all vectors in $(\NDM + \DM) \cap [0, 1]^n$, it holds that
\begin{align*}
	\|(\vone - \vz^{(i - 1)}) \odot [(\vone - \vy^{(i - 1)}) \odot{}& \vb^{(i)} + (\va^{(i)} - \vy^{(i - 1)}) \odot (\vone - 2\tau \cdot \vb^{(i)})]\|_2\\
	\leq{} &
	\|(\va^{(i)} + (\vone - \vy^{(i - 1)}) \odot \vb^{(i)}) - \vy^{(i - 1)}\|_2 + 2\tau \cdot \|(\va^{(i)} - \vy^{(i - 1)}) \odot \vb^{(i)}\|_2\\
	\leq{} &
	D + 2\tau \cdot \|\va^{(i)} - \vy^{(i - 1)}\|_2
	\leq
	3D
	\enspace.
\end{align*}
Similarly, the last inequality of Inequality~\eqref{eq:part1} holds since
\begin{align*}
	\|\tau(\va^{(i)} - \vy^{(i - 1)}&) \odot (\vone - \vz^{(i - 1)}) + \tau(\vone-\vz^{(i-1)}) \odot \vb^{(i)} \odot (\vone - \vy^{(i - 1)}) \\\nonumber&\mspace{300mu}- \tau^2(\vone-\vz^{(i-1)}) \odot (\va^{(i)} - \vy^{(i - 1)}) \odot \vb^{(i)} \|_2\\
	={} &
	\|\tau(\va^{(i)} - \vy^{(i - 1)}) \odot (\vone - \vz^{(i - 1)}) \odot (1 - \tau \vb^{(i)}) + \tau(\vone-\vz^{(i-1)}) \odot \vb^{(i)} \odot (\vone - \vy^{(i - 1)})\|_2\\
	\leq{} &
	\|\va^{(i)} - \vy^{(i - 1)}\|_2 + \|(\vy^{(i - 1)} + (\vone-\vy^{(i-1)}) \odot \vb^{(i)}) - \vy^{(i - 1)}\|_2
	\leq
	2D
	\enspace.
\end{align*}

We now need another inequality.
{\allowdisplaybreaks\begin{align*}
	\langle(\vone - \vz^{(i - 1)}&) \odot [(\vone - \vy^{(i - 1)}) \odot \vb^{(i)} + (\va^{(i)} - \vy^{(i - 1)}) \odot (\vone - 2\tau \cdot \vb^{(i)})], \nabla F(\comb{i - 1})\rangle\\
	={} &
	(1 - 2\tau) \cdot \{\langle (\vone - \vy^{(i - 1)}) \odot \vb^{(i)} + \va^{(i)}, \nabla F(\comb{i - 1}) \odot (\vone - \vz^{(i - 1)})\rangle \\&- \langle \vy^{(i - 1)} \odot (\vone - \vz^{(i - 1)}), \nabla F(\comb{i - 1}) \rangle\} \\
	&+2\tau \cdot \langle(\vone - \vz^{(i - 1)}) \odot (\vone - \vy^{(i - 1)}) \odot \vb^{(i)}, \nabla F(\comb{i - 1})\rangle\\
	&+ 2\tau \cdot \langle (\vone - \vz^{(i - 1)}) \odot \va^{(i)} \odot (\vone - \vy^{(i - 1)}) \odot (\vone - \vb^{(i)}), \nabla F(\comb{i - 1})\rangle\\
	&- 2\tau \cdot \langle (\vone - \vz^{(i - 1)}) \odot \vy^{(i - 1)} \odot (\vone - \va^{(i)}) \odot (\vone - \vb^{(i)}), \nabla F(\comb{i - 1})\rangle\\
	\geq{} &
	(1 - 2\tau) \cdot \{\langle (\vone - \vy^{(i - 1)}) \odot \vb^{(i)} + \va^{(i)}, \nabla F(\comb{i - 1}) \odot (\vone - \vz^{(i - 1)})\rangle \\&- \langle \vy^{(i - 1)} \odot (\vone - \vz^{(i - 1)}), \nabla F(\comb{i - 1}) \rangle\} \\&+ 2\tau \cdot \{F(\vy^{(i - 1)} \psum \vz^{(i - 1)} \psum \vb^{(i)}) + F(\vy^{i-1} \psum \vz^{(i - 1)} \psum ((\vone - \vb^{(i)}) \odot \va^{(i)})) \\&+ F(((\va^{(i)} \psum \vb^{(i)}) \odot \vy^{(i - 1)}) \psum \vz^{(i - 1)}) - 3F(\comb{i - 1})\}\\
	\geq{} &
	(1 - 2\tau) \cdot \max\{(1 - m) \cdot [(1 - \eps)^{i - 1} F(\vo) + \term{1-\eps}^{2i - 3} \cdot \eps (i - 1) \cdot F(\optpone)] - O(\tfrac{\eps \beta D^2}{1 - m}) \\&\mspace{100mu}- 2F(\comb{i-1}), (1 - \eps)^{i - 1}(1 - m) \cdot F(\optptwo) - F(\comb{i-1})\} \\&\mspace{100mu}-6\tau \cdot F(\comb{i - 1})
	\enspace,
\end{align*}}%
where the second inequality holds by Properties~\ref{prop:dr_bound2_up} and~\ref{prop:dr_bound2_down} of Lemma~\ref{lem:DR_properties}, and the last inequality follows from Lemmata~\ref{lemma:right} and~\ref{lem:z_value}, Corollary~\ref{cor:advance_second_half} and the non-negativity of $F$.

Adding the last inequality to Inequality~\eqref{eq:part1}, we get the promised lower bound on the integrand on the rightmost side of Equality~\eqref{eq:increase_comb}; and plugging this lower bound into the equality yields
\begin{align*}
F(\comb{i})- F&(\comb{i-1})\\
\geq{} & (\eps - \eps^2) \cdot \max\{(1 - m) \cdot [(1 - \eps)^{i - 1} F(\vo) + \term{1-\eps}^{2i - 3} \cdot \eps (i - 1) \cdot F(\optpone)] \\&\mspace{10mu}  - 2F(\comb{i-1}), (1 - \eps)^{i - 1}(1 - m) \cdot F(\optptwo) - F(\comb{i-1})\} \\&- 3\eps^2 \cdot F(\comb{i - 1}) - O(\tfrac{\eps^2\beta D^2}{1 - m})
\enspace.
\qedhere
\end{align*}
\end{proof}

The last lemma implies recursive lower bounds on the values of $\{F(\comb{i}) \mid i \in \bZ, 0 \leq i \leq \eps^{-1}\}$. We need to get closed expression forms for these lower bounds, and the next two lemmata are steps towards this goal. To simplify the statements of these lemmata, it is useful to define $\alpha \triangleq (1 - m)(1 - \eps)^2$ and $g(t) \triangleq \term{e^{-t}-e^{-2t}} \cdot F(\vo) +\frac{t^2\cdot e^{-2t}}{2}\cdot F(\optpone)$.
\begin{lemma} \label{lem:bound1}
    For every integer $0\leq i\leq \eps^{-1}$, $F(\comb{i}) \geq \alpha \cdot g(\eps i) - 3\eps^2 \cdot \sum_{i' = 1}^{i} F(\comb{i' - 1}) - i \cdot O(\eps^2 (\alpha \cdot F(\vo) + \alpha \cdot F(\optpone) + \tfrac{\beta D^2}{1 - m}))$.
\end{lemma}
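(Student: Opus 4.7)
The natural approach is induction on $i$. The base case $i=0$ is immediate, because $g(0)=0$ and $F\geq 0$ force the claimed inequality to reduce to $F(\comb{0})\geq 0$. For the inductive step, the idea is to feed the first lower bound of Lemma~\ref{lem:offline} together with the induction hypothesis, and reduce the remaining task to a purely analytic identity about $g$.

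More concretely, the first bound of Lemma~\ref{lem:offline} rearranges to
\[
    F(\comb{i})\geq (1-2\eps)F(\comb{i-1})+\eps(1-m)\bigl[(1-\eps)^{i}F(\vo)+\eps(1-\eps)^{2i-2}(i-1)F(\optpone)\bigr]-3\eps^{2}F(\comb{i-1})-O\!\left(\tfrac{\eps^{2}\beta D^{2}}{1-m}\right).
\]
Because $1-2\eps\in[0,1]$, I would apply the induction hypothesis to $(1-2\eps)F(\comb{i-1})$, using the fact that multiplying a lower bound on a nonnegative quantity by a number in $[0,1]$ still gives a lower bound, and that the $-3\eps^{2}\sum F$ and additive error terms of the hypothesis only get weakened (absorbable into the same big-$O$) when scaled by $(1-2\eps)\leq 1$. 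After gathering the $-3\eps^{2}F(\comb{i-1})$ term into the running sum $-3\eps^{2}\sum_{i'=1}^{i}F(\comb{i'-1})$ and combining the per-iteration error terms into a single $i\cdot O(\eps^{2}(\alpha F(\vo)+\alpha F(\optpone)+\beta D^{2}/(1-m)))$, the required inequality reduces to the analytic claim
\[
    (1-2\eps)\,\alpha\, g(\eps(i-1))+\eps(1-m)(1-\eps)^{i}F(\vo)+\eps^{2}(1-m)(1-\eps)^{2i-2}(i-1)F(\optpone)\;\geq\;\alpha\,g(\eps i)-O\!\bigl(\eps^{2}\alpha[F(\vo)+F(\optpone)]\bigr).
\]

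The final step, verifying this inequality, is where the real work lies and is the main obstacle. The cleanest way to see why it should hold is that $g$ was engineered so that
\[
    g'(t)+2g(t)=e^{-t}F(\vo)+t\,e^{-2t}F(\optpone),
\]
and this ODE admits the discrete analog $g(s)=(1-2\eps)g(s-\eps)+\eps\cdot[e^{-s}F(\vo)+s\,e^{-2s}F(\optpone)]+O(\eps^{2})$. I would prove the analytic claim by Taylor-expanding $g(\eps i)$ around $\eps(i-1)$ to second order, using the closed form of $g$, and by using $(1-\eps)^{k}=e^{-\eps k}(1+O(\eps))$ uniformly for $0\leq k\leq 2\eps^{-1}$ to rewrite $(1-\eps)^{i}F(\vo)$ and $(1-\eps)^{2i-2}(i-1)F(\optpone)$ as $e^{-\eps i}F(\vo)$ and $(\eps i)e^{-2\eps i}F(\optpone)$ up to $O(\eps)$ multiplicative corrections, which produce only $O(\eps^{2}\alpha[F(\vo)+F(\optpone)])$ additive slack after the extra factor of $\eps$ out front. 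Matching coefficients of $F(\vo)$ and $F(\optpone)$ separately, the ODE identity guarantees the leading-order terms cancel, leaving only the claimed $O(\eps^{2})$ error.

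The bookkeeping is tedious but routine: the hard part is not any individual step but keeping the error terms clean as one pushes the induction hypothesis, the Lemma~\ref{lem:offline} recursion, and the Taylor expansion of $g$ through a single inequality, making sure every $O(\eps^{2})$ term that appears can be charged either to the per-iteration error budget or to the $-3\eps^{2}\sum F$ term. No other ingredient beyond Lemma~\ref{lem:offline} and calculus is required.
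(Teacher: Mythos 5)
Your plan matches the paper's proof: induction on $i$, feed the first bound of Lemma~\ref{lem:offline} together with the induction hypothesis, absorb the $-3\eps^2 F(\comb{i-1})$ term into the running sum, and reduce to the analytic inequality $(1-2\eps)\,g(\eps(i-1)) + \eps\,[e^{-\eps(i-1)}F(\vo) + \eps(i-1)e^{-2\eps(i-1)}F(\optpone)] \geq g(\eps i) - O(\eps^2(F(\vo)+F(\optpone)))$ (the paper's Inequality~\eqref{eq:g_lower_bound}), which the paper verifies by integrating $g'$ and bounding the second derivative. Your observation that this inequality is just the Euler discretization of the ODE $g'(t) + 2g(t) = e^{-t}F(\vo) + t e^{-2t}F(\optpone)$ is a cleaner way to see why the paper's explicit computation works, but it is the same argument.
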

\begin{proof}
We prove the lemma by induction on $i$. For $i = 0$, the lemma immediately follows from the non-negativity of $F$ since $g(0) = 0$. Assume now that the lemma holds for $i - 1$, and let us prove it for $i \geq 1$.

Observe that
\[
    g'(t) = (2e^{-2t} - e^{-t}) \cdot F(\vo) + te^{-2t}(1 - t) \cdot F(\optpone)
    \enspace;
\]
and therefore,
    \begin{align} \label{eq:g_diff}
        g(\eps i)-{}&g(\eps(i-1))=\int_{\varepsilon(i-1)}^{\varepsilon i}[(2e^{-2t}-e^{-t}) \cdot F(\vo)+te^{-2t}\term{1-t} \cdot F(\optpone)]dt\\\nonumber
        \leq{} &
        \int_{\eps(i-1)}^{\eps i} [(2e^{-2\eps (i - 1)}-e^{-\eps(i - 1)}) \cdot F(\vo) +\eps(i - 1)e^{-2\eps(i - 1)}\term{1-\eps(i - 1)} \cdot F(\optpone) \\\nonumber&\mspace{500mu}+ O(\eps(F(\vo) + F(\optpone)))]dt\\\nonumber
        ={} &
        \eps[(2e^{-2\eps (i - 1)}-e^{-\eps(i - 1)}) \cdot F(\vo)+\eps(i - 1)e^{-2\eps(i - 1)}\term{1-\eps(i - 1)} \cdot F(\optpone)] \\\nonumber&\mspace{500mu}+ O(\eps^2(F(\vo) + F(\optpone)))\\\nonumber
        ={} &
        \eps[e^{-\eps(i - 1)} \cdot F(\vo) + \eps(i - 1)e^{-2\eps(i - 1)} \cdot F(\optpone) - 2g(\eps(i - 1))] + O(\eps^2(F(\vo) + F(\optpone)))
        \enspace,
    \end{align}
where the inequality holds since, for every $t \in [\eps (i - 1), \eps i]$,
\begin{align*}
    (2e^{-2t}-e^{-t}) \cdot F(\vo&) +te^{-2t}\term{1-t} \cdot F(\optpone) \\
    &\mspace{50mu}- (2e^{-2\eps (i - 1)}-e^{-\eps(i - 1)}) \cdot F(\vo) - \eps(i - 1)e^{-2\eps(i - 1)}\term{1-\eps(i - 1)} \cdot F(\optpone)\\
    ={} &
    \int_{\eps(i - 1)}^t [(e^{-\tau} - 4e^{-2\tau}) \cdot F(\vo) + e^{-2\tau}((1 - \tau) - 2\tau(1 - \tau) - \tau)\cdot F(\optpone)] d\tau\\
    ={} &
    \int_{\eps(i - 1)}^t [(e^{-\tau} - 4e^{-2\tau}) \cdot F(\vo) + e^{-2\tau}(1 - 4\tau + 2\tau^2) \cdot F(\optpone)] d\tau\\
    \leq{} &
    \int_{\eps(i - 1)}^t \left(\frac{F(\vo)}{16} + F(\optpone)\right)d\tau
    =
    O(\eps (F(\vo) + F(\optpone)))
    \enspace.
\end{align*}
Rearranging Inequality~\eqref{eq:g_diff} yields
\begin{equation} \label{eq:g_lower_bound}
    (1 - 2\eps) \cdot g(\eps(i - 1)) + \eps [e^{-\eps(i - 1)} \cdot F(\vo) + \eps(i - 1)e^{-2\eps(i - 1)} \cdot F(\optpone)]
    \geq
    g(\eps i) - O(\eps^2(F(\vo) + F(\optpone)))
    \enspace.
\end{equation}

Now, we can use Lemma~\ref{lem:offline} (and the non-negativity of $F$) to get
{\allowdisplaybreaks\begin{align*}
    F(\comb{i})
		\geq{} &
		(1 - 2\eps - 3\eps^2) \cdot F(\comb{i-1}) \\&\mspace{100mu}+ \eps\alpha [(1 - \eps)^{i - 2} \cdot F(\vo) + \eps \term{1-\eps}^{2i - 4} (i - 1) \cdot F(\optpone)] - O(\tfrac{\eps^2\beta D^2}{1 - m})\\
		\geq{} &
		(1 - 2\eps) \cdot \Big[\alpha \cdot g(\eps (i - 1)) - 3\eps^2 \cdot \sum_{i' = 1}^{i - 2} F(\comb{i'}) \\&- (i - 1) \cdot O(\eps^2 (\alpha \cdot F(\vo) + \alpha \cdot F(\optpone) + \tfrac{\beta D^2}{1 - m}))\Big] - 3\eps^2 \cdot F(\comb{i - 1}) \\& + \eps\alpha [e^{-\eps(i - 1)} \cdot F(\vo) + \eps e^{-2\eps(i - 1)} (i - 1) \cdot F(\optpone)] - O(\tfrac{\eps^2\beta D^2}{1 - m})\\
    \geq{} &
    \alpha \cdot g(\eps i) - 3\eps^2 \cdot \sum_{i' = 1}^{i - 1} F(\comb{i'}) - i \cdot O(\eps^2 (\alpha \cdot F(\vo) + \alpha \cdot F(\optpone) + \tfrac{\beta D^2}{1 - m})
    \enspace,
\end{align*}}%
where the second inequality holds by the induction hypothesis, and the last inequality follows from Inequality~\eqref{eq:g_lower_bound}.
\end{proof}

\begin{lemma} \label{lem:bound2}
For every two integers $0 \leq i_s \leq i \leq \eps^{-1}$, $F(\comb{i}) \geq \alpha \eps (i - i_s)(1 - \eps)^{i - 2} \cdot F(\optptwo) + (1 - \eps)^{i - i_s} \cdot F(\comb{i_s}) - 3\eps^2 \cdot \sum_{i' = i_s + 1}^{i} F(\comb{i' - 1}) - (i - i_s) \cdot O(\tfrac{\eps^2\beta D^2}{1 - m})$.
\end{lemma}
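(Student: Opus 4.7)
The plan is to prove the lemma by induction on $i$ with $i_s$ held fixed, using the second lower bound from Lemma~\ref{lem:offline} as the one-step recurrence. The base case $i = i_s$ is immediate: every term on the right-hand side involving $(i - i_s)$ vanishes, leaving $F(\comb{i_s}) \geq F(\comb{i_s})$.

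For the inductive step, I would take the second bound of Lemma~\ref{lem:offline},
\[
    F(\comb{i}) - F(\comb{i-1}) \geq \eps(1-m)(1-\eps)^{i} \cdot F(\optptwo) - \eps \cdot F(\comb{i-1}) - 3\eps^2 \cdot F(\comb{i-1}) - O(\tfrac{\eps^2 \beta D^2}{1-m}),
\]
and rewrite it, using the identity $\alpha \eps (1-\eps)^{i-2} = \eps(1-m)(1-\eps)^{i}$ (which holds by the definition $\alpha = (1-m)(1-\eps)^2$), as
\[
    F(\comb{i}) \geq (1-\eps) \cdot F(\comb{i-1}) + \alpha \eps (1-\eps)^{i-2} \cdot F(\optptwo) - 3\eps^2 \cdot F(\comb{i-1}) - O(\tfrac{\eps^2 \beta D^2}{1-m}).
\]
Then I would plug the inductive hypothesis for $F(\comb{i-1})$ into the $(1-\eps) \cdot F(\comb{i-1})$ term. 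Multiplying the hypothesis by $(1-\eps)$ converts the $F(\optptwo)$ coefficient $\alpha \eps (i-1-i_s)(1-\eps)^{i-3}$ into $\alpha \eps (i-1-i_s)(1-\eps)^{i-2}$, and the $F(\comb{i_s})$ coefficient $(1-\eps)^{i-1-i_s}$ into $(1-\eps)^{i-i_s}$, which are exactly the forms needed.

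Adding the new $\alpha \eps (1-\eps)^{i-2} \cdot F(\optptwo)$ contribution increases the $F(\optptwo)$ coefficient from $(i-1-i_s)$ to $(i-i_s)$, as required. For the error sum, the inductive hypothesis contributes $-3\eps^2 (1-\eps) \sum_{i'=i_s+1}^{i-1} F(\comb{i'-1})$, and the one-step bound adds $-3\eps^2 F(\comb{i-1})$; since $F \geq 0$ and $1-\eps \leq 1$, the former is at least $-3\eps^2 \sum_{i'=i_s+1}^{i-1} F(\comb{i'-1})$, so together they are at least $-3\eps^2 \sum_{i'=i_s+1}^{i} F(\comb{i'-1})$, matching the target expression. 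Finally, the additive error accumulates from $(i-1-i_s) \cdot O(\tfrac{\eps^2 \beta D^2}{1-m})$ to $(i-i_s) \cdot O(\tfrac{\eps^2 \beta D^2}{1-m})$ after absorbing the new $O(\tfrac{\eps^2 \beta D^2}{1-m})$ term.

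There is no real obstacle here beyond bookkeeping: the lemma is essentially an unrolling of the second recurrence in Lemma~\ref{lem:offline}, where the relevant observation is that $\alpha$ was defined precisely so that $\alpha \eps (1-\eps)^{i-2}$ matches $\eps(1-m)(1-\eps)^i$, which makes each step contribute one additional unit to the $F(\optptwo)$ coefficient. The mildest subtlety is handling the $-3\eps^2 F(\comb{i-1})$ term separately from the $-\eps F(\comb{i-1})$ term (rather than folding both into the recurrence), so that the accumulated error sum in the conclusion is exactly $\sum_{i'=i_s+1}^{i} F(\comb{i'-1})$ with a clean constant of $3\eps^2$.
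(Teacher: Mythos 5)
Your proposal is correct and follows essentially the same approach as the paper: induction on $i$ with $i_s$ fixed, using the second bound from Lemma~\ref{lem:offline} rewritten via the identity $\eps(1-m)(1-\eps)^i = \alpha\eps(1-\eps)^{i-2}$ (which follows directly from $\alpha = (1-m)(1-\eps)^2$), then plugging the inductive hypothesis into the $(1-\eps) F(\comb{i-1})$ term and using $1-\eps \le 1$ together with the non-negativity of $F$ to absorb the factor $(1-\eps)$ on the error sum. The paper's proof is identical in structure, and your bookkeeping is correct.
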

\begin{proof}
We prove the lemma by induction on $i$. For $i = i_s$ the lemma is trivially true. Thus, assume that the lemma holds for $i - 1$, and let us prove it for $i > i_s$. By Lemma~\ref{lem:offline},
{\allowdisplaybreaks\begin{align*}
	F(\comb{i})
	\geq{} &
	(1 - \eps) \cdot F(\comb{i-1}) + \eps(1 - m) \cdot (1 - \eps)^{i} \cdot F(\optptwo) \\&\mspace{300mu}- 3\eps^2 \cdot F(\comb{i - 1}) - O(\tfrac{\eps^2\beta D^2}{1 - m})\\
	\geq{} &
	(1 - \eps) \cdot \Big[\alpha \eps (i - 1 - i_s)(1 - \eps)^{i - 3} \cdot F(\optptwo) + (1 - \eps)^{i - i_s - 1} \cdot F(\comb{i_s}) \\&\mspace{100mu} - \sum_{i' = i_s + 1}^{i - 1} F(\comb{i' - 1})- (i - i_s - 1) \cdot O(\tfrac{\eps^2\beta D^2}{1 - m})\Big] \\&\mspace{100mu}+ \alpha\eps \cdot (1 - \eps)^{i - 2} \cdot F(\optptwo) - 3\eps^2 \cdot F(\comb{i - 1}) - O(\tfrac{\eps^2\beta D^2}{1 - m})\\
	\geq{} &
	\alpha \eps (i - i_s)(1 - \eps)^{i - 2} \cdot F(\optptwo) + (1 - \eps)^{i - i_s} \cdot F(\comb{i_s}) \\&\mspace{100mu}- 3\eps \cdot \sum_{i' = i_s + 1}^{i} F(\comb{i' - 1}) - (i - i_s) \cdot O(\tfrac{\eps^2\beta D^2}{1 - m})
	\enspace,
\end{align*}}%
where the second inequality holds by the induction hypothesis.
\end{proof}


We are ready now to prove the next corollary, which completes the proof of Theorem~\ref{thm:Offline}. Recall that the value of the output vector $\vw$ of Algorithm~\ref{alg:Offline} is $\max_{0 \leq i \leq \eps^{-1}} F(\comb{i})$.
\begin{corollary}
It holds that
\begin{align*}
	(1 + O(\eps)) \cdot {}&\max_{0 \leq i \leq \eps^{-1}T} F(\comb{i})
	\geq
	\alpha \cdot \max_{t_s \in [0, 1]} \max_{T \in [t_s, 1]} \bigg\{((T - t_s) e^{-T} - O(\eps)) \cdot F(\optptwo) \\& + \left[\frac{t_s^2\cdot e^{-t_s - T}}{2} - O(\eps)\right] \cdot F(\optpone) + (e^{-T}-e^{-t_s - T} - O(\eps)) \cdot F(\vo)\bigg\} -O(\tfrac{\eps \beta D^2}{1 - m})
	\enspace.
\end{align*}
\end{corollary}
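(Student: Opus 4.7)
The overall plan is to combine Lemma~\ref{lem:bound1} and Lemma~\ref{lem:bound2} in a chained manner, with the pair $(t_s, T)$ corresponding to integer iteration indices $i_s = \lfloor t_s/\eps \rfloor$ and $i_T = \lfloor T/\eps \rfloor$. Specifically, Lemma~\ref{lem:bound1} applied at iteration $i_s$ lower bounds $F(\comb{i_s})$ in terms of $\alpha \cdot g(\eps i_s)$, and then Lemma~\ref{lem:bound2} applied with starting index $i_s$ and terminal index $i_T$ lower bounds $F(\comb{i_T})$ in terms of $F(\comb{i_s})$ and $F(\optptwo)$. Plugging the first bound into the second yields an inequality whose right-hand side contains $\alpha \eps(i_T - i_s)(1 - \eps)^{i_T - 2} \cdot F(\optptwo) + (1 - \eps)^{i_T - i_s} \cdot \alpha \cdot g(\eps i_s)$, along with the error terms coming from both lemmas. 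Since $\max_{0 \leq i \leq \eps^{-1}} F(\comb{i}) \geq F(\comb{i_T})$, this will serve as the main inequality driving the corollary.

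Next, I would convert the discrete expression into the continuous target form. Using the standard estimates $(1-\eps)^k = e^{-\eps k} + O(\eps)$ for $k \leq \eps^{-1}$ and $\eps i_s = t_s + O(\eps)$, $\eps i_T = T + O(\eps)$, the term $\alpha \eps(i_T - i_s)(1-\eps)^{i_T - 2} \cdot F(\optptwo)$ becomes $\alpha((T - t_s)e^{-T} - O(\eps)) \cdot F(\optptwo)$. Similarly, writing out
\[
(1-\eps)^{i_T - i_s} \cdot g(\eps i_s) = (1-\eps)^{i_T - i_s}\Big[(e^{-\eps i_s} - e^{-2\eps i_s}) F(\vo) + \tfrac{(\eps i_s)^2 e^{-2\eps i_s}}{2} F(\optpone)\Big],
\]
the approximation $(1-\eps)^{i_T - i_s} e^{-\eps i_s} = e^{-T} + O(\eps)$ and $(1-\eps)^{i_T - i_s} e^{-2\eps i_s} = e^{-T - t_s} + O(\eps)$ produces exactly the coefficients $(e^{-T} - e^{-T - t_s} - O(\eps))$ and $(\frac{t_s^2 e^{-T - t_s}}{2} - O(\eps))$ that appear in the target statement.

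The final ingredient is handling the self-referential sums $3\eps^2 \sum_{i'=1}^{i_s} F(\comb{i'-1})$ and $3\eps^2 \sum_{i' = i_s + 1}^{i_T} F(\comb{i' - 1})$ coming from the two lemmas. Bounding each summand by $M := \max_{0 \leq j \leq \eps^{-1}} F(\comb{j})$, and using $i_T \leq \eps^{-1}$, both sums together contribute at most $3\eps^2 \cdot \eps^{-1} \cdot M = 3\eps \cdot M$. Moving this term to the left-hand side produces the factor $(1 + O(\eps)) \cdot M$ on the left. The remaining additive error $i_T \cdot O(\eps^2(\alpha F(\vo) + \alpha F(\optpone) + \tfrac{\beta D^2}{1-m}))$ from Lemma~\ref{lem:bound1} telescopes into $O(\eps)$ multiples of $\alpha F(\vo), \alpha F(\optpone)$ (which can be absorbed into the $O(\eps)$ factors on the coefficients of $F(\vo)$ and $F(\optpone)$ in the target) and $O(\tfrac{\eps \beta D^2}{1 - m})$; the analogous term from Lemma~\ref{lem:bound2} is likewise $O(\tfrac{\eps \beta D^2}{1-m})$.

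Finally, since the derived inequality holds for every choice of $i_s \leq i_T$ in $\{0, 1, \dotsc, \eps^{-1}\}$, and since $t_s, T$ in the stated $\max$ range over $[0, 1]$ with $t_s \leq T$, taking the maximum over all such valid pairs on both sides gives the claimed bound. The main obstacle I anticipate is keeping the error bookkeeping clean: in particular, verifying that the error in approximating $(1-\eps)^{i_T - i_s} g(\eps i_s)$ by $g$ evaluated at continuous arguments is $O(\eps)$ uniformly across all admissible $t_s, T$, and that the $O(\eps^2)$ per-iteration errors in Lemma~\ref{lem:bound1} truly accumulate only to $O(\eps)$-size corrections in the coefficients of $F(\vo), F(\optpone)$ rather than swamping them, which requires the smoothness of $g$ and boundedness of its derivatives on $[0, 1]$.
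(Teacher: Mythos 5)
Your proposal matches the paper's own proof essentially step for step: chain Lemma~\ref{lem:bound1} into Lemma~\ref{lem:bound2} for arbitrary $0 \leq i_s \leq i_T \leq \eps^{-1}$, bound the self-referential sums $3\eps^2 \sum F(\comb{i'-1})$ by $3\eps \cdot \max_i F(\comb{i})$ and move that to the left to produce the $(1 + O(\eps))$ factor, absorb the per-iteration error term from Lemma~\ref{lem:bound1} into the $O(\eps)$ corrections on the coefficients of $F(\vo)$, $F(\optpone)$ and into $O(\tfrac{\eps\beta D^2}{1-m})$, and finally instantiate $i_s = \lfloor \eps^{-1} t_s \rfloor$, $i_T = \lfloor \eps^{-1} T \rfloor$ and use $(1-\eps)^k = e^{-\eps k} + O(\eps)$ to pass to the continuous coefficients. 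The bookkeeping concern you raise is handled exactly as you expect, via the boundedness of $g$ and its derivatives on $[0,1]$.
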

\begin{proof}
Combining Lemmata~\ref{lem:bound1} and~\ref{lem:bound2}, we get, for every two integers $0 \leq i_s \leq i_T \leq \eps^{-1}$,
{\allowdisplaybreaks\begin{align*}
	F(\comb{i_T}&)
	\geq
	\alpha \eps (i_T - i_s)(1 - \eps)^{i_T - 2} \cdot F(\optptwo) - (i_T - i_s) \cdot O(\tfrac{\eps^2\beta D^2}{1 - m}) \\&\mspace{30mu} + (1 - \eps)^{i_T - i_s} \cdot \Big[\alpha \cdot g(\eps i_s) - i_s \cdot O(\eps^2 (\alpha \cdot F(\vo) + \alpha \cdot F(\optpone) + \tfrac{\beta D^2}{1 - m})) \\&\mspace{30mu}- 3\eps^2 \cdot \sum_{i' = 1}^{i_s} F(\comb{i' - 1}) \Big] - 3\eps^2 \cdot \sum_{i' = i_s + 1}^{i_T} F(\comb{i' - 1})\\
	\geq{} &
	\alpha \eps (i_T - i_s)(1 - \eps)^{i_T - 2} \cdot F(\optptwo) - O(\eps (\alpha \cdot F(\vo) + \alpha \cdot F(\optpone) + \tfrac{\beta D^2}{1 - m})) \\&\mspace{30mu} + (1 - \eps)^{i_T - i_s} \cdot \alpha\left[(e^{-\eps i_s}-e^{-2\eps i_s}) \cdot F(\vo) +\frac{(\eps i_s)^2\cdot e^{-2\eps i_s}}{2}\cdot F(\optpone)\right]\\
	&\mspace{30mu}-3\eps^2 \cdot \sum_{i' = 1}^{i_T} F(\comb{i' - 1})\\
	\geq{} &
	\alpha\eps (i_T - i_s) e^{-\eps i_T} \cdot F(\optptwo) + \alpha\left[\frac{(\eps i_s)^2\cdot e^{-\eps i_s - \eps i_T}}{2} - O(\eps)\right] \cdot F(\optpone) \\&\mspace{8mu}+ \alpha(e^{-\eps i_T}-e^{-\eps i_s - \eps i_T} - O(\eps)) \cdot F(\vo) - 3\eps \cdot \mspace{-18mu} \max_{1 \leq i' \leq i_T} F(\comb{i' - 1}) -O(\tfrac{\eps \beta D^2}{1 - m})
	\enspace.
\end{align*}}%

This implies that, for every $t_s \in [0, 1]$ and $T \in [t_s, 1]$,
\begin{align*}
	&(1 + 3\eps) \cdot \max_{0 \leq i \leq \eps^{-1}} F(\comb{i})
	\geq 
	F(\comb{\lfloor \eps^{-1}T \rfloor}) + 3\eps \cdot \mspace{-18mu} \max_{1 \leq i' \leq \lfloor \eps^{-1}T \rfloor} \comb{i' - 1}\\
	\geq{} &
	\alpha\eps(\lfloor \eps^{-1}T \rfloor - \lfloor \eps^{-1}t_s\rfloor) e^{-\eps\lfloor \eps^{-1}T \rfloor} \cdot F(\optptwo) + \alpha\left[\frac{(\eps \lfloor \eps^{-1}t_s\rfloor)^2\cdot e^{-\eps \lfloor \eps^{-1}t_s\rfloor - \eps\lfloor \eps^{-1}T \rfloor}}{2} - O(\eps)\right] \cdot F(\optpone) \\&\mspace{200mu}+ \alpha(e^{-\eps\lfloor \eps^{-1}T \rfloor}-e^{-\eps \lfloor \eps^{-1}t_s\rfloor - \eps \lfloor \eps^{-1}T \rfloor} - O(\eps)) \cdot F(\vo) -O(\tfrac{\eps \beta D^2}{1 - m})\\
	\geq{} &
	\alpha(T - t_s - \eps) e^{-T} \cdot F(\optptwo) + \alpha\left[\frac{t_s^2\cdot e^{-t_s - T}}{2} - O(\eps)\right] \cdot F(\optpone) \\&\mspace{200mu}+ \alpha(e^{-T}-e^{-t_s - T} - O(\eps)) \cdot F(\vo) -O(\tfrac{\eps \beta D^2}{1 - m})
	\enspace.&
	\qedhere
\end{align*}
\end{proof}

\subsection{Guess-Free Offline Maximization}  \label{ssc:unknown_optp}

In this section, we reprove Theorem~\ref{thm:Offline} without assuming knowledge of $F(\optpone)$ like in Section~\ref{ssc:known_optp}. Formally, we prove in this section the following proposition.

\begin{proposition} \label{prop:Offline_ts}
Let $\NDM \subseteq [0,1]^n$ be a general solvable convex set, $\DM \subseteq [0,1]^n$ be a down-closed solvable convex set, and $F\colon [0, 1]^n \rightarrow \nnR$ be a non-negative $\beta$-smooth DR-submodular function. Then, there exists a polynomial time algorithm that, given a time $t_s \in [0, 1]$ and an error parameter $\eps \in (0, 1)$, outputs vector $\vw \in (\NDM + \DM) \cap [0, 1]^n$ such that
\begin{align*}
	F(\vw) \geq{} & (1 - m) \cdot \max_{T \in [t_s, 1]} \bigg\{((T - t_s) e^{-T} - O(\eps)) \cdot F(\optptwo) + \left(\frac{t_s^2\cdot e^{-t_s - T}}{2} - O(\eps)\right) \cdot F(\optpone) \\&\mspace{200mu}+ (e^{-T}-e^{-t_s - T} - O(\eps)) \cdot F(\optq + \optpone)\bigg\} -O(\tfrac{\eps \beta D^2}{1 - m})\enspace,
\end{align*}
where $m = \min_{\vx\in \NDM}\norm{\vx}{}$, $D$ is the diameter of $(\NDM + \DM) \cap [0, 1]^n$, $\optq \in \NDM$ and $\optpone \in \DM$ are any vectors whose sum belongs to $(\NDM + \DM) \cap [0, 1]^n$, and $\optptwo \in \DM$.
\end{proposition}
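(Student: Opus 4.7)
My plan is to modify Algorithm~\ref{alg:Offline} so that its third linear-program constraint (the one referencing $F(\optpone)$) is dropped, and to use the input parameter $t_s$ to split the analysis of the modified algorithm into two phases. The principal obstacle is that this very constraint is the mechanism driving the lower bound on $F(\vz^{(i)})$ in Lemma~\ref{lem:z_value}, which in turn underlies Lemma~\ref{lemma:right} and hence the first branch of Lemma~\ref{lem:offline} and Lemma~\ref{lem:bound1}. The main conceptual step is therefore to re-derive this bound in Phase~1 without explicit knowledge of $F(\optpone)$.

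In Phase~1 (iterations $1,\dots,i_s \triangleq \lfloor \eps^{-1} t_s \rfloor$) I would have the algorithm first choose $\vb^{(i)}$ to maximize the $\vz$-directional quantity $\langle \vb \odot (\vone - \vz^{(i-1)}), \nabla F(\vz^{(i-1)}) \rangle$ over $\vb \in \DM$, and then choose $\va^{(i)}$ to maximize the $\comb{}$-directional contribution over $\NDM$ subject to the compatibility constraint $\va + \vb^{(i)} \odot (\vone - \vy^{(i-1)}) \leq \vone$. Because $\vb = \optpone$ is always feasible for the $\vb$-subproblem, the chain of inequalities used in the proof of Lemma~\ref{lem:feasible_solution} (invoking Property~\ref{prop:dr_bound2_up} of Lemma~\ref{lem:DR_properties}, Corollary~\ref{cor:norm_bound}, and Lemma~\ref{lem:normxy}) still yields $\langle \vb^{(i)} \odot (\vone - \vz^{(i-1)}), \nabla F(\vz^{(i-1)}) \rangle \geq (1-\eps)^{i-1} F(\optpone) - F(\vz^{(i-1)})$ as a \emph{derived} inequality rather than an enforced constraint. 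This recovers Lemma~\ref{lem:z_value} verbatim for Phase~1, and consequently Lemma~\ref{lemma:right} and the first branch of Lemma~\ref{lem:offline} go through as before, so the analog of Lemma~\ref{lem:bound1} delivers $F(\comb{i_s}) \geq \alpha \cdot g(\eps i_s) - O(\tfrac{\eps \beta D^2}{1-m})$ (absorbing lower-order telescoped terms).

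In Phase~2 (iterations $i_s + 1,\dots,\eps^{-1}$) the algorithm simply solves the LP of Algorithm~\ref{alg:Offline} with the third constraint removed. The proofs of Lemma~\ref{lem:feasible_solution2} and Corollary~\ref{cor:advance_second_half} never rely on that constraint beyond verifying feasibility of the particular assignment $(\vy^{(i-1)}, \optptwo)$, so they continue to hold and give the second branch of Lemma~\ref{lem:offline}, which in turn yields the analog of Lemma~\ref{lem:bound2}. Combining the Phase~1 bound on $F(\comb{i_s})$ with the Phase~2 recursion on $F(\comb{i_T})$ for $i_T = \lfloor \eps^{-1} T \rfloor$, exactly as in the concluding corollary of Section~\ref{ssc:known_optp} but with $t_s$ held fixed at its input value rather than maximized, recovers the three-term lower bound stated in Proposition~\ref{prop:Offline_ts}. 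The subtlety that remains is verifying that the decoupled Phase~1 LP does not lose the combined objective value of the original joint LP; this is where the potential function framing inspired by Feldman~\cite{feldman2021guess} is useful, as it lets the small decoupling loss be absorbed into the additive $O(\eps)$ terms of the guarantee rather than degrading the leading constants.
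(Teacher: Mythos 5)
The paper's Algorithm~\ref{alg:OfflineGF} avoids knowledge of $F(\optpone)$ by solving a \emph{single} LP in Phase~1 whose objective is a weighted combination of the $\comb{}$-directional and $\vz$-directional quantities, with weights $e^{2\eps i}$ and $(1-m)e^{\eps i}(t_s - \eps i)$ chosen so that the objective is exactly the instantaneous derivative of the potential $\phi(i)$. Your proposed Phase~1 instead runs a \emph{sequential} greedy: first pick $\vb^{(i)}$ to maximize $\langle \vb\odot(\vone-\vz^{(i-1)}), \nabla F(\vz^{(i-1)}) \rangle$ alone, then pick $\va^{(i)}$ subject to the compatibility constraint. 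This is a genuinely different design, and it has a gap that I do not think the potential-function framing repairs.

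The core issue is that the bound on $F(\comb{i}) - F(\comb{i-1})$ (Lemma~\ref{lemma:right} and the first branch of Lemma~\ref{lem:offline}) requires a lower bound on the combined quantity $\langle\nabla F(\comb{i-1}) \odot (\vone-\vz^{(i-1)}),\va^{(i)}+\vb^{(i)} \odot (\vone-\vy^{(i - 1)})\rangle$, which includes the term $\langle\nabla F(\comb{i-1}) \odot (\vone-\vz^{(i-1)}),\vb^{(i)} \odot (\vone-\vy^{(i - 1)})\rangle$. In Algorithm~\ref{alg:Offline} and Algorithm~\ref{alg:OfflineGF}, this is controlled by the LP: any feasible pair $(\va, \vb)$ — in particular $(\optq, \optpone)$ — is a lower-bound witness for the LP objective, so the combined term is bounded below. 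In your sequential approach, once $\vb^{(i)}$ is fixed as the argmax of the $\vz$-subproblem, the contribution of $\vb^{(i)}$ to the $\comb{}$-direction is no longer controlled: because $F$ is non-monotone, $\nabla F(\comb{i-1})$ can have large negative coordinates, and the argmax $\vb^{(i)}$ may align with exactly those coordinates, making the combined term arbitrarily negative. There is no feasible witness argument available because the pair $(\optq, \optpone)$ is not accessible to your decoupled LPs. A secondary but real issue is that, with $\vb^{(i)}$ fixed first, the compatibility constraint $\va + \vb^{(i)} \odot (\vone - \vy^{(i-1)}) \leq \vone$ may exclude $\optq$ entirely (the original feasibility proof uses $\optq + \optpone \leq \vone$, which says nothing about $\optq + \vb^{(i)}\odot(\vone - \vy^{(i-1)})$ for an arbitrary $\vb^{(i)}$), so the $\va$-subproblem cannot be bounded via $\optq$ either. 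Likewise, the paper's Corollary~\ref{cor:not_decreasing} — the safety net ensuring the potential does not crash — is proved by plugging $(\vy^{(i-1)}, \vzero)$ into the joint LP, another witness unavailable to your decoupled scheme. Your concluding sentence gestures at absorbing the decoupling loss into $O(\eps)$, but the loss here is not a discretization artifact: it can be a constant-factor (or worse) degradation, and the Feldman-style potential in the paper does not absorb it — rather, the potential is what dictates the weights of the joint LP so that no decoupling loss arises in the first place.

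Phase~2 of your plan is essentially correct and matches the paper (the second LP of Algorithm~\ref{alg:OfflineGF} fixes $\va^{(i)} = \vy^{(i-1)}$ and optimizes only over $\vb^{(i)}$, and Lemma~\ref{lem:increase_second_part} plays the role you envision for Lemma~\ref{lem:feasible_solution2}/Corollary~\ref{cor:advance_second_half}). But the Phase~1 design must be the joint weighted LP, not a greedy decoupling.
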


To see why Proposition~\ref{prop:Offline_ts} implies Theorem~\ref{thm:Offline}, we observe that, by executing the algorithm from this proposition for every $t_s \in \{\eps i \mid i \in \bZ, 0 \leq i \leq \eps^{-1}\}$, and then outputting the best solution obtained, one can get (in polynomial time) a vector $\vw$ such that
\begin{align*}
	F(\vw) \geq{} & (1 - m) \cdot \max_{t_s \in \{\eps i \mid i \in \bZ, 0 \leq i \leq \eps^{-1}\}} \max_{T \in [t_s, 1]} \bigg\{((T - t_s) e^{-T} - O(\eps)) \cdot F(\optptwo) \\&\mspace{8mu}+ \left(\frac{t_s^2\cdot e^{-t_s - T}}{2} - O(\eps)\right) \cdot F(\optpone) + (e^{-T}-e^{-t_s - T} - O(\eps)) \cdot F(\optq + \optpone)\bigg\} -O(\tfrac{\eps \beta D^2}{1 - m})\enspace.
\end{align*}
Since the coefficients of $F(\optpone)$, $F(\optptwo)$ and $F(\optq + \optpone)$ in the right side of the last inequality all change by at most $O(\eps)$ when $t_s$ and $T$ change by at most $\eps$, this right side is equivalent to the right side of the inequality stated in Theorem~\ref{thm:Offline}. Thus, the vector $\vw$ obeys the properties guaranteed by this theorem.

The proof of Proposition~\ref{prop:Offline_ts} is based on Algorithm~\ref{alg:OfflineGF}. The pseudocode of this algorithm makes, without loss of generality, two assumptions, which we list below.
\begin{itemize}
	\item The pseudocode assumes that $\eps^{-1}$ is integral and $\eps \leq 1/30$. If this is not the case, then we can replace $\eps$ with $1 / \lceil 30\eps^{-1} \rceil$, which decreases $\eps$ by most a constant factor.
	\item The pseudocode assumes that $\eps^{-1} t_s$ is integral. If this is not the case, we reduce $t_s$ by at most $\eps$ to $\eps\lfloor \eps^{-1}t_s \rfloor$. Notice that such a reduction again affects the right hand side of the inequality stated in Proposition~\ref{prop:Offline_ts} only by modifying the hidden constants within the big $O$ notation.
\end{itemize}

\begin{algorithm}[ht]
\DontPrintSemicolon
Let $\vy^{(0)} \leftarrow \argmin_{\vx\in \NDM}\norm{\vx}{}$, $\vz^{(0)} \gets \vzero$ and $m \gets \|\vy^{(0)}\|_\infty$.\\
\For{$i=1$ \KwTo $\eps^{-1}$}
{
	\If{$i \leq \eps^{-1} t_s$}
	{
	Solve the following linear program. The variables in this program are the vectors $\va^{(i)}$ and $\vb^{(i)}$.
    \begin{alignat*}{2}
  & \text{maximize}   & \quad & e^{2\eps i} \cdot \inner{\nabla F(\comb{i-1}) \odot (\vone-\vz^{(i - 1)})}{(\vone-\vy^{(i - 1)})\odot \vb^{(i)} + \va^{(i)}} \\
	&										&& \mspace{140mu}+(1 - m) \cdot e^{\eps i}(t_s - \eps i) \cdot \inner{\nabla F(\vz^{(i-1)}) \odot (1-\vz^{(i-1)})}{ \vb^{(i)}}         \nonumber \\
  & \text{subject to} &       & \va^{(i)} \in \NDM \\ \nonumber
  &                   &       & \vb^{(i)} \in  \DM\\ \nonumber
&                   &       & \va^{(i)}+\vb^{(i)}\in [0, 1]^n \nonumber
\end{alignat*}
\vspace{-5mm}}
\Else
{

Solve the following linear program. The variables in this program are the vectors $\va^{(i)}$ and $\vb^{(i)}$.
    \begin{alignat*}{2}
  & \text{maximize}   & \quad & \langle\grad{F(\comb{i-1}) \odot (\vone-\vz^{(i-1)}),\vb^{(i)} \odot (\vone-\vy^{(i - 1)})}\rangle         \nonumber \\
  & \text{subject to} &       & \va^{(i)} = \vy^{(i)} \\ \nonumber
  &                   &       & \vb^{(i)} \in  \DM
\end{alignat*}
\vspace{-5mm}}
    Let $\vy^{(i)}\leftarrow(1-\eps)\cdot\vy^{(i-1)}+\eps\cdot\va^{(i)}$.\label{line:y_update}\\
    Let $\vz^{(i)}\leftarrow\vz^{(i-1)} + \eps\cdot(1-\vz^{(i-1)})\odot\vb^{(i)}$.
}
\Return a vector maximizing $F$ among all the vectors in $\{\comb{i} \mid i \in \bZ, \eps^{-1} t_s \leq i \leq \eps^{-1}\}$.
\caption{\texttt{Frank-Wolfe/Continuous-Greedy Hybrid}\label{alg:OfflineGF}}
\end{algorithm}


Intuitively, Algorithm~\ref{alg:Offline} strives to guarantee that both $F(\comb{i})$ and $F(\vz^{(i)})$ increase at appropriate rates. Algorithm~\ref{alg:OfflineGF} relaxes this goal, and its aim is to guarantee that some combination of these expressions increases at an appropriate rate. Specifically, the combination considered is given by the potential function $\phi(i)\triangleq e^{2(\eps i-t_s)}\cdot F(\comb{i}) + (1 - m)(1 - \eps) \cdot e^{\eps i-2t_s}(t_s-\eps i)\cdot F(\vz^{(i)})$. We note that the idea of using such a dynamic combination of the two functions of interest as a potential function can be traced back to the work of~\cite{feldman2021guess}.

It is also interesting to note that, for $i > \eps^{-1} t_s$, the variable $\va^{(i)}$ and $\vy^{(i)}$ of Algorithm~\ref{alg:Offline} are always set to be equal to $\vy^{(i - 1)}$. This means that one could simplify the algorithm by dropping the constraint regarding $\va^{(i)}$ from the second linear program of Algorithm~\ref{alg:Offline} and executing Line~\ref{line:y_update} only for $i \leq \eps^{-1} t_s$. This observation is essential for getting the online result of Section~\ref{sec:Online}. However, for the sake of the current section, the given description of the algorithm is more convenient as it immediately implies the following observation, which shows that the pair $(\va^{(i)}, \vb^{(i)})$ always obey all the constraints of the linear program of Algorithm~\ref{alg:Offline}, except for one.

\begin{observation} \label{obs:properties_of_solutions}
For every $1 \leq i \leq \eps^{-1}$, $\va^{(i)} \in \NDM$, $\vb^{(i)} \in \DM$ and $\va^{(i)} + (1 - \vy^{(i - 1)}) \odot \vb^{(i)} \leq \vone$.
\end{observation}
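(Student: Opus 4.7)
The plan is to verify each of the three claimed properties by a short case analysis on whether $i \leq \eps^{-1} t_s$ (first linear program) or $i > \eps^{-1} t_s$ (second linear program), with a light induction needed to handle the second case.

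\textbf{Membership in $\NDM$ and $\DM$.} For $i \leq \eps^{-1} t_s$, the inclusions $\va^{(i)} \in \NDM$ and $\vb^{(i)} \in \DM$ are immediate from the first two constraints of the first linear program. For $i > \eps^{-1} t_s$, $\vb^{(i)} \in \DM$ is again immediate from the second linear program, while $\va^{(i)} = \vy^{(i-1)}$ requires showing $\vy^{(i-1)} \in \NDM$. I would establish this by a straightforward induction on $i$: the base case holds since $\vy^{(0)} = \argmin_{\vx \in \NDM} \|\vx\|$ lies in $\NDM$, and the inductive step uses the fact that $\vy^{(i)} = (1-\eps) \vy^{(i-1)} + \eps \va^{(i)}$ is a convex combination of two points already shown to be in the convex set $\NDM$.

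\textbf{The inequality $\va^{(i)} + \vb^{(i)} \odot (\vone - \vy^{(i-1)}) \leq \vone$.} Split again by the two cases. When $i \leq \eps^{-1} t_s$, the first linear program explicitly enforces $\va^{(i)} + \vb^{(i)} \in [0,1]^n$, i.e.\ $\va^{(i)} + \vb^{(i)} \leq \vone$; since $\vy^{(i-1)} \geq \vzero$ and $\vb^{(i)} \geq \vzero$, we have $\vb^{(i)} \odot (\vone - \vy^{(i-1)}) \leq \vb^{(i)}$ coordinate-wise, and the required bound follows. When $i > \eps^{-1} t_s$, we have $\va^{(i)} = \vy^{(i-1)}$, so
\[
\va^{(i)} + \vb^{(i)} \odot (\vone - \vy^{(i-1)}) = \vy^{(i-1)} + \vb^{(i)} \odot (\vone - \vy^{(i-1)}) \leq \vy^{(i-1)} + (\vone - \vy^{(i-1)}) = \vone,
\]
where the inequality uses $\vb^{(i)} \in \DM \subseteq [0,1]^n$.

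There is no real obstacle here: the only mildly nontrivial step is confirming that $\vy^{(i-1)} \in \NDM$ in the $i > \eps^{-1} t_s$ branch, which is handled by a one-line inductive argument relying on convexity of $\NDM$. Everything else is a direct reading of the linear-program constraints combined with the coordinate-wise bounds $\vzero \leq \vb^{(i)}, \vy^{(i-1)} \leq \vone$.
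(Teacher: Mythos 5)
Your proof is correct and matches the paper's intent: the paper treats Observation~\ref{obs:properties_of_solutions} as ``immediate'' from the description of Algorithm~\ref{alg:OfflineGF} and gives no explicit argument, and what you have written is exactly the routine case split on $i \leq \eps^{-1} t_s$ versus $i > \eps^{-1} t_s$ plus the one-line induction (using convexity of $\NDM$ and $\vy^{(0)} = \argmin_{\vx\in\NDM}\norm{\vx}{} \in \NDM$) needed to see that $\vy^{(i-1)} \in \NDM$, together with the coordinate-wise bound $\vb^{(i)} \odot (\vone - \vy^{(i-1)}) \leq \vb^{(i)}$. The only cosmetic point worth noting is that the second linear program's literal constraint is $\va^{(i)} = \vy^{(i)}$, which combined with the update rule $\vy^{(i)} = (1-\eps)\vy^{(i-1)} + \eps\va^{(i)}$ forces $\va^{(i)} = \vy^{(i-1)}$; the paper states this equivalence in the paragraph preceding the observation, and you correctly rely on it.
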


Given Observation~\ref{obs:properties_of_solutions}, the proofs of Lemmata~\ref{lem:membership} and~\ref{lem:normxy} go through without a change. Thus, we can use both lemmata in the analysis of Algorithm~\ref{alg:OfflineGF}, which in particular, implies that the output vector of this algorithm belongs to $(\NDM + \DM) \cap [0, 1]^n$.

Our objective in the rest of this section is to lower bound the value of this output vector.
%
We begin with a basic lower bound on the rate in which the potential function $\phi(i)$ increases.
\newtoggle{AppendixEmp}
\togglefalse{AppendixEmp}
\begin{restatable}{lemma}{lemBasicLowerBound} \label{lem:basic_lower_bound}
For every integer $1 \leq i \leq \eps^{-1}t_s$, the expression $\eps^{-1} e^{2t_s}[\phi(i) - \phi(i - 1)]$ can be lower bounded both by $2e^{2\eps (i - 1)} \cdot F(\comb{i-1}) - (1 - m)(1 - \eps) \cdot e^{\eps i}(1 - t_s + \eps i) \cdot F(\vz^{(i - 1)}) - \tfrac{\iftoggle{AppendixEmp}{62}{25}\eps\beta D^2}{1 - m} - \iftoggle{AppendixEmp}{30}{15}\eps\cdot F(\comb{i - 1})$ and by the sum of this expression and
\begin{align*}
	(1 - \eps) \cdot e^{2\eps i} \cdot \langle \nabla F(&\comb{i-1}) \odot (\vone - \vz^{(i - 1)}), (\vone - \vy^{(i - 1)}) \odot \optpone + \optq - \vy^{(i - 1)} \rangle \\\nonumber&+  (1 - m)(1 - \eps) \cdot e^{\eps i}(t_s-\eps i)\cdot \inner{(\vone - \vz^{(i - 1)}) \odot \optpone}{\nabla F(\vz^{(i - 1)})}
	\enspace.
\end{align*}
\end{restatable}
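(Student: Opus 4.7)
The plan is to decompose $\phi(i) - \phi(i-1)$ additively into two pieces: the contribution from the change of the time-varying coefficients $e^{2(\eps i - t_s)}$ and $e^{\eps i - 2t_s}(t_s - \eps i)$ (with the $F$-values held fixed at iteration $i-1$), and the contribution from $F(\comb{i}) - F(\comb{i-1})$ and $F(\vz^{(i)}) - F(\vz^{(i-1)})$ (with the coefficients held fixed at iteration $i$). I will estimate each piece separately and then sum.

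For the coefficient-change piece, after rescaling by $\eps^{-1} e^{2t_s}$, I expand the exponentials to first order in $\eps$. The bound $e^{2\eps i} - e^{2\eps(i-1)} \geq 2\eps e^{2\eps(i-1)}$ yields the positive contribution $+2 e^{2\eps(i-1)} F(\comb{i-1})$. A careful first-order expansion of $e^{\eps i}(t_s - \eps i) - e^{\eps(i-1)}(t_s - \eps(i-1))$, using $t_s - \eps i \leq 1$ and the identity $i e^{\eps i} - (i-1) e^{\eps(i-1)} = (i-1) e^{\eps(i-1)}(e^\eps - 1) + e^{\eps i}$, yields the negative term $-(1-m)(1-\eps) e^{\eps i}(1 - t_s + \eps i) F(\vz^{(i-1)})$ up to residuals of magnitude $O(\eps) \cdot F(\vz^{(i-1)})$. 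I then absorb these $\vz$-valued residuals into $O(\eps) \cdot F(\comb{i-1})$ by using the bound $\langle \nabla F(\comb{i-1}), \comb{i-1} - \vz^{(i-1)} \rangle \leq F(\comb{i-1}) - F(\vz^{(i-1)})$ supplied by Property~\ref{prop:dr_bound2_down} of Lemma~\ref{lem:DR_properties} (applied with $\vx = \comb{i-1}$ and $\vy = \comb{i-1} - \vz^{(i-1)} \geq \vzero$), together with non-negativity of $F$.

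For the $F$-change piece, I will reuse the chain-rule computations from the proofs of Lemmata~\ref{lem:offline} and~\ref{lem:z_value}. Replacing the gradient along the integration path by $\nabla F(\comb{i-1})$ (respectively $\nabla F(\vz^{(i-1)})$) via $\beta$-smoothness gives the two inequalities $F(\comb{i}) - F(\comb{i-1}) \geq \eps \langle \nabla F(\comb{i-1}) \odot (\vone - \vz^{(i-1)}), (\vone - \vy^{(i-1)}) \odot \vb^{(i)} + \va^{(i)} - \vy^{(i-1)} \rangle - O(\eps^2 \beta D^2) - O(\eps^2) F(\comb{i-1})$ and $F(\vz^{(i)}) - F(\vz^{(i-1)}) \geq \eps \langle (\vone - \vz^{(i-1)}) \odot \vb^{(i)}, \nabla F(\vz^{(i-1)}) \rangle - O(\eps^2 \beta D^2 / (1-m)^2)$. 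The residual $\eps^2$ cross-term $-\eps^2 \langle \nabla F(\comb{i-1}) \odot (\vone - \vz^{(i-1)}), (\va^{(i)} - \vy^{(i-1)}) \odot \vb^{(i)} \rangle$ in the first inequality is absorbed into $O(\eps^2) F(\comb{i-1})$ by splitting it into $\va^{(i)}$ and $\vy^{(i-1)}$ pieces and applying Properties~\ref{prop:dr_bound2_up} and~\ref{prop:dr_bound2_down} of Lemma~\ref{lem:DR_properties} together with non-negativity of $F$.

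Summing the two pieces yields the two baseline terms $+2 e^{2\eps(i-1)} F(\comb{i-1})$ and $-(1-m)(1-\eps) e^{\eps i}(1 - t_s + \eps i) F(\vz^{(i-1)})$, plus the LP-type quantity $Q := e^{2\eps i} \langle \nabla F(\comb{i-1}) \odot (\vone - \vz^{(i-1)}), (\vone - \vy^{(i-1)}) \odot \vb^{(i)} + \va^{(i)} - \vy^{(i-1)} \rangle + (1-m)(1-\eps) e^{\eps i}(t_s - \eps i) \langle (\vone - \vz^{(i-1)}) \odot \vb^{(i)}, \nabla F(\vz^{(i-1)}) \rangle$, minus the combined discretization and smoothness errors. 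To obtain the second bound I observe that, up to the constant $-e^{2\eps i} \langle \nabla F(\comb{i-1}) \odot (\vone - \vz^{(i-1)}), \vy^{(i-1)} \rangle$ and an $O(\eps)$-perturbation on the $\vz$-gradient coefficient, the quantity $Q$ equals the linear program objective at $(\va^{(i)}, \vb^{(i)})$; since $(\optq, \optpone)$ is feasible for this linear program (by $\vo = \optq + \optpone \in [0, 1]^n$), LP optimality at $(\va^{(i)}, \vb^{(i)})$ delivers the extra positive inner-product term stated in the lemma. To obtain the first bound, I instead use the trivial feasible pair $(\vy^{(i-1)}, \vzero)$ as a witness, which is feasible because $\vy^{(i-1)} \in \NDM$ by Lemma~\ref{lem:membership} and $\vzero \in \DM$ by down-closedness of $\DM$; LP optimality at this pair yields $Q \geq -\eps(1-m) e^{\eps i}(t_s - \eps i) \langle (\vone - \vz^{(i-1)}) \odot \vb^{(i)}, \nabla F(\vz^{(i-1)}) \rangle$, and the residual inner product here is bounded in terms of $F(\comb{i-1})$ via Property~\ref{prop:dr_bound2_up} of Lemma~\ref{lem:DR_properties} and non-negativity of $F$, contributing only an absorbable $O(\eps) F(\comb{i-1})$ error. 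The main obstacle I expect is the careful bookkeeping of all the discretization, smoothness, and coefficient-expansion errors so that they aggregate within the explicit $O(\eps \beta D^2/(1-m))$ and $O(\eps) F(\comb{i-1})$ error budgets stated in the lemma.
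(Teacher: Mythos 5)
Your overall strategy mirrors the paper's: split $\phi(i)-\phi(i-1)$ into a coefficient-change piece and an $F$-change piece, bound via chain rule and $\beta$-smoothness, absorb cross-terms through Lemma~\ref{lem:DR_properties}, and finally plug the feasible LP witnesses $(\optq,\optpone)$ and $(\vy^{(i-1)},\vzero)$. But there is a genuine gap in how you handle the $(1-\eps)$ factors, and the fix you propose does not work.

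Your $Q$ carries coefficient $e^{2\eps i}$ on the $\nabla F(\comb{i-1})$-term but $(1-m)(1-\eps)e^{\eps i}(t_s-\eps i)$ on the $\nabla F(\vz^{(i-1)})$-term, whereas the linear program's objective has $e^{2\eps i}$ and $(1-m)e^{\eps i}(t_s-\eps i)$. Since $Q$ is therefore \emph{not} a positive scalar multiple of (LP objective $+$ constant), LP optimality at $(\va^{(i)},\vb^{(i)})$ does not transfer to $Q$ directly; you correctly see that a residual $-\eps(1-m)e^{\eps i}(t_s-\eps i)\langle(\vone-\vz^{(i-1)})\odot\vb^{(i)},\nabla F(\vz^{(i-1)})\rangle$ remains, but to absorb it into $-O(\eps)F(\comb{i-1})$ you would need an \emph{upper} bound on that inner product, and Property~\ref{prop:dr_bound2_up} only gives a \emph{lower} bound (namely $\geq F(\vz^{(i-1)}\psum\vb^{(i)})-F(\vz^{(i-1)})\geq -F(\vz^{(i-1)})$). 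The paper avoids this by decomposing the first-order $F(\comb{\cdot})$-gradient term as $(1-\eps)e^{2\eps i}\langle\cdot,(\vone-\vy^{(i-1)})\odot\vb^{(i)}+\va^{(i)}-\vy^{(i-1)}\rangle + \eps e^{2\eps i}\langle\cdot,(\va^{(i)}-\vy^{(i-1)})\odot(\vone-\vb^{(i)})\rangle$, bounding only the latter $\eps$-weighted piece by $-2\eps F(\comb{i-1})$ via Properties~\ref{prop:dr_bound2_up} and~\ref{prop:dr_bound2_down}. This keeps a common $(1-\eps)$ in front of both gradient terms, so they jointly equal $(1-\eps)\cdot[\text{LP objective}+\text{constant}]$, and LP optimality then applies unchanged.

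A second, smaller issue: you claim to absorb $O(\eps)F(\vz^{(i-1)})$ coefficient-change residuals using $\langle\nabla F(\comb{i-1}),\comb{i-1}-\vz^{(i-1)}\rangle\leq F(\comb{i-1})-F(\vz^{(i-1)})$, but that inequality lower-bounds $F(\vz^{(i-1)})$, not upper-bounds it; the upper bound one could use is $F(\vz^{(i-1)})\leq F(\comb{i-1})/[(1-m)(1-\eps)^{i-1}]$ via Corollary~\ref{cor:norm_bound} and Lemma~\ref{lem:normxy}. In any case the step is unnecessary: the paper's integral bound $\int_0^\eps e^{\eps(i-1)+\tau}(t_s-1-\eps(i-1)-\tau)\,d\tau\geq -\eps e^{\eps i}(1-t_s+\eps i)$ yields the exact stated coefficient with no $\vz$-valued residual at all.
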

\begin{proof}
Observe that
\begin{align} \label{eq:first_diff}
	e^{2\eps i}\cdot {}&F(\comb{i}) - e^{2\eps (i - 1)}\cdot F(\comb{i - 1})\\\nonumber
	={} &
	[e^{2\eps i} - e^{2\eps(i - 1)}] \cdot F(\comb{i-1}) + e^{2\eps i} \cdot [F(\comb{i}) - F(\comb{i - 1})]\\\nonumber
	\geq{} &
	\int_0^\eps \{2e^{2(\eps (i - 1) + \tau)} \cdot F(\comb{i-1}) + e^{2\eps i} \cdot \langle \nabla F(\comb{i-1}) \odot (\vone - \vz^{(i - 1)}), \\[-2mm]\nonumber&\mspace{150mu} (\vone - \vy^{(i - 1)}) \odot \vb^{(i)} + (\va^{(i)} - \vy^{(i - 1)}) \odot (\vone - 2\tau \cdot \vb^{(i)}) \rangle -45\tau\beta D^2\}d\tau\\\nonumber
	\geq{} &
	2\eps e^{2\eps (i - 1)} \cdot F(\comb{i-1}) + \eps e^{2\eps i} \cdot \langle \nabla F(\comb{i-1}) \odot (\vone - \vz^{(i - 1)}), \\\nonumber&\mspace{150mu} (\vone - \vy^{(i - 1)}) \odot \vb^{(i)} + (\va^{(i)} - \vy^{(i - 1)}) \odot (\vone - \eps \cdot \vb^{(i)}) \rangle -23\eps^2\beta D^2
	\enspace,
\end{align}
where the first inequality follows from the calculations done in the first half of the proof of Lemma~\ref{lem:offline}, and the second inequality uses the non-negativity of $F$. Similarly,
\begin{align} \label{eq:second_diff}
	e^{\eps i}(&t_s-\eps i)\cdot F(\vz^{(i)}) - e^{\eps (i - 1)}(t_s-\eps (i - 1))\cdot F(\vz^{(i - 1)})\\\nonumber
	={} &
	[e^{\eps i}(t_s-\eps i) - e^{\eps (i - 1)}(t_s-\eps (i - 1))] \cdot F(\vz^{(i - 1)}) + e^{\eps i}(t_s-\eps i)\cdot [F(\vz^{(i)}) - F(\vz^{(i - 1)})]\\\nonumber
	\geq{} &
	\int_0^\eps \{e^{\eps (i - 1) + \tau}(t_s - 1 - \eps (i - 1) - \tau) \cdot F(\vz^{(i - 1)}) + e^{\eps i}(t_s-\eps i)\cdot \inner{(\vone - \vz^{(i - 1)}) \odot \vb^{(i)}}{\nabla F(\vz^{(i - 1)})} \\[-2mm]\nonumber&\mspace{550mu}- 4\tau\beta D^2 / (1 - m)^2\}d\tau\\\nonumber
	\geq{} &
	-\eps e^{\eps i}(1 - t_s + \eps i) \cdot F(\vz^{(i - 1)}) + \eps e^{\eps i}(t_s-\eps i)\cdot \inner{(\vone - \vz^{(i - 1)}) \odot \vb^{(i)}}{\nabla F(\vz^{(i - 1)})} \\\nonumber&\mspace{550mu}- 2\eps^2\beta D^2 / (1 - m)^2
	\enspace,
\end{align}
where the first inequality is based on calculations from the proof of Lemma~\ref{lem:z_value}, and the second inequality uses $F$'s non-negativity. Adding up Inequality~\eqref{eq:first_diff} and $(1 - m)(1 - \eps)$ times Inequality~\eqref{eq:second_diff}, we get
\begin{align} \label{eq:diff_first_bound}
	\eps^{-1}e^{2t_s}[&\phi(i) - \phi(i - 1)]\\\nonumber
	\geq{}&
	2 e^{2\eps (i - 1)} \cdot F(\comb{i-1}) - (1 - m)(1 - \eps) \cdot e^{\eps i}(1 - t_s + \eps i) \cdot F(\vz^{(i - 1)}) - \tfrac{25\eps\beta D^2}{1 - m}\\ \nonumber
	&+ (1 - \eps) \cdot e^{2\eps i} \cdot \langle \nabla F(\comb{i-1}) \odot (\vone - \vz^{(i - 1)}), (\vone - \vy^{(i - 1)}) \odot \vb^{(i)} + \va^{(i)} - \vy^{(i - 1)} \rangle \\\nonumber&+  (1 - m)(1 - \eps) \cdot e^{\eps i}(t_s-\eps i)\cdot \inner{(\vone - \vz^{(i - 1)}) \odot \vb^{(i)}}{\nabla F(\vz^{(i - 1)})} \\\nonumber&+ \eps e^{2\eps i} \cdot \langle \nabla F(\comb{i-1}) \odot (\vone - \vz^{(i - 1)}), (\va^{(i)} - \vy^{(i - 1)}) \odot (\vone - \vb^{(i)}) \rangle
	\enspace.
\end{align}

Note now that Properties~\ref{prop:dr_bound2_up} and~\ref{prop:dr_bound2_down} of Lemma~\ref{lem:DR_properties} imply together
\begin{align*}
	\langle \nabla F(\comb{i-1}&) \odot (\vone - \vz^{(i - 1)}), (\va^{(i)} - \vy^{(i - 1)}) \odot (\vone - \vb^{(i)}) \rangle\\
	={} &
	\langle \nabla F(\comb{i-1}) \odot (\vone - \vz^{(i - 1)}), \va^{(i)} \odot (\vone - \vy^{(i - 1)}) \odot (\vone - \vb^{(i)}) \rangle \\&- \langle \nabla F(\comb{i-1}) \odot (\vone - \vz^{(i - 1)}), \vy^{(i - 1)} \odot (\vone - \va^{(i)}) \odot (\vone - \vb^{(i)}) \rangle\\
	\geq{} &
	F(\vy^{(i - 1)} \psum \vz^{(i - 1)} \psum (\va^{(i)} \odot (\vone - \vb^{(i)}))) + F((\vy^{(i - 1)} \odot (\va^{(i)} \psum \vb^{(i)})) \psum \vz^{(i - 1)})\\&- 2 \cdot F(\comb{i - 1})
	\geq
	- 2 \cdot F(\comb{i - 1})
	\enspace,
\end{align*}
where the last inequality holds by the non-negativity of $F$. Plugging this inequality into Inequality~\eqref{eq:diff_first_bound} yields
\begin{align*}
	\eps^{-1}e^{2t_s}[&\phi(i) - \phi(i - 1)]\\\nonumber
	\geq{}&
	2 e^{2\eps (i - 1)} \cdot F(\comb{i-1}) - (1 - m)(1 - \eps) \cdot e^{\eps i}(1 - t_s + \eps i) \cdot F(\vz^{(i - 1)}) - \tfrac{25\eps\beta D^2}{1 - m}\\ \nonumber
	&+ (1 - \eps) \cdot e^{2\eps i} \cdot \langle \nabla F(\comb{i-1}) \odot (\vone - \vz^{(i - 1)}), (\vone - \vy^{(i - 1)}) \odot \vb^{(i)} + \va^{(i)} - \vy^{(i - 1)} \rangle \\\nonumber&+  (1 - m)(1 - \eps) \cdot e^{\eps i}(t_s-\eps i)\cdot \inner{(\vone - \vz^{(i - 1)}) \odot \vb^{(i)}}{\nabla F(\vz^{(i - 1)})}  - 15\eps\cdot F(\comb{i - 1})
	\enspace.
\end{align*}
The last two terms on the right hand side of the last inequality are related to the objective function of the first linear program of Algorithm~\ref{alg:OfflineGF}. Specifically, to get from them to the objective function of this linear program, it is necessary to multiply by $(1 - \eps)^{-1}$, and then remove the additive term $e^{2\eps i} \cdot \langle \nabla F(\comb{i-1}) \odot (\vone - \vz^{(i - 1)}), - \vy^{(i - 1)} \rangle$, which does not depend on $\va^{(i)}$ and $\vb^{(i)}$. Therefore, we can lower bound the right hand side of the last inequality by plugging in feasible solutions for the first linear program of Algorithm~\ref{alg:OfflineGF}. Specifically, we plug in the solutions $(\va^{(i)}, \vb^{(i)}) = (\optq, \optpone)$ and $(\vy^{(i - 1)}, \vzero)$, which implies the two lower bounds stated in the lemma. Notice that both these solutions are guaranteed to be feasible since $\DM$ is down-closed and $\vy^{(i - 1)} + \vzero = \vy^{(i - 1)} \leq \vone$.
\end{proof}

We now need to develop the basic lower bound given by Lemma~\ref{lem:basic_lower_bound} in two ways. First, we show in the next corollary that the potential cannot decrease significantly. Then, in Lemma~\ref{lem:increase_bound}, we show a lower bound on the increase of the potential in terms of $F(\vo)$ and $F(\optpone)$. 
\begin{corollary} \label{cor:not_decreasing}
For every integer $1 \leq i \leq \eps^{-1}t_s$,
\[
	\phi(i) - \phi(i - 1)
	\geq
	- \tfrac{25\eps^2\beta D^2}{1 - m} - 15\eps^2\cdot \phi(i - 1)
	\enspace.
\]
\end{corollary}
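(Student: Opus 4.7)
The plan is to start from Lemma~\ref{lem:basic_lower_bound} applied with its first (simpler) lower bound, multiply through by $\eps e^{-2t_s}$ to convert the statement into an explicit lower bound on $\phi(i)-\phi(i-1)$, and then show that this lower bound dominates $-\tfrac{25\eps^2\beta D^2}{1-m}-15\eps^2\phi(i-1)$ term by term. Unwinding, the inequality I would work with is
\[
\phi(i)-\phi(i-1)\geq 2\eps e^{2\eps(i-1)-2t_s}F(\comb{i-1})-\eps(1-m)(1-\eps)e^{\eps i-2t_s}(1-t_s+\eps i)F(\vz^{(i-1)})-\tfrac{25\eps^2\beta D^2}{1-m}-15\eps^2 e^{-2t_s}F(\comb{i-1}).
\]
The $\beta D^2$ term matches on the nose, so the task reduces to absorbing both the negative $F(\vz^{(i-1)})$ contribution and the extra $-15\eps^2 e^{-2t_s}F(\comb{i-1})$ into the $-15\eps^2\phi(i-1)$ budget.

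The second absorption follows immediately from $\phi(i-1)\geq e^{2\eps(i-1)-2t_s}F(\comb{i-1})\geq e^{-2t_s}F(\comb{i-1})$, which is a consequence of the definition of $\phi$ together with the nonnegativity of the $F(\vz^{(i-1)})$ summand in $\phi(i-1)$ (the factor $t_s-\eps(i-1)$ is nonnegative throughout the range $i\leq\eps^{-1}t_s$). For the first absorption, I would show that the positive term $2\eps e^{2\eps(i-1)-2t_s}F(\comb{i-1})$ alone already dominates the negative $F(\vz^{(i-1)})$ contribution. The key identity is $\comb{i-1}=\vy^{(i-1)}\psum\vz^{(i-1)}$, so Corollary~\ref{cor:norm_bound} gives $F(\comb{i-1})\geq(1-\|\vy^{(i-1)}\|_\infty)F(\vz^{(i-1)})$, and plugging in the bound $\|\vy^{(i-1)}\|_\infty\leq 1-(1-\eps)^{i-1}(1-m)$ from Lemma~\ref{lem:normxy} yields $F(\comb{i-1})\geq(1-m)(1-\eps)^{i-1}F(\vz^{(i-1)})$.

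Substituting this last bound into the positive term reduces the whole question to the scalar inequality $2e^{\eps(i-1)}(1-\eps)^{i-2}\geq e^\eps(1-t_s+\eps i)$ for every $1\leq i\leq\eps^{-1}t_s$. The right-hand side is at most $e^\eps\leq e^{1/30}<2$ since $1-t_s+\eps i\leq 1$, while the left-hand side is at least $2$: using $\ln(1-\eps)\geq -\eps-\eps^2$ (valid for $\eps\leq 1/2$) the exponent simplifies to $\eps(i-1)-(i-2)(\eps+\eps^2)=\eps-(i-2)\eps^2$, which is nonnegative for $i\leq\eps^{-1}$. The main obstacle is precisely this exponential/algebraic bookkeeping: one has to propagate the various factors of $e^{\eps i}$, $e^{2\eps i}$, and $(1-\eps)^i$ carefully through the substitutions, and then confirm that the constants $25$ and $15$ in the statement are large enough to close the inequality under the $\eps\leq 1/30$ assumption built into the algorithm.
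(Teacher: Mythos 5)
Your proof is correct and follows essentially the same route as the paper: plug the first (simpler) bound from Lemma~\ref{lem:basic_lower_bound}, use Corollary~\ref{cor:norm_bound} together with Lemma~\ref{lem:normxy} to argue that $F(\comb{i-1}) \geq (1-m)(1-\eps)^{i-1}F(\vz^{(i-1)})$ so the positive $F(\comb{i-1})$ term absorbs the negative $F(\vz^{(i-1)})$ term (using $\eps \leq 1/30$ and $\eps i \leq t_s$), and finally convert the leftover $F(\comb{i-1})$ into $\phi(i-1)$ via $\phi(i-1) \geq e^{-2t_s}F(\comb{i-1})$. The paper organizes the scalar bookkeeping as a short chain of three inequalities rather than your single reduced inequality, but the content is identical.
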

\begin{proof}
By Corollary~\ref{cor:norm_bound} and Lemma~\ref{lem:normxy},
\begin{align*}
	2e^{2\eps (i - 1)} \cdot F(&\comb{i-1})
	\geq
	2(1 - m) \cdot e^{2\eps (i - 1)} \cdot (1 - \eps)^{i - 1} \cdot F(\vz^{(i - 1)})\\
	\geq{} &
	(1 - m) \cdot e^{\eps (i - 1)} \cdot F(\vz^{(i - 1)})
	\geq
	(1 - m)(1 - \eps) \cdot e^{\eps i} (1 - t_s + \eps i) \cdot F(\vz^{(i - 1)})
	\enspace,
\end{align*}
where the second inequality follows from our assumption that $\eps \leq 1/30$. By plugging the last inequality into Lemma~\ref{lem:basic_lower_bound}, we get
\[
	\eps^{-1} e^{2t_s}[\phi(i) - \phi(i - 1)]
	\geq
	- \tfrac{25\eps\beta D^2}{1 - m} - 15\eps\cdot F(\comb{i - 1})
	\enspace.
\]
The corollary follows from this inequality since the non-negativity of $F$ implies that \[\phi(i - 1) \geq e^{2(\eps (i - 1)-t_s)}\cdot F(\comb{i - 1}) \geq e^{-2t_s} \cdot F(\comb{i - 1}) \enspace. \qedhere\]
\end{proof}

\begin{lemma} \label{lem:increase_bound}
For every integer $1 \leq i \leq \eps^{-1}t_s$,
\[
	\phi(i) - \phi(i - 1)
	\geq
	\eps(1 - m)(1 - \eps) \cdot [e^{\eps i - 2t_s} \cdot F(\vo) + e^{-2t_s}(t_s-\eps i) \cdot F(\optpone)] - \tfrac{25\eps^2\beta D^2}{1 - m} - 62\eps^2\cdot \phi(i - 1)
	\enspace.
\]
\end{lemma}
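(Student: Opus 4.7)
The plan is to apply the second (stronger) lower bound from Lemma~\ref{lem:basic_lower_bound}, which already contains the two inner products $A = \inner{\nabla F(\comb{i-1})\odot(\vone-\vz^{(i-1)})}{(\vone-\vy^{(i-1)})\odot\optpone + \optq - \vy^{(i-1)}}$ and $B = \inner{(\vone-\vz^{(i-1)})\odot\optpone}{\nabla F(\vz^{(i-1)})}$, and to replace them by closed-form lower bounds in terms of $F(\vo)$, $F(\optpone)$, $F(\vz^{(i-1)})$, and $F(\comb{i-1})$. After multiplying the target inequality through by $\eps^{-1} e^{2t_s}$, the goal becomes producing coefficients at least $(1-m)(1-\eps) e^{\eps i}$ on $F(\vo)$ and $(1-m)(1-\eps)(t_s-\eps i)$ on $F(\optpone)$, with all residual error absorbable into $-62\eps e^{2t_s}\phi(i-1) - 25\eps \beta D^2/(1-m)$.

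For $A$, I would exploit the algebraic identity $(\vone-\vy^{(i-1)})\odot\optpone + \optq - \vy^{(i-1)} = \vo\odot(\vone-\vy^{(i-1)}) - \vy^{(i-1)}\odot(\vone-\optq)$ and then mirror the proof of Lemma~\ref{lemma:right}: Property~\ref{prop:dr_bound2_up} of Lemma~\ref{lem:DR_properties} together with Corollary~\ref{cor:norm_bound} and Lemma~\ref{lem:normxy} handles the $\vo$ piece, whereas Property~\ref{prop:dr_bound2_down} handles the $\vy^{(i-1)}\odot(\vone-\optq)$ piece via the identity $\comb{i-1} - \vy^{(i-1)}\odot(\vone-\optq)\odot(\vone-\vz^{(i-1)}) = \vz^{(i-1)}\psum(\vy^{(i-1)}\odot\optq)$. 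Assembling the two pieces gives $A \geq (1-m)(1-\eps)^{i-1}[F(\vo) + F(\vz^{(i-1)})] - 2F(\comb{i-1})$. For $B$, a single application of Property~\ref{prop:dr_bound2_up}, Corollary~\ref{cor:norm_bound}, and Lemma~\ref{lem:normxy} yields $B \geq (1-\eps)^{i-1} F(\optpone) - F(\vz^{(i-1)})$.

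Substituting these two bounds into Lemma~\ref{lem:basic_lower_bound} and collecting coefficients, the weights on $F(\vz^{(i-1)})$ coming from the preamble, from $A$, and from $B$ simplify to exactly $(1-m)(1-\eps)e^{\eps i}\bigl[(1-\eps)^{i-1}e^{\eps i} - 1\bigr]$; the crucial cancellation is that the $(1-t_s+\eps i)$ from the preamble and the $(t_s-\eps i)$ multiplying $B$ sum to $1$. The pivotal calculus fact is $(1-\eps)^{i-1}e^{\eps i} \geq 1$ for all $1 \leq i \leq \eps^{-1}$, which follows from $\ln(1-\eps) \geq -\eps - \eps^2$ (standard for $\eps \leq 1/2$) together with $(i-1)\eps^2 \leq \eps$. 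This makes the $F(\vz^{(i-1)})$ coefficient non-negative so the term can be dropped, and the very same inequality upgrades the $F(\vo)$ coefficient from $(1-m)(1-\eps)^i e^{2\eps i}$ to at least $(1-m)(1-\eps) e^{\eps i}$, and the $F(\optpone)$ coefficient from $(1-m)(1-\eps)(t_s-\eps i)(1-\eps)^{i-1}e^{\eps i}$ to at least $(1-m)(1-\eps)(t_s-\eps i)$. The residual $F(\comb{i-1})$ coefficient is $2e^{2\eps(i-1)}[1-(1-\eps)e^{2\eps}] - 15\eps$, and the Taylor estimate $(1-\eps)e^{2\eps} \leq 1 + \eps$ (verified by checking that $g(\eps) = (1-\eps)e^{2\eps} - 1 - \eps$ satisfies $g(0)=0$ and $g'(\eps) = e^{2\eps}(1-2\eps)-1 \leq 0$ since $g'(0)=0$ and $g''(\eps) = -4\eps e^{2\eps} \leq 0$) gives a lower bound of $-2\eps e^{2\eps(i-1)} - 15\eps$. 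Because $e^{2t_s}\phi(i-1) \geq e^{2\eps(i-1)} F(\comb{i-1}) \geq F(\comb{i-1})$, this error is at least $-17\eps e^{2t_s}\phi(i-1)$, well within the slack of $-62\eps e^{2t_s}\phi(i-1)$; a final multiplication by $\eps e^{-2t_s}$ yields the claimed bound. The main obstacle is the delicate accounting around $F(\vz^{(i-1)})$: unlike $F(\comb{i-1})$, it cannot be safely absorbed into $\phi(i-1)$ when $t_s - \eps i$ is near zero, so the whole argument hinges on the exact cancellation of the three $F(\vz^{(i-1)})$ coefficients and on the tight inequality $(1-\eps)^{i-1}e^{\eps i} \geq 1$.
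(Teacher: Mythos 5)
Your proposal is correct and follows essentially the same route as the paper: start from the stronger bound of Lemma~\ref{lem:basic_lower_bound}, bound the two inner products by reusing the arguments of Lemma~\ref{lemma:right} and Lemma~\ref{lem:feasible_solution}, and then collect coefficients, observing that the $(1-t_s+\eps i)$ and $(t_s-\eps i)$ terms cancel on $F(\vz^{(i-1)})$ and that $(1-\eps)^{i-1}e^{\eps i}\geq 1$ makes the remaining $F(\vz^{(i-1)})$ weight nonnegative while also providing the desired coefficients on $F(\vo)$ and $F(\optpone)$. You supply somewhat more explicit calculus (the $(1-\eps)^{i-1}e^{\eps i}\geq 1$ estimate and the Taylor bound for the $F(\comb{i-1})$ coefficient) than the paper, and you arrive at a sharper error constant, but the underlying argument is the same.
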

\begin{proof}
Repeating some of the calculations from the proof of Lemma~\ref{lemma:right}, we get
\begin{align*}
	\langle \nabla F(\comb{i-1}&) \odot (\vone - \vz^{(i - 1)}), (\vone - \vy^{(i - 1)}) \odot \optpone + (\optq - \vy^{(i - 1)}) \rangle\\
	\geq{} &
	(1 - \eps)^{i - 1}(1 - m) \cdot F(\vo) + \term{1-\eps}^{i - 1}(1 - m) \cdot F(\vz^{(i-1)}) - 2F(\comb{i-1})
	\enspace.
\end{align*}
Additionally, by repeating some of the calculations from the proof of Lemma~\ref{lem:feasible_solution}, we get
\[
	\langle \optpone \odot (\vone-\vz^{(i-1)}), \nabla F(\vz^{(i-1)}) \rangle
	\geq
	\term{1-\eps}^{i - 1} \cdot F(\optpone) - F(\vz^{(i-1)})
	\enspace.
\]
Plugging both these inequalities into the guarantee of Lemma~\ref{lem:basic_lower_bound}, we get that $\eps^{-1} e^{2t_s}[\phi(i) - \phi(i - 1)]$ is at least
\begin{align*}
	&2e^{2\eps (i - 1)} \cdot F(\comb{i-1}) - (1 - m)(1 - \eps) \cdot e^{\eps i}(1 - t_s + \eps i) \cdot F(\vz^{(i - 1)}) - \tfrac{25\eps\beta D^2}{1 - m}\\
	&+(1 - \eps) \cdot e^{2\eps i} \cdot [(1 - \eps)^{i - 1}(1 - m) \cdot F(\vo) + \term{1-\eps}^{i - 1}(1 - m) \cdot F(\vz^{(i-1)}) - 2F(\comb{i-1})] \\\nonumber&+ (1 - m)(1 - \eps) \cdot e^{\eps i}(t_s-\eps i)\cdot [\term{1-\eps}^{i - 1} \cdot F(\optpone) - F(\vz^{(i-1)})] - 15\eps\cdot F(\comb{i - 1})\\
	\geq{} &
	(1 - m)(1 - \eps) \cdot [e^{\eps i} \cdot F(\vo) + (t_s-\eps i) \cdot F(\optpone)] - \tfrac{25\eps\beta D^2}{1 - m} - 62\eps\cdot F(\comb{i - 1})\\
	\geq{} &
	(1 - m)(1 - \eps) \cdot [e^{\eps i} \cdot F(\vo) + (t_s-\eps i) \cdot F(\optpone)] - \tfrac{25\eps\beta D^2}{1 - m} - 62\eps\cdot e^{2t_s} \cdot \phi(i - 1)
	\enspace.
	\qedhere
\end{align*}
\end{proof}

The last two claims provide lower bounds on the change in $\phi$ as a function of $i$. We now use these bounds to get a lower bound on $F(\vy^{(\eps^{-1}t_s)} \psum \vz^{(\eps^{-1}t_s)}) = \phi(\eps^{-1} t_s)$.
\begin{lemma} \label{lem:t_s_value}
It holds that
\begin{align*}
	F(\vy^{(\eps^{-1}t_s)} \psum \vz^{(\eps^{-1}t_s)}\mspace{-34mu}&\mspace{34mu})
	=
	\phi(\eps^{-1} t_s)\\
	\geq{} &
	\frac{(1 - m)(1 - \eps) \cdot \{(e^{-t_s} - e^{-2t_s}) \cdot F(\vo) + e^{-2t_s} \cdot \frac{t_s^2 - \eps}{2} \cdot F(\optpone)\} - t_s \cdot O(\tfrac{\eps\beta D^2}{1 - m})}{1 + O(\eps)}
	\enspace.
\end{align*}
\end{lemma}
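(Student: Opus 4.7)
\textbf{Observation and setup.} The equality $\phi(\eps^{-1}t_s) = F(\vy^{(\eps^{-1}t_s)} \psum \vz^{(\eps^{-1}t_s)})$ is immediate from the definition of $\phi$: plugging $i = N \triangleq \eps^{-1}t_s$ kills the $\vz$ contribution (since $t_s - \eps i = 0$) and sends the exponential prefactor to $1$. Thus the task reduces to lower-bounding $\phi(N)$. The plan is to iterate the single-step bound of Lemma~\ref{lem:increase_bound}, which I rewrite as the linear recurrence
\[
	\phi(i) \geq (1 - 62\eps^2) \phi(i-1) + E_i, \qquad E_i \triangleq \eps(1-m)(1-\eps)\bigl[e^{\eps i - 2t_s} F(\vo) + e^{-2t_s}(t_s - \eps i)F(\optpone)\bigr] - \tfrac{25\eps^2 \beta D^2}{1-m}.
\]

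\textbf{Unrolling the recursion.} Iterating the recurrence from $i = 1$ to $i = N$ and using $\phi(0) \geq 0$ (both summands defining $\phi(0)$ are non-negative), I obtain $\phi(N) \geq \sum_{i=1}^N (1 - 62\eps^2)^{N-i} E_i$. The damping factor satisfies $(1 - 62\eps^2)^{N-i} \in [(1 - 62\eps^2)^N, 1]$, and $(1-62\eps^2)^N \geq 1 - 62\eps^2 N = 1 - 62\eps t_s = 1 - O(\eps)$ by Bernoulli. Using the lower bound for the positive part of each $E_i$ and the trivial upper bound $1$ for its negative part yields
\[
	\phi(N) \geq (1 - O(\eps)) \cdot (1-m)(1-\eps) \sum_{i=1}^N \eps\bigl[e^{\eps i - 2t_s} F(\vo) + e^{-2t_s}(t_s - \eps i) F(\optpone)\bigr] - N \cdot \tfrac{25\eps^2 \beta D^2}{1-m}.
\]
The second error term equals $-t_s \cdot O\bigl(\tfrac{\eps \beta D^2}{1-m}\bigr)$ as required.

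\textbf{Evaluating the Riemann sums.} The sum $\sum_{i=1}^N \eps\, e^{\eps i - 2t_s}$ is a \emph{right} Riemann sum for the integral $\int_0^{t_s} e^{t - 2t_s}\,dt = e^{-t_s} - e^{-2t_s}$. Because the integrand is monotonically increasing, this right Riemann sum over-estimates the integral, giving the clean inequality $\sum_{i=1}^N \eps\, e^{\eps i - 2t_s} \geq e^{-t_s} - e^{-2t_s}$ with no additional $O(\eps)$ loss. For the second sum, a direct computation gives $\sum_{i=1}^N \eps(t_s - \eps i) = \tfrac{t_s^2 - \eps t_s}{2} \geq \tfrac{t_s^2 - \eps}{2}$ (using $t_s \leq 1$). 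Substituting these bounds produces $\phi(N) \geq (1 - O(\eps))(1-m)(1-\eps)\bigl\{(e^{-t_s} - e^{-2t_s}) F(\vo) + e^{-2t_s}\tfrac{t_s^2-\eps}{2} F(\optpone)\bigr\} - t_s \cdot O\bigl(\tfrac{\eps \beta D^2}{1-m}\bigr)$.

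\textbf{Rewriting the prefactor.} To convert the multiplicative prefactor $(1 - O(\eps))$ in front of the main positive term into the form $\tfrac{1}{1 + O(\eps)}$ promised by the statement, multiply both sides by $(1 + c'\eps)$ for a sufficiently large constant $c' > 62 t_s$. This gives $(1 + O(\eps))\phi(N) \geq (1 + c'\eps)(1 - O(\eps)) \cdot (\text{main term}) - (1 + O(\eps)) \cdot t_s \cdot O\bigl(\tfrac{\eps \beta D^2}{1-m}\bigr)$, and choosing $c'$ appropriately makes $(1 + c'\eps)(1 - O(\eps)) \geq 1$, so the main term survives cleanly while the error term remains of the form $t_s \cdot O(\tfrac{\eps \beta D^2}{1-m})$. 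Dividing through by $(1 + O(\eps))$ recovers exactly the statement of the lemma. The main obstacle is mostly bookkeeping --- keeping the $O(\eps)$ constants from the Bernoulli bound, the Riemann-sum error, and the conversion step all absorbed into a single $O(\eps)$ in the denominator --- but no new ideas are needed beyond the recurrence and the monotonicity-based Riemann bound.
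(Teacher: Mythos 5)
Your proof is correct in its core idea but takes a genuinely different route than the paper's, and the conversion step at the end has a technical gap. The paper sums the one-step bound of Lemma~\ref{lem:increase_bound} telescopically, leaving the accumulator term $62\eps^2\sum_i \phi(i-1)$ on the right, and then invokes Corollary~\ref{cor:not_decreasing} (the potential cannot decrease much) to show $\eps\sum_i\phi(i-1) \leq 2\phi(\eps^{-1}t_s) + \text{error}$; substituting back and moving the resulting $O(\eps)\cdot\phi(\eps^{-1}t_s)$ to the left gives the $(1+O(\eps))$ denominator directly with a concrete constant. You instead absorb the accumulator into damping factors by rewriting the one-step bound as $\phi(i) \geq (1-62\eps^2)\phi(i-1) + E_i$ and unrolling, which avoids Corollary~\ref{cor:not_decreasing} entirely. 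Interestingly, the paper itself adopts your unrolling strategy in the online analogue Lemma~\ref{lem:t_s_value_online}, precisely because Corollary~\ref{cor:not_decreasing} is unavailable there; so your route is not only viable but is the paper's own fallback in the harder setting. The Riemann-sum bounds, the direct computation of $\sum_i\eps(t_s-\eps i)$, and the $\phi(0)\geq 0$ observation are all correct.

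The gap is in the final conversion. You lower-bound the accumulated damping factor by Bernoulli, $(1-62\eps^2)^N \geq 1-62\eps t_s$, and then argue that multiplying by $1+c'\eps$ for suitable $c'$ restores a $\nicefrac{1}{1+O(\eps)}$ denominator. But the paper only assumes $\eps\leq 1/30$, so for $\eps t_s\geq 1/62$ the Bernoulli bound $1-62\eps t_s$ is already nonpositive, and no constant $c'$ makes $(1+c'\eps)(1-62\eps t_s)\geq 1$; on that part of the allowed range your bound is vacuous while the paper's is not. The fix is to carry the exact, always-positive factor $(1-62\eps^2)^N$ through the conversion and verify $(1+c'\eps)(1-62\eps^2)^N\geq 1$ over the whole range $\eps\leq 1/30$ (which holds for a sufficiently large constant $c'$, though larger than $62t_s$), or, as the paper does for Lemma~\ref{lem:t_s_value_online}, absorb the loss $1-(1-62\eps^2)^N\leq 62\eps t_s$ additively into the coefficients of $F(\vo)$ and $F(\optpone)$ rather than packaging it multiplicatively into the prefactor.
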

\begin{proof}
Summing up the guarantee of Lemma~\ref{lem:increase_bound} for every integer $1 \leq i \leq \eps^{-1} t_s$ yields
\begin{align} \label{eq:telescopic_sum}
	\phi(\eps^{-1} t_s)
	\geq{} &
	\phi(0) + \eps(1 - m)(1 - \eps) \cdot \sum_{i = 1}^{\eps^{-1} t_s}[e^{\eps i - 2t_s} \cdot F(\vo) + e^{-2t_s}(t_s-\eps i) \cdot F(\optpone)] \\\nonumber&- \tfrac{25\eps t_s \beta D^2}{1 - m} - 62\eps^2\cdot \sum_{i = 1}^{\eps^{-1} t_s} \phi(i - 1)
	\enspace.
\end{align}
Let us now bound some of the terms in the last inequality. 
First,
\[
	\eps \cdot \sum_{i = 1}^{\eps^{-1} t_s} e^{\eps i - 2t_s} \cdot F(\vo)
	\geq
	\int_0^{t_s} e^{\tau - 2t_s} d\tau \cdot  F(\vo)\\
	=
	e^{\tau - 2t_s}|_0^{t_s} \cdot  F(\vo)\\
	=
	(e^{-t_s} - e^{-2t_s}) \cdot F(\vo)
	\enspace.
\]
Second,
\[
	\eps \cdot \sum_{i = 1}^{\eps^{-1} t_s} e^{-2t_s}(t_s-\eps i) \cdot F(\optpone)
	=
	e^{-2t_s}\left[t^2_s - \eps^2 \cdot \frac{(\eps^{-1}t_s)(\eps^{-1}t_s + 1)}{2}\right] \cdot F(\optpone)
	\geq
	e^{-2t_s} \cdot \frac{t_s^2 - \eps}{2} \cdot F(\optpone)
	\enspace.
\]
Finally, if we denote by $i^*$ the value of $i$ for which the maximum of $\max_{0 \leq i \leq \eps^{-1} t_s} \phi(i)$ is obtained, then, by Corollary~\ref{cor:not_decreasing},
\[
	\phi(i^*)
	\leq
	\phi(\eps^{-1} t_s) + \tfrac{25t_s \eps \beta D^2}{1 - m} + 15\eps^2 \cdot \sum_{i = i^* + 1}^{\eps^{-1} t_s} \phi(i - 1)
	\leq
	\phi(\eps^{-1} t_s) + \tfrac{25t_s \eps \beta D^2}{1 - m} + 15\eps \cdot \phi(i^*)
	\enspace,
\]
and thus,
\begin{align*}
	\eps \cdot \sum_{i = 1}^{\eps^{-1} t_s} \phi(i - 1) - \tfrac{50t_s \eps \beta D^2}{1 - m}
	\leq{} &
	\phi(i^*) - \tfrac{50t_s \eps \beta D^2}{1 - m}\\
	\leq{} &
	\frac{(1 - 15 \eps) \cdot \phi(i^*) - \tfrac{25t_s \eps \beta D^2}{1 - m}}{1 - 15 \eps}
	\leq
	\frac{\phi(\eps^{-1} t_s)}{1 - 15\eps}
	\leq
	2 \cdot \phi(\eps^{-1} t_s)
	\enspace,
\end{align*}
where the second and last inequalities follows from our assumption that $\eps \leq 1/30$. The lemma now follows by plugging all the above bounds into Inequality~\eqref{eq:telescopic_sum}, and observing that $\phi(0) \geq 0$ due to the non-negativity of $F$.
\end{proof}

Up to this point we have only considered the first $\eps^{-1} t_s$ iterations of Algorithm~\ref{alg:OfflineGF}. We now need to lower bound the rate in which $F(\comb{i})$ increases in the remaining iterations of the algorithm.
\begin{restatable}{lemma}{lemIncreaseSecondPart} \label{lem:increase_second_part}
For every integer $\eps^{-1} t_s < i \leq \eps^{-1}$,
\[
	F(\comb{i}) - F(\comb{i - 1})
	\geq
	\eps \cdot [(1 - m)(1 - \eps)^{i-1} \cdot F(\optptwo) - F(\comb{i - 1})] - \eps^2 \beta D^2\iftoggle{AppendixEmp}{}{/2}
	\enspace.
\]
\end{restatable}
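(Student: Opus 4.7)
The plan is a short direct computation using the fact that in the second phase the vector $\vy$ stays frozen, so the update is effectively a standard Continuous-Greedy step applied only to the $\vz$ component inside the $\psum$.

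First I would unpack the updates. In the \textbf{else} branch of Algorithm~\ref{alg:OfflineGF}, the constraint $\va^{(i)}=\vy^{(i)}$ combined with $\vy^{(i)}=(1-\eps)\vy^{(i-1)}+\eps\va^{(i)}$ forces $\va^{(i)}=\vy^{(i-1)}$, and hence $\vy^{(i)}=\vy^{(i-1)}$. Therefore only $\vz$ is changed, and a direct expansion of the probabilistic sum gives
\[
\comb{i}-\comb{i-1}
=(\vone-\vy^{(i-1)})\odot\bigl(\vz^{(i)}-\vz^{(i-1)}\bigr)
=\eps\,(\vone-\vy^{(i-1)})\odot(\vone-\vz^{(i-1)})\odot\vb^{(i)}.
\]
The $\ell_2$-norm of this vector is at most $\eps D$ (since $\comb{i},\comb{i-1}\in(\NDM+\DM)\cap[0,1]^n$ by Lemma~\ref{lem:membership}, so their difference has norm at most the diameter).

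Next I would invoke $\beta$-smoothness together with the chain rule, exactly as in the short calculation at the beginning of the proof of Lemma~\ref{lem:z_value}, to obtain
\[
F(\comb{i})-F(\comb{i-1})
\geq
\eps\,\bigl\langle\nabla F(\comb{i-1})\odot(\vone-\vz^{(i-1)}),\ \vb^{(i)}\odot(\vone-\vy^{(i-1)})\bigr\rangle
-\tfrac{\eps^{2}\beta D^{2}}{2}.
\]
Now $\vb^{(i)}$ maximises the linear objective $\langle\nabla F(\comb{i-1})\odot(\vone-\vz^{(i-1)}),\vb\odot(\vone-\vy^{(i-1)})\rangle$ over $\vb\in\DM$, and since $\optptwo\in\DM$, plugging in $\vb=\optptwo$ is a feasible comparison. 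This yields
\[
\bigl\langle\nabla F(\comb{i-1})\odot(\vone-\vz^{(i-1)}),\vb^{(i)}\odot(\vone-\vy^{(i-1)})\bigr\rangle
\geq
\bigl\langle\nabla F(\comb{i-1}),\ \optptwo\odot(\vone-\vy^{(i-1)})\odot(\vone-\vz^{(i-1)})\bigr\rangle.
\]

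Finally, I would close the argument with the standard three-tool chain already used repeatedly in Section~\ref{ssc:known_optp}: Property~\ref{prop:dr_bound2_up} of Lemma~\ref{lem:DR_properties} gives
$\langle\nabla F(\comb{i-1}),\optptwo\odot(\vone-\vy^{(i-1)})\odot(\vone-\vz^{(i-1)})\rangle\geq F(\comb{i-1}\psum\optptwo)-F(\comb{i-1})$;
Corollary~\ref{cor:norm_bound} then gives
$F(\comb{i-1}\psum\optptwo)\geq(1-\|\comb{i-1}\|_\infty)F(\optptwo)$;
and Lemma~\ref{lem:normxy} gives $\|\comb{i-1}\|_\infty\leq 1-(1-\eps)^{i-1}(1-m)$. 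Chaining these three yields
$\eps[(1-m)(1-\eps)^{i-1}F(\optptwo)-F(\comb{i-1})]$, and combining with the smoothness error term produces exactly the claimed bound. There is essentially no obstacle here; the only thing to be careful about is that $\vy$ is genuinely frozen so the error term $-\eps^{2}\beta D^{2}/2$ (rather than the messier $O(\eps^{2}\beta D^{2}/(1-m))$ arising in Lemma~\ref{lem:offline}) suffices, because the relevant displacement lies in $(\NDM+\DM)\cap[0,1]^n$ whose diameter is $D$.
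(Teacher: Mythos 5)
Your proposal follows the same route as the paper: freeze $\vy$, expand $F(\comb{i})-F(\comb{i-1})$ via the chain rule, absorb the $\beta$-smoothness error term, and lower bound the remaining linear term by plugging $\optptwo$ into the second linear program and then chaining Property~\ref{prop:dr_bound2_up}, Corollary~\ref{cor:norm_bound}, and Lemma~\ref{lem:normxy}. The one genuine gap is in the norm bound. You assert $\|\comb{i}-\comb{i-1}\|_2\leq \eps D$, but the parenthetical justification (``$\comb{i},\comb{i-1}$ are both in $(\NDM+\DM)\cap[0,1]^n$, so their difference has norm at most the diameter'') only yields $\leq D$, with no factor of $\eps$. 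This is not cosmetic: if all you have is $\|\comb{i}-\comb{i-1}\|_2\leq D$, the second-order error is $\beta D^2/2$ rather than $\eps^2\beta D^2/2$, and the claimed bound does not follow. The correct argument (which is what the paper does) is to bound the direction vector rather than the displacement: $\|(\vone-\vy^{(i-1)})\odot(\vone-\vz^{(i-1)})\odot\vb^{(i)}\|_2\leq\|\vb^{(i)}\odot(\vone-\vy^{(i-1)})\|_2=\|(\vy^{(i-1)}+\vb^{(i)}\odot(\vone-\vy^{(i-1)}))-\vy^{(i-1)}\|_2\leq D$, because both $\vy^{(i-1)}$ and $\vy^{(i-1)}+\vb^{(i)}\odot(\vone-\vy^{(i-1)})$ belong to $(\NDM+\DM)\cap[0,1]^n$; multiplying by $\eps$ then gives $\|\comb{i}-\comb{i-1}\|_2\leq\eps D$. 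This is also the right way to explain why one gets $D$ rather than the $D/(1-m)$ that appears in Lemma~\ref{lem:z_value}: there one only compares scaled points of $\DM$, whereas here the factor $(\vone-\vy^{(i-1)})$ allows a direct comparison of two feasible points. With that fix inserted, the rest of your chain matches the paper.
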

\begin{proof}
By the chain rule,
\begin{align} \label{eq:chain_rule}
	F(\comb{i}\mspace{-30mu}&\mspace{30mu}) - F(\comb{i - 1})
	=
	\int_0^\eps \frac{dF(\vy^{(i - 1)} \psum (\vz^{(i - 1)} + \tau \cdot \vb^{(i)} \odot (\vone - \vz^{(i - 1)})))}{d\tau} d\tau\\\nonumber
	={} &
	\int_0^\eps \inner{\vb^{(i)} \odot (\vone - \vz^{(i - 1)}) \odot (\vone - \vy^{(i - 1)})}{\nabla F(\vy^{(i - 1)} \psum (\vz^{(i - 1)} + \tau \cdot \vb^{(i)} \odot (\vone - \vz^{(i - 1)})))} d\tau
	\enspace.
\end{align}
We would like to lower bound the integrand on the rightmost side of the last equality. We do that by lower bounding two expressions whose sum is equal to this integrand. The first expression is
\begin{align*}
	&
	\langle \vb^{(i)} \odot (\vone - \vz^{(i - 1)}) \odot (\vone - \vy^{(i - 1)}),\\&\mspace{180mu}\nabla F(\vy^{(i - 1)} \psum (\vz^{(i - 1)} + \tau \cdot \vb^{(i)} \odot (\vone - \vz^{(i - 1)}))) - \nabla F(\comb{i-1})\rangle\\
	\geq{} &
	-\|\vb^{(i)} \odot (\vone - \vz^{(i - 1)}) \odot (\vone - \vy^{(i - 1)})\|_2 \\&\mspace{180mu}\cdot \|\nabla F(\vy^{(i - 1)} \psum (\vz^{(i - 1)} + \tau \cdot \vb^{(i)} \odot (\vone - \vz^{(i - 1)}))) - \nabla F(\comb{i-1})\|_2\\
	\geq{} &
	-\tau\beta\|\vb^{(i)} \odot (\vone - \vz^{(i - 1)}) \odot (\vone - \vy^{(i - 1)})\|_2^2
	\geq
	-\tau\beta D^2
	\enspace,
\end{align*}
where the first inequality holds by the Cauchy–Schwarz inequality, the second inequality holds due to the $\beta$-smoothness of $F$, and the last inequality holds since
\[
	\|\vb^{(i)} \odot (\vone - \vz^{(i - 1)}) \odot (\vone - \vy^{(i - 1)})\|_2
	\leq \|\vb^{(i)} \odot (\vone - \vy^{(i - 1)})\|_2
	=
	\|(\vy^{(i - 1)} + \vb^{(i)} \odot (\vone - \vy^{(i - 1)})) - \vy^{(i - 1)}\|_2
	\leq
	D
\]
because both $\vy^{(i - 1)} + \vb^{(i)} \odot (\vone - \vy^{(i - 1)})$ and $\vy^{(i - 1)}$ are vectors of $(\NDM + \DM) \cap [0, 1]^n$. The second expression is
\begin{align*}
	\langle \vb^{(i)} \odot (\vone - \vz^{(i - 1)}) \odot (\vone - \vy^{(i - 1)}), & \nabla F(\vy^{(i - 1)} \psum \vz^{(i - 1)})\rangle\\
	={} &
	\inner{\vb^{(i)} \odot (\vone - \vy^{(i - 1)})}{\nabla F(\vy^{(i - 1)} \psum \vz^{(i - 1)})\odot (\vone - \vz^{(i - 1)})}\\
	\geq{} &
	\inner{\optptwo \odot (\vone - \vy^{(i - 1)})}{\nabla F(\vy^{(i - 1)} \psum \vz^{(i - 1)})\odot (\vone - \vz^{(i - 1)})}\\
	\geq{} &
	F(\optptwo \psum \comb{i - 1}) - F(\comb{i - 1})\\
	\geq{} &
	(1 - \eps)^{i-1}(1 - m) \cdot F(\optptwo) - F(\comb{i - 1})
	\enspace,
\end{align*}
where the first inequality holds since $\vb^{(i)} = \optptwo$ is a feasible solution for the second linear program of Algorithm~\ref{alg:OfflineGF}, the second inequality follows from Property~\eqref{prop:dr_bound2_up} of Lemma~\ref{lem:DR_properties}, and the last inequality follows from Corollary~\ref{cor:norm_bound} and Lemma~\ref{lem:normxy}. The lemma now follows by plugging the lower bounds we have proved into the rightmost side of Equality~\eqref{eq:chain_rule}, and then solving the integral obtained.
\end{proof}

Using the lower bound proved in the last lemma on the rate in which $F(\comb{i})$ increases as a function of $i$ for $i \geq \eps^{-1} t_s$, we derive in the next lemma a lower bound on the value of this expression for particular values of $i$. The proof of this lower bound is very similar to the proof of Lemma~\ref{lem:bound2}.

\begin{lemma} \label{lem:bound2GF}
For every integer $\eps^{-1} t_s \leq i \leq \eps^{-1}$, $F(\comb{i}) \geq (1 - m)(\eps i - t_s)(1 - \eps)^{i - 1} \cdot F(\optptwo) + (1 - \eps)^{i - \eps^{-1}t_s} \cdot F(\vy^{(\eps^{-1}t_s)} \psum \vz^{(\eps^{-1}t_s)}) - (i - \eps^{-1}t_s) \cdot O(\eps^2\beta D^2)$.
\end{lemma}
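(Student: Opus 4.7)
The proof plan is to proceed by induction on $i$, mirroring the structure of Lemma~\ref{lem:bound2} but using Lemma~\ref{lem:increase_second_part} in place of Lemma~\ref{lem:offline} as the per-iteration increment bound. The base case $i = \eps^{-1} t_s$ is trivial: the coefficient $(\eps i - t_s)$ vanishes, the multiplier $(1-\eps)^{i - \eps^{-1}t_s}$ equals $1$, so the claimed inequality reduces to $F(\comb{\eps^{-1}t_s}) \geq F(\vy^{(\eps^{-1}t_s)} \psum \vz^{(\eps^{-1}t_s)})$, which is an equality by definition.

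For the inductive step, assume the bound holds for $i - 1 \geq \eps^{-1}t_s$ and consider $i$. The plan is to start from Lemma~\ref{lem:increase_second_part}, rearranged as
\[
	F(\comb{i}) \geq (1 - \eps) \cdot F(\comb{i-1}) + \eps(1 - m)(1 - \eps)^{i-1} \cdot F(\optptwo) - \eps^2 \beta D^2 \enspace,
\]
and plug in the inductive hypothesis to lower-bound $(1 - \eps) \cdot F(\comb{i-1})$. Multiplying the inductive hypothesis through by $(1 - \eps)$ turns the $F(\optptwo)$ coefficient from $(1 - m)(\eps(i-1) - t_s)(1 - \eps)^{i - 2}$ into $(1 - m)(\eps(i-1) - t_s)(1 - \eps)^{i - 1}$, turns the $F(\vy^{(\eps^{-1}t_s)} \psum \vz^{(\eps^{-1}t_s)})$ coefficient from $(1 - \eps)^{i - 1 - \eps^{-1}t_s}$ into $(1 - \eps)^{i - \eps^{-1}t_s}$, and the additive error term remains $(i - 1 - \eps^{-1}t_s) \cdot O(\eps^2\beta D^2)$ (absorbing the $(1 - \eps)$ factor into the hidden constant).

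Adding the extra $\eps(1 - m)(1 - \eps)^{i-1} \cdot F(\optptwo)$ from Lemma~\ref{lem:increase_second_part} telescopes the $F(\optptwo)$ coefficient into $(1 - m)(\eps i - t_s)(1 - \eps)^{i - 1}$, which is exactly what we want. Likewise, adding the remaining $-\eps^2\beta D^2$ term increments the error counter from $(i - 1 - \eps^{-1}t_s)$ to $(i - \eps^{-1}t_s)$ inside the big-$O$. Combining these pieces yields the claimed inequality, completing the induction.

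There is no real obstacle here; the lemma is a routine telescoping induction completely analogous to Lemma~\ref{lem:bound2}, with the only small bookkeeping point being that Lemma~\ref{lem:increase_second_part} contributes a constant per-step error $\eps^2\beta D^2$ (rather than an $O(\eps^2\beta D^2/(1 - m))$ error), so the final error after $i - \eps^{-1}t_s$ steps accumulates linearly to $(i - \eps^{-1}t_s) \cdot O(\eps^2\beta D^2)$, as stated.
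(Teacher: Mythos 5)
Your proposal is correct and matches the paper's proof essentially step for step: the same induction on $i$ with the trivial base case at $i = \eps^{-1}t_s$, the same application of Lemma~\ref{lem:increase_second_part} rearranged into a one-step recursion, the same substitution of the induction hypothesis multiplied by $(1-\eps)$, and the same telescoping of the $F(\optptwo)$ coefficient and error counter. The only difference is presentational (narrative explanation versus a single chain of displayed inequalities).
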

\begin{proof}
We prove the lemma by induction on $i$. For $i = \eps^{-1} t_s$ the lemma trivially holds. Thus, assume that the lemma holds for $i - 1$, and let us prove it for $i > \eps t_s$. By Lemma~\ref{lem:increase_second_part},
\begin{align*}
	F(\comb{i})
	\geq{} &
	(1 - \eps) \cdot F(\comb{i-1}) + \eps(1 - m)(1 - \eps)^{i - 1} \cdot F(\optptwo) - O(\eps^2\beta D^2)\\
	\geq{} &
	(1 - \eps) \cdot [(1 - m)(\eps i - \eps - t_s)(1 - \eps)^{i - 2} \cdot F(\optptwo) \\&\mspace{80mu}+ (1 - \eps)^{i - 1 - \eps^{-1}t_s} \cdot F(\vy^{(\eps^{-1}t_s)} \psum \vz^{(\eps^{-1}t_s)}) - (i - 1 - \eps^{-1}t_s) \cdot O(\eps^2\beta D^2)] \\&\mspace{80mu} + \eps(1 - m)(1 - \eps)^{i - 1} \cdot F(\optptwo) - O(\eps^2\beta D^2)\\
	\geq{} &
	(1 - m)(\eps i - t_s)(1 - \eps)^{i - 1} \cdot F(\optptwo) + (1 - \eps)^{i - \eps^{-1}t_s} \cdot F(\vy^{(\eps^{-1}t_s)} \psum \vz^{(\eps^{-1}t_s)}) \\&\mspace{80mu}- (i - \eps^{-1}t_s) \cdot O(\eps^2\beta D^2)
	\enspace,
\end{align*}
where the second inequality holds by the induction hypothesis.
\end{proof}

We are now ready to complete the proof of the approximation guarantee of Algorithm~\ref{alg:OfflineGF}, which completes the proof of Proposition~\ref{prop:Offline_ts} (recall that the output of Algorithm~\ref{alg:OfflineGF} has a value of $\max_{\eps^{-1} t_s \leq i \leq \eps^{-1}} F(\comb{i})$).
\begin{lemma} \label{lem:approximation_with_potential}
For every value $T \in [t_s, 1]$, it holds that
\begin{align*}
	\max_{\eps^{-1} t_s \leq i \leq \eps^{-1}} \mspace{-17mu}F(\comb{i})
	\geq{} &
	F(\vy^{(\lfloor\eps^{-1}T\rfloor)} \psum \vz^{(\lfloor\eps^{-1}T\rfloor)})
	\geq
	\frac{(1 - m)}{1 + 42\eps} \cdot \bigg\{(e^{-T} - e^{-t_s - T} - 2\eps) \cdot F(\vo) \\&+ (T - t_s - \eps) \cdot e^{-T} \cdot F(\optptwo) + \bigg[\frac{t_s^2 \cdot e^{-t_s - T}}{2} - \eps\bigg] \cdot F(\optpone) \bigg\} - O(\tfrac{\eps\beta D^2}{1 - m})
	\enspace.
\end{align*}
\end{lemma}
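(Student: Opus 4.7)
The plan is to obtain the stated bound by chaining together the two previously established estimates: Lemma~\ref{lem:bound2GF}, which propagates the value from step $\eps^{-1}t_s$ onward, and Lemma~\ref{lem:t_s_value}, which bounds the value of the ``seed'' $F(\vy^{(\eps^{-1}t_s)} \psum \vz^{(\eps^{-1}t_s)})$ from below in terms of $F(\vo)$ and $F(\optpone)$.

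First I would dispense with the leftmost inequality. Since $T \in [t_s, 1]$ and $\eps^{-1}t_s$ is integral by the standing assumption made before Algorithm~\ref{alg:OfflineGF}, the index $i^\star \triangleq \lfloor \eps^{-1} T \rfloor$ satisfies $\eps^{-1}t_s \leq i^\star \leq \eps^{-1}$, so $F(\vy^{(i^\star)} \psum \vz^{(i^\star)}) = F(\comb{i^\star})$ is one of the candidates in the maximum on the left-hand side. The bulk of the work is the second inequality.

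Next I would instantiate Lemma~\ref{lem:bound2GF} at $i = i^\star$, which yields
\begin{align*}
F(\comb{i^\star}) \geq{}& (1-m)(\eps i^\star - t_s)(1-\eps)^{i^\star - 1} \cdot F(\optptwo) \\&+ (1-\eps)^{i^\star - \eps^{-1}t_s} \cdot F(\vy^{(\eps^{-1}t_s)} \psum \vz^{(\eps^{-1}t_s)}) - (i^\star - \eps^{-1}t_s) \cdot O(\eps^2 \beta D^2),
\end{align*}
and then substitute the bound from Lemma~\ref{lem:t_s_value} for the seed term. After this substitution the right-hand side is a sum of three main contributions, one proportional to each of $F(\optptwo)$, $F(\vo)$, and $F(\optpone)$, together with a $\beta D^2$-type error. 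At this point the task reduces to showing that each coefficient can be rewritten as
$\frac{1-m}{1+42\eps}$ times the target coefficient (plus an $O(\eps)$ slack absorbed into the $\beta D^2$ term).

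The main obstacle, and the place where the constant $42$ materializes, is the clean estimation of the exponential coefficients. I would use two standard facts: $\eps i^\star \in [T-\eps, T]$, so $\eps i^\star - t_s \geq T - t_s - \eps$; and $(1-\eps)^k \geq e^{-\eps k}(1 - O(\eps k)) \geq e^{-\eps k}/(1+O(\eps))$ uniformly for $k \leq \eps^{-1}$, which lets me replace $(1-\eps)^{i^\star - 1}$ by $e^{-T}/(1 + O(\eps))$ and $(1-\eps)^{i^\star - \eps^{-1}t_s}$ by $e^{-(T-t_s)}/(1 + O(\eps))$. Multiplying the latter by the $1/(1+O(\eps))$ factor already present in Lemma~\ref{lem:t_s_value} contributes a combined $1/(1+O(\eps))$; tracking the explicit constants there (the $15\eps$, the $62\eps$ inside $\phi$-estimates, and the $25\eps$ error) is what gives the $42\eps$ constant in the denominator. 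The additive $\tfrac{t_s^2 - \eps}{2}$ and the $-\eps$ slack carried through are absorbed into the claimed $-\eps$ in the coefficient of $F(\optpone)$ and the $-2\eps$ in the coefficient of $F(\vo)$ (one $\eps$ from the rounding, one from Taylor expansion of $e^{-\eps}$). Finally, the error term $(i^\star - \eps^{-1}t_s) \cdot O(\eps^2 \beta D^2) \leq O(\eps \beta D^2)$ combines with the $t_s \cdot O(\tfrac{\eps \beta D^2}{1-m})$ from Lemma~\ref{lem:t_s_value} to give the stated $O(\tfrac{\eps \beta D^2}{1-m})$ residual. The resulting inequality is exactly the one claimed, completing the proof.
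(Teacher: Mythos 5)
Your proposal is correct and follows essentially the same route as the paper's own proof: plug the lower bound from Lemma~\ref{lem:t_s_value} into the recursion of Lemma~\ref{lem:bound2GF} at $i = \lfloor\eps^{-1}T\rfloor$, then replace the $(1-\eps)^k$ factors by their exponential approximations and absorb the residuals into the $O(\eps)$ coefficient slacks and the $O(\tfrac{\eps\beta D^2}{1-m})$ term. (One minor slip: the intermediate inequality $(1-\eps)^k \geq e^{-\eps k}(1 - O(\eps k))$ should read $1 - O(\eps^2 k)$, equivalently $1 - O(\eps)$ for $k \leq \eps^{-1}$; the conclusion $(1-\eps)^k \geq e^{-\eps k}/(1+O(\eps))$ that you actually use is correct.)
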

\begin{proof}
Plugging the guarantee of Lemma~\ref{lem:t_s_value} into the guarantee of Lemma~\ref{lem:bound2GF} yields
\begin{align*}
	F(&\vy^{(\lfloor\eps^{-1}T\rfloor)} \psum \vz^{(\lfloor\eps^{-1}T\rfloor)})
	\geq
	(1 - m)(\eps\lfloor\eps^{-1}T\rfloor - t_s)(1 - \eps)^{\eps\lfloor\eps^{-1}T\rfloor - 1} \cdot F(\optptwo) \\&- (\eps\lfloor\eps^{-1}T\rfloor - t_s) \cdot O(\eps\beta D^2) + (1 - \eps)^{\eps^{-1}\lfloor\eps^{-1}T\rfloor - \eps^{-1}t_s} \cdot F(\vy^{(\eps^{-1}t_s)} \psum \vz^{(\eps^{-1}t_s)})\\
	\geq{} &
	(1 - m)(T - t_s - \eps) \cdot e^{-T} \cdot F(\optptwo) - (T - t_s) \cdot O(\eps\beta D^2) \\&+ e^{t_s - T}(1 - \eps) \left[\frac{(1 - m)(1 - \eps) \{(e^{-t_s} - e^{-2t_s}) \cdot F(\vo) + e^{-2t_s} \cdot \frac{t_s^2 - \eps}{2} \cdot F(\optpone)\} - t_s \cdot O(\tfrac{\eps \beta D^2}{1 - m})}{1 + O(\eps)}\right]\\
	\geq{} &
	\frac{(1 - m)(1 - 2\eps)}{1 + O(\eps)} \cdot \bigg\{(e^{-T} - e^{-t_s - T}) \cdot F(\vo) \\&\mspace{100mu}+ (T - t_s - \eps) \cdot e^{-T} \cdot F(\optptwo) + \bigg[\frac{t_s^2 \cdot e^{-t_s - T}}{2} - \eps\bigg] \cdot F(\optpone) \bigg\} - O(\tfrac{\eps\beta D^2}{1 - m})
	\enspace.&
	\qedhere
\end{align*}
\end{proof}
\section{Online Maximization}
\label{sec:Online}

In this section, we consider the online (regret minimization) setting described in Section~\ref{ssc:online}. 
The algorithm we present and analyze in this section has the guarantee given by the next theorem.
\begin{theorem} \label{thm:Online}
Let $\NDM \subseteq [0,1]^n$ be a general solvable convex set, and $\DM \subseteq [0,1]^n$ be a down-closed solvable convex set. If the adversary chooses only non-negative $\beta$-smooth DR-submodular functions $F_\ell \colon [0, 1]^n \rightarrow [0, 1]$, then there exists a polynomial time online algorithm that, given an error parameter $\eps \in (0, 1)$, outputs at every time step $\ell \in [L]$ a distribution $P_\ell$ over vectors in $(\NDM + \DM) \cap [0, 1]^n$ guaranteeing that
\begin{align*}
	\sum_{\ell = 1}^L \bE_{\vx \sim P_\ell}\mspace{-12mu}&\mspace{12mu}[F_\ell(\vx)] \geq (1 - m) \cdot \max_{t_s \in [0, 1]} \max_{T \in [t_s, 1]} \bigg\{\left((T - t_s) e^{-T}  + \frac{t_s^2\cdot e^{-t_s - T}}{2} - O(\eps)\right) \cdot \sum_{\ell = 1}^L F_\ell(\optp) \\&+ (e^{-T}-e^{-t_s - T} - O(\eps)) \cdot \sum_{\ell = 1}^L F_\ell(\optq + \optp)\bigg\} - O(\eps\beta L D^2) - O(DG\sqrt{L}) - O(\sqrt{L \ln \eps^{-1}})\enspace,
\end{align*}
where $m = \min_{\vx\in \NDM}\norm{\vx}{}$, $D$ is the diameter of $(\NDM + \DM) \cap [0, 1]^n$, $G = \max\{\max_{\vx \in \DM} \|\nabla F_t(\vx)\|_2,\allowbreak \max_{\vx \in (\NDM + \DM) \cap [0, 1]^n} \|\nabla F_t(\vx)\|_2\}$, and $\optq \in \NDM$ and $\optp \in \DM$ are any vectors whose sum belongs to $(\NDM + \DM) \cap [0, 1]^n$.
\end{theorem}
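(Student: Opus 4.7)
The plan is to lift the offline algorithm of Section~\ref{ssc:unknown_optp} to the online setting by substituting each of its two linear programs with a standard online linear optimization learner, and by handling the choices of $t_s$, $T$ (and of the best index $i^*$ among $\{\comb{i}\}$) via a top-level experts algorithm. Concretely, for each candidate $t_s \in \{\eps i : 0 \leq i \leq \eps^{-1}\}$, I would instantiate one copy of the offline template, and within each copy maintain $\eps^{-1}$ parallel projection-based online convex optimization learners $\mathcal{A}^{(1)}, \ldots, \mathcal{A}^{(\eps^{-1})}$ (e.g., online gradient ascent, which gives $O(DG\sqrt{L})$ regret on a solvable convex body of diameter $O(D)$). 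Learner $\mathcal{A}^{(i)}$ plays over the feasible set of the $i$-th LP of Algorithm~\ref{alg:OfflineGF}, producing $(\va_\ell^{(i)}, \vb_\ell^{(i)})$ at time $\ell$; from these we compute $\vy_\ell^{(i)}, \vz_\ell^{(i)}, \comb{i}_\ell$ by exactly the same update rules as offline. After $F_\ell$ is revealed, we feed $\mathcal{A}^{(i)}$ the appropriate linear reward (the LP objective evaluated at the states $\vy_\ell^{(i-1)}, \vz_\ell^{(i-1)}, \comb{i-1}_\ell$ of that time step).

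The online algorithm must commit to $P_\ell$ before seeing $F_\ell$, so at time $\ell$ we select the output $\vx_\ell$ from the multiset $\{\comb{i}_\ell : \eps^{-1} t_s \leq i \leq \eps^{-1}\}$ (across all values of $t_s$) via an experts (Hedge) instance whose $K = O(\eps^{-2})$ actions correspond to (copy, index) pairs; Hedge with losses $1 - F_\ell(\cdot)$ gives regret $O(\sqrt{L \ln K}) = O(\sqrt{L \ln \eps^{-1}})$ against the best fixed (copy, index) in hindsight. Since one such pair corresponds to each choice of $(t_s, T)$ with $T = \eps i$, the expectation over $P_\ell$ is within $O(\sqrt{L \ln \eps^{-1}})$ of $\sum_\ell F_\ell(\comb{i^*_{\text{copy}}}_\ell)$ for any target $(t_s, T)$.

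The analysis proceeds by summing the offline inequalities pointwise over $\ell$. Each statement in the offline proof (Observation~\ref{obs:properties_of_solutions}, Lemmata~\ref{lem:normxy}--\ref{lem:t_s_value}, \ref{lem:increase_second_part}--\ref{lem:approximation_with_potential}) is derived from either (i) deterministic geometric facts about $\vy^{(i)}, \vz^{(i)}$ that hold pathwise for any feasible $(\va^{(i)}, \vb^{(i)})$, or (ii) the statement that $(\va^{(i)}, \vb^{(i)})$ is \emph{LP-optimal} or beats a specific feasible solution (namely $(\optq, \optp)$ or $(\vy^{(i-1)}, \vzero)$ in Lemma~\ref{lem:basic_lower_bound}, and $\optp$ in Lemma~\ref{lem:increase_second_part}) in that LP's objective. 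Part (i) carries over unchanged. For part (ii), summing over $\ell$ turns the per-step LP-optimality into cumulative LP-optimality, which the online learner satisfies up to its regret $O(DG\sqrt{L})$; the $\eps^{-1}$ learners contribute $O(\eps^{-1} \cdot DG\sqrt{L})$, but because each regret term is weighted by $\eps$ inside the potential telescope (each LP's contribution is multiplied by the step size $\eps$ in $\phi(i) - \phi(i-1)$), the aggregate online regret contribution collapses to $O(DG\sqrt{L})$, matching the theorem statement. The $\beta$-smoothness ``discretization'' terms $O(\eps\beta D^2)$ per step accumulate to $O(\eps\beta LD^2)$, and the expert term contributes $O(\sqrt{L \ln \eps^{-1}})$.

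The main obstacle is verifying that the telescoping potential argument of Lemma~\ref{lem:basic_lower_bound}--\ref{lem:t_s_value} survives the substitution of cumulative LP-optimality for per-step LP-optimality. This is delicate because the linear reward fed to $\mathcal{A}^{(i)}$ at time $\ell$ depends on $(\vy_\ell^{(i-1)}, \vz_\ell^{(i-1)})$, which is itself a function of the earlier learners' past plays; the standard fix is to note that once we fix the entire trajectory of lower-indexed learners, the objective faced by $\mathcal{A}^{(i)}$ is still a deterministic linear function on its feasible set, so its regret guarantee applies against any fixed comparator (in particular against $(\optq, \optp)$ or $(\vy_\ell^{(i-1)}, \vzero)$, noting that for the latter the comparator is time-varying but of bounded total variation $0$ since it equals the learner's own state, and can be handled by the meta-action interpretation used in Mualem \& Feldman~\cite{mualem2023resolving}). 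Once this is justified, plugging the cumulative LP-optimality bounds into the summed versions of Lemmata~\ref{lem:basic_lower_bound}, \ref{lem:increase_bound}, and \ref{lem:increase_second_part}, and then replaying the telescoping of Lemmata~\ref{lem:t_s_value}, \ref{lem:bound2GF}, and \ref{lem:approximation_with_potential}, yields exactly the claimed bound with the additional $O(DG\sqrt{L}) + O(\sqrt{L \ln \eps^{-1}}) + O(\eps\beta L D^2)$ regret terms.
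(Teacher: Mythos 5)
Your high-level plan matches the paper's: maintain $\eps^{-1}$ online linear-optimization learners per candidate $t_s$, run a meta-Frank-Wolfe loop reusing the offline update rules, and wrap everything in an experts algorithm to pay only $O(\sqrt{L\ln\eps^{-1}})$ extra regret (the paper, Proposition~\ref{prop:Online}, instantiates Regularized-Follow-the-Leader and runs experts over $t_s$ only; your additional Hedge over $T$ is a valid but slightly heavier alternative to the paper's Corollary~\ref{cor:bound2online}, which shows the \emph{final} iterate $\comb{\eps^{-1},\ell}$ already dominates every earlier one up to the regret terms).

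However, there is a genuine gap in the step you flag as ``the main obstacle.'' You claim the comparator $(\vy_\ell^{(i-1)},\vzero)$ ``equals the learner's own state'' and hence has total variation $0$. That is false: $\vy_\ell^{(i-1)}$ is determined by the plays of $\cE_1,\dots,\cE_{i-1}$, not by $\cE_i$, and it genuinely drifts with $\ell$. The static-regret guarantee of $\cE_i$ only applies against fixed comparators, so the second lower bound of Lemma~\ref{lem:basic_lower_bound} (the one obtained by plugging $(\vy^{(i-1)},\vzero)$ into the LP) does not transfer online, and hence neither does Corollary~\ref{cor:not_decreasing}. Your proof then tries to ``replay the telescoping of Lemma~\ref{lem:t_s_value},'' but the proof of Lemma~\ref{lem:t_s_value} relies on Corollary~\ref{cor:not_decreasing} to control the sum $\sum_{i'} \phi(i'-1)$ via $\max_i \phi(i)$; without the non-decrease property this does not go through. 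The paper explicitly notes this obstruction and circumvents it with a different inductive argument (Lemma~\ref{lem:t_s_value_online}), proving $(1-62\eps^2)^{-i}\phi(i) \geq \dots$ by induction so that the accumulated $\phi(i-1)$ terms can be absorbed into a benign multiplicative $(1-62\eps^2)^{-1}$ factor rather than bounded via an approximate-monotonicity corollary. You would need to supply this (or an equivalent) replacement; as written, the ``meta-action'' appeal does not close the gap.
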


We would like to draw attention to two remarks about Theorem~\ref{thm:Online}.
\begin{itemize}
	\item In Section~\ref{ssc:online}, we state that the regret of our online algorithm (compared to the offline algorithm) asymptotically grows as $\sqrt{L}$. At first glance, this might seem to contradict the presence of the error term $O(\eps\beta L D^2)$ in the guarantee of Theorem~\ref{thm:Online}. However, this is not really the case since this error term is part of the $\psi$ functions mentioned in Section~\ref{ssc:online} (because the term $O(\tfrac{\eps \beta D^2}{1 - m})$ appears in the guarantee of Theorem~\ref{thm:Offline}). On a more intuitive level, the error term $O(\eps\beta L D^2)$ is not significant as it diminishes when $\eps$ approaches $0$.
	\item Theorem~\ref{thm:Offline} considers two vectors $\optpone$ and $\optptwo$ in $\DM$, where $\optptwo$ can be an arbitrary vector of $\DM$, and $\optpone$ is required to obey also the inequality $\optq + \optpone \leq \vone$. Technically, a similar result could be proved in Theorem~\ref{thm:Online}. However, in the online setting, the algorithm is required to be competitive against any fixed solution, and therefore, we felt that it makes more sense to present Theorem~\ref{thm:Online} for the case in which $\optpone$ and $\optptwo$ are the same vector (denoted by $\optp$).
\end{itemize}

Similarly to Section~\ref{sec:Offline}, we use $\vo$ below as a shorthand for the sum $\optq + \optp$. The majority of this section is devoted to proving the following proposition, which is a variant of Theorem~\ref{thm:Online} in which (i) $t_s$ is a parameter of the algorithm, and (ii) the algorithm outputs in each time step a single vector rather than a distribution over vectors.
\begin{proposition} \label{prop:Online}
Let $\NDM \subseteq [0,1]^n$ be a general solvable convex set, and $\DM \subseteq [0,1]^n$ be a down-closed solvable convex set. If the adversary chooses only non-negative $\beta$-smooth DR-submodular functions $F_\ell \colon [0, 1]^n \rightarrow [0, 1]$, then there exists a polynomial time online algorithm that, given parameters $t_s \in [0, 1]$ and $\eps \in (0, 1)$, outputs at every time step $\ell \in [L]$ a vector $\vw^{(\ell)} \in (\NDM + \DM) \cap [0, 1]^n$ guaranteeing that
\begin{align*}
	\sum_{\ell = 1}^L F_\ell(\vw^{(\ell)}) \geq{} & (1 - m) \cdot \max_{T \in [t_s, 1]} \bigg\{\left((T - t_s) e^{-T}  + \frac{t_s^2\cdot e^{-t_s - T}}{2} - O(\eps)\right) \cdot \sum_{\ell = 1}^L F_\ell(\optp) \\&\mspace{72mu}+ (e^{-T}-e^{-t_s - T} - O(\eps)) \cdot \sum_{\ell = 1}^L F_\ell(\optq + \optp)\bigg\} - O(\eps\beta L D^2) - O(DG\sqrt{L})\enspace,
\end{align*}
where $m = \min_{\vx\in \NDM}\norm{\vx}{}$, $D$ is the diameter of $(\NDM + \DM) \cap [0, 1]^n$, $G = \max\{\max_{\vx \in \DM} \|\nabla F_t(\vx)\|_2,\allowbreak \max_{\vx \in (\NDM + \DM) \cap [0, 1]^n} \|\nabla F_t(\vx)\|_2\}$, and $\optq \in \NDM$ and $\optp \in \DM$ are any vectors whose sum belongs to $(\NDM + \DM) \cap [0, 1]^n$.
\end{proposition}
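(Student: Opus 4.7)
The plan is to construct an online algorithm that mirrors the iteration structure of Algorithm~\ref{alg:OfflineGF}, replacing each per-iteration linear program with an online linear optimization (OLO) subroutine over a \emph{fixed} feasible region. Specifically, for each $i \in [\eps^{-1}]$ we instantiate a copy $\mathcal{A}_i$ of online gradient descent; for $i \leq \eps^{-1}t_s$ the feasible region is $\{(\va, \vb) : \va \in \NDM,\ \vb \in \DM,\ \va + \vb \in [0,1]^n\}$, and for $i > \eps^{-1}t_s$ it is $\{\vb : \vb \in \DM\}$ (exploiting the observation in Section~\ref{ssc:unknown_optp} that $\va^{(i)}$ can be fixed to $\vy^{(i-1)}$ and dropped). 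At each time step $\ell$, we query the $\eps^{-1}$ subroutines to obtain $\{(\va^{(i)}_\ell, \vb^{(i)}_\ell)\}_i$, compute the sequences $\vy^{(i)}_\ell$ and $\vz^{(i)}_\ell$ by the same recursions used in Algorithm~\ref{alg:OfflineGF}, and output $\vw^{(\ell)} = \vy^{(i_T)}_\ell \psum \vz^{(i_T)}_\ell$ for $i_T = \lfloor \eps^{-1}T \rfloor$ (so the algorithm is implicitly parameterized by $T$, which is how the $\max_T$ on the right-hand side is realized). Once $F_\ell$ is revealed, each $\mathcal{A}_i$ is fed the linear loss whose gradient is that of its LP objective at $(\va^{(i)}_\ell, \vb^{(i)}_\ell)$, built from $\nabla F_\ell(\vy^{(i-1)}_\ell \psum \vz^{(i-1)}_\ell)$ and $\nabla F_\ell(\vz^{(i-1)}_\ell)$; standard OLO guarantees then yield regret $R_i = O(DG\sqrt{L})$ against any fixed comparator in the feasible region.

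The analysis lifts the offline proof of Proposition~\ref{prop:Offline_ts} in a time-averaged fashion. For every $\ell$ define the potential $\phi_\ell(i) \triangleq e^{2(\eps i - t_s)} F_\ell(\vy^{(i)}_\ell \psum \vz^{(i)}_\ell) + (1-m)(1-\eps) e^{\eps i - 2t_s}(t_s - \eps i) F_\ell(\vz^{(i)}_\ell)$, and set $\Phi(i) \triangleq \sum_{\ell=1}^L \phi_\ell(i)$. Applying the derivation of Lemma~\ref{lem:basic_lower_bound} to each $F_\ell$ separately produces a per-$\ell$ lower bound on $\phi_\ell(i) - \phi_\ell(i-1)$ in terms of the LP objective evaluated at $(\va^{(i)}_\ell, \vb^{(i)}_\ell)$ for that $\ell$; summing over $\ell$ and invoking the OLO regret of $\mathcal{A}_i$ against the fixed feasible comparator $(\optq, \optp)$ yields an online analogue of Lemma~\ref{lem:increase_bound}, losing only $O(\eps R_i)$ per iteration. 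Telescoping over $i \in [1,\eps^{-1}t_s]$ reproduces the online analogue of Lemma~\ref{lem:t_s_value}, giving a lower bound on $\Phi(\eps^{-1}t_s)$ of the desired form in $\sum_\ell F_\ell(\vo)$ and $\sum_\ell F_\ell(\optp)$. A chain-rule argument paralleling Lemma~\ref{lem:increase_second_part}, with the LP-optimality step replaced by the OLO regret of $\mathcal{A}_i$ against the fixed comparator $\vb = \optp$, and a subsequent induction analogous to Lemma~\ref{lem:bound2GF}, handles the second regime $i > \eps^{-1}t_s$ and delivers the bound on $\sum_\ell F_\ell(\vw^{(\ell)}) = \sum_\ell F_\ell(\vy^{(i_T)}_\ell \psum \vz^{(i_T)}_\ell)$ stated in the proposition. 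The cumulative OLO loss sums to $\eps \cdot \sum_{i=1}^{\eps^{-1}} R_i = O(DG\sqrt{L})$; the per-step $\beta$-smoothness error $O(\tfrac{\eps^2 \beta D^2}{1-m})$ accumulates across $L$ steps to $O(\eps \beta L D^2)$, matching the error terms in the theorem's statement.

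The main obstacle is that the offline derivation of Lemma~\ref{lem:basic_lower_bound} uses LP optimality against \emph{two} feasible solutions, one of which, $(\vy^{(i-1)}, \vzero)$, varies with $\ell$ in the online setting and is therefore unavailable as a comparator for the OLO regret bound. This comparator is used in the offline proof only to derive the ``not-decreasing'' Corollary~\ref{cor:not_decreasing}, which in turn controls the $-62\eps^2 \cdot \phi(i-1)$ term in Lemma~\ref{lem:increase_bound}. In the online lift we recover an equally good bound directly from the summed $(\optq, \optp)$-comparator inequality: all ``positive'' terms on its right-hand side are non-negative, so it already implies $\Phi(i) - \Phi(i-1) \geq -O(\eps^2) \cdot \Phi(i-1) - O(\tfrac{\eps^2 \beta L D^2}{1-m}) - O(\eps R_i)$, which plays the role of Corollary~\ref{cor:not_decreasing} in the online argument (at the cost of the extra $O(\eps R_i)$ term that already factors into the final $O(DG\sqrt{L})$ regret). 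A secondary technicality is that the linear losses fed to $\mathcal{A}_i$ depend on the $\ell$-varying vectors $(\vy^{(i-1)}_\ell, \vz^{(i-1)}_\ell)$, but this is harmless because OLO regret bounds hold for arbitrary adversarial sequences of linear losses with uniformly bounded norm $O(G)$.
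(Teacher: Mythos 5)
Your proposal follows the paper's architecture closely (same potential $\phi$, one OLO instance per discretization step, two-regime analysis split at $i = \eps^{-1}t_s$) and correctly identifies and circumvents the main online-specific obstacle (the $\ell$-dependent comparator $(\vy^{(i-1)}, \vzero)$ that made Corollary~\ref{cor:not_decreasing} unavailable). The paper resolves this slightly differently — it bypasses the "not-decreasing" lemma entirely and instead proves Lemma~\ref{lem:t_s_value_online} by inducting on the quantity $(1 - 62\eps^2)^{-i}\phi(i)$, which absorbs the $-62\eps^2\phi(i-1)$ term multiplicatively — but your reasoning that the $(\optq, \optp)$-comparator inequality alone already delivers a not-decreasing bound (since the other terms on its right side are non-negative) is sound and gives an equivalent control.

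However, there is a genuine gap in how you propose to realize the $\max_{T}$ on the right-hand side. You output $\vw^{(\ell)} = \vy^{(i_T)}_\ell \psum \vz^{(i_T)}_\ell$ for a fixed $i_T = \lfloor\eps^{-1}T\rfloor$, treating $T$ as a parameter of the algorithm. But Proposition~\ref{prop:Online} states that a \emph{single} algorithm, parameterized only by $t_s$ and $\eps$, achieves a bound that is the maximum over all $T \in [t_s, 1]$; the best $T$ depends on the relative magnitudes of $\sum_\ell F_\ell(\optp)$ and $\sum_\ell F_\ell(\optq + \optp)$, which are unknown to the algorithm. If you allow $T$ to be an input parameter, you have proved a strictly weaker statement, and converting it to the proposition would require a second meta-level regret layer over the $O(\eps^{-1})$ candidate values of $T$ — incurring an additional $O(\sqrt{L \ln \eps^{-1}})$ term that the proposition does \emph{not} have (that term appears only in Theorem~\ref{thm:Online} to handle $t_s$). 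The paper instead always outputs the terminal vector $\vy^{(\eps^{-1},\ell)} \psum \vz^{(\eps^{-1},\ell)}$ and proves, as the second half of Corollary~\ref{cor:bound2online}, that $\sum_\ell F_\ell(\vy^{(\eps^{-1},\ell)} \psum \vz^{(\eps^{-1},\ell)}) - \sum_\ell F_\ell(\comb{i,\ell}) \geq -(1-\eps i)\eps\beta LD^2/2 - (1-\eps i)DG\sqrt{2L}$ for every $i \geq \eps^{-1}t_s$; this follows from the $\max\{\cdot, 0\}$ truncation in Lemma~\ref{lem:increase_second_part_online}, which guarantees the aggregate value cannot drop below the error terms in any later iteration. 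That extra step — showing the final iterate is simultaneously nearly as good as every intermediate one — is missing from your proposal and is exactly what lets the $\max_T$ appear on the right-hand side without further regret.
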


Before getting to the proof of Proposition~\ref{prop:Online}, we first explain why it implies Theorem~\ref{thm:Online}. As explained in Section~\ref{ssc:unknown_optp}, the guarantee given in Theorem~\ref{thm:Online} remains unchanged (up to the constants hidden by the big $O$ notation) if the maximum over $t_s \in [0, 1]$ is restricted to the set of $O(\eps^{-1})$ values that are integer multiples of $\eps$ between $0$ and $1$. 
If we knew upfront what value from this set leads to the best guarantee for Proposition~\ref{prop:Online}, then we could use the algorithm of this proposition to get the guarantee of Theorem~\ref{thm:Online}.
Unfortunately, in reality, we do not usually know upfront the best value for $t_s$. Nevertheless, since there are only $O(\eps^{-1})$ such values that need to be considered, we can use a regret minimization algorithm (such as the one of~\cite{cesa-bianchi2007improved}) to get in each time step a distribution over solutions whose expected value is at least as good as the guarantee of Proposition~\ref{prop:Online} for the best value of $t_s$, up to an error term of $O(\sqrt{L \ln \eps^{-1}})$. Thus, Theorem~\ref{thm:Online} indeed follows from Proposition~\ref{prop:Online}.

At this point, we would like to describe the algorithm that we use to prove Proposition~\ref{prop:Online}, which is given as Algorithm~\ref{alg:Online}. Similarly to the Meta-Frank-Wolfe algorithm of~\cite{chen2018online}, Algorithm~\ref{alg:Online} uses multiple instances of an online algorithm for linear optimization. Specifically, we use the algorithm Regularized-Follow-the-Leader due to~\cite{abernethy2008competing}, which has the following behavior. There are $L$ time steps. In every time step $\ell \in [L]$, the algorithm selects a vector $\vu^{(\ell)} \in \cK$ for some given convex body $\cK$, and then an adversary reveals to the algorithm a vector $\vd^{(\ell)}$ that was chosen independently of $\vu^{(\ell)}$. Regularized-Follow-the-Leader guarantees that
\[
	\sum_{t = 1}^L \langle \vu^{(\ell)},\vd^{(\ell)} \rangle
	\geq
	\max_{\vx \in \cK} \sum_{\ell = 1}^L \langle \vx,\vd^{(\ell)} \rangle - D'G'\sqrt{2L}
	\enspace,
\]
where $D'$ is the diameter of $\cK$ and $G' = \max_{1 \leq \ell \leq L} \norm{\vd^{(\ell)}}{2}$.

Algorithm~\ref{alg:Online} follows the same general structure of Algorithm~\ref{alg:OfflineGF} in every time step, with two modifications: the linear programs of Algorithm~\ref{alg:OfflineGF} are now replaced by instances of the online linear optimization algorithm that we use, and the algorithm uses the vector corresponding to $i = T^{-1}$ as the output instead of the best vector among multiple options. 
In the pseudocode of Algorithm~\ref{alg:Online}, we implicitly assume that $\eps \leq 1/70$ and $\eps t_s$ is integral. These assumptions are without loss of generality due to the arguments used in Section~\ref{ssc:unknown_optp} to justify similar assumptions in Algorithm~\ref{alg:OfflineGF}.

\begin{algorithm}[t!h!]
\DontPrintSemicolon
\For{$i = 1$ \KwTo $\eps^{-1}$}
{
	Initialize an instance $\cE_i$ of Regularized-Follow-the-Leader for the convex body $\{(\va, \vb) \mid \va \in \NDM, \vb \in \DM, \va + \vb \in [0, 1]^n\}$.
}
Let $m \gets \min_{\vx\in \NDM}\norm{\vx}{}$.\\
\For{$\ell=1$ \KwTo $L$}
{
	Let $\vy^{(0, \ell)} \leftarrow \argmin_{\vx\in \NDM}\norm{\vx}{}$, $\vz^{(0, \ell)} \gets \vzero$.\\
	\For{$i = 1$ \KwTo $\eps^{-1} t_s$}
	{
		Let $(\va^{(i, \ell)}, \vb^{(i, \ell)})$ be the vector picked by $\mathcal{E}_i$ in time step $\ell$.\\
		Let $\vy^{(i, \ell)}\leftarrow(1-\eps) \cdot \vy^{(i-1, \ell)} + \eps \cdot \va^{(i, \ell)}$.\\
		Let $\vz^{(i,\ell)}\leftarrow\vz^{(i-1, \ell)} + \eps\cdot(1-\vz^{(i-1, \ell)})\odot\vb^{(i, \ell)}$.\\
	}
	\For{$i = \eps^{-1} t_s + 1$ \KwTo $\eps^{-1}$}
	{
		Let $\va^{(i, \ell)} \gets \vy^{(i - 1, \ell)}$.\tcp*{This vector is used for analysis purposes only.}
		Let $\vb^{(i, \ell)}$ be a vector consisting of the last $n$ coordinates of the vector picked by $\mathcal{E}_i$ in time step $\ell$.\\
		Let $\vy^{(i, \ell)}\leftarrow \vy^{(i-1, \ell)}$.\\
		Let $\vz^{(i,\ell)}\leftarrow\vz^{(i-1, \ell)} + \eps\cdot(1-\vz^{(i-1, \ell)})\odot\vb^{(i, \ell)}$.\\
	}
	Set the vector $\vy^{(\eps^{-1}, \ell)} \psum \vz^{(\eps^{-1}, \ell)}$ as the output for time step $\ell$.\\
	\For{$i = 1$ \KwTo $\eps^{-1} t_s$}
	{
		Let $\vg^{(i, \ell)}$ be a vector in $\bR^{2n}$ obtained as follows. The first $n$ coordinates of this vector are given by $e^{2\eps i} \cdot \nabla F_\ell(\comb{i-1,\ell}) \odot (\vone-\vz^{(i - 1,\ell)})$, and the other $n$ coordinates of $\vg^{(i, \ell)}$ are equal to $e^{2\eps i} \cdot \nabla F_\ell(\comb{i-1,\ell}) \odot (\vone-\vz^{(i - 1,\ell)}) \odot (\vone-\vy^{(i - 1,\ell)}) + (1 - m) \cdot e^{\eps i}(t_s - \eps i) \cdot \nabla F_\ell(\vz^{(i-1,\ell)}) \odot (1-\vz^{(i-1,\ell)})$.\\
		Pass $\vg^{(i, \ell)}$ as the adversarially chosen vector $\vd^{(\ell)}$ for $\cE_i$.
	}
	\For{$i = \eps^{-1} t_s + 1$ \KwTo $\eps^{-1}$}
	{
		Let $\vg^{(i, \ell)}$ be a vector in $\bR^{2n}$ obtained as follows. The first $n$ coordinates of this vector are all zeros, and the other $n$ coordinates of $\vg^{(i, \ell)}$ are equal to $\nabla F_\ell(\comb{i-1,\ell}) \odot (\vone-\vz^{(i-1,\ell)}) \odot (\vone-\vy^{(i - 1,\ell)})$.\\
		Pass $\vg^{(i, \ell)}$ as the adversarially chosen vector $\vd^{(\ell)}$ for $\cE_i$.
	}
}
\caption{\texttt{Online Frank-Wolfe/Continuous-Greedy Hybrid\label{alg:Online}}}
\end{algorithm}

One can observe that, for every fixed time step $\ell \in [L]$, Algorithm~\ref{alg:Online} sets the vectors $\{\vy^{(i, \ell)}, \vz^{(i, \ell)} \mid i \in \bZ, 0 \leq i \leq \eps^{-1}\}$ in the same way in which Algorithm~\ref{alg:OfflineGF} sets the vectors $\{\vy^{(i)}, \vz^{(i)} \mid i \in \bZ, 0 \leq i \leq \eps^{-1}\}$, with the sole difference that the vectors $\{\va^{(i, \ell)}, \vb^{(i, \ell)} \mid i \in [\eps^{-1}]\}$ used for this purpose may not optimize the linear programs of Algorithm~\ref{alg:OfflineGF} (but they are feasible solutions for these linear programs). 
%
Since the property that these vectors optimize the linear programs of Algorithm~\ref{alg:OfflineGF} is not used in the proof that Lemmata~\ref{lem:membership} and~\ref{lem:normxy} apply to this algorithm, we immediately get that both these lemmata apply also to Algorithm~\ref{alg:Online} for any fixed $\ell$. In other words, we get the following lemma.
\begin{lemma} \label{lem:online_imports}
For every two integers $0 \leq i \leq \eps^{-1}$ and $\ell \in [L]$,
\begin{itemize}
	\item $\vy^{(i,\ell)} \in \NDM$ and $\vz^{(i,\ell)} \in \eps i \cdot \DM$, and thus, $\comb{i,\ell} \in (\NDM + \DM) \cap [0, 1]^n$.
	\item $\|\vz^{(i,\ell)}\|_\infty \leq 1 - (1 - \eps)^i$ and $\|\vy^{(i,\ell)}\|_\infty \leq \|\comb{i,\ell}\|_\infty \leq 1 - \term{1-\eps}^{i}(1 - m)$.
\end{itemize}
\end{lemma}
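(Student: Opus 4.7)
The plan is to reduce this lemma to the already-proven Lemmata~\ref{lem:membership} and~\ref{lem:normxy} by observing that, for any fixed $\ell \in [L]$, Algorithm~\ref{alg:Online} updates $\vy^{(i,\ell)}$ and $\vz^{(i,\ell)}$ using exactly the same recurrences as Algorithm~\ref{alg:OfflineGF} (namely $\vy^{(i,\ell)} = (1-\eps)\vy^{(i-1,\ell)} + \eps \va^{(i,\ell)}$ and $\vz^{(i,\ell)} = \vz^{(i-1,\ell)} + \eps(\vone - \vz^{(i-1,\ell)}) \odot \vb^{(i,\ell)}$). The proofs of those two lemmata never used the optimality of $(\va^{(i)}, \vb^{(i)})$ in the linear programs of Algorithm~\ref{alg:OfflineGF}; they only used the feasibility conditions captured by Observation~\ref{obs:properties_of_solutions}, i.e., $\va^{(i,\ell)} \in \NDM$, $\vb^{(i,\ell)} \in \DM$, and $\va^{(i,\ell)} + (\vone - \vy^{(i-1,\ell)}) \odot \vb^{(i,\ell)} \leq \vone$. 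So it suffices to verify these three properties in the online setting, after which one can copy the inductive arguments of the two earlier lemmata verbatim.

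For $1 \leq i \leq \eps^{-1} t_s$, the pair $(\va^{(i,\ell)}, \vb^{(i,\ell)})$ is selected by the instance $\cE_i$ of Regularized-Follow-the-Leader over the convex body $\{(\va, \vb) \mid \va \in \NDM,\ \vb \in \DM,\ \va + \vb \in [0,1]^n\}$, so $\va^{(i,\ell)} \in \NDM$, $\vb^{(i,\ell)} \in \DM$, and $\va^{(i,\ell)} + \vb^{(i,\ell)} \leq \vone$; the last inequality implies
\[
\va^{(i,\ell)} + (\vone - \vy^{(i-1,\ell)}) \odot \vb^{(i,\ell)} \leq \va^{(i,\ell)} + \vb^{(i,\ell)} \leq \vone.
\]
For $\eps^{-1} t_s < i \leq \eps^{-1}$, the vector $\vb^{(i,\ell)}$ is the last $n$ coordinates of the point picked by $\cE_i$ from the same body, so $\vb^{(i,\ell)} \in \DM$ still holds. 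The vector $\va^{(i,\ell)}$ is set to $\vy^{(i-1,\ell)}$, which is in $\NDM$ by the inductive version of the first bullet of the lemma (the base case $\vy^{(0,\ell)} = \argmin_{\vx\in \NDM}\norm{\vx}{} \in \NDM$ is immediate). Finally,
\[
\va^{(i,\ell)} + (\vone - \vy^{(i-1,\ell)}) \odot \vb^{(i,\ell)} = \vy^{(i-1,\ell)} + (\vone - \vy^{(i-1,\ell)}) \odot \vb^{(i,\ell)} \leq \vone,
\]
since each coordinate has the form $y + (1-y)b \leq y + (1-y) = 1$ for $y, b \in [0,1]$.

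With the feasibility properties in hand, the induction of Lemma~\ref{lem:membership} shows $\vy^{(i,\ell)} \in \NDM$ (by convexity of $\NDM$, as $\vy^{(i,\ell)}$ is a convex combination of $\vy^{(i-1,\ell)}$ and $\va^{(i,\ell)}$) and $\vz^{(i,\ell)} \in \eps i \cdot \DM$ (by writing $\vz^{(i-1,\ell)} = \eps(i-1)\vx$ for some $\vx \in \DM$ and using the down-closedness and convexity of $\DM$), and then $\comb{i,\ell}\in(\NDM+\DM)\cap[0,1]^n$ follows as in Lemma~\ref{lem:membership}. The induction of Lemma~\ref{lem:normxy} then yields the two infinity-norm bounds, using the key inequality $\va^{(i,\ell)} - \vy^{(i-1,\ell)} \leq (\vone - \vy^{(i-1,\ell)}) \odot (\vone - \vb^{(i,\ell)})$ which is exactly what the feasibility condition $\va^{(i,\ell)} + (\vone - \vy^{(i-1,\ell)}) \odot \vb^{(i,\ell)} \leq \vone$ delivers. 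There is no genuine obstacle here; the only point that needs care is making the feasibility check for the second range ($i > \eps^{-1}t_s$), where $\va^{(i,\ell)}$ is set by hand rather than by $\cE_i$.
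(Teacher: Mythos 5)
Your proposal is correct and follows the paper's own argument: the paper also observes that Algorithm~\ref{alg:Online} uses the same update rules as Algorithm~\ref{alg:OfflineGF}, that the proofs of Lemmata~\ref{lem:membership} and~\ref{lem:normxy} depend only on the feasibility properties of $(\va^{(i)},\vb^{(i)})$ recorded in Observation~\ref{obs:properties_of_solutions} rather than on their optimality, and that these feasibility properties persist when the optimal LP solutions are replaced by the choices of $\cE_i$. You simply spell out the verification of those feasibility conditions in both ranges of $i$, which the paper leaves implicit.
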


The last lemma implies, in particular, that Algorithm~\ref{alg:Online} outputs feasible vectors.
The rest of this section is devoted to showing that the total value of these vectors is at least as large as is guaranteed by Proposition~\ref{prop:Online}. For that purpose, we need to define a new potential function $\phi(i) \triangleq \sum_{\ell = 1}^L \phi(i, \ell)$, where
\[
	\phi(i, \ell)
	\triangleq
	e^{2(\eps i-t_s)}\cdot F_\ell(\comb{i,\ell}) + (1 - m)(1 - \eps) \cdot e^{\eps i-2t_s}(t_s-\eps i)\cdot F_\ell(\vz^{(i, \ell)})
	\enspace.
\]
Notice that the definition $\phi(i, \ell)$ is identical to the definition of the potential function from Section~\ref{ssc:unknown_optp} up to the addition of the index $\ell$ to the various vectors and the function $F$.

The next lemma corresponds to Lemma~\ref{lem:basic_lower_bound} from the analysis of Algorithm~\ref{alg:OfflineGF}. Notice that this lemma has only one guarantee, in contrast to Lemma~\ref{lem:basic_lower_bound} that has two guarantees (because the proof of the other guarantee does not generalize to the online setting). Due to the lack of this addition guarantee, there is no result in this section corresponding to Corollary~\ref{cor:not_decreasing} from the analysis of Algorithm~\ref{alg:OfflineGF}, and the proof of Lemma~\ref{lem:t_s_value_online} is more involved than the proof of the corresponding Lemma~\ref{lem:t_s_value}.
\begin{lemma} \label{lem:basic_lower_bound_online}
For every integer $1 \leq i \leq \eps^{-1}t_s$,
\begin{align*}
	\mspace{20mu}&\mspace{-20mu}\eps^{-1}e^{2t_s}[\phi(i) - \phi(i - 1)]
	\geq
	2e^{2\eps (i - 1)} \cdot \sum_{\ell = 1}^L F_\ell(\comb{i-1, \ell}) \\&- (1 - m)(1 - \eps) \cdot e^{\eps i}(1 - t_s + \eps i) \cdot \sum_{\ell = 1}^L F_\ell(\vz^{(i - 1, \ell)}) - 25\eps\beta LD^2 - 15\eps\cdot \sum_{\ell = 1}^L F_\ell(\comb{i - 1,\ell})\\
	&+
	(1 - \eps) \cdot e^{2\eps i} \cdot \sum_{\ell = 1}^L \langle \nabla F_\ell(\comb{i-1, \ell}) \odot (\vone - \vz^{(i - 1)}), (\vone - \vy^{(i - 1, \ell)}) \odot \optp + \optq - \vy^{(i - 1, \ell)} \rangle \\\nonumber&+  (1 - m)(1 - \eps) \cdot e^{\eps i}(t_s-\eps i)\cdot \sum_{\ell = 1}^L \inner{(\vone - \vz^{(i - 1, \ell)}) \odot \optp}{\nabla F_\ell(\vz^{(i - 1, \ell)})} - 24DG\sqrt{L}
	\enspace.
\end{align*}
\end{lemma}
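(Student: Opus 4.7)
The plan is to mirror the proof of the offline Lemma~\ref{lem:basic_lower_bound} per time step $\ell$, sum the resulting inequalities over $\ell\in[L]$, and then replace the hindsight-optimal linear-program bound (which is not available to an online algorithm) by the regret guarantee of Regularized-Follow-the-Leader applied against the fixed comparator $(\optq,\optp)$.

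First, I fix $\ell\in[L]$ and repeat verbatim the chain-rule derivation that produced Inequalities~\eqref{eq:first_diff} and~\eqref{eq:second_diff} in the offline proof, everywhere replacing $F$ by $F_\ell$ and each iterate by its $\ell$-indexed variant. Combining these two inequalities with coefficients $1$ and $(1-m)(1-\eps)$, and handling the cross-term $\langle\nabla F_\ell(\comb{i-1,\ell})\odot(\vone-\vz^{(i-1,\ell)}),(\va^{(i,\ell)}-\vy^{(i-1,\ell)})\odot(\vone-\vb^{(i,\ell)})\rangle$ via Properties~\ref{prop:dr_bound2_up} and~\ref{prop:dr_bound2_down} of Lemma~\ref{lem:DR_properties} together with the non-negativity of $F_\ell$ (at a cost of $-2F_\ell(\comb{i-1,\ell})$, exactly as in the offline argument) yields, after summing over $\ell$,
\begin{align*}
\eps^{-1}e^{2t_s}[\phi(i)-\phi(i-1)]
\geq{}& 2e^{2\eps(i-1)}\sum_{\ell=1}^L F_\ell(\comb{i-1,\ell})-(1-m)(1-\eps)e^{\eps i}(1-t_s+\eps i)\sum_{\ell=1}^L F_\ell(\vz^{(i-1,\ell)})\\
&-25\eps L\beta D^2 - 15\eps\sum_{\ell=1}^L F_\ell(\comb{i-1,\ell})\\
&+(1-\eps)\sum_{\ell=1}^L\bigl[\langle(\va^{(i,\ell)},\vb^{(i,\ell)}),\vg^{(i,\ell)}\rangle - e^{2\eps i}\langle\nabla F_\ell(\comb{i-1,\ell})\odot(\vone-\vz^{(i-1,\ell)}),\vy^{(i-1,\ell)}\rangle\bigr],
\end{align*}
where the identification of the bracketed expression with $\langle(\va^{(i,\ell)},\vb^{(i,\ell)}),\vg^{(i,\ell)}\rangle$ minus the $\vy^{(i-1,\ell)}$ contribution is immediate from the definition of $\vg^{(i,\ell)}$ in Algorithm~\ref{alg:Online} for $i\leq\eps^{-1}t_s$. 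The absence of the $(1-m)^{-1}$ factor inside the $25\eps L\beta D^2$ (compared to Lemma~\ref{lem:basic_lower_bound}) uses the tighter bound $\|\vb^{(i,\ell)}\|_2\leq D$ available here, since the feasibility of $(\va^{(i,\ell)},\vb^{(i,\ell)})$ for $\cE_i$ forces $\va^{(i,\ell)}+\vb^{(i,\ell)}\in[0,1]^n$ and hence $\vb^{(i,\ell)}\in[0,1]^n$.

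Second, I apply the regret guarantee of $\cE_i$. Because $\optq\in\NDM$, $\optp\in\DM$, and $\optq+\optp\in[0,1]^n$ by assumption, the pair $(\optq,\optp)$ lies in the convex body on which $\cE_i$ operates, and Regularized-Follow-the-Leader therefore gives
\[
\sum_{\ell=1}^L\langle(\va^{(i,\ell)},\vb^{(i,\ell)}),\vg^{(i,\ell)}\rangle \geq \sum_{\ell=1}^L\langle(\optq,\optp),\vg^{(i,\ell)}\rangle - D'G'\sqrt{2L},
\]
where $D'$ is the diameter of that convex body and $G'=\max_{\ell}\|\vg^{(i,\ell)}\|_2$. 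One checks that $D'=O(D)$ (since all of $\va$, $\vb$, and $\va+\vb$ sit inside $[0,1]^n$) and $G'=O(G)$ (since each of the two blocks of $\vg^{(i,\ell)}$ is a bounded scalar multiple of $\nabla F_\ell$ evaluated at a point of $(\NDM+\DM)\cap[0,1]^n$ or of $\DM$, with weights $e^{2\eps i}\leq e^2$ and $(1-m)e^{\eps i}(t_s-\eps i)\leq e$). Expanding $\langle(\optq,\optp),\vg^{(i,\ell)}\rangle$ using the definition of $\vg^{(i,\ell)}$ and recombining with the $-\vy^{(i-1,\ell)}$ remainder sitting inside the first display produces exactly the two inner-product terms appearing on the right-hand side of Lemma~\ref{lem:basic_lower_bound_online}, namely the one against $(\vone-\vy^{(i-1,\ell)})\odot\optp+\optq-\vy^{(i-1,\ell)}$ and the one against $(\vone-\vz^{(i-1,\ell)})\odot\optp$.

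The main obstacle is bookkeeping the constants: verifying $(1-\eps)D'G'\sqrt{2}\leq 24DG$ once the hidden constants in $D'=O(D)$ and $G'=O(G)$ are pinned down, and confirming that the $-2F_\ell(\comb{i-1,\ell})$ cross-term loss, multiplied by the $\eps e^{2\eps i}$ that survives the $\eps^{-1}e^{2t_s}$ scaling (using $\eps\leq 1/70$ and $t_s\leq 1$ so that $e^{2\eps i}\leq e^2$), fits inside the $-15\eps\sum_\ell F_\ell(\comb{i-1,\ell})$ slack promised in the statement. All of the DR-submodular structure is handled at the single-function $F_\ell$ level exactly as in Lemma~\ref{lem:basic_lower_bound}; the only genuinely new step is the passage from ``hindsight LP-optimal'' to ``online player vs.\ fixed comparator $(\optq,\optp)$'', and that is entirely absorbed into the $O(DG\sqrt{L})$ regret term.
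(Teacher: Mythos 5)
Your proof takes essentially the same route as the paper: mirror the offline Lemma~\ref{lem:basic_lower_bound} per time step, sum over $\ell$, identify the resulting inner products with $\langle(\va^{(i,\ell)},\vb^{(i,\ell)}),\vg^{(i,\ell)}\rangle$ modulo the $\vy^{(i-1,\ell)}$ remainder, and then invoke the Regularized-Follow-the-Leader regret bound against the fixed comparator $(\optq,\optp)$ with $D'\leq D$ and $\max_\ell\|\vg^{(i,\ell)}\|_2\leq \tfrac{24}{\sqrt2}G$ to absorb the loss into $-24DG\sqrt L$. One small imprecision: you justify $\|\vb^{(i,\ell)}\|_2\leq D$ by ``$\vb^{(i,\ell)}\in[0,1]^n$,'' but this only gives $\sqrt n$, not the diameter of $(\NDM+\DM)\cap[0,1]^n$; the paper's actual argument picks $\vx^{(\ell)}\in\NDM$ with $\vx^{(\ell)}+\vb^{(i,\ell)}\in(\NDM+\DM)\cap[0,1]^n$, notes $\vx^{(\ell)}$ itself lies in that body by down-closeness of $\DM$, and writes $\|\vb^{(i,\ell)}\|_2=\|(\vx^{(\ell)}+\vb^{(i,\ell)})-\vx^{(\ell)}\|_2\leq D$ — this is the step you should substitute.
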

\begin{proof}
The definition of the convex-body assigned to $\cE_i$ guarantees that, for every $\ell \in [L]$, there exists a vector $\vx^{(\ell)} \in \NDM$ such that $\vx^{(\ell)} + \vb^{(i, \ell)} \in (\NDM + \DM) \cap [0, 1]^n$. Thus, $\|\vb^{(i, \ell)}\|_2 = \|(\vx^{(\ell)} + \vb^{(i, \ell)}) - \vx^{(\ell)}\|_2 \leq D$ because the down-closeness of $\DM$ implies that $\vx^{(\ell)}$ also belongs to $(\NDM + \DM) \cap [0, 1]^n$. Plugging this observation into the first part of the proof of Lemma~\ref{lem:basic_lower_bound}, we get, for every $\ell \in [L]$, 
\begin{align*}
	\mspace{15mu}&\mspace{-15mu}\eps^{-1}e^{2t_s}[\phi(i, \ell) - \phi(i - 1, \ell)]
	\geq
	2 e^{2\eps (i - 1)} \cdot F_\ell(\comb{i-1,\ell}) \\&- (1 - m)(1 - \eps) \cdot e^{\eps i}(1 - t_s + \eps i) \cdot F_\ell(\vz^{(i - 1,\ell)}) - 25\eps\beta D^2 - 15\eps\cdot F_\ell(\comb{i - 1,\ell})\\ \nonumber
	&+ (1 - \eps) \cdot e^{2\eps i} \cdot \langle \nabla F_\ell(\comb{i-1,\ell}) \odot (\vone - \vz^{(i - 1,\ell)}), (\vone - \vy^{(i - 1,\ell)}) \odot \vb^{(i,\ell)} + \va^{(i,\ell)} - \vy^{(i - 1,\ell)} \rangle \\\nonumber&+  (1 - m)(1 - \eps) \cdot e^{\eps i}(t_s-\eps i)\cdot \inner{(\vone - \vz^{(i - 1,\ell)}) \odot \vb^{(i,\ell)}}{\nabla F_\ell(\vz^{(i - 1,\ell)})}
	\enspace.
\end{align*}
Adding up this inequality for all $\ell \in [L]$ yields
{\allowdisplaybreaks
\begin{align} \label{eq:sum_basic}
	&\eps^{-1}e^{2t_s}[\phi(i) - \phi(i - 1)]
	\geq
	2 e^{2\eps (i - 1)} \cdot \sum_{\ell = 1}^L F_\ell(\comb{i-1,\ell})  \\\nonumber&- (1 - m)(1 - \eps) \cdot e^{\eps i}(1 - t_s + \eps i) \cdot \sum_{\ell = 1}^L F_\ell(\vz^{(i - 1,\ell)})- 25\eps\beta LD^2 - 15\eps\cdot \sum_{\ell = 1}^L F_\ell(\comb{i - 1,\ell})\\ \nonumber
	&+ (1 - \eps) e^{2\eps i} \cdot \sum_{\ell = 1}^L \langle \nabla F_\ell(\comb{i-1,\ell}) \odot (\vone - \vz^{(i - 1,\ell)}), (\vone - \vy^{(i - 1,\ell)}) \odot \vb^{(i,\ell)} + \va^{(i,\ell)} - \vy^{(i - 1,\ell)} \rangle \\\nonumber&+  (1 - m)(1 - \eps) \cdot e^{\eps i}(t_s-\eps i)\cdot \sum_{\ell = 1}^L \inner{(\vone - \vz^{(i - 1,\ell)}) \odot \vb^{(i,\ell)}}{\nabla F_\ell(\vz^{(i - 1,\ell)})}
	\enspace.
\end{align}}%

One can observe that the last two terms of on the right hand side of the last inequality are equal to the expression $(1 - \eps) \cdot \sum_{\ell = 1}^L \inner{\vg^{(i, \ell)}}{(\va^{(i, \ell)}, \vb^{(i, \ell)})}$ up to an additive term of $(1 - \eps) e^{2\eps i} \cdot \sum_{\ell = 1}^L \langle \nabla F_\ell(\comb{i-1,\ell}) \odot (\vone - \vz^{(i - 1,\ell)}), - \vy^{(i - 1,\ell)}\rangle$. Therefore, since $(\optq, \optp)$ is one possible solution that $\cE_i$ can return, the guarantee of $\cE_i$ implies
\begin{align} \label{eq:regret_bound}
	&(1 - \eps) e^{2\eps i} \mspace{-3mu}\cdot\mspace{-3mu} \sum_{\ell = 1}^L \langle \nabla F_\ell(\comb{i-1,\ell}) \odot (\vone \mspace{-2mu}-\mspace{-2mu} \vz^{(i - 1,\ell)}), \mspace{-2mu}(\vone \mspace{-3mu}-\mspace{-3mu} \vy^{(i - 1,\ell)}) \odot \vb^{(i,\ell)} \mspace{-3mu}+\mspace{-3mu} \va^{(i,\ell)} \mspace{-3mu}-\mspace{-3mu} \vy^{(i - 1,\ell)} \rangle \\\nonumber&+  (1 - m)(1 - \eps) \cdot e^{\eps i}(t_s-\eps i)\cdot \sum_{\ell = 1}^L \inner{(\vone - \vz^{(i - 1,\ell)}) \odot \vb^{(i,\ell)}}{\nabla F_\ell(\vz^{(i - 1,\ell)})}\\\nonumber
	&\geq (1 - \eps) e^{2\eps i} \cdot \sum_{\ell = 1}^L \langle \nabla F_\ell(\comb{i-1,\ell}) \odot (\vone - \vz^{(i - 1,\ell)}), (\vone - \vy^{(i - 1,\ell)}) \odot \optp + \optq - \vy^{(i - 1,\ell)} \rangle \\\nonumber&+ \mspace{-2mu}(1 \mspace{-2mu}-\mspace{-2mu} m)(1 \mspace{-2mu}-\mspace{-2mu} \eps) \mspace{-2mu}\cdot\mspace{-2mu} e^{\eps i}(t_s\mspace{-2mu}-\mspace{-2mu}\eps i)\cdot \sum_{\ell = 1}^L \inner{(\vone \mspace{-2mu}-\mspace{-2mu} \vz^{(i - 1,\ell)}) \odot \optq}{\nabla F_\ell(\vz^{(i \mspace{-2mu}-\mspace{-2mu} 1,\ell)})} \mspace{-2mu}-\mspace{-2mu} D'\sqrt{2L} \cdot \max\nolimits_{1 \leq i \leq L} \|\vg^{(i, \ell)}\|_2
	\enspace,
\end{align}
where $D'$ is the diameter of the convex body assigned to $\cE_i$.

We need to upper bound the terms $D'$ and $\max\nolimits_{1 \leq i \leq L} \|\vg^{(i, \ell)}\|_2$ appearing in the last inequality. First,
\begin{align*}
	D'
	={} &
	\max_{\substack{\vx^{(1)}, \vx^{(2)} \in \NDM, \vw^{(1)}, \vw^{(2)} \in \DM \\ \vx^{(1)} + \vw^{(1)}, \vx^{(2)} + \vw^{(2)} \in [0, 1]^n}} \|(\vx^{(1)}, \vw^{(1)}) - (\vx^{(2)}, \vw^{(2)})\|_2\\
	\leq{} &
	\max_{\substack{\vx^{(1)}, \vx^{(2)} \in \NDM, \vw^{(1)}, \vw^{(2)} \in \DM \\ \vx^{(1)} + \vw^{(1)}, \vx^{(2)} + \vw^{(2)} \in [0, 1]^n}} \|(\vx^{(1)} + \vw^{(1)}) - (\vx^{(2)} + \vw^{(2)})\|_2
	\leq
	D
	\enspace.
\end{align*}
Second, for every $\ell \in [L]$, the $\ell_2$-norm of the first $n$ coordinates of $\vg^{(i, \ell)}$ is upper bounded by $e^{2\eps i} \cdot \|\nabla F_\ell(\comb{i-1,\ell})\| \leq e^2G$, and the $\ell_2$-norm of the other $n$ coordinates of $\vg^{(i, \ell)}$ is upper bounded by $\|e^{2\eps i} \cdot \nabla F_\ell(\comb{i-1,\ell}) + e^{\eps i} \cdot \nabla F_\ell(\vz^{(i-1,\ell)})\|_2 \leq 2e^2G$. Thus,
\[
	\|\vg^{(i, \ell)}\|_2
	\leq
	\sqrt{(e^2G)^2 + (2e^2G)^2}
	=
	\sqrt{5} \cdot e^2G
	\leq
	\frac{24}{\sqrt{2}} \cdot G
	\enspace.
\]
The lemma now follows by plugging the above upper bounds into Inequality~\eqref{eq:regret_bound}, and then combining the obtained inequality with Inequality~\eqref{eq:sum_basic}.
\end{proof}

To get the following corollary, we repeat the proof of Lemma~\ref{lem:increase_bound} from the analysis of Algorithm~\ref{alg:OfflineGF} with Lemma~\ref{lem:basic_lower_bound_online} taking the role of  Lemma~\ref{lem:basic_lower_bound} and $\optp$ taking the role of $\optpone$.

\begin{corollary} \label{cor:increase_bound_online}
For every integer $1 \leq i \leq \eps^{-1}t_s$,
\begin{align*}
	\phi(i) - \phi(i - 1)
	\geq{} &
	\eps(1 - m)(1 - \eps) \cdot \bigg[e^{\eps i - 2t_s} \cdot \sum_{\ell = 1}^L F_\ell(\vo) + e^{-2t_s}(t_s-\eps i) \cdot \sum_{\ell = 1}^L F_\ell(\optp)\bigg] \\&- 25\eps^2\beta LD^2 - 62\eps^2\cdot \phi(i - 1) - 24\eps DG\sqrt{L}
	\enspace.
\end{align*}
\end{corollary}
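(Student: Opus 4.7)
The plan is to mimic the proof of Lemma~\ref{lem:increase_bound}, working term-by-term inside the sum over $\ell$ and using Lemma~\ref{lem:basic_lower_bound_online} as the starting point instead of Lemma~\ref{lem:basic_lower_bound}. The additive regret term $24DG\sqrt{L}$ present in Lemma~\ref{lem:basic_lower_bound_online} survives unchanged through the argument and gives the $24\eps DG\sqrt{L}$ term after multiplying by $\eps$ and $e^{-2t_s} \leq 1$.

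First I would lower bound, for each $\ell \in [L]$, the linear form
$\langle \nabla F_\ell(\comb{i-1,\ell}) \odot (\vone - \vz^{(i - 1,\ell)}), (\vone - \vy^{(i - 1,\ell)}) \odot \optp + \optq - \vy^{(i - 1,\ell)} \rangle$
by repeating the calculation performed in the proof of Lemma~\ref{lemma:right} (whose inputs are only the properties of $F_\ell$ guaranteed by Lemma~\ref{lem:DR_properties}, Corollary~\ref{cor:norm_bound}, and the norm bounds of Lemma~\ref{lem:online_imports}, with $\optp$ playing the role of $\optpone$). This yields the lower bound
\[
(1-\eps)^{i-1}(1-m) \cdot F_\ell(\vo) + (1-\eps)^{i-1}(1-m) \cdot F_\ell(\vz^{(i-1,\ell)}) - 2F_\ell(\comb{i-1,\ell}).
\]
Analogously, following the proof of Lemma~\ref{lem:feasible_solution} (specifically the argument for its third constraint) with $\optp$ in place of $\optpone$, Property~\ref{prop:dr_bound2_up} of Lemma~\ref{lem:DR_properties} together with Corollary~\ref{cor:norm_bound} and the norm bound from Lemma~\ref{lem:online_imports} give
\[
\inner{(\vone-\vz^{(i-1,\ell)}) \odot \optp}{\nabla F_\ell(\vz^{(i-1,\ell)})} \;\geq\; (1-\eps)^{i-1} \cdot F_\ell(\optp) - F_\ell(\vz^{(i-1,\ell)}).
\]

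Next I would substitute both inequalities into the bound of Lemma~\ref{lem:basic_lower_bound_online}, after summing over $\ell \in [L]$. The terms involving $F_\ell(\vz^{(i-1,\ell)})$ combine (and the prefactor $e^{\eps i}(1-t_s+\eps i)$ is dominated, up to the standing $\eps \leq 1/70$ assumption, by the positive contribution coming from the first inner product), the $F_\ell(\vo)$ and $F_\ell(\optp)$ terms produce exactly the claimed main contribution once multiplied through by $\eps e^{-2t_s}$, and the remaining $-2(1-\eps)e^{2\eps i} \sum_\ell F_\ell(\comb{i-1,\ell})$ together with the $-15\eps\sum_\ell F_\ell(\comb{i-1,\ell})$ term already present in Lemma~\ref{lem:basic_lower_bound_online} give an absorbable $-O(\eps)\sum_\ell F_\ell(\comb{i-1,\ell})$ term (the constants working out to $62\eps$ as in Lemma~\ref{lem:increase_bound}). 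Finally, using $\phi(i-1,\ell) \geq e^{2(\eps(i-1)-t_s)} \cdot F_\ell(\comb{i-1,\ell}) \geq e^{-2t_s} \cdot F_\ell(\comb{i-1,\ell})$ and summing over $\ell$, we can rewrite $\sum_\ell F_\ell(\comb{i-1,\ell}) \leq e^{2t_s} \cdot \phi(i-1)$, which absorbs the $\sum_\ell F_\ell(\comb{i-1,\ell})$ term into $-62\eps^2 \cdot \phi(i-1)$ after multiplication by the $\eps e^{-2t_s}$ that appears when we isolate $\phi(i)-\phi(i-1)$ from $\eps^{-1}e^{2t_s}[\phi(i)-\phi(i-1)]$.

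The main technical hurdle is purely bookkeeping: we must carefully carry the exponential prefactors $e^{2\eps i}$ and $e^{\eps i}(t_s - \eps i)$ from Lemma~\ref{lem:basic_lower_bound_online} through the substitutions, making sure that the $(1-\eps)^{i-1}$ factors produced by Corollary~\ref{cor:norm_bound} combined with those prefactors leave behind just $e^{\eps i - 2t_s}$ and $e^{-2t_s}(t_s - \eps i)$ after the multiplication by $\eps e^{-2t_s}$, and that the $F_\ell(\vz^{(i-1,\ell)})$ terms cancel with the right sign. Since the calculation is identical to Lemma~\ref{lem:increase_bound} once we replace each scalar quantity by its $\sum_\ell$ counterpart and $\optpone$ by $\optp$, no new inequality is needed; the only genuinely new contribution is the $-24\eps DG\sqrt{L}$ coming from the regret guarantee of the Regularized-Follow-the-Leader instances, which simply appears as an additive error.
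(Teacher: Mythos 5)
Your proposal is correct and takes essentially the same route as the paper, which indeed gives exactly this one-sentence proof: repeat the proof of Lemma~\ref{lem:increase_bound} with Lemma~\ref{lem:basic_lower_bound_online} replacing Lemma~\ref{lem:basic_lower_bound} and $\optp$ replacing $\optpone$. Your unpacking of the bookkeeping (the per-$\ell$ bounds imported from Lemmata~\ref{lemma:right} and~\ref{lem:feasible_solution}, the cancellation of the $F_\ell(\vz^{(i-1,\ell)})$ terms, the absorption into $-62\eps^2\phi(i-1)$, and the regret term $-24\eps DG\sqrt{L}$ surviving the multiplication by $\eps e^{-2t_s}\leq\eps$) matches what the paper intends.
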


We now use the last corollary to get a lower bound on $\sum_{\ell = 1}^L F_\ell(\comb{\eps^{-1}t_s, \ell}) = \phi(\eps^{-1} t_s)$. As mentioned above, Lemma~\ref{lem:t_s_value_online} corresponds to Lemma~\ref{lem:t_s_value} from the analysis of Algorithm~\ref{alg:OfflineGF}, but its analysis is somewhat more involved.
\begin{lemma} \label{lem:t_s_value_online}
It holds that
\begin{align*}
	\sum_{\ell = 1}^L F_\ell(\mspace{50mu}&\mspace{-50mu}\comb{\eps^{-1}t_s, \ell})
	=
	\phi(\eps^{-1} t_s)
	\geq
	- 63t_s \eps\beta LD^2 - 60t_s DG\sqrt{L}\\
	&+(1 - m) \cdot \bigg[(e^{-t_s} - e^{- 2t_s} - O(\eps)) \cdot \sum_{\ell = 1}^L F_\ell(\vo) + \left(\frac{e^{-2t_s} \cdot t_s^2}{2} - O(\eps)\right) \cdot \sum_{\ell = 1}^L F_\ell(\optp)\bigg]
	\enspace.
\end{align*}
\end{lemma}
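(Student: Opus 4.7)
My plan is to first verify the claimed equality, then lower-bound $\phi(\eps^{-1}t_s)$. The equality is immediate from the definition of $\phi(i, \ell)$: at $i = \eps^{-1}t_s$ we have $\eps i = t_s$, so $e^{2(\eps i - t_s)} = 1$ and the coefficient $(t_s - \eps i)$ multiplying $F_\ell(\vz^{(i,\ell)})$ vanishes, giving $\phi(\eps^{-1}t_s) = \sum_\ell F_\ell(\comb{\eps^{-1}t_s, \ell})$.

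For the inequality, the plan is to solve the recurrence supplied by Corollary~\ref{cor:increase_bound_online} directly, rather than go through an analog of Corollary~\ref{cor:not_decreasing}. I rewrite that corollary as $\phi(i) \geq (1 - 62\eps^2)\phi(i-1) + A_i - E$, where
\[
    A_i \triangleq \eps(1-m)(1-\eps)\left[e^{\eps i - 2t_s} \sum_{\ell=1}^L F_\ell(\vo) + e^{-2t_s}(t_s - \eps i) \sum_{\ell=1}^L F_\ell(\optp)\right] \geq 0
\]
and $E \triangleq 25\eps^2\beta LD^2 + 24 \eps DG\sqrt{L}$. Unrolling this inequality from $i = 1$ to $\eps^{-1}t_s$, using $\phi(0) \geq 0$ (by non-negativity of each $F_\ell$), and applying the Bernoulli estimate $(1 - 62\eps^2)^{\eps^{-1}t_s - i} \geq 1 - 62\eps t_s$ on the non-negative $A_i$'s together with the trivial bound $(1 - 62\eps^2)^{\eps^{-1}t_s - i} \leq 1$ on the error contributions, I obtain
\[
    \phi(\eps^{-1}t_s) \geq (1 - 62\eps t_s) \sum_{i=1}^{\eps^{-1}t_s} A_i - \eps^{-1} t_s \cdot E.
\]
It then remains to estimate the two inner sums inside $\sum_i A_i$ by Riemann-sum comparisons exactly as in the proof of Lemma~\ref{lem:t_s_value}: since $e^{\tau - 2t_s}$ is increasing on $[0,t_s]$, its right Riemann sum satisfies $\eps \sum_{i=1}^{\eps^{-1}t_s} e^{\eps i - 2t_s} \geq \int_0^{t_s} e^{\tau - 2t_s}\, d\tau = e^{-t_s} - e^{-2t_s}$, and a direct computation yields $\eps \sum_{i=1}^{\eps^{-1}t_s}(t_s - \eps i) = (t_s^2 - \eps t_s)/2 \geq (t_s^2 - \eps)/2$. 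The multiplicative slack $(1 - 62\eps t_s)(1-\eps) = 1 - O(\eps)$ can be absorbed inside the brackets using $\sum_\ell F_\ell(\cdot) \leq L$, contributing an $O(\eps)$ correction to each bracketed coefficient, while the error contributes $\eps^{-1}t_s \cdot E = 25 t_s \eps\beta LD^2 + 24 t_s DG\sqrt L$, both of which are dominated by the stated bound.

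The main obstacle is precisely the absence of an analog of Corollary~\ref{cor:not_decreasing}. The offline proof obtained that corollary by plugging the feasible solution $(\vy^{(i-1)}, \vzero)$ into the linear program of Algorithm~\ref{alg:OfflineGF}; in the online setting this route is blocked, because the regret guarantee of $\cE_i$ competes only against a fixed reference, whereas $\vy^{(i-1, \ell)}$ varies with $\ell$ and with the algorithm's earlier actions. The workaround described above—solving the recurrence explicitly—works because the cumulative multiplicative loss from the $(1 - 62\eps^2)$ factor over $\eps^{-1}t_s$ iterations only aggregates to $1 - O(\eps)$, which the $-O(\eps)$ slack already present in the statement accommodates.
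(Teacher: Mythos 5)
Your proof is correct and follows essentially the same route as the paper: both work by iterating the recurrence $\phi(i) \geq (1-62\eps^2)\phi(i-1) + A_i - E$ from Corollary~\ref{cor:increase_bound_online}, exploiting $\phi(0) \geq 0$, $A_i \geq 0$, and the fact that the cumulative geometric loss $(1-62\eps^2)^{\eps^{-1}t_s}$ is only $1 - O(\eps)$, before finishing with the same Riemann-sum estimates as in Lemma~\ref{lem:t_s_value}. The paper phrases this as an induction on the normalized quantity $(1-62\eps^2)^{-i}\phi(i)$ and bounds $(1-62\eps^2)^{-i} \leq 2.5$ to control how the per-step error $E$ is inflated, arriving at the constants $63$ and $60$; your direct unrolling instead applies the decaying factor $(1-62\eps^2)^{\eps^{-1}t_s - i} \leq 1$ to $-E$, which is marginally tighter ($25$ and $24$) but makes no material difference. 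Your observation that the offline shortcut via Corollary~\ref{cor:not_decreasing} is unavailable online — because the regret bound of $\cE_i$ requires a fixed comparator, while $\vy^{(i-1,\ell)}$ depends on $\ell$ — accurately reflects the paper's remark that this part of the offline argument does not transfer.
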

\begin{proof}
We prove by induction on $i$ that
\begin{align} \label{eq:induction_claim}
	(1 - 62\eps^2)^{-i} \cdot \phi(i)
	\geq{} &
	\eps(1 - m)(1 - \eps) \cdot \sum_{i' = 1}^i \bigg[e^{\eps i' - 2t_s} \cdot \sum_{\ell = 1}^L F_\ell(\vo) + e^{-2t_s}(t_s-\eps i') \cdot \sum_{\ell = 1}^L F_\ell(\optp)\bigg] \\\nonumber& - 63i\eps^2\beta LD^2 - 60i\eps DG\sqrt{L}
	\enspace.
\end{align}
Inequality~\eqref{eq:induction_claim} holds for $i = 0$ since the non-negativity of $F$ guarantees that $\phi(0) \geq 0$. Assume now that Inequality~\eqref{eq:induction_claim} holds for $i - 1$, and let us prove it for $i$. By Corollary~\ref{cor:increase_bound_online} and the induction hypothesis, 
\begin{align*}
	(1 - 62\eps^2)^{-i} &{}\cdot \phi(i)
	=
	(1 - 62\eps^2)^{-(i - 1)} \cdot \phi(i - 1) + (1 - 62\eps^2)^{-i} \cdot[\phi(i) - (1 - 62\eps^2)\phi(i - 1)]\\
	\geq{}&\eps(1 - m)(1 - \eps) \cdot \sum_{i' = 1}^{i - 1} \bigg[e^{\eps i' - 2t_s} \cdot \sum_{\ell = 1}^L F_\ell(\vo) + e^{-2t_s}(t_s-\eps i') \cdot \sum_{\ell = 1}^L F_\ell(\optp)\bigg] \\&- 63(i - 1)\eps^2\beta LD^2 - 60(i - 1)\eps DG\sqrt{L}\\
	&+(1 - 62\eps^2)^{-i} \cdot \bigg\{\eps(1 - m)(1 - \eps) \cdot \bigg[e^{\eps i - 2t_s} \cdot \sum_{\ell = 1}^L F_\ell(\vo) + e^{-2t_s}(t_s-\eps i) \cdot \sum_{\ell = 1}^L F_\ell(\optp)\bigg] \\&- 25\eps^2\beta LD^2 - 24\eps DG\sqrt{L}\bigg\}\\
	\geq{} &
	\eps(1 - m)(1 - \eps) \cdot \sum_{i' = 1}^i \bigg[e^{\eps i' - 2t_s} \cdot \sum_{\ell = 1}^L F_\ell(\vo) + e^{-2t_s}(t_s-\eps i') \cdot \sum_{\ell = 1}^L F_\ell(\optp)\bigg] \\\nonumber&- 63i\eps^2\beta LD^2 - 60i\eps DG\sqrt{L}
	\enspace,
\end{align*}
where the second inequality uses the non-negativity of $F$ and the fact that our assumption that $\eps \leq 1/70$ implies that $(1 - 62\eps^2)^{-i} \leq (1 - 62\eps^2)^{-1/\eps} \leq e^{62\eps} \leq 2.5$. This completes the proof of Inequality~\eqref{eq:induction_claim}. Plugging $i = \eps^{-1} t_s$ into this inequality, we get
\begin{align*}
	\phi(\eps^{-1} t_s)
	\geq{} &
	(1 \mspace{-2mu}-\mspace{-2mu} 62\eps^2)^{\eps^{-1}t_s} \mspace{-2mu}\cdot\mspace{-2mu} \bigg\{\eps(1 \mspace{-2mu}-\mspace{-2mu} m)(1 \mspace{-2mu}-\mspace{-2mu} \eps) \mspace{-2mu}\cdot\mspace{-2mu} \sum_{i' = 1}^{\eps^{-1} t_s} \bigg[e^{\eps i' \mspace{-2mu}-\mspace{-2mu} 2t_s} \cdot \sum_{\ell = 1}^L F_\ell(\vo) \mspace{-2mu}+\mspace{-2mu} e^{-2t_s}(t_s\mspace{-2mu}-\mspace{-2mu}\eps i') \cdot \sum_{\ell = 1}^L F_\ell(\optp)\bigg] \\\nonumber&- 63t_s \eps\beta LD^2 - 60t_s DG\sqrt{L}\bigg\}\\
	\geq{} &
	(1 - m) \cdot \bigg[(e^{-t_s} - e^{- 2t_s} - O(\eps)) \cdot \sum_{\ell = 1}^L F_\ell(\vo) + \left(\frac{e^{-2t_s} \cdot t_s^2}{2} - O(\eps)\right) \cdot \sum_{\ell = 1}^L F_\ell(\optp)\bigg] \\\nonumber&- 63t_s \eps\beta LD^2 - 60t_s DG\sqrt{L}
	\enspace,
\end{align*}
where the second inequality uses two lower bounds on sums that are justified in the proof of Lemma~\ref{lem:t_s_value}.
\end{proof}

Up to this point we have only studied vectors in $\{\vy^{(i, \ell)}, \vz^{(i, \ell)} \mid i \in \bZ, 0 \leq i \leq \eps^{-1} t_s, \ell \in [L]\}$. We now need to consider vectors corresponding to larger values of $i$. We begin with the following lemma, which corresponds to Lemma~\ref{lem:increase_second_part} from the analysis of Algorithm~\ref{alg:OfflineGF}.
\begin{lemma} \label{lem:increase_second_part_online}
For every integer $\eps^{-1} t_s < i \leq \eps^{-1}$,
\begin{align*}
	\sum_{\ell = 1}^\ell [F_\ell(\comb{i,\ell}) -{}& F_\ell(\comb{i - 1,\ell})]\\
	\geq{} &
	\max\bigg\{\eps \cdot \bigg[(1 - m)(1 - \eps)^{i-1} \cdot \sum_{\ell = 1}^L F_\ell(\optp) - \sum_{\ell = 1}^L F_\ell(\comb{i - 1, \ell})\bigg], 0 \bigg\} \\&- \eps^2 \beta LD^2/2 - \eps DG\sqrt{2L}
	\enspace.
\end{align*}
\end{lemma}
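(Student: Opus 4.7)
The plan is to imitate the proof of Lemma~\ref{lem:increase_second_part} on each time step $\ell$ separately, then sum over $\ell$ and replace the use of the second linear program's optimality (which is no longer available online) with the regret guarantee of $\cE_i$ applied against two different benchmark solutions. The two branches of the $\max$ in the statement correspond precisely to these two benchmarks.

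First, for each $\ell \in [L]$, I would apply the chain rule exactly as in the proof of Lemma~\ref{lem:increase_second_part}, using that $\vy^{(i,\ell)} = \vy^{(i-1,\ell)}$ when $i > \eps^{-1} t_s$, to write
\[
F_\ell(\comb{i,\ell}) - F_\ell(\comb{i-1,\ell}) = \int_0^\eps \langle \vb^{(i,\ell)} \odot (\vone - \vz^{(i-1,\ell)}) \odot (\vone - \vy^{(i-1,\ell)}),\, \nabla F_\ell(\cdots)\rangle\, d\tau,
\]
where $\nabla F_\ell$ is evaluated at the natural continuous interpolant. Splitting $\nabla F_\ell(\cdots)$ as $\nabla F_\ell(\comb{i-1,\ell})$ plus a remainder, and bounding the remainder contribution by Cauchy--Schwarz and $\beta$-smoothness exactly as in the offline proof, gives a per-$\ell$ error of at most $\eps^2 \beta D^2/2$, which sums to the $\eps^2\beta L D^2/2$ smoothness term. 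The remaining constant-in-$\tau$ piece integrates and sums to $\eps \sum_{\ell=1}^L \langle (\va^{(i,\ell)}, \vb^{(i,\ell)}), \vg^{(i,\ell)}\rangle$, since for $i > \eps^{-1} t_s$ the first $n$ coordinates of $\vg^{(i,\ell)}$ vanish while its last $n$ coordinates equal $\nabla F_\ell(\comb{i-1,\ell})\odot(\vone-\vz^{(i-1,\ell)})\odot(\vone-\vy^{(i-1,\ell)})$.

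Now I would invoke the regret guarantee of $\cE_i$ twice. Against the benchmark $(\optq, \optp)$, which is feasible for the convex body assigned to $\cE_i$, the guarantee produces a lower bound of $\eps \sum_\ell \langle \optp \odot (\vone-\vy^{(i-1,\ell)})\odot(\vone-\vz^{(i-1,\ell)}), \nabla F_\ell(\comb{i-1,\ell})\rangle$ minus the regret cost. Applying Property~\ref{prop:dr_bound2_up} of Lemma~\ref{lem:DR_properties}, followed by Corollary~\ref{cor:norm_bound} and the bound $\|\comb{i-1,\ell}\|_\infty \leq 1 - (1-\eps)^{i-1}(1-m)$ from Lemma~\ref{lem:online_imports}, turns this into $\eps\bigl[(1-m)(1-\eps)^{i-1}\sum_\ell F_\ell(\optp) - \sum_\ell F_\ell(\comb{i-1,\ell})\bigr]$, which is the first branch of the max. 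Against the benchmark $(\va^\ast, \vzero)$ for any $\va^\ast \in \NDM$ (feasible because $\vzero \in \DM$ by down-closedness and $\va^\ast \in [0,1]^n$), the inner product with $\vg^{(i,\ell)}$ is identically $0$: the first $n$ coordinates of $\vg^{(i,\ell)}$ are zero and $\vzero$ contributes nothing in the last $n$. This yields the trivial $0$ branch.

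Finally, in both cases the regret cost is at most $\eps D' G' \sqrt{2L}$, and the bounds $D' \leq D$ and $G' \leq G$ follow exactly as in the proof of Lemma~\ref{lem:basic_lower_bound_online} (in fact $\|\vg^{(i,\ell)}\|_2 \leq G$ here, since the first $n$ coordinates vanish), giving the $\eps D G\sqrt{2L}$ term. Combining the two lower bounds via the $\max$ and subtracting the smoothness and regret errors concludes the proof. The main subtlety, and the only genuinely online-specific step, is recognising the role of the $\vb = \vzero$ benchmark: it is what delivers the $\max\{\cdots, 0\}$ form, which is essential because the first branch becomes negative once $\sum_\ell F_\ell(\comb{i-1,\ell})$ exceeds the $\optp$-contribution, whereas a crude lower bound of zero on the change $F_\ell(\comb{i,\ell}) - F_\ell(\comb{i-1,\ell})$ is not available in the online setting without going through the regret guarantee.
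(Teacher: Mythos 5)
Your proposal matches the paper's proof essentially step for step: the same per-$\ell$ chain-rule plus smoothness bound giving $\eps^2\beta D^2/2$ error, the same summation over $\ell$, the same invocation of the regret guarantee of $\cE_i$ against both $(\optq,\optp)$ and a $\vb=\vzero$ benchmark (the paper specifically uses $(\optq,\vzero)$, but as you note any $\va^\ast \in \NDM$ works since the first $n$ coordinates of $\vg^{(i,\ell)}$ vanish), and the same chain of Property~\ref{prop:dr_bound2_up}, Corollary~\ref{cor:norm_bound}, and Lemma~\ref{lem:online_imports} to process the first branch. Your concluding remark about the role of the $\vb=\vzero$ benchmark in delivering the $\max\{\cdot,0\}$ form correctly identifies the one online-specific subtlety.
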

\begin{proof}
Repeating the first part of the proof of Lemma~\ref{lem:increase_second_part} implies that, for every $\ell \in [L]$,
\begin{align*}
	F_\ell(\comb{i,\ell}) - {}&F_\ell(\comb{i - 1,\ell})\\
	\geq{} &
	\eps \cdot \inner{\vb^{(i)} \odot (\vone - \vy^{(i - 1, \ell)})}{\nabla F_\ell(\vy^{(i - 1, \ell)} \psum \vz^{(i - 1, \ell)})\odot (\vone - \vz^{(i - 1, \ell)})} - \eps^2\beta D^2/2
	\enspace.
\end{align*}
Summing up this inequality for all $\ell \in [L]$ yields
\begin{align*}
	\sum_{\ell = 1}^\ell [F_\ell(\comb{i,\ell}\mspace{-20mu}&\mspace{20mu}) - F_\ell(\comb{i - 1,\ell})]\\
	\geq{} &
	\eps \cdot \sum_{\ell = 1}^L \inner{\vb^{(i)} \odot (\vone - \vy^{(i - 1, \ell)})}{\nabla F_\ell(\vy^{(i - 1, \ell)} \psum \vz^{(i - 1, \ell)})\odot (\vone - \vz^{(i - 1, \ell)})} - \eps^2\beta LD^2/2\\
	\geq{} &
	\max\bigg\{\eps \cdot \sum_{\ell = 1}^L \inner{\optp \odot (\vone - \vy^{(i - 1, \ell)})}{\nabla F_\ell(\vy^{(i - 1, \ell)} \psum \vz^{(i - 1, \ell)})\odot (\vone - \vz^{(i - 1, \ell)})}, 0\bigg\} \\&- \eps^2\beta LD^2/2 - \eps DG\sqrt{2L}
	\enspace,
\end{align*}
where the second inequality holds by the properties of $\cE_i$ since (i) the convex body assigned to $\cE_i$ is of diameter at most $D$ (see the proof of Lemma~\ref{lem:basic_lower_bound_online}), (ii) both $(\optq, \optp)$ and $(\optq, \vzero)$ are vectors in this convex body, and (iii) for every $\ell \in [L]$, $\|g^{(i, \ell)}\|_2 \leq \|\nabla F(\comb{i-1, \ell})\|_2 \leq G$.

To complete the proof of the lemma, it remains to observe that, for every $\ell \in [L]$,
\begin{align*}
	\inner{\optp \odot (\vone - \vy^{(i - 1, \ell)})}{\nabla F_\ell(\vy^{(i - 1, \ell)} \psum {}\vz^{(i - 1, \ell)})\odot (\vone - \vz^{(i - 1, \ell)})}\mspace{-100mu}&\mspace{100mu}\\
	\geq{} &
	F_\ell(\vy^{(i - 1, \ell)} \psum \vz^{(i - 1, \ell)} \psum \optp) - F_\ell(\vy^{(i - 1, \ell)} \psum \vz^{(i - 1, \ell)})\\
	\geq{} &
	(1 - m)(1 - \eps)^{i-1} \cdot F_\ell(\optp) - F_\ell(\vy^{(i - 1, \ell)} \psum \vz^{(i - 1, \ell)})
	\enspace,
\end{align*}
where the first inequality holds by Property~\ref{prop:dr_bound2_up} of Lemma~\ref{lem:DR_properties}, and the second inequality follows from Corollary~\ref{cor:norm_bound} and Lemma~\ref{lem:online_imports}.
\end{proof}

\begin{corollary} \label{cor:bound2online}
For every integer $\eps^{-1} t_s \leq i \leq \eps^{-1}$, $\sum_{\ell = 1}^L F_\ell(\vy^{(\eps^{-1}, \ell)} \psum \vz^{(\eps^{-1}, \ell)}) \geq (1 - m)(\eps i - t_s)(1 - \eps)^{i - 1} \cdot \sum_{\ell = 1}^L F_\ell(\optp) + (1 - \eps)^{i - \eps^{-1}t_s} \cdot \sum_{i = 1}^L F_\ell(\vy^{(\eps^{-1}t_s, \ell)} \psum \vz^{(\eps^{-1}t_s, \ell)}) - (1 - t_s) \cdot O(\eps\beta L D^2) - (1 - t_s) DG\sqrt{2L}$.
\end{corollary}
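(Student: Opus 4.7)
The plan is to mimic the proof of Lemma~\ref{lem:bound2GF} essentially verbatim, replacing Lemma~\ref{lem:increase_second_part} with its online analog Lemma~\ref{lem:increase_second_part_online} and summing over $\ell \in [L]$. Specifically, I would proceed by induction on $i$ starting from $i = \eps^{-1} t_s$. The base case is immediate: when $i = \eps^{-1}t_s$, the coefficient $\eps i - t_s$ of the first term on the right-hand side vanishes, the multiplier $(1-\eps)^{i-\eps^{-1}t_s}$ on the second term equals $1$, and both error terms are zero, so the inequality collapses to a trivial identity.

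For the inductive step, I would assume the claim holds for $i - 1$ and apply Lemma~\ref{lem:increase_second_part_online} (taking the first argument of the outer $\max$, which is the bound we actually need). This yields
\[
    \sum_{\ell=1}^L F_\ell(\comb{i,\ell}) \geq (1-\eps)\sum_{\ell=1}^L F_\ell(\comb{i-1,\ell}) + \eps(1-m)(1-\eps)^{i-1}\sum_{\ell=1}^L F_\ell(\optp) - \tfrac{\eps^2 \beta L D^2}{2} - \eps DG\sqrt{2L}.
\]
Plugging in the induction hypothesis for $\sum_{\ell=1}^L F_\ell(\comb{i-1,\ell})$ and simplifying, the coefficient of $(1-m)\sum_{\ell=1}^L F_\ell(\optp)$ telescopes as $(1-\eps)\cdot(\eps(i-1)-t_s)(1-\eps)^{i-2} + \eps(1-\eps)^{i-1} = (\eps i - t_s)(1-\eps)^{i-1}$, exactly as in the offline proof. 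The $F_\ell(\vy^{(\eps^{-1}t_s,\ell)}\psum \vz^{(\eps^{-1}t_s,\ell)})$ term picks up the extra factor of $(1-\eps)$, giving the desired $(1-\eps)^{i-\eps^{-1}t_s}$.

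The only genuinely new accounting compared to Lemma~\ref{lem:bound2GF} is handling the per-iteration regret term $\eps DG\sqrt{2L}$. After summing across the $i - \eps^{-1}t_s$ inductive steps, this accumulates to at most $(i - \eps^{-1}t_s)\cdot \eps DG\sqrt{2L} \leq (1 - t_s) DG\sqrt{2L}$, since $i \leq \eps^{-1}$ implies $\eps(i - \eps^{-1}t_s) \leq 1 - t_s$. The analogous observation absorbs the per-step $\tfrac{\eps^2 \beta L D^2}{2}$ terms into $(1-t_s)\cdot O(\eps\beta L D^2)$. I do not anticipate any real obstacle here, since the structure of the recursion is identical to the offline case; the main bookkeeping item is just to verify that the linear accumulation of the $DG\sqrt{L}$ regret stays bounded by $(1-t_s)DG\sqrt{2L}$ rather than blowing up.
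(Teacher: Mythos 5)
Your induction correctly establishes the \emph{intermediate} bound
\[
	\sum_{\ell = 1}^L F_\ell(\comb{i, \ell})
	\;\geq\;
	(1 - m)(\eps i - t_s)(1 - \eps)^{i - 1} \sum_{\ell = 1}^L F_\ell(\optp) + (1 - \eps)^{i - \eps^{-1}t_s} \sum_{\ell = 1}^L F_\ell(\comb{\eps^{-1}t_s, \ell}) - (\eps i - t_s) O(\eps\beta L D^2) - (\eps i - t_s) DG\sqrt{2L}
	\enspace,
\]
and this is exactly what the paper also does in its first step. But notice that the corollary's left-hand side is $\sum_{\ell} F_\ell(\vy^{(\eps^{-1}, \ell)} \psum \vz^{(\eps^{-1}, \ell)})$, not $\sum_{\ell} F_\ell(\comb{i, \ell})$: the online algorithm is forced to output the \emph{final}-iteration vector at every time step, whereas your bound concerns the vector at an arbitrary intermediate iteration $i$. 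Your ``trivial identity'' base case only makes sense for the intermediate bound --- when $i = \eps^{-1}t_s$, the corollary as stated reads $\sum_{\ell} F_\ell(\comb{\eps^{-1}, \ell}) \geq \sum_{\ell} F_\ell(\comb{\eps^{-1}t_s, \ell})$, which is far from a tautology. This signals that you are proving a different statement than the one asked.

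The missing step is what the ``$\max\{\cdot, 0\}$'' branch of Lemma~\ref{lem:increase_second_part_online} is there for: in the offline Algorithm~\ref{alg:OfflineGF}, the output is the best of all $\{\comb{i}\}$, so bounding any $F(\comb{i})$ suffices, but online one must show that $\sum_\ell F_\ell(\comb{i', \ell})$ cannot \emph{decrease} too much between iteration $i$ and the final iteration $\eps^{-1}$. Summing the lower bound $\sum_{\ell} [F_\ell(\comb{i',\ell}) - F_\ell(\comb{i'-1,\ell})] \geq -\eps^2\beta L D^2/2 - \eps DG\sqrt{2L}$ over $i' = i+1, \dotsc, \eps^{-1}$ gives
\[
	\sum_{\ell = 1}^L F_\ell(\comb{\eps^{-1}, \ell}) - \sum_{\ell = 1}^L F_\ell(\comb{i, \ell})
	\;\geq\;
	-(1 - \eps i)\eps\beta L D^2/2 - (1 - \eps i) DG\sqrt{2L}
	\enspace,
\]
and only after adding this to the intermediate bound do the error coefficients $(\eps i - t_s) + (1 - \eps i) = 1 - t_s$ combine into the ones in the corollary. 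Without this bridging inequality, your argument does not reach the stated conclusion.
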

\begin{proof}
By repeating the proof of Lemma~\ref{lem:bound2GF} with Lemma~\ref{lem:increase_second_part_online} taking the role of Lemma~\ref{lem:increase_second_part}, one can prove that
\begin{align*}
	\sum_{\ell = 1}^L F_\ell(&\comb{i, \ell})\\
	\geq{} &
	(1 - m)(\eps i - t_s)(1 - \eps)^{i - 1} \cdot \sum_{\ell = 1}^L F_\ell(\optp) + (1 - \eps)^{i - \eps^{-1}t_s} \cdot \sum_{i = 1}^L F_\ell(\vy^{(\eps^{-1}t_s, \ell)} \psum \vz^{(\eps^{-1}t_s, \ell)}) \\&- (\eps i - t_s) \cdot O(\eps\beta L D^2) - (\eps i - t_s) DG\sqrt{2L}
	\enspace.
\end{align*}
By adding up the inequalities guaranteed by Lemma~\ref{lem:increase_second_part_online} for all $i < i' \leq \eps^{-1}$, we can also get
\begin{align*}
	\sum_{\ell = 1}^L F_\ell(\mspace{80mu}&\mspace{-80mu}\vy^{(\eps^{-1}, \ell)} \psum \vz^{(\eps^{-1}, \ell)}) - \sum_{\ell = 1}^L F_\ell(\comb{i, \ell})\\
	={} &
	\sum_{i' = i + 1}^{\eps^{-1}} \sum_{\ell = 1}^\ell [F_\ell(\comb{i,\ell}) - F_\ell(\comb{i - 1,\ell})]\\
	\geq{} &
	\sum_{i' = i + 1}^{\eps^{-1}}\{- \eps^2 \beta LD^2/2 - \eps DG\sqrt{2L}\}
	=
	-(1 - \eps i)\eps \beta LD^2/2 - (1 - \eps i) \cdot DG\sqrt{2L}
	\enspace.
\end{align*}
The corollary now follows by adding up the two above inequalities.
\end{proof}

We are now ready to complete the proof of the approximation guarantee of Algorithm~\ref{alg:Online}, which completes the proof of Proposition~\ref{prop:Online}.
\begin{lemma}
It holds that
\begin{align*}
	\sum_{\ell = 1}^L F_\ell(\vy^{(\eps^{-1}, \ell)} \psum \vz^{(\eps^{-1}, \ell)})
	\geq{} &
	(1 - m) \cdot \max_{T \in [t_s, 1]}\bigg\{\bigg[(T - t_s)e^{-T} + \frac{e^{-t_s-T} \cdot t_s^2}{2} - O(\eps) \bigg] \cdot \sum_{\ell = 1}^L F_\ell(\optp) \\&+ (e^{-T} - e^{- t_s-T} - O(\eps)) \cdot \sum_{\ell = 1}^L F_\ell(\vo) \bigg\} - O(\eps\beta L D^2) - O(DG\sqrt{L})
	\enspace.
\end{align*}
\end{lemma}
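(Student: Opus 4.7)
The plan is to combine Corollary~\ref{cor:bound2online} with Lemma~\ref{lem:t_s_value_online}, exactly mirroring how Lemma~\ref{lem:approximation_with_potential} completed the offline analysis. Fix an arbitrary $T \in [t_s, 1]$ and set $i_T \triangleq \lfloor \eps^{-1} T \rfloor$ (so $i_T \in [\eps^{-1}t_s, \eps^{-1}]$, since $\eps^{-1}t_s$ is assumed integral). First I would instantiate Corollary~\ref{cor:bound2online} at $i = i_T$, which lower bounds $\sum_{\ell=1}^L F_\ell(\vy^{(\eps^{-1},\ell)} \psum \vz^{(\eps^{-1},\ell)})$ by the sum of $(1-m)(\eps i_T - t_s)(1-\eps)^{i_T - 1} \cdot \sum_\ell F_\ell(\optp)$, the term $(1-\eps)^{i_T - \eps^{-1}t_s} \cdot \sum_\ell F_\ell(\vy^{(\eps^{-1}t_s,\ell)} \psum \vz^{(\eps^{-1}t_s,\ell)})$, and additive errors of order $O(\eps\beta LD^2) + O(DG\sqrt{L})$.

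Second, I would substitute the lower bound from Lemma~\ref{lem:t_s_value_online} on $\sum_\ell F_\ell(\vy^{(\eps^{-1}t_s,\ell)} \psum \vz^{(\eps^{-1}t_s,\ell)}) = \phi(\eps^{-1}t_s)$, which contributes $(1-m)(e^{-t_s} - e^{-2t_s} - O(\eps)) \sum_\ell F_\ell(\vo)$ and $(1-m)(\tfrac{e^{-2t_s}t_s^2}{2} - O(\eps)) \sum_\ell F_\ell(\optp)$ up to further $O(\eps\beta LD^2) + O(DG\sqrt{L})$ errors. Applying the asymptotic identities $\eps i_T = T - O(\eps)$, $(1-\eps)^{i_T - 1} = e^{-T}(1+O(\eps))$, and $(1-\eps)^{i_T - \eps^{-1}t_s} = e^{t_s - T}(1+O(\eps))$, the coefficient of $\sum_\ell F_\ell(\vo)$ collapses (via $e^{t_s-T} \cdot (e^{-t_s} - e^{-2t_s})$) to $(1-m)(e^{-T} - e^{-t_s - T} - O(\eps))$, while the coefficient of $\sum_\ell F_\ell(\optp)$ becomes $(1-m)[(T - t_s)e^{-T} + \tfrac{e^{-t_s - T} t_s^2}{2} - O(\eps)]$ by combining the direct contribution from the corollary with $e^{t_s-T} \cdot \tfrac{e^{-2t_s}t_s^2}{2}$ from the lemma. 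Since the resulting inequality holds for every $T \in [t_s, 1]$, taking the maximum over $T$ on the right hand side yields exactly the claimed bound.

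The step is conceptually routine; the main care required is error-term bookkeeping. The error from Lemma~\ref{lem:t_s_value_online} is scaled by the factor $(1-\eps)^{i_T - \eps^{-1}t_s} \leq 1$ when we substitute, so it stays within the stated $O(\eps\beta LD^2) + O(DG\sqrt{L})$ budget; and the $O(\eps)$ perturbations from replacing $\eps i_T$ by $T$ and $(1-\eps)^{i_T - 1}$ by $e^{-T}$ multiply only constants and bounded exponentials, contributing only to the $O(\eps)$ slacks inside the leading coefficients rather than introducing any new dependence on $L$, $D$, $\beta$, or $G$. Accordingly, no new analytic idea beyond the offline argument of Lemma~\ref{lem:approximation_with_potential} is required — only its online analogue executed with the online versions of the two building-block estimates.
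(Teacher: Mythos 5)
Your proposal matches the paper's proof essentially step for step: the paper likewise fixes an arbitrary $T \in [t_s, 1]$, plugs the bound from Lemma~\ref{lem:t_s_value_online} into Corollary~\ref{cor:bound2online} at $i = \lfloor\eps^{-1}T\rfloor$, and then simplifies the coefficients using the same asymptotic approximations $\eps\lfloor\eps^{-1}T\rfloor \approx T$ and $(1-\eps)^{i} \approx e^{-\eps i}$. Your error-term bookkeeping is also consistent with what the paper does.
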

\begin{proof}
We prove below that the inequality stated in the lemma holds for any fixed $T \in [t_s, 1]$. Plugging the guarantee of Lemma~\ref{lem:t_s_value_online} into the guarantee of Corollary~\ref{cor:bound2online} for $i = \lfloor \eps^{-1} T \rfloor$ yields
{\allowdisplaybreaks
\begin{align*}
	&\sum_{\ell = 1}^L F_\ell(\vy^{(\eps^{-1}, \ell)} \psum \vz^{(\eps^{-1}, \ell)})\\
	\geq{} &
	(1 - m)(\eps\lfloor \eps^{-1} T \rfloor - t_s)(1 - \eps)^{\lfloor \eps^{-1} T \rfloor - 1} \cdot \sum_{\ell = 1}^L F_\ell(\optp)  - (\eps\lfloor \eps^{-1} T \rfloor - t_s) \cdot O(\eps\beta L D^2) \\&- (1 - t_s) DG\sqrt{2L}
	+ (1 - \eps)^{\lfloor \eps^{-1} T \rfloor - \eps^{-1}t_s} \cdot \bigg\{(1 - m) \cdot \bigg[(e^{-t_s} - e^{- 2t_s} - O(\eps)) \cdot \sum_{\ell = 1}^L F_\ell(\vo)\\
	&+ \left(\frac{e^{-2t_s} \cdot t_s^2}{2} - O(\eps)\right) \cdot \sum_{\ell = 1}^L F_\ell(\optp)\bigg] - 63t_s \eps\beta LD^2 - 60t_s DG\sqrt{L}\bigg\}\\
	\geq{} &
	(1 - m)(T - t_s - \eps)e^{-T} \cdot \sum_{\ell = 1}^L F_\ell(\optp) - O(\eps\beta L D^2) - O(DG\sqrt{L})\\
	&+\mspace{-2mu} e^{t_s - T}(1 \mspace{-2mu}-\mspace{-2mu} m)(1 \mspace{-2mu}-\mspace{-2mu} \eps)\mspace{-2mu} \bigg[(e^{-t_s} \mspace{-2mu}-\mspace{-2mu} e^{- 2t_s} \mspace{-2mu}-\mspace{-2mu} O(\eps)) \cdot \sum_{\ell = 1}^L F_\ell(\vo)
	\mspace{-2mu}+\mspace{-2mu} \left(\frac{e^{-2t_s} \mspace{-2mu}\cdot\mspace{-2mu} t_s^2}{2} \mspace{-2mu}-\mspace{-2mu} O(\eps)\right) \cdot \sum_{\ell = 1}^L F_\ell(\optp)\bigg]\\
	={} &
	(1 - m) \cdot \bigg\{\bigg[(T - t_s)e^{-T} + \frac{e^{-t_s-T} \cdot t_s^2}{2} - O(\eps) \bigg] \cdot \sum_{\ell = 1}^L F_\ell(\optp) \\&+ (e^{-T} - e^{- t_s-T} - O(\eps)) \cdot \sum_{\ell = 1}^L F_\ell(\vo) \bigg\} - O(\eps\beta L D^2) - O(DG\sqrt{L})
	\enspace.&
	\qedhere
\end{align*}}%
\end{proof}
\section{Applications and Experimental Results}\label{sec:Experiments}

%

In this section, we study the empirical performance of the offline and online algorithms described in the previous sections on various machine learning tasks.

\subsection{Revenue Maximization}

As our first experimental setting, we consider the following revenue maximization setting, which was also considered by~\cite{mualem2023resolving,soma2017nonmonotone,thang2021online}. The objective of some company is to promote a product to users with the aim of boosting revenue through the \say{word-of-mouth} effect. The problem of optimizing this objective can be formalized as follows. The input is a weighted undirected graph $G = (V, E)$ representing a social network, where $w_{ij}$ represents the weight of the edge between vertex $i$ and vertex $j$ (with $w_{ij} = 0$ if the edge $(i, j)$ is absent from the graph). If the company allocates a cost of $x_i$ units to a user $i\in V$, then that user becomes an advocate of the product with a probability of $1-(1-p)^{x_i}$, where $p\in(0,1)$ is a parameter. Note that this formula implies that each $\eps$ unit of cost invested in the user independently contributes to the chance of making the user an advocate. Furthermore, by investing a full unit in the user, the user becomes an advocate with a probability of $p$~\cite{soma2017nonmonotone}.

Given the set $S\subseteq V$ of users who have become advocates for the product, the expected revenue generated is proportional to the total influence of the users in $S$ on non-advocate users, formally expressed as $\sum_{i\in S}\sum_{j\in{V \setminus S}}w_{ij}$. Hence, the objective function $F\colon [0, 1]^V \rightarrow \nnR$ in this setting is defined as the expectation of the aforementioned expression, i.e.,
\begin{align}
    F(\vx)=\mathbb{E}_S\left[\sum_{i\in S}\sum_{j\in{V\setminus S}}w_{ij}\right]=\sum_{i \in V}\sum_{\substack{j \in V \\ i\neq j}}w_{ij}(1-(1-p)^{x_i})(1-p)^{x_j}
		\enspace.
\end{align}
It has been demonstrated that $F$ is a non-monotone DR-submodular function~\cite{soma2017nonmonotone}.

We conducted experiments in both online and offline scenarios based on instances of the aforementioned setting derived from two distinct datasets. The first dataset is sourced from a Facebook network~\cite{viswanath2009evolution}, encompassing $64K$ users (vertices) and $1M$ unweighted relationships (edges). The second dataset is based on the Advogato network~\cite{massa2009dowling}, comprising $6.5K$ users (vertices) and $61K$ weighted relationships (edges).

\subsubsection{Online Setting} \label{ssc:revenue_online}

In our online experiments, inspired by~\cite{mualem2023resolving}, we set the number of time steps to $L = 1000$, with $p=0.0001$. At each time step $\ell$, the objective function is defined by selecting a uniformly random subset $V_\ell \subseteq V$ of a given size, and then retaining only edges connecting two vertices of $V_\ell$. For the Advogato network, $V_\ell$ is of size $200$, and for the larger Facebook network, $V_\ell$ is of size $\numprint{15,000}$. The above objective functions are optimized subject to the constraint $0.1\leq\sum_ix_i\leq 1$, which represents both minimum and maximum investment requirements. Notably, the intersection of this constraint with the implicit box constraint forms a non-down-monotone feasibility polytope. However, this polytope can be decomposed into two polytopes: (i) $\NDM$, a polytope defined by the equality $\sum_{i = 1}^n x_i =0.1$, and (ii) $\DM$, a down-closed polytope defined by the inequality $\sum_{i = 1}^n x_i\leq 0.9$. Observe that $(\DM + \NDM) \cap [0, 1]^n$ is indeed the original polytope, and thus, this is a valid decomposition of this polytope.

\begin{figure}[!t]
\begin{subfigure}[t]{0.24\textwidth}
    \begin{tikzpicture}[scale=0.45]
         \begin{axis}[
				x label style={font=\large},
				y label style={font=\large},
				xlabel=Timestep, 
				ylabel=Function Value,
       legend cell align=left,
		legend style={font=\large,at={(0.68,1)}},
		y tick label style={
        /pgf/number format/.cd,
        fixed,
        fixed zerofill,
        precision=2,
        /tikz/.cd}]
\addplot[line width=1pt,solid,color=blue,mark=*,  mark repeat=100] %
	table[x=Iteration,y=Ours,col sep=comma]{CSV/revenue/advogatoComparisonOnline.csv};
\addlegendentry{Ours}
\addplot[line width=1pt,solid,color=red,mark=triangle*, mark size=3pt,  mark repeat=100] %
	table[x=Iteration,y=Mualem,col sep=comma]{CSV/revenue/advogatoComparisonOnline.csv};
\addlegendentry{Mualem et. al~\cite{mualem2023resolving}}
\end{axis}
\end{tikzpicture}
\caption{\centering Online Algorithms on the Advogato network.}
\label{fig:onAdv} 
    \end{subfigure}
     \begin{subfigure}[t]{0.24\textwidth}
    \begin{tikzpicture}[scale=0.45]
         \begin{axis}[
				x label style={font=\large},
				y label style={font=\large},
	xlabel=Timestep, 
	ylabel=Function Value,
       legend cell align=left,
		legend style={font=\large,at={(0.68,1)}},
		y tick label style={
        /pgf/number format/.cd,
        fixed,
        fixed zerofill,
        precision=2,
        /tikz/.cd}]
\addplot[line width=1pt,solid,color=blue,mark=*,  mark repeat=100] %
	table[x=Iteration,y=Ours,col sep=comma]{CSV/revenue/facebookComparisonOnline.csv};
\addlegendentry{Ours}
\addplot[line width=1pt,solid,color=red,mark=triangle*, mark size=3pt,  mark repeat=100] %
	table[x=Iteration,y=Mualem,col sep=comma]{CSV/revenue/facebookComparisonOnline.csv};
\addlegendentry{Mualem et al.~\cite{mualem2023resolving}}
\end{axis}
\end{tikzpicture}
\caption{\centering Online Algorithms on the Facebook network.} \label{fig:onFace}
    \end{subfigure}
		\hfill
    \begin{subfigure}[t]{0.24\textwidth}
    \begin{tikzpicture}[scale=0.45]
         \begin{axis}[
				x label style={font=\large},
				y label style={font=\large},
	xlabel=Timestep, 
	ylabel=Function Value,
       legend cell align=left,
		legend style={font=\large,at={(0.68,1)}},
		y tick label style={
        /pgf/number format/.cd,
        fixed,
        fixed zerofill,
        precision=2,
        /tikz/.cd}]
\addplot[line width=1pt,solid,color=blue,mark=*,  mark repeat=10] %
	table[x=Iteration,y=Ours,col sep=comma]{CSV/revenue/advogato_comparison_offline_Empirical.csv};
\addlegendentry{Ours (Algorithm~\ref{alg:OfflineGF})}
\addplot[line width=1pt,solid,color=green,mark=*,  mark repeat=10] %
	table[x=Iteration,y=Empirical Ours,col sep=comma]{CSV/revenue/advogato_comparison_offline_Empirical.csv};
\addlegendentry{Ours (Algorithm~\ref{alg:OfflineEmp})}
\addplot[line width=1pt,solid,color=red,mark=triangle*, mark size=3pt,  mark repeat=10] %
	table[x=Iteration,y=Mualem,col sep=comma]{CSV/revenue/advogato_comparison_offline_Empirical.csv};
\addlegendentry{Mualem et al.~\cite{mualem2023resolving}}
\end{axis}
\end{tikzpicture}
\caption{\centering Offline Algorithms on the Advogato network.}
\label{fig:offAdv} 
    \end{subfigure}\hfill
    \begin{subfigure}[t]{0.24\textwidth}
    \begin{tikzpicture}[scale=0.45]
         \begin{axis}[
				x label style={font=\large},
				y label style={font=\large},
	xlabel=Timestep, 
	ylabel=Function Value,
       legend cell align=left,
		legend style={font=\large,at={(0.68,1)}},
		y tick label style={
        /pgf/number format/.cd,
        fixed,
        fixed zerofill,
        precision=2,
        /tikz/.cd}]
\addplot[line width=1pt,solid,color=blue,mark=*,  mark repeat=10] %
	table[x=Iteration,y=Ours,col sep=comma]{CSV/revenue/facebook_comparison_offline_Empirical.csv};
\addlegendentry{Ours (Algorithm~\ref{alg:OfflineGF})}
\addplot[line width=1pt,solid,color=green,mark=*,  mark repeat=10] %
	table[x=Iteration,y=Empirical Ours,col sep=comma]{CSV/revenue/facebook_comparison_offline_Empirical.csv};
\addlegendentry{Ours (Algorithm~\ref{alg:OfflineEmp})}
\addplot[line width=1pt,solid,color=red,mark=triangle*, mark size=3pt,  mark repeat=10] %
	table[x=Iteration,y=Mualem,col sep=comma]{CSV/revenue/facebook_comparison_offline_Empirical.csv};
\addlegendentry{Mualem et al.~\cite{mualem2023resolving}}
\end{axis}
\end{tikzpicture}
\caption{\centering Offline Algorithms on the Facebook network.}
\label{fig:offFacebook} 
    \end{subfigure}
    \caption{Results of the Revenue Maximization Experiments} \label{fig:revenue}
\end{figure}
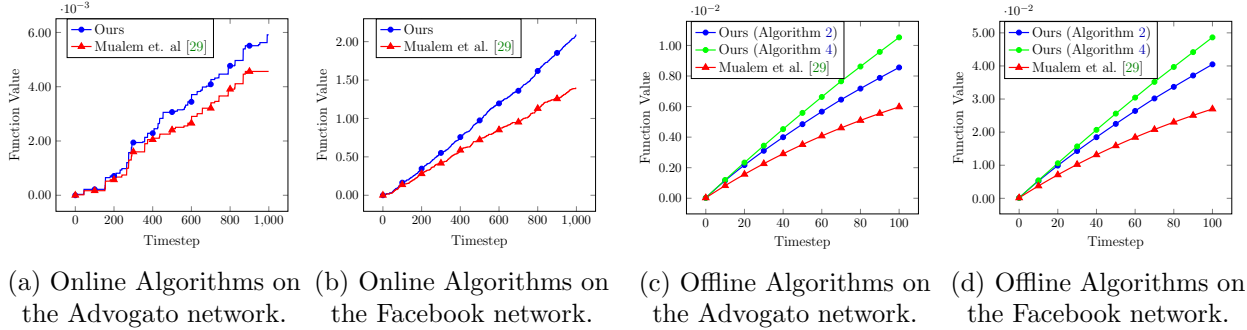

In our experiments, we have compared the perfromance of our algorithm from Section~\ref{sec:Online} with the online algorithm of Mualem \& Feldman~\cite{mualem2023resolving}, which is the current state-of-the-art algorithm for the online setting. In both algorithms, we have set the number of online linear optimizers used to be $100$ (which corresponding to setting the error parameter $\eps$ to $0.01$ in our algorithm and to $\nicefrac{\ln{2}}{100}$ in the algorithm of Mualem \& Feldman). The results of our experiments on the Advogato and Facebook networks can be found in Figures~\ref{fig:onAdv} and~\ref{fig:onFace}, respectively. One can observe that, in both experiments, our algorithm significantly outperforms the state-of-the-art algorithm, especially as the number of time steps grows.

\subsubsection{Offline Setting}

Our offline experiments are similar to their online counterparts. However, since there is only one objective function in this setting, we run the experiment on the entire network graph instead of subsets. In this setting, we compared our offline algorithm (specifically, its versions given as Algorithm~\ref{alg:OfflineGF} in Section~\ref{ssc:unknown_optp} and Algorithm~\ref{alg:OfflineEmp} in Appendix~\ref{app:practical_offline}) with the current state-of-the-art algorithm from~\cite{mualem2023resolving} (which is an explicit version of the algorithm of Du~\cite{du2022lyapunov}). Both algorithms have been executed for $T = 100$ iterations, and the error parameter $\eps$ was set accordingly (which again means $0.01$ in our algorithm and $\nicefrac{\ln{2}}{100}$ in the algorithm of Mualem \& Feldman). The results of the offline experiments on the Advogato and Facebook networks can be found in Figures~\ref{fig:offAdv} and~\ref{fig:offFacebook}, respectively. One can observe that our method consistently outperforms the previous state-of-the-art.

\subsection{Location Summarization}

In this experimental setting, our objective is to create a summary of the locations around the current user location based on the Yelp dataset, a subset of Yelp's businesses, reviews, and user data covering information about local businesses in 11 metropolitan areas~\cite{yelp}. We employ the formalization of this task used by Mualem \& Feldman~\cite{mualem2023resolving}. Specifically, symmetry scores between the various locations have been generated using the methodology introduced by Kazemi et al.~\cite{kazemi2021regularized} based on features extracted from location descriptions and associated user reviews, encompassing aspects like parking availability, WiFi access, vegan menu offerings, delivery options, suitability for outdoor seating, and being conducive to group gatherings. Then, assuming the set of locations is denoted by $[n]$, $M_{i,j}$ represents the similarity score between locations $i$ and $j$, and $d_i$ is the distance of location $i$ from the user (measured in units of 200KM), the objective function for every set $S \subseteq [n]$ is given by
\[ f(S) = \tfrac{1}{n} \sum_{i = 1}^n \max_{j \in S} M_{i, j} - \sum_{i \in S} d_i \enspace. \]
Intuitively, this objective function favors sets $S$ of locations that effectively summarizes the existing locations while remaining close to the user's current location.

Since the methods developed in this work are tailored for continuous functions, they cannot be used to directly optimize $f$. Thus, we need to invoke the multilinear extension\footnote{See, for example,~\cite{buchbinder2018submodular} for the definition of the multi-linear extension.} $F$ of $f$ defined as follows. For integers $i, j, j' \in [n]$, we write $M_{i,j} \prec M_{i, j'}$ if $M_{i, j} < M_{i, j'}$ or $M_{i, j} = M_{i, j'}$ and $j < j'$. Then, for every vector $\vx \in [0, 1]^n$,
\[
    F(\vx)
    =
    \tfrac{1}{n} \sum_{i = 1}^n \sum_{j = 1}^n \left[x_j M_{i, j} \cdot \prod_{j' | M_{i,j} \prec M_{i, j'}} \mspace{-27mu} (1 - x_{j'}) \right] - \sum_{i = 1}^n x_i d_i
    \enspace.
\]
The multilinear extension $F$ is amenable to optimization using our methods since the submodularity of $f$ implies that its multi-linear extension $F$ is DR-submodular~\cite{bian2017guaranteed}. Furthermore, the solution obtained in this way for $F$ can be rounded into a discrete solution with the same approximation for $f$ using either pipage or swap rounding~\cite{calinescu2011maximizing,chekuri2010dependent}.

Our experiment closely follows the one of~\cite{mualem2023resolving} by focusing on a single metropolitan area (Charlotte) and considering a time horizon of $100$ steps. Each time step is associated with a different user $u$ whose location is chosen uniformly at random within the rectangle encompassing the metropolitan area. If we denote by $F_u$ the function $F$ computed based on the location of user $u$, then in the time step associated with $u$, our objective is to select a vector $\vx^{(u)}$ that maximizes $F_u$ among all vectors satisfying $\left\|\vx\right\|_1 \in [1, 2]$ (we seek solutions incorporating either $1$ or $2$ locations). As this optimization should be done before learning the location of $u$ for prompt responses and privacy reasons, online optimization algorithms are used for the task. Akin to Section~\ref{ssc:revenue_online}, we compare the performance in this context of our algorithm from Section~\ref{sec:Online} with the algorithm proposed by Mualem \& Feldman~\cite{mualem2023resolving} when the number of online linear optimizers is set to $L = 100$ for both algorithms. As our algorithm requires a decomposition of the feasible polytope into a down-closed polytope $\DM$ and a general polytope $\NDM$, we chose the decomposition $\DM = \{\vx \in [0, 1]^n \mid \|\vx\|_1 \leq 1\}$ and $\NDM = \{\vx \in [0, 1]^n \mid \|\vx\|_1 = 1\}$ (note that $(\DM + \NDM) \cap [0, 1]^n$ is indeed the original feasible polytope). The results of our experiments are illustrated in Figure~\ref{fig:location}, and demonstrate that our algorithm consistently outperforms the state-of-the-art algorithm of~\cite{mualem2023resolving}.


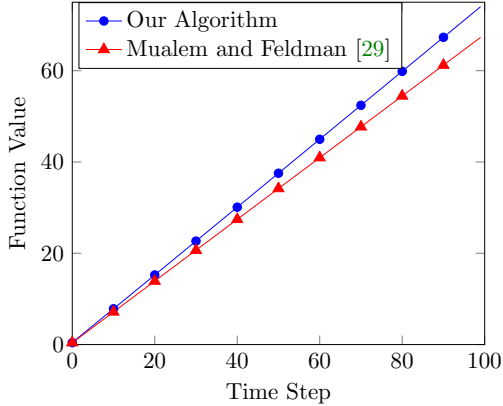
\begin{SCfigure}[][tb]
  \begin{tikzpicture}[scale=0.8] \begin{axis}[
    xlabel = {Time Step},
    ylabel = {Function Value},
    xmin=0, xmax=100,
    ymin=0, ymax=75,
		legend cell align=left,
		legend style={at={(0.8,1)}}]
		\addplot [name path = our, blue, mark = *, mark repeat=10] table [x expr=\coordindex, y index=0] {CSV/location/OnlineYelpMurad.csv};
		\addlegendentry{Our Algorithm}
		\addplot [name path = theirs, red, mark = triangle*, mark size=3pt, mark repeat=10] table [x expr=\coordindex, y index=0] {CSV/location/OnlineYelpLoay.csv};
		\addlegendentry{Mualem and Feldman~\cite{mualem2023resolving}}
	\end{axis}\end{tikzpicture}
\caption{Results of the Location Summarization Experiment} \label{fig:location}
\end{SCfigure}

\subsection{Quadratic Programming}

In this section, we showcase the ability of our algorithm to interpolate between down-closed and general convex bodies. To demonstrate this, we consider a setting with a down-closed polytope constraint, and compare the performance of our method with to two other algorithms: the non-monotone Franke-Wolfe algorithm proposed by Bian et al.~\cite{bian2017nonmonotone}, which was designed for down-closed convex bodies and achieves $e^{-1}$-approximation for maximizing DR-submodular functions over such constraints (but is not well-defined when the constraint is not down-closed), and the non-monotone Franke-Wolfe algorithm suggested by Mualem \& Feldman~\cite{mualem2023resolving} for general convex-body constraints. In a nutshell, the experiments we discuss below show that, despite the ability of our algorithm to optimize over general convex-body constraint, it is able to outperform the algorithm of~\cite{mualem2023resolving} and recover the performance of the algorithm of~\cite{bian2017nonmonotone} (and sometimes even slightly outperform it).

Every instance of the setting we consider is defined by two matrices $\mA \in\nnRE{m\times n}$ and $\mH \in \npRE{m\times n}$ (the details of the setting closely follow settings considered by~\cite{bian2017nonmonotone,mualem2023resolving}). These matrices are chosen at random according to distributions described in Sections~\ref{ssc:uniform} and~\ref{ssc:exponential}. However, before getting to the description of these distributions, let us explain how the instance is constructed based on the matrices $\mA$ and $\mH$ obtained. First, the down-closed polytope constraint is given by
\[
    \mathcal{K}=\{\vx\in \nnRE{n} \mid \mA\vx\leq \vb, \vx\leq \vu\}
		\enspace,
\]
where $\vb$ is the all-ones vector, and the upper bound vector $\vu$ is given by $u_j = \min_{i\in[m]} b_i / A_{i,j}$ for each $j\in[n]$. The function $F$ to be maximized subject to $\cK$ is given, for every vector $\vzero \leq \vx \leq \vu$, by
\[
    F(\vx)=\frac{1}{2}\vx^T\mH\vx+\vh^T\vx+c
		\enspace,
\]
where $\vh$ is a vector and $c$ is a scalar. The non-positivity of the matrix $\mH$ ensures that $F$ is DR-submodular. Additionally, we set $\vh = -0.1 \cdot \mH^T\vu$. Finally, to make $F$ non-negative, it is necessary to set the value of $c$ to be at least $M = -\min_{\vzero \leq \vx \leq \vu} \left(\frac{1}{2}\vx^T\mH\vx + \vh^T\vx\right)$. The value of $M$ can be approximately calculated via \quadprogIP\footnote{We used IBM CPLEX optimization studio \url{https://www.ibm.com/products/ilog-cplex-optimization-studio}.}~\cite{xia2020globally}, and we set $c$ to be $M+0.1|M|$, which is a bit larger than the necessary minimum.


\subsubsection{Uniform Distribution} \label{ssc:uniform}

In this section, we we use a method for choosing the matrices $\mH$ and $\mA$ that utilizes uniform distributions. In this method, the matrix $\mH \in \mathbb{R}^{n \times n}$ is a symmetric matrix randomly generated by drawing each entry independently and uniformly from the interval $[-1, 0]$. Similarly, the matrix $\mA \in \mathbb{R}^{m \times n}$ is generated with entries randomly drawn from the interval $[v, v+1]$, where $v=0.01$. Note that, by using a positive value for $v$, we ensure that the entries of $\mA$ are all strictly positive.

Our experiments based on the above distributions vary in the values chosen for the dimensions $n$ and $m$. For each choice, we generated $100$ instances and executed on them our offline algorithm (the version given as Algorithm~\ref{alg:OfflineEmp} in Appendix~\ref{app:practical_offline}) as well as the algorithms of Mualem \& Feldman~\cite{mualem2023resolving} and Bian et al.~\cite{bian2017nonmonotone}. The number of iterations was set to $100$ in all algorithms, which forces the the error control parameter $\eps$ to be set to $0.01$ in our algorithm and the algorithm of Bian et al.~\cite{bian2017nonmonotone}, and to $\nicefrac{\ln 2}{100}$ in the remaining algorithm. Additionally, since $\cK$ is down-closed, the decomposition used by our algorithm is simply $\DM = \cK$ and $\NDM = \{0\}$. Figure~\ref{fig:quad11} depicts the results obtained by the three algorithms, averaged over the $100$ instances generated. The $x$-axis in each plot represents the value of $n$, and the value of $m$ was derived based on $n$ as specified by each plot caption. The $y$-axis illustrates the approximation ratios of the various algorithms in comparison to the optimum value calculated using a quadratic programming solver. Notably, our proposed algorithm demonstrates near-identical performance to the Frank-Wolfe algorithm suggested by Bian et al.~\cite{bian2017nonmonotone}, and clearly surpasses the algorithm suggested by Mualem \& Feldman~\cite{mualem2023resolving}.

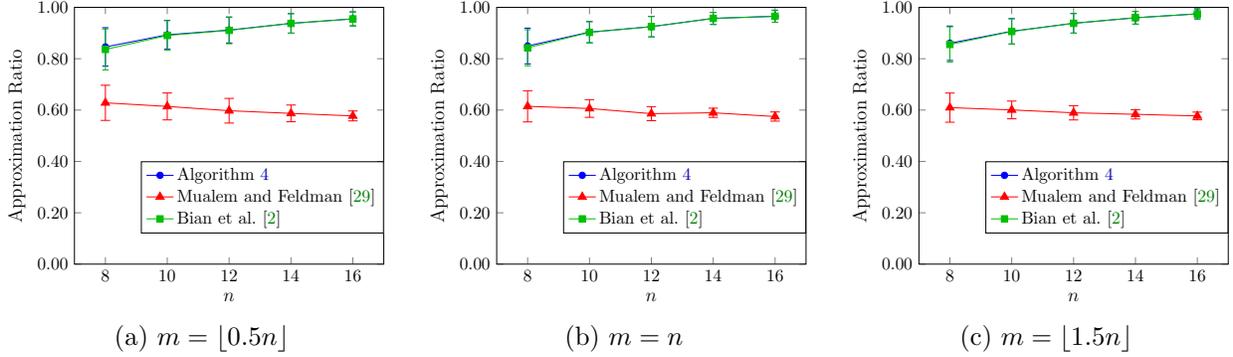
\begin{figure}[tb]
\begin{subfigure}[t]{0.32\textwidth}
  \begin{tikzpicture}[scale=0.6] \begin{axis}[
		x label style = {font=\large},
		y label style = {font=\large},
    xlabel = {$n$},
    ylabel = {Approximation Ratio},
    xmin=7, xmax=17,
    ymin=0, ymax=1,
		legend cell align=left,
		legend style={at={(1,0.4)}},
		y tick label style={
        /pgf/number format/.cd,
        fixed,
        fixed zerofill,
        precision=2,
        /tikz/.cd
    },
		error bars/y dir=both,
    error bars/y explicit]
		\pgfplotstableread{CSV/quadratic/m=0.5n/uniform2_quadratic_ours3__.csv}\ours
		\pgfplotstablecreatecol[copy column from table={CSV/quadratic/m=0.5n/uniform2_quadratic_ours3_std__.csv}{[index] 0}] {error} {\ours}
		\addplot [blue, mark = *] table [x expr=8+2*\coordindex, y index=0, y error index=1] {\ours};
		\addlegendentry{Algorithm~\ref{alg:OfflineEmp}}
		\pgfplotstableread{CSV/quadratic/m=0.5n/uniform2_quadratic_theirs__.csv}\theirs
		\pgfplotstablecreatecol[copy column from table={CSV/quadratic/m=0.5n/uniform2_quadratic_theirs_std__.csv}{[index] 0}] {error} {\theirs}
		\addplot [red, mark = triangle*, mark size=3pt] table [x expr=8+2*\coordindex, y index=0, y error index = 1] {\theirs};
		\addlegendentry{Mualem and Feldman~\cite{mualem2023resolving}}
		\pgfplotstableread{CSV/quadratic/m=0.5n/uniform2_quadratic_theirs2__.csv}\theirstwo
		\pgfplotstablecreatecol[copy column from table={CSV/quadratic/m=0.5n/uniform2_quadratic_theirs2_std__.csv}{[index] 0}] {error} {\theirstwo}
		\addplot [green!75!black, mark = square*] table [x expr=8+2*\coordindex, y index=0, y error index = 1] {\theirstwo};
		\addlegendentry{Bian et al.~\cite{bian2017nonmonotone}}
	\end{axis}\end{tikzpicture}
  \caption{$m=\lfloor0.5n\rfloor$}\label{fig:quad2}
\end{subfigure}\hfill
\begin{subfigure}[t]{0.32\textwidth}
  \begin{tikzpicture}[scale=0.6] \begin{axis}[
		x label style = {font=\large},
		y label style = {font=\large},
    xlabel = {$n$},
    ylabel = {Approximation Ratio},
    xmin=7, xmax=17,
    ymin=0, ymax=1,
		legend cell align=left,
		legend style={at={(1,0.4)}},
		y tick label style={
        /pgf/number format/.cd,
        fixed,
        fixed zerofill,
        precision=2,
        /tikz/.cd
    },
		error bars/y dir=both,
    error bars/y explicit]
		\pgfplotstableread{CSV/quadratic/m=n/uniform2_quadratic_ours3__.csv}\ours
		\pgfplotstablecreatecol[copy column from table={CSV/quadratic/m=n/uniform2_quadratic_ours3_std__.csv}{[index] 0}] {error} {\ours}
		\addplot [blue, mark = *] table [x expr=8+2*\coordindex, y index=0, y error index=1] {\ours};
		\addlegendentry{Algorithm~\ref{alg:OfflineEmp}}
		\pgfplotstableread{CSV/quadratic/m=n/uniform2_quadratic_theirs__.csv}\theirs
		\pgfplotstablecreatecol[copy column from table={CSV/quadratic/m=n/uniform2_quadratic_theirs_std__.csv}{[index] 0}] {error} {\theirs}
		\addplot [red, mark = triangle*, mark size=3pt] table [x expr=8+2*\coordindex, y index=0, y error index = 1] {\theirs};
		\addlegendentry{Mualem and Feldman~\cite{mualem2023resolving}}
		\pgfplotstableread{CSV/quadratic/m=n/uniform2_quadratic_theirs2__.csv}\theirstwo
		\pgfplotstablecreatecol[copy column from table={CSV/quadratic/m=n/uniform2_quadratic_theirs2_std__.csv}{[index] 0}] {error} {\theirstwo}
		\addplot [green!75!black, mark = square*] table [x expr=8+2*\coordindex, y index=0, y error index = 1] {\theirstwo};
		\addlegendentry{Bian et al.~\cite{bian2017nonmonotone}}
	\end{axis}\end{tikzpicture}
  \caption{$m=n$}\label{fig:quad1}
\end{subfigure}\hfill
\begin{subfigure}[t]{0.32\textwidth}
  \begin{tikzpicture}[scale=0.6] \begin{axis}[
		x label style = {font=\large},
		y label style = {font=\large},
    xlabel = {$n$},
    ylabel = {Approximation Ratio},
    xmin=7, xmax=17,
    ymin=0, ymax=1,
		legend cell align=left,
		legend style={at={(1,0.4)}},
		y tick label style={
        /pgf/number format/.cd,
        fixed,
        fixed zerofill,
        precision=2,
        /tikz/.cd
    },
		error bars/y dir=both,
    error bars/y explicit]
		\pgfplotstableread{CSV/quadratic/m=1.5n/uniform2_quadratic_ours3__.csv}\ours
		\pgfplotstablecreatecol[copy column from table={CSV/quadratic/m=1.5n/uniform2_quadratic_ours3_std__.csv}{[index] 0}] {error} {\ours}
		\addplot [blue, mark = *] table [x expr=8+2*\coordindex, y index=0, y error index=1] {\ours};
		\addlegendentry{Algorithm~\ref{alg:OfflineEmp}}
		\pgfplotstableread{CSV/quadratic/m=1.5n/uniform2_quadratic_theirs__.csv}\theirs
		\pgfplotstablecreatecol[copy column from table={CSV/quadratic/m=1.5n/uniform2_quadratic_theirs_std__.csv}{[index] 0}] {error} {\theirs}
		\addplot [red, mark = triangle*, mark size=3pt] table [x expr=8+2*\coordindex, y index=0, y error index = 1] {\theirs};
		\addlegendentry{Mualem and Feldman~\cite{mualem2023resolving}}
		\pgfplotstableread{CSV/quadratic/m=1.5n/uniform2_quadratic_theirs2__.csv}\theirstwo
		\pgfplotstablecreatecol[copy column from table={CSV/quadratic/m=1.5n/uniform2_quadratic_theirs2_std__.csv}{[index] 0}] {error} {\theirstwo}
		\addplot [green!75!black, mark = square*] table [x expr=8+2*\coordindex, y index=0, y error index = 1] {\theirstwo};
		\addlegendentry{Bian et al.~\cite{bian2017nonmonotone}}
	\end{axis}\end{tikzpicture}
  \caption{$m=\lfloor1.5n\rfloor$}\label{fig:quad3}
\end{subfigure}
\caption{Quadratic Programming with Uniform Distribution} \label{fig:quad11}
\end{figure}

\subsubsection{Exponential Distribution} \label{ssc:exponential}

In this section, we use a different method for choosing the matrices $\mH$ and $\mA$ that utilizes exponential distributions. For every value $\lambda > 0$, the exponential distribution $\exp(\lambda)$ is defined by a density function assigning a density of $\lambda e^{-\lambda y}$ for $y \geq 0$, and a density of $0$ for $y < 0$. Given this definition, $\mH \in \npRE{n\times n}$ is a symmetric matrix randomly generated with entries drawn independently from $-\exp(1)$, while $\mA \in \nnRE{m \times n}$ is a randomly generated matrix with entries drawn independently from $\exp(0.25) + 0.01$.

For this method of generating $\mH$ and $\mA$, we conducted the same set of experiments as for the previous approach. The results of these experiments, again averaged over $100$ independently chosen instances, are illustrated in Figure~\ref{fig:quad12}. Our proposed algorithm demonstrates slightly better performance than the non-monotone Frank-Wolfe algorithm suggested by Bian et al.~\cite{bian2017nonmonotone}, and surpasses the algorithm suggested by Mualem \& Feldman~\cite{mualem2023resolving}.

\begin{figure}[tb]
\begin{subfigure}[t]{0.32\textwidth}
  \begin{tikzpicture}[scale=0.6] \begin{axis}[
		x label style = {font=\large},
		y label style = {font=\large},
    xlabel = {$n$},
    ylabel = {Approximation Ratio},
    xmin=7, xmax=17,
    ymin=0, ymax=1,
		legend cell align=left,
		legend style={at={(1,0.4)}},
		y tick label style={
        /pgf/number format/.cd,
        fixed,
        fixed zerofill,
        precision=2,
        /tikz/.cd
    },
		error bars/y dir=both,
    error bars/y explicit]
		\pgfplotstableread{CSV/quadratic/m=0.5n/exponential2_quadratic_ours3__.csv}\ours
		\pgfplotstablecreatecol[copy column from table={CSV/quadratic/m=0.5n/exponential2_quadratic_ours3_std__.csv}{[index] 0}] {error} {\ours}
		\addplot [blue, mark = *] table [x expr=8+2*\coordindex, y index=0, y error index=1] {\ours};
		\addlegendentry{Algorithm~\ref{alg:OfflineEmp}}
		\pgfplotstableread{CSV/quadratic/m=0.5n/exponential2_quadratic_theirs__.csv}\theirs
		\pgfplotstablecreatecol[copy column from table={CSV/quadratic/m=0.5n/exponential2_quadratic_theirs_std__.csv}{[index] 0}] {error} {\theirs}
		\addplot [red, mark = triangle*, mark size=3pt] table [x expr=8+2*\coordindex, y index=0, y error index = 1] {\theirs};
		\addlegendentry{Mualem and Feldman~\cite{mualem2023resolving}}
		\pgfplotstableread{CSV/quadratic/m=0.5n/exponential2_quadratic_theirs2__.csv}\theirstwo
		\pgfplotstablecreatecol[copy column from table={CSV/quadratic/m=0.5n/exponential2_quadratic_theirs2_std__.csv}{[index] 0}] {error} {\theirstwo}
		\addplot [green!75!black, mark = square*] table [x expr=8+2*\coordindex, y index=0, y error index = 1] {\theirstwo};
		\addlegendentry{Bian et al.~\cite{bian2017nonmonotone}}
	\end{axis}\end{tikzpicture}
  \caption{$m=\lfloor0.5n\rfloor$}
\end{subfigure}\hfill
\begin{subfigure}[t]{0.32\textwidth}
  \begin{tikzpicture}[scale=0.6] \begin{axis}[
		x label style = {font=\large},
		y label style = {font=\large},
    xlabel = {$n$},
    ylabel = {Approximation Ratio},
    xmin=7, xmax=17,
    ymin=0, ymax=1,
		legend cell align=left,
		legend style={at={(1,0.4)}},
		y tick label style={
        /pgf/number format/.cd,
        fixed,
        fixed zerofill,
        precision=2,
        /tikz/.cd
    },
		error bars/y dir=both,
    error bars/y explicit]
		\pgfplotstableread{CSV/quadratic/m=n/exponential2_quadratic_ours3__.csv}\ours
		\pgfplotstablecreatecol[copy column from table={CSV/quadratic/m=n/exponential2_quadratic_ours3_std__.csv}{[index] 0}] {error} {\ours}
		\addplot [blue, mark = *] table [x expr=8+2*\coordindex, y index=0, y error index=1] {\ours};
		\addlegendentry{Algorithm~\ref{alg:OfflineEmp}}
		\pgfplotstableread{CSV/quadratic/m=n/exponential2_quadratic_theirs__.csv}\theirs
		\pgfplotstablecreatecol[copy column from table={CSV/quadratic/m=n/exponential2_quadratic_theirs_std__.csv}{[index] 0}] {error} {\theirs}
		\addplot [red, mark = triangle*, mark size=3pt] table [x expr=8+2*\coordindex, y index=0, y error index = 1] {\theirs};
		\addlegendentry{Mualem and Feldman~\cite{mualem2023resolving}}
		\pgfplotstableread{CSV/quadratic/m=n/exponential2_quadratic_theirs2__.csv}\theirstwo
		\pgfplotstablecreatecol[copy column from table={CSV/quadratic/m=n/exponential2_quadratic_theirs2_std__.csv}{[index] 0}] {error} {\theirstwo}
		\addplot [green!75!black, mark = square*] table [x expr=8+2*\coordindex, y index=0, y error index = 1] {\theirstwo};
		\addlegendentry{Bian et al.~\cite{bian2017nonmonotone}}
	\end{axis}\end{tikzpicture}
  \caption{$m=n$}
\end{subfigure}\hfill
\begin{subfigure}[t]{0.32\textwidth}
  \begin{tikzpicture}[scale=0.6] \begin{axis}[
		x label style = {font=\large},
		y label style = {font=\large},
    xlabel = {$n$},
    ylabel = {Approximation Ratio},
    xmin=7, xmax=17,
    ymin=0, ymax=1,
		legend cell align=left,
		legend style={at={(1,0.4)}},
		y tick label style={
        /pgf/number format/.cd,
        fixed,
        fixed zerofill,
        precision=2,
        /tikz/.cd
    },
		error bars/y dir=both,
    error bars/y explicit]
		\pgfplotstableread{CSV/quadratic/m=1.5n/exponential2_quadratic_ours3__.csv}\ours
		\pgfplotstablecreatecol[copy column from table={CSV/quadratic/m=1.5n/exponential2_quadratic_ours3_std__.csv}{[index] 0}] {error} {\ours}
		\addplot [ blue, mark = *] table [x expr=8+2*\coordindex, y index=0, y error index=1] {\ours};
		\addlegendentry{Algorithm~\ref{alg:OfflineEmp}}
		\pgfplotstableread{CSV/quadratic/m=1.5n/exponential2_quadratic_theirs__.csv}\theirs
		\pgfplotstablecreatecol[copy column from table={CSV/quadratic/m=1.5n/exponential2_quadratic_theirs_std__.csv}{[index] 0}] {error} {\theirs}
		\addplot [red, mark = triangle*, mark size=3pt] table [x expr=8+2*\coordindex, y index=0, y error index = 1] {\theirs};
		\addlegendentry{Mualem and Feldman~\cite{mualem2023resolving}}
		\pgfplotstableread{CSV/quadratic/m=1.5n/exponential2_quadratic_theirs2__.csv}\theirstwo
		\pgfplotstablecreatecol[copy column from table={CSV/quadratic/m=1.5n/exponential2_quadratic_theirs2_std__.csv}{[index] 0}] {error} {\theirstwo}
		\addplot [green!75!black, mark = square*] table [x expr=8+2*\coordindex, y index=0, y error index = 1] {\theirstwo};
		\addlegendentry{Bian et al.~\cite{bian2017nonmonotone}}
	\end{axis}\end{tikzpicture}
  \caption{$m=\lfloor1.5n\rfloor$}
\end{subfigure}
\caption{Quadratic Programming with Exponential Distribution} \label{fig:quad12}
\end{figure}
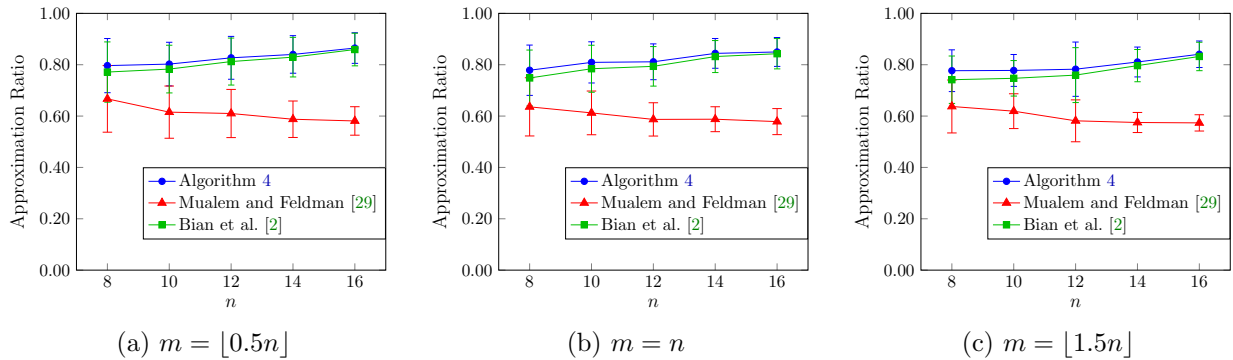

\section{Conclusion}

In this paper, we have presented novel offline and online algorithms for DR-submodular maximization subject to a general convex body constraint. Our algorithms are able to provide a smooth interpolation between the approximability of general and down-closed convex bodies by considering a decomposition of the convex body constraint into a down-closed convex body and a general convex body. In addition to giving a theoretical analysis of our algorithms, we have demonstrated their empirical superiority (compared to state-of-the-art methods) in various online and offline machine learning applications.

\appendix

\section{Implication for a Fairness Setting} \label{app:fairness}

El~Halabi et al.~\cite{el2023bfairness} considered the following fairness setting. The input for this setting is a ground set $\cN$, a non-negative discrete submodular function $f\colon 2^\cN \to \nnR$, a matroid $\cM = (\cN, \cI)$,\footnote{We refer the reader to~\cite{el2023bfairness} for the definition of matroids and the other terms used in this section.} a partition of the elements in $\cN$ into $k$ disjoint classes $C_1, C_2, \dotsc, C_k$ and integral lower and upper bounds $\ell_i$ and $u_i$ for each class $C_i$, respectively. The objective is to find a set $S \in \cI$ maximizing $f$ subject to fairness constraints requiring that $\ell_i \leq |S \cap C_i| \leq u_i$ for every class $C_i$. If one only requires the fairness constraints to hold in expectation, then El~Halabi et al.~\cite{el2023bfairness} showed that is possible to reduce their setting to the following continues settings.\footnote{If the classes are large, then, in addition to guaranteeing that the fairness constraints hold in expectation, the reduction also guarantees that, with high probability, each fairness constraint is violated by at most a small amount.} In this continuous settings, one has to find a vector $\vx$ in the matroid polytope $P_\cM$ that maximizing the multilinear extension $F$ of $f$ subject to fairness constraints requiring that $\ell_i \leq \|\vx \cap \chi_{C_i}\|_1 \leq u_i$ for every class $C_i$, where $\chi_{C_i}$ is the characteristic vector of the set $C_i$. Since multilinear extensions of discrete submodular functions are DR-submodular~\cite{bian2017guaranteed}, our offline results can be applied to this continues setting.

Some of the results of El~Halabi et al.~\cite{el2023bfairness} are bi-criteria approximation algorithms that are allowed to output solutions $\vx \in P_\cM$ that, for some parameter $\beta \in (0, 1)$, only obey $\beta \ell_i \leq \|\vx \cap \chi_{C_i}\|_1 \leq u_i$ for every class $C_i$. In our terminology, allowing such solutions implies two things.
\begin{itemize}
	\item $F(\optpone)$ and $F(\optptwo)$ can be both made to be at least $(1 - \beta) \cdot F(\vo)$, where $\vo$ is the optimal solution; and
	\item $m = \beta r$, where $r = \min_{\vx \in P_\cM} \|\vx\|_\infty$.
\end{itemize}
Thus, our offline algorithms can be used to get a solution whose approximation ratio is at least\footnote{We omitted the error term in the approximation ratio. Since the smoothness of multilinear extensions of discrete submodular functions is polynomial, this error term can be made an arbitrarily small constant. For details, see, for example, Appendix~A of~\cite{buchbinder2023constrained}.}
\begin{align*}
	(1 - \beta r) \cdot \max_{t_s \in [0, 1]} \max_{T \in [t_s, 1]} &\bigg\{(T - t_s) e^{-T} \cdot (1 - \beta) + \frac{t_s^2\cdot e^{-t_s - T}}{2} \cdot (1 - \beta) + e^{-T}-e^{-t_s - T}\bigg\}
	\enspace.
\end{align*}

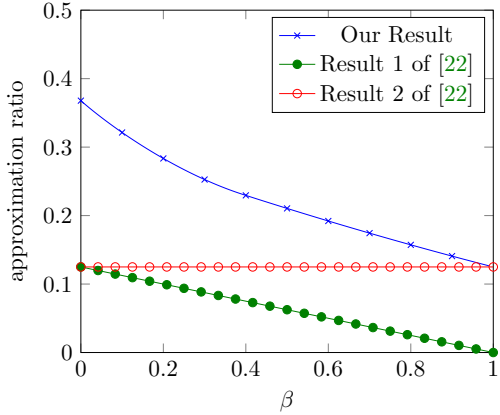
\begin{SCfigure}[][tb]
\begin{tikzpicture}[scale=0.8]
\begin{axis}[
    xlabel = {$\beta$},
    ylabel = {approximation ratio},
    xmin=0, xmax=1,
    ymin=0, ymax=0.5]

\addplot [name path = our, blue, mark = x, mark repeat=10] table [x index=0, y index=1, col sep=comma] {CSV/Fairness.csv};
\addlegendentry{Our Result}

\addplot [name path = A, domain = 0:1, darkgreen, mark = *] {(1 - x)/8};
\addlegendentry{Result $1$ of~\cite{el2023bfairness}}
 
\addplot [name path = B, domain = 0:1, red, mark = o] {1/8};
\addlegendentry{Result $2$ of~\cite{el2023bfairness}}

\end{axis}
\end{tikzpicture}
\caption{Results for the Fairness Setting of~\cite{el2023bfairness} (for $r = 1/2$)} \label{fig:results_fairness}
\end{SCfigure}

In Figure~\ref{fig:results_fairness}, we compare the above approximation ratio with two results of~\cite{el2023bfairness} (for $r = 1/2$). The first result is an approximation ratio of $(1 - \beta)/8$, which is worse than the approximation ratio that we obtain for any $\beta$, but is not based on the above mentioned reduction to the continuous setting (and thus, guarantees that the fairness constraints $\beta \ell_i \leq \|\vx \cap \chi_{C_i}\|_1 \leq u_i$ hold always, and not just in expectation). The second result of~\cite{el2023bfairness} used in our comparison is an approximation of $(1 - r)/4$, which applies even for $\beta = 1$ (but like our result, satisfies the fairness constraints only in expectation). Our result can be viewed as a generalization of this second result.
\section{Empirical Improvements of Algorithm~\ref{alg:OfflineGF}} \label{app:practical_offline}

In this section, we present a version of Algorithm~\ref{alg:OfflineGF} (appears as Algorithm~\ref{alg:OfflineEmp}) that includes modifications designed to improve the empirical performance of the algorithm. These modifications were not included in the original Algorithm~\ref{alg:OfflineGF} because they do not affect the algorithm's theoretical guarantee, and furthermore, they cannot be applied to the online counterpart of Algorithm~\ref{alg:OfflineGF} (i.e., Algorithm~\ref{alg:Online}).

\begin{algorithm}[ht]
\DontPrintSemicolon
Let $\vy^{(0)} \leftarrow \argmin_{\vx\in \NDM}\norm{\vx}{}$, $\vz^{(0)} \gets \vzero$ and $m \gets \|\vy^{(0)}\|_\infty$.\\
\For{$i=1$ \KwTo $\eps^{-1}$}
{
	\If{$i \leq \eps^{-1} t_s$}
	{
	Solve the following linear program. The variables in this program are the vectors $\va^{(i)}$, $\vb^{(i)}$ and $\vc^{(i)}$.
    \begin{alignat*}{2}
  & \text{maximize}   & \quad & e^{2\eps i} \cdot \inner{\nabla F(\comb{i-1})}{(\vone-\vy^{(i - 1)})\odot (\vb^{(i)} - \vc^{(i)}) + (\vone-\vz^{(i - 1)}) \odot \va^{(i)}} \\
	&										&& \mspace{140mu}+(1 - m) \cdot e^{\eps i}(t_s - \eps i) \cdot \inner{\nabla F(\vz^{(i-1)})}{ \vb^{(i)} - \vc^{(i)}}         \nonumber \\
  & \text{subject to} &       & \va^{(i)} \in \NDM \\ \nonumber
  &                   &       & \vb^{(i)} \in  \DM\\ \nonumber
  &										&				& \vb^{(i)} \leq (\vone - \vz^{(i - 1)}) \odot (\vone - \va^{(i)})\\
	&										&				& \vzero \leq \vc^{(i)} \leq \vz^{(i - 1)}
\end{alignat*}
\vspace{-5mm}}
\Else
{

Solve the following linear program. The variables in this program are the vectors $\va^{(i)}$, $\vb^{(i)}$ and $\vc^{(i)}$.
    \begin{alignat*}{2}
  & \text{maximize}   & \quad & \langle\grad{F(\comb{i-1}) \odot (\vone-\vy^{(i - 1)}),\vb^{(i)} -\vc^{(i)}}\rangle         \nonumber \\
  & \text{subject to} &       & \va^{(i)} = \vy^{(i)} \\ \nonumber
  &                   &       & \vb^{(i)} \in  \DM\\
	&										&				& \vb^{(i)} \leq \vone - \vz^{(i - 1)}\\
	&										&				& \vzero \leq \vc^{(i)} \leq \vz^{(i - 1)}
\end{alignat*}
\vspace{-5mm}}
    Let $\vy^{(i)}\leftarrow(1-\eps)\cdot\vy^{(i-1)}+\eps\cdot\va^{(i)}$.\label{line:y_update_emp}\\
    Let $\vz^{(i)}\leftarrow\vz^{(i-1)} + \eps\cdot(\vb^{(i)} - \vc^{(i)})$.
}
\Return a vector maximizing $F$ among all the vectors in $\{\comb{i} \mid i \in \bZ, 0 \leq i \leq \eps^{-1}\}$.
\caption{\texttt{Frank-Wolfe/Continuous-Greedy Hybrid with Empirical Improvements}\label{alg:OfflineEmp}}
\end{algorithm}

To be more concrete, the modifications done in Algorithm~\ref{alg:OfflineEmp} compared to Algorithm~\ref{alg:OfflineGF} are the following.
\begin{itemize}
	\item Algorithm~\ref{alg:OfflineGF} was designed with the view that $\optpone$ and $\optptwo$ should be a feasible assignments to $\vb^{(i)}$, and then the vector $\vz^{(i)}$ is increased proportionally to $(\vone - \vz^{(i - 1)}) \odot \vb^{(i)}$. The multiplication by $(1 - \vz^{(i - 1)})$ was done to reduce the speed in which $\|\vz^{(i)}\|_\infty$ increases as a function of $i$. Following the work of Bian et al.~\cite{bian2017nonmonotone}, Algorithm~\ref{alg:OfflineEmp} was modified to make $(\vone - \vz^{(i - 1)}) \cdot \optpone$ and $(\vone - \vz^{(i - 1)}) \cdot \optptwo$ natural assignments for $\vb^{(i)}$. This was done by allowing $\vb^{(i)}$ to be any vector in $\DM$ which is upper bounded by $(\vone - \vz^{(i - 1)}) \odot (\vone - \va^{(i)})$, and making the vector $\vz^{(i)}$ increase proportionally to the vector $\vb^{(i)}$ itself. Note that these modifications preserve the bound on the rate in which $\|\vz^{(i)}\|_\infty$ increases, but give the algorithm more flexability as the increase in $\vz^{(i)}$ no longer has to be equal to $\vone - \vz^{(i - 1)}$ times some vector in $\DM$.
	\item Algorithm~\ref{alg:OfflineGF} can both increase and decrease $\vy^{(i)}$. However, a similar flexibility does not exist for the vector $\vz^{(i)}$, which is maintained in a continuous-greedy like fashion. Algorithm~\ref{alg:OfflineEmp} introduces a new vector $\vc^{(i)}$ that is used to decrease coordinates of $\vz^{(i)}$ when this helps the objective (or the potential function when $i \leq \eps^{-1}t_s$).
	\item Algorithm~\ref{alg:OfflineGF} returns the best solution for any $\eps^{-1} \cdot t_s \leq i \leq \eps^{-1}$. Algorithm~\ref{alg:OfflineEmp} returns the best solution obtained after any number of iterations because that is always at least as good.
\end{itemize}

The rest of this section is devoted to proving that, like Algorithm~\ref{alg:OfflineGF}, Algorithm~\ref{alg:OfflineEmp} has the properties guaranteed by Theorem~\ref{thm:Offline}. We begin with the following lemma, which is a counterpart of Lemma~\ref{lem:membership}, and proves that Algorithm~\ref{alg:OfflineEmp} returns a feasible solution.

\begin{lemma} \label{lem:membership_emp}
For every integer $0 \leq i \leq \eps^{-1}$, $\vy^{(i)} \in \NDM$ and $\vz^{(i)} \in \eps i \cdot \DM$. Hence, $\comb{i} \in (\NDM + \DM) \cap [0, 1]^n$.
\end{lemma}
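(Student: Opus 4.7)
The plan is to prove Lemma~\ref{lem:membership_emp} by induction on $i$, closely following the structure of the proof of Lemma~\ref{lem:membership}. As in that earlier proof, I would first observe that the second statement of the lemma (the fact that $\comb{i} \in (\NDM + \DM) \cap [0, 1]^n$) is an immediate consequence of the first statement: if $\vy^{(i)} \in \NDM$ and $\vz^{(i)} \in \eps i \cdot \DM \subseteq \DM$, then $\comb{i} = \vy^{(i)} + (\vone - \vy^{(i)}) \odot \vz^{(i)}$ is the sum of a vector in $\NDM$ and a vector in $\DM$ (using down-closedness of $\DM$ to absorb the multiplicative factor $\vone - \vy^{(i)}$), while membership in $[0,1]^n$ is automatic from the definition of $\psum$.

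For the inductive step, the argument that $\vy^{(i)} \in \NDM$ is completely unchanged: $\vy^{(i)}$ is a convex combination of $\vy^{(i-1)} \in \NDM$ (by induction) and $\va^{(i)} \in \NDM$ (which holds by the first constraint of both linear programs, or trivially in the else branch where $\va^{(i)} = \vy^{(i-1)}$). The interesting part is handling the new update rule $\vz^{(i)} \gets \vz^{(i-1)} + \eps(\vb^{(i)} - \vc^{(i)})$, which, unlike in Algorithm~\ref{alg:OfflineGF}, has a subtraction and is no longer of the ``multiplicative'' form $\eps(\vone - \vz^{(i-1)}) \odot \vb^{(i)}$.

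To show $\vz^{(i)} \in \eps i \cdot \DM$, I would first verify non-negativity: since the linear program forces $\vc^{(i)} \leq \vz^{(i-1)}$ and $\vb^{(i)} \geq \vzero$, we have
\[
\vz^{(i)} \geq \vz^{(i-1)} - \eps \vc^{(i)} \geq (1 - \eps) \vz^{(i-1)} \geq \vzero.
\]
Next, using the induction hypothesis $\vz^{(i-1)} = \eps(i-1) \vx$ for some $\vx \in \DM$, together with $\vc^{(i)} \geq \vzero$ and $\vb^{(i)} \in \DM$, I would write
\[
\vz^{(i)} \leq \vz^{(i-1)} + \eps \vb^{(i)} = \eps i \cdot \left[(1 - i^{-1}) \vx + i^{-1} \vb^{(i)}\right] \in \eps i \cdot \DM,
\]
where the inclusion uses convexity of $\DM$. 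Since $\eps i \cdot \DM$ is down-closed (being a scaled copy of the down-closed $\DM$) and $\vz^{(i)} \geq \vzero$, the upper bound implies $\vz^{(i)} \in \eps i \cdot \DM$ itself. The base case $i = 0$ is immediate from $\vy^{(0)} = \argmin_{\vx \in \NDM}\norm{\vx}{} \in \NDM$ and $\vz^{(0)} = \vzero \in 0 \cdot \DM$.

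I do not expect any real obstacle here; this is a routine adaptation of Lemma~\ref{lem:membership}. The only subtle point is that the subtraction of $\eps \vc^{(i)}$ could in principle push $\vz^{(i)}$ below zero, but the linear-program constraint $\vc^{(i)} \leq \vz^{(i-1)}$ is precisely what prevents this. Once non-negativity is established, the down-closedness of $\eps i \cdot \DM$ converts the easy upper bound into the desired membership statement.
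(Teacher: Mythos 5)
Your proof is correct and follows essentially the same route as the paper: both reduce to showing $\vz^{(i)} \in \eps i \cdot \DM$ by induction, use the constraint $\vzero \leq \vc^{(i)} \leq \vz^{(i-1)}$ to guarantee non-negativity of $\vz^{(i)}$, upper-bound $\vz^{(i)}$ by $\eps(i-1)\vx + \eps\vb^{(i)} = \eps i \cdot [(1 - i^{-1})\vx + i^{-1}\vb^{(i)}] \in \eps i \cdot \DM$ via convexity, and conclude by down-closedness. The only cosmetic difference is that the paper folds the scaling factor $\eps i$ out first and reasons about the scaled vector, whereas you reason directly about $\vz^{(i)}$; the underlying argument is identical.
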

\begin{proof}
Given the proof of Lemma~\ref{lem:membership}, to prove the current lemma we only need to show that $\vz^{(i)} \in \eps i \cdot \DM$, which we do by induction on $i$. Clearly, $\vz^{(0)} = \vzero \in 0 \cdot \DM$. Assume now that $\vz^{(i - 1)} \in \eps (i - 1) \cdot \DM$ for some $i \in [\eps^{-1}]$, and let us prove that $\vz^{(i)} \in \eps i \cdot \DM$.

Since $\vz^{(i - 1)} \in \eps (i - 1) \cdot \DM$, there must exist a vector $\vx \in \DM$ such that $\vz^{(i - 1)} = \eps(i - 1) \cdot \vx$. Therefore,
\begin{equation} \label{eq:z_development}
	\vz^{(i)}
	=
	\vz^{(i - 1)} + \eps (\vb^{(i)} - \vc^{(i)})
	=
	\eps i \cdot \left(\frac{i - 1}{i} \cdot \vx + \frac{1}{i} \cdot \vb^{(i)} - \frac{1}{i} \cdot \vc^{(i)} \right)
	\enspace.
\end{equation}
The convexity of $\DM$ implies that $\frac{i - 1}{i} \cdot \vx + \frac{1}{i} \cdot \vb^{(i)}$ is a vector in $\DM$. This vector upper bounds $\frac{1}{i} \cdot \vc^{(i)}$ because
\[
	\frac{1}{i} \cdot \vc^{(i)}
	\leq
	\frac{1}{i} \cdot \vz^{(i - 1)}
	=
	\frac{\eps(i - 1)}{i} \cdot \vx
	\leq
	\frac{i - 1}{i} \cdot \vx
	\enspace.
\]
Therefore, Equation~\eqref{eq:z_development} and the down-closeness of $\DM$ imply together that $\vz^{(i)} \in \eps i \cdot \DM$.
\end{proof}

We now divert our attention to proving the approximation guarantee of Algorithm~\ref{alg:OfflineEmp}. It turns out that most of the analysis of the approximation guarantee of Algorithm~\ref{alg:OfflineGF} from Section~\ref{ssc:unknown_optp} can be reused to prove the same guarantee for Algorithm~\ref{alg:OfflineEmp}, except for the proofs of three lemmata: Lemma~\ref{lem:normxy}, Lemma~\ref{lem:basic_lower_bound} and Lemma~\ref{lem:increase_second_part}, which we reprove below in the context of this algorithm. We begin by proving that Lemma~\ref{lem:normxy} still holds in the context of Algorithm~\ref{alg:OfflineEmp}.
\lemNormXY*
\begin{proof}
We prove the lemma by induction on $i$. For $i = 0$, we have $\|\vz^{(0)}\|_\infty = \|\vzero\|_\infty = 0 = 1 - (1 - \eps)^0$ and $\|\comb{0}\|_\infty = \|\vy^{(0)}\|_\infty = m = 1 - (1 - \eps)^0(1 - m)$. Assume now that $\|\vz^{(i - 1)}\|_\infty \leq 1 - (1 - \eps)^{i - 1}$ and $\|\comb{i - 1}\|_\infty \leq 1 - (1 - \eps)^{i - 1}(1 - m)$ for some $i \in [\eps^{-1}]$, and let us prove the corresponding claim for $i$.

Observe that
\begin{align*}
	\|\vz^{(i)}\|_\infty
	={} &
	\|\vz^{(i - 1)} + \eps \cdot (\vb^{(i)} - \vc^{(i)})\|_\infty
	\leq
	\|\vz^{(i - 1)} + \eps \cdot \vb^{(i)}\|_\infty\\
	\leq{} &
	\|\vz^{(i - 1)} + \eps \cdot (\vone - \vz^{(i - 1)})\|_\infty
	\leq
	\|(1 - \eps) \cdot \vz^{(i - 1)}\|_\infty + \eps \cdot \|\vone\|_\infty
	=
	(1 - \eps) \cdot \|\vz^{(i - 1)}\|_\infty + \eps\\
	\leq{} &
	(1 - \eps) \cdot [1 - (1 - \eps)^{i - 1}] + \eps
	=
	1 - (1 - \eps)^i
	\enspace,
\end{align*}
where the first inequality holds since $\vc^{(i)}$ is non-negative, the second inequality holds since both linear programs of Algorithm~\ref{alg:OfflineEmp} require $\vb^{(i)}$ to be coordinate-wise upper bounded by $(\vone - \vz^{(i - 1)}) \odot (\vone - \va^{(i)}) \leq \vone - \vz^{(i - 1)}$, and the last inequality follows from the induction hypothesis.

Similarly,
\begin{align*}
	\|\comb{i}\|_\infty
	={} &
	\|((1 - \eps) \cdot \vy^{(i - 1)} + \eps \cdot \va^{(i)}) \psum (\vz^{(i - 1)} + \eps \cdot (\vb^{(i)} - \vc^{(i)}))\|_\infty\\
	\leq{} &
	\|((1 - \eps) \cdot \vy^{(i - 1)} + \eps \cdot \va^{(i)}) \psum (\vz^{(i - 1)} + \eps \cdot \vb^{(i)})\|_\infty\\
	\leq{} &
	\|((1 - \eps) \cdot \vy^{(i - 1)} + \eps \cdot \va^{(i)}) \psum (\vz^{(i - 1)} + \eps \cdot (\vone - \vz^{(i - 1)}) \odot (\vone - \va^{(i)}))\|_\infty\\
	\leq{} &
	\|(\vy^{(i - 1)} \psum (\eps \cdot \va^{(i)})) \psum (\vz^{(i - 1)} \psum (\eps(\vone - \va^{(i)})))\|_\infty
	\leq
	\|\vy^{(i - 1)} \psum \vz^{(i - 1)} \psum (\eps \cdot \vone)\|_\infty\\
	={} &
	\|\eps \cdot \vone\|_\infty + (1 - \eps) \cdot \|\vy^{(i - 1)} \psum \vz^{(i - 1)}\|_\infty\\
	\leq{} &
	\eps + (1 - \eps) \cdot [1 - (1 - \eps)^{i - 1}(1 - m)]
	=
	1 - (1 - \eps)^i(1 - m)
	\enspace.
	\qedhere
\end{align*}
\end{proof}

As the proof that Lemma~\ref{lem:basic_lower_bound} applies to Algorithm~\ref{alg:OfflineEmp} is somewhat long, we defer it a bit, and prove first that Lemma~\ref{lem:increase_second_part} applies in the context of this algorithm. To be completely honest, we prove here a modified version of Lemma~\ref{lem:increase_second_part} that has a slightly larger error term ($\eps^2 \beta D^2$ instead of $\eps^2 \beta D^2/2$). However, this affects the proof of Theorem~\ref{thm:Offline} only by changing the constants hidden by the big $O$ notation.
\toggletrue{AppendixEmp}
\lemIncreaseSecondPart*
\begin{proof}
By the chain rule,
\begin{align} \label{eq:chain_rule_emp}
	F(\comb{i}\mspace{-30mu}&\mspace{30mu}) - F(\comb{i - 1})
	=
	\int_0^\eps \frac{dF(\vy^{(i - 1)} \psum (\vz^{(i - 1)} + \tau \cdot (\vb^{(i)} - \vc^{(i)})))}{d\tau} d\tau\\\nonumber
	={} &
	\int_0^\eps \inner{(\vb^{(i)} - \vc^{(i)}) \odot (\vone - \vy^{(i - 1)})}{\nabla F(\vy^{(i - 1)} \psum (\vz^{(i - 1)} + \tau \cdot (\vb^{(i)} - \vc^{(i)})))} d\tau
	\enspace.
\end{align}
We would like to lower bound the integrand on the rightmost side of the last equality. We do that by lower bounding two expressions whose sum is equal to this integrand. The first expression is
\begin{align*}
	&
	\langle (\vb^{(i)} - \vc^{(i)}) \odot (\vone - \vy^{(i - 1)}),\\&\mspace{180mu}\nabla F(\vy^{(i - 1)} \psum (\vz^{(i - 1)} + \tau \cdot (\vb^{(i)} - \vc^{(i)}))) - \nabla F(\comb{i-1})\rangle\\
	\geq{} &
	-\|(\vb^{(i)} - \vc^{(i)}) \odot (\vone - \vy^{(i - 1)})\|_2 \\&\mspace{180mu}\cdot \|\nabla F(\vy^{(i - 1)} \psum (\vz^{(i - 1)} + \tau \cdot (\vb^{(i)} - \vc^{(i)}))) - \nabla F(\comb{i-1})\|_2\\
	\geq{} &
	-\tau\beta\|(\vb^{(i)} - \vc^{(i)}) \odot (\vone - \vy^{(i - 1)})\|_2^2
	\geq
	-2\tau\beta D^2
	\enspace,
\end{align*}
where the first inequality holds by the Cauchy–Schwarz inequality, the second inequality holds due to the $\beta$-smoothness of $F$, and the last inequality holds since
\begin{align*}
	\|(\vb^{(i)}&{} - \vc^{(i)}) \odot (\vone - \vy^{(i - 1)})\|_2
	\leq
	\|\vb^{(i)} \odot (\vone - \vy^{(i - 1)})\|_2 + \|\vc^{(i)} \odot (\vone - \vy^{(i - 1)})\|_2\\
	={} &
	\|(\vy^{(i - 1)} + \vb^{(i)} \odot (\vone - \vy^{(i - 1)})) - \vy^{(i - 1)}\|_2 + \|(\vy^{(i - 1)} + \vc^{(i)} \odot (\vone - \vy^{(i - 1)})) - \vy^{(i - 1)}\|_2
	\leq
	2D
	\enspace.
\end{align*}
Note that the last inequality holds because the fact that $\vc^{(i)} \leq \vz^{(i - 1)} \in \DM$ implies that $\vy^{(i - 1)} + \vb^{(i)} \odot (\vone - \vy^{(i - 1)})$, $\vy^{(i - 1)} + \vc^{(i)} \odot (\vone - \vy^{(i - 1)})$ and $\vy^{(i - 1)}$ are all vectors of $(\NDM + \DM) \cap [0, 1]^n$. The second expression (of the two mentioned above) is
\begin{align*}
	\langle (\vb^{(i)} - \vc^{(i)}) \odot (\vone - \vy^{(i - 1)}), \nabla F(\vy^{(i - 1)} \psum{}& \vz^{(i - 1)})\rangle
	=
	\inner{\vb^{(i)} - \vc^{(i)}}{\nabla F(\vy^{(i - 1)} \psum \vz^{(i - 1)}) \odot (\vone - \vy^{(i - 1)})}\\
	\geq{} &
	\inner{\optptwo \odot (\vone - \vz^{(i - 1)})}{\nabla F(\vy^{(i - 1)} \psum \vz^{(i - 1)})\odot (\vone - \vy^{(i - 1)})}\\
	\geq{} &
	F(\optptwo \psum \comb{i - 1}) - F(\comb{i - 1})\\
	\geq{} &
	(1 - \eps)^{i-1}(1 - m) \cdot F(\optptwo) - F(\comb{i - 1})
	\enspace,
\end{align*}
where the first inequality holds since the second side of the above inequality is identical to the objective function of the second linear program of Algorithm~\ref{alg:OfflineEmp}, and one feasible solution for this linear program is $\vb^{(i)} = \optptwo \odot (\vone - \vz^{(i - 1)})$ and $\vc^{(i)} = \vzero$. The second inequality follows from Property~\eqref{prop:dr_bound2_up} of Lemma~\ref{lem:DR_properties}, and the last inequality follows from Corollary~\ref{cor:norm_bound} and Lemma~\ref{lem:normxy}. The lemma now follows by plugging the lower bounds we have proved into the rightmost side of Equality~\eqref{eq:chain_rule_emp}, and then solving the integral obtained.
\end{proof}

It remains to show that Lemma~\ref{lem:basic_lower_bound} applies to Algorithm~\ref{alg:OfflineEmp}. The next two lemmata are steps toward this goal.

\begin{lemma} \label{lem:comb_diff_bound}
For every $i \in [\eps^{-1} t_s]$,
\begin{align*}
	e^{2\eps i}\cdot F(&\comb{i}) - e^{2\eps (i - 1)}\cdot F(\comb{i - 1})\\
	\geq{} &
	2\eps e^{2\eps (i - 1)} \cdot F(\comb{i-1}) + \eps e^{2\eps i} \cdot \langle \nabla F(\comb{i-1}),  (\vone - \vy^{(i - 1)}) \odot (\vb^{(i)} - \vc^{(i)}) \\\nonumber&\mspace{150mu} + (\va^{(i)} - \vy^{(i - 1)}) \odot (\vone - \vz^{(i - 1)} - \eps \cdot (\vb^{(i)} - \vc^{(i)})) \rangle -56\eps^2\beta D^2
	\enspace.
\end{align*}
\end{lemma}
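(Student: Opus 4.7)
My plan is to mirror the first half of the proof of Lemma~\ref{lem:basic_lower_bound} (equivalently, the first part of the proof of Lemma~\ref{lem:offline}), adapting the calculation to account for the new update rules of Algorithm~\ref{alg:OfflineEmp}: $\vy^{(i)} = (1-\eps)\vy^{(i-1)} + \eps\va^{(i)}$ (unchanged) and $\vz^{(i)} = \vz^{(i-1)} + \eps(\vb^{(i)} - \vc^{(i)})$ (additive rather than multiplicative). First, I would split
\[
e^{2\eps i}F(\comb{i}) - e^{2\eps(i-1)}F(\comb{i-1}) = (e^{2\eps i} - e^{2\eps(i-1)})\cdot F(\comb{i-1}) + e^{2\eps i}\cdot [F(\comb{i}) - F(\comb{i-1})],
\]
and bound the first summand by $2\eps e^{2\eps(i-1)}\cdot F(\comb{i-1})$ using $e^{2\eps} \geq 1 + 2\eps$ and the non-negativity of $F$.

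The bulk of the argument handles the second summand via the chain rule. I would define the interpolants $\vy(\tau) \triangleq (1-\tau)\vy^{(i-1)} + \tau\va^{(i)}$ and $\vz(\tau) \triangleq \vz^{(i-1)} + \tau(\vb^{(i)} - \vc^{(i)})$ for $\tau\in[0,\eps]$, so that
\[
F(\comb{i}) - F(\comb{i-1}) = \int_0^\eps \langle \nabla F(\vy(\tau)\psum\vz(\tau)), h(\tau)\rangle\, d\tau.
\]
A direct computation of $h(\tau) = \tfrac{d}{d\tau}[\vy(\tau)\psum\vz(\tau)]$, using $\vone - \vy(\tau) = \vone - \vy^{(i-1)} - \tau(\va^{(i)} - \vy^{(i-1)})$ and $\vone - \vz(\tau) = \vone - \vz^{(i-1)} - \tau(\vb^{(i)} - \vc^{(i)})$, yields (after collecting terms)
\[
h(\tau) = (\vb^{(i)} - \vc^{(i)}) \odot (\vone - \vy^{(i-1)}) + (\va^{(i)} - \vy^{(i-1)}) \odot (\vone - \vz^{(i-1)} - 2\tau(\vb^{(i)} - \vc^{(i)})).
\]
Integrating $h(\tau)$ from $0$ to $\eps$ produces exactly the inner-product expression appearing in the lemma, with the factor $\vone - \vz^{(i-1)} - \eps(\vb^{(i)} - \vc^{(i)})$ emerging from $\int_0^\eps (\vone - 2\tau(\vb^{(i)} - \vc^{(i)}))\,d\tau = \eps(\vone - \eps(\vb^{(i)}-\vc^{(i)}))$.

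Replacing $\nabla F(\vy(\tau)\psum\vz(\tau))$ by $\nabla F(\comb{i-1})$ inside the integral introduces an error that, by Cauchy--Schwarz and $\beta$-smoothness, is at most $\beta\cdot\|\vy(\tau)\psum\vz(\tau) - \comb{i-1}\|_2 \cdot \|h(\tau)\|_2$. To bound these norms by $O(D)$, I would use that $\va^{(i)}, \vy^{(i-1)} \in \NDM$ are both vectors of $(\NDM + \DM)\cap[0,1]^n$, and that, as in the proof of Lemma~\ref{lem:z_value}, the observation that $\vy^{(0)} + (1-m)\vb^{(i)}$ lies in $(\NDM + \DM)\cap[0,1]^n$ (and likewise for $\vc^{(i)} \leq \vz^{(i-1)}\in \DM$) yields $\|\vb^{(i)}\|_2, \|\vc^{(i)}\|_2 \leq D/(1-m)$. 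This gives $\|h(\tau)\|_2, \|\vy(\tau)\psum\vz(\tau) - \comb{i-1}\|_2 = O(D)$ (with the latter also containing an extra factor of $\tau$). Integrating $\tau$ from $0$ to $\eps$ contributes $\eps^2/2$, so the total error term is $O(\eps^2 \beta D^2)$, which with careful accounting comes out to at most $56\eps^2\beta D^2$.

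\textbf{Main obstacle.} The only non-routine part is the bookkeeping required to verify the explicit constant $56$: since both $\vb^{(i)}$ and $\vc^{(i)}$ contribute to $h(\tau)$ (whereas only $\vb^{(i)}$ appears in the analogous calculation of Lemma~\ref{lem:basic_lower_bound}), and since $\vz(\tau)$ no longer factors as $(\vone - \vz^{(i-1)})\odot(\text{something})$, the intermediate $\ell_2$-norm bounds must be retraced carefully to ensure the constant in front of $\eps^2\beta D^2$ is not worse than claimed. This is tedious but essentially mechanical, since the lemma allows for a fairly generous constant.
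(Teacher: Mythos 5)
Your overall structure is correct and mirrors the paper's argument: the split $e^{2\eps i}F(\comb{i}) - e^{2\eps(i-1)}F(\comb{i-1}) = [e^{2\eps i}-e^{2\eps(i-1)}]F(\comb{i-1}) + e^{2\eps i}[F(\comb{i}) - F(\comb{i-1})]$, the chain-rule integral with the interpolants $\vy(\tau)$, $\vz(\tau)$, the correct computation of $h(\tau)$, the exact integration $\int_0^\eps(\vone - \vz^{(i-1)} - 2\tau(\vb^{(i)}-\vc^{(i)}))\,d\tau = \eps(\vone - \vz^{(i-1)} - \eps(\vb^{(i)}-\vc^{(i)}))$, and the Cauchy--Schwarz plus $\beta$-smoothness bound on the gradient discrepancy are all exactly right.

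However, there is a genuine gap in how you bound the $\ell_2$-norms. You propose to use $\|\vb^{(i)}\|_2, \|\vc^{(i)}\|_2 \leq D/(1-m)$ (via the Lemma~\ref{lem:z_value}-style argument with $\vy^{(0)} + (1-m)\vb^{(i)}$), and then conclude $\|h(\tau)\|_2, \|\vy(\tau)\psum\vz(\tau) - \comb{i-1}\|_2 = O(D)$. That inference does not follow: from $\|\vb^{(i)}\|_2 \leq D/(1-m)$ and $\|\vone - \vy^{(i-1)}\|_\infty \leq 1$, all you get for the term $(\vone - \vy^{(i-1)})\odot(\vb^{(i)}-\vc^{(i)})$ is $O(D/(1-m))$, so the error in the integral would be $O(\eps^2\beta D^2/(1-m)^2)$ rather than the claimed $-56\eps^2\beta D^2$. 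Since Lemma~\ref{lem:comb_diff_bound} is combined downstream with $(1-m)(1-\eps)$ times Lemma~\ref{lem:z_guarantee_emp} (which already carries a $(1-m)^{-2}$ factor), the extra $(1-m)^{-2}$ here would propagate into a $(1-m)^{-2}$ in the Appendix version of Lemma~\ref{lem:basic_lower_bound}, ultimately degrading the $O(\eps\beta D^2/(1-m))$ error term in Theorem~\ref{thm:Offline} to $O(\eps\beta D^2/(1-m)^2)$.

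The fix, which is what the paper does, is to never bound $\|\vb^{(i)}\|_2$ or $\|\vc^{(i)}\|_2$ in isolation, but always in the product form $\|(\vone - \vy^{(i-1)})\odot\vb^{(i)}\|_2 = \|(\vy^{(i-1)} + (\vone - \vy^{(i-1)})\odot\vb^{(i)}) - \vy^{(i-1)}\|_2 \leq D$, which holds because $\vy^{(i-1)}$ and $\vy^{(i-1)} \psum \vb^{(i)}$ are both points of $(\NDM + \DM) \cap [0,1]^n$ (using $\vy^{(i-1)} \in \NDM$, $(\vone - \vy^{(i-1)})\odot\vb^{(i)} \in \DM$ by down-closeness, and the LP constraint $\vb^{(i)} \leq (\vone - \vz^{(i-1)})\odot(\vone - \va^{(i)})$ which keeps everything in $[0,1]^n$); the same holds for $\vc^{(i)}$ using $\vc^{(i)} \leq \vz^{(i-1)} \in \DM$. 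With these tighter bounds the norms $\|h(\tau)\|_2$ and $\|\vy(\tau)\psum\vz(\tau) - \comb{i-1}\|_2$ come out as $\leq 5D$ and $\leq 3\tau D$ respectively, giving the clean $-56\eps^2\beta D^2$ (after the $e^{2\eps i}\leq e^2$ factor and $\int_0^\eps \tau\,d\tau=\eps^2/2$) with no $(1-m)$ dependence.
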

\begin{proof}
By the chain rule,
\begin{align} \label{eq:increase_comb_emp}
F(\comb{i})&{}-F(\comb{i-1})\\\nonumber
={} &
F((\term{1-\eps}\vy^{(i-1)}+\eps\va^{(i)}) \psum (\vz^{(i-1)}+\eps(\vb^{(i)} - \vc^{(i)})))-F(\comb{i-1})\\\nonumber
={} &
\int_0^\eps \Big\langle \frac{d[((1-\tau)\vy^{(i-1)}+\tau\va^{(i)}) \psum (\vz^{(i-1)}+\tau(\vb^{(i)} - \vc^{(i)}))]}{d\tau}, \\\nonumber&\mspace{150mu} \nabla F(((1-\tau)\vy^{(i-1)}+\tau\va^{(i)}) \psum (\vz^{(i-1)}+\tau(\vb^{(i)} - \vc^{(i)})))\Big\rangle d\tau\\\nonumber
={} &
\int_0^\eps \langle(\vone - \vy^{(i - 1)}) \odot (\vb^{(i)} - \vc^{(i)}) + (\va^{(i)} - \vy^{(i - 1)}) \odot (\vone - \vz^{(i - 1)} - 2\tau \cdot (\vb^{(i)} - \vc^{(i)})), \\[-2mm]\nonumber&\mspace{150mu} \nabla F(((1-\tau)\vy^{(i-1)}+\tau\va^{(i)}) \psum (\vz^{(i-1)}+\tau(\vb^{(i)} - \vc^{(i)})))\rangle d\tau
\enspace.
\end{align}
The following inequality is used later to lower bound the integrand on the rightmost side of the last equality.
\begin{align} \label{eq:part1_emp}
&
\langle(\vone - \vy^{(i - 1)}) \odot (\vb^{(i)} - \vc^{(i)}) + (\va^{(i)} - \vy^{(i - 1)}) \odot (\vone - \vz^{(i - 1)} - 2\tau \cdot (\vb^{(i)} - \vc^{(i)})), \\&\mspace{100mu}\nonumber \nabla F(((1-\tau)\vy^{(i-1)}+\tau\va^{(i)}) \psum (\vz^{(i-1)}+\tau(\vb^{(i)} - \vc^{(i)}))) - \nabla F(\comb{i - 1})\rangle\\\nonumber
\geq{}&
-\|(\vone - \vy^{(i - 1)}) \odot (\vb^{(i)} - \vc^{(i)}) + (\va^{(i)} - \vy^{(i - 1)}) \odot (\vone - \vz^{(i - 1)} - 2\tau \cdot (\vb^{(i)} - \vc^{(i)}))\|_2 \cdot \\&\mspace{100mu}\nonumber \|\nabla F(((1-\tau)\vy^{(i-1)}+\tau\va^{(i)}) \psum (\vz^{(i-1)}+\tau(\vb^{(i)} - \vc^{(i)}))) - \nabla F(\comb{i - 1})\|_2\\\nonumber
\geq{} &
-5D \cdot \beta \|((1-\tau)\vy^{(i-1)}+\tau\va^{(i)}) \psum (\vz^{(i-1)}+\tau(\vb^{(i)} - \vc^{(i)})) - \comb{i - 1}\|_2\\\nonumber
={} &
-5\beta D \cdot \|\tau(\va^{(i)} - \vy^{(i - 1)}) \odot (\vone - \vz^{(i - 1)}) + \tau (\vb^{(i)} - \vc^{(i)}) \odot (\vone - \vy^{(i - 1)}) \\\nonumber&\mspace{300mu}- \tau^2(\va^{(i)} - \vy^{(i - 1)}) \odot (\vb^{(i)} - \vc^{(i)}) \|_2
\geq
-15\tau \beta D^2
\enspace,
\end{align}
where the first inequality follows from the Cauchy-Schwarz inequality, and the second inequality uses the $\beta$-smoothness of $F$ and the observation that since $\vy^{(i)} + (\vone - \vy^{(i - 1)}) \odot \vb^{(i)}$, $\vy^{(i)} + (\vone - \vy^{(i - 1)}) \odot \vc^{(i)}$, $\va^{(i)}$ and $\vy^{(i - 1)}$ are all vectors in $(\NDM + \DM) \cap [0, 1]^n$ (because $\vc^{(i)} \leq \vz^{(i - 1)}$ and $\DM$ is down-closed), it holds that
\begin{align*}
	\|(\vone - \vy^{(i - 1)}&) \odot (\vb^{(i)} - \vc^{(i)}) + (\va^{(i)} - \vy^{(i - 1)}) \odot (\vone - \vz^{(i - 1)} - 2\tau \cdot (\vb^{(i)} - \vc^{(i)}))\|_2\\
	\leq{} &
	\|(\vy^{(i-1)} + (\vone - \vy^{(i - 1)}) \odot \vb^{(i)}) - \vy^{(i - 1)}\|_2 + \|(\vy^{(i-1)} + (\vone - \vy^{(i - 1)}) \odot \vc^{(i)}) - \vy^{(i - 1)}\|_2 \\& - \|\va^{(i)} - \vy^{(i - 1)}\|_2 \odot \|\vone - \vz^{(i - 1)} - 2\tau \cdot (\vb^{(i)} - \vc^{(i)})\|_\infty
	\leq
	5D
	\enspace.
\end{align*}
Similarly, the last inequality of Inequality~\eqref{eq:part1_emp} holds since
\begin{align*}
	\|\tau(\va^{(i)} - {}&\vy^{(i - 1)}) \odot (\vone - \vz^{(i - 1)}) + \tau (\vb^{(i)} - \vc^{(i)}) \odot (\vone - \vy^{(i - 1)}) - \tau^2(\va^{(i)} - \vy^{(i - 1)}) \odot (\vb^{(i)} - \vc^{(i)}) \|_2\\
	={} &
	\|\tau(\va^{(i)} - \vy^{(i - 1)}) \odot (\vone - \vz^{(i - 1)}) \odot (1 - \tau (\vb^{(i)} - \vc^{(i)})) + \tau (\vb^{(i)} - \vc^{(i)}) \odot (\vone - \vy^{(i - 1)})\|_2\\
	\leq{} &
	\|\va^{(i)} - \vy^{(i - 1)}\|_2 + \|(\vy^{(i - 1)} + (\vone-\vy^{(i-1)}) \odot \vb^{(i)}) - \vy^{(i - 1)}\|_2 \\&\mspace{150mu}+ \|(\vy^{(i - 1)} + (\vone-\vy^{(i-1)}) \odot \vc^{(i)}) - \vy^{(i - 1)}\|_2
	\leq
	3D
	\enspace,
\end{align*}
where the penultimate inequality uses the fact that $\|(\vone - \vz^{(i - 1)}) \odot (\vone + \vc^{(i)})\|_\infty \leq \|(\vone - \vz^{(i - 1)}) \odot (\vone + \vz^{(i-1)})\|_\infty \leq 1$.

Combining Inequalities~\eqref{eq:increase_comb_emp} and~\eqref{eq:part1_emp} now yields
\begin{align*}
	e^{2\eps i}\cdot {}&F(\comb{i}) - e^{2\eps (i - 1)}\cdot F(\comb{i - 1})\\\nonumber
	={} &
	[e^{2\eps i} - e^{2\eps(i - 1)}] \cdot F(\comb{i-1}) + e^{2\eps i} \cdot [F(\comb{i}) - F(\comb{i - 1})]\\\nonumber
	\geq{} &
	\int_0^\eps \{2e^{2(\eps (i - 1) + \tau)} \cdot F(\comb{i-1}) + e^{2\eps i} \cdot \langle \nabla F(\comb{i-1}), (\vone - \vy^{(i - 1)}) \odot (\vb^{(i)} - \vc^{(i)}) \\[-2mm]\nonumber&\mspace{150mu} + (\va^{(i)} - \vy^{(i - 1)}) \odot (\vone - \vz^{(i - 1)} - 2\tau \cdot (\vb^{(i)} - \vc^{(i)})) \rangle -111\tau\beta D^2\}d\tau\\\nonumber
	\geq{} &
	2\eps e^{2\eps (i - 1)} \cdot F(\comb{i-1}) + \eps e^{2\eps i} \cdot \langle \nabla F(\comb{i-1}),  (\vone - \vy^{(i - 1)}) \odot (\vb^{(i)} - \vc^{(i)}) \\\nonumber&\mspace{150mu} + (\va^{(i)} - \vy^{(i - 1)}) \odot (\vone - \vz^{(i - 1)} - \eps \cdot (\vb^{(i)} - \vc^{(i)})) \rangle -56\eps^2\beta D^2
	\enspace,
\end{align*}
where the second inequality uses the non-negativity of $F$.
\end{proof}

\begin{lemma} \label{lem:z_guarantee_emp}
For every $i \in [\eps^{-1} t_s]$,
\begin{align*}
	e^{\eps i}(&t_s-\eps i)\cdot F(\vz^{(i)}) - e^{\eps (i - 1)}(t_s-\eps (i - 1))\cdot F(\vz^{(i - 1)})\\
	\geq{} &
	-\eps e^{\eps i}(1 - t_s + \eps i) \cdot F(\vz^{(i - 1)}) + \eps e^{\eps i}(t_s-\eps i)\cdot \inner{\vb^{(i)} - \vc^{(i)}}{\nabla F(\vz^{(i - 1)})} - 6\eps^2\beta D^2 / (1 - m)^2
	\enspace.
\end{align*}
\end{lemma}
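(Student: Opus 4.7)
The plan is to mimic the derivation of Inequality~\eqref{eq:second_diff} in the proof of Lemma~\ref{lem:basic_lower_bound}, adapted to the new update rule $\vz^{(i)} = \vz^{(i-1)} + \eps (\vb^{(i)} - \vc^{(i)})$ used in Algorithm~\ref{alg:OfflineEmp}. I would begin by splitting the left-hand side into two terms:
\[
	e^{\eps i}(t_s-\eps i)\cdot F(\vz^{(i)}) - e^{\eps (i - 1)}(t_s-\eps (i - 1))\cdot F(\vz^{(i - 1)}) = A + B,
\]
where $A = [e^{\eps i}(t_s-\eps i) - e^{\eps (i - 1)}(t_s-\eps (i - 1))] \cdot F(\vz^{(i - 1)})$ and $B = e^{\eps i}(t_s-\eps i)\cdot [F(\vz^{(i)}) - F(\vz^{(i - 1)})]$.

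For $A$, I would write the bracketed coefficient as $\int_0^\eps e^{\eps (i - 1) + \tau}(t_s - 1 - \eps (i - 1) - \tau)\,d\tau$, which is exactly the calculation already performed inside Inequality~\eqref{eq:second_diff}, and then use the fact that this integrand is non-positive and decreasing in $\tau$ (since $t_s \leq 1$) to lower bound it by its value at $\tau = \eps$, giving $A \geq -\eps e^{\eps i}(1 - t_s + \eps i) \cdot F(\vz^{(i - 1)})$.

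For $B$, I would apply the chain rule to the line segment from $\vz^{(i-1)}$ to $\vz^{(i)}$, which has constant direction $\vb^{(i)} - \vc^{(i)}$, and then add and subtract $\nabla F(\vz^{(i-1)})$ inside the resulting integral. Using Cauchy--Schwarz and $\beta$-smoothness, the resulting discretization error is at most $\tfrac{\eps^2\beta}{2}\|\vb^{(i)} - \vc^{(i)}\|_2^2$ in absolute value, so that $F(\vz^{(i)}) - F(\vz^{(i - 1)}) \geq \eps \inner{\vb^{(i)} - \vc^{(i)}}{\nabla F(\vz^{(i-1)})} - \tfrac{\eps^2 \beta}{2}\|\vb^{(i)} - \vc^{(i)}\|_2^2$. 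Multiplying by $e^{\eps i}(t_s-\eps i)$ produces the linear term that appears in the lemma statement; it remains only to bound the error.

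The one piece that is not a direct copy of the earlier argument is the norm bound on $\vb^{(i)} - \vc^{(i)}$; this is the main (mild) obstacle. I would argue: $\vb^{(i)} \in \DM$ by the constraints of either linear program, and $\vc^{(i)} \in \DM$ as well, because $\vc^{(i)} \leq \vz^{(i-1)} \in \DM$ (by Lemma~\ref{lem:membership_emp}) and $\DM$ is down-closed. For any $\vx \in \DM$, both $\vy^{(0)}$ and $\vy^{(0)} + (1-m)\vx$ lie in $(\NDM + \DM)\cap [0,1]^n$ (the second inclusion holds because $\|\vy^{(0)}\|_\infty = m$ and $\vx \leq \vone$), hence $\|\vx\|_2 \leq D/(1-m)$. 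Applying this to both $\vb^{(i)}$ and $\vc^{(i)}$ and using the triangle inequality yields $\|\vb^{(i)}-\vc^{(i)}\|_2 \leq 2D/(1-m)$, so $\|\vb^{(i)} - \vc^{(i)}\|_2^2 \leq 4D^2/(1-m)^2$. Combining with $e^{\eps i}(t_s - \eps i) \leq e^{t_s} t_s \leq e < 3$ for $i \leq \eps^{-1}t_s$ and $t_s \leq 1$, the total discretization error is at most $\tfrac{\eps^2\beta}{2}\cdot 3 \cdot \tfrac{4D^2}{(1-m)^2} = 6\eps^2\beta D^2/(1-m)^2$, which is exactly the constant claimed. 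Adding the bounds for $A$ and $B$ completes the proof.
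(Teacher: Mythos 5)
Your proof is correct and takes essentially the same approach as the paper's: split the left-hand side into a coefficient-difference term $A$ and a function-difference term $B$, lower bound $A$ via the integral representation of the coefficient difference and the non-negativity of $F$, lower bound $B$ via the chain rule, Cauchy--Schwarz, and $\beta$-smoothness, and bound $\|\vb^{(i)} - \vc^{(i)}\|_2 \leq 2D/(1-m)$ using that both $\vb^{(i)}$ and $\vc^{(i)}$ lie in $\DM$ (the latter by down-closedness since $\vc^{(i)} \leq \vz^{(i-1)} \in \DM$) together with the norm argument from the proof of Lemma~\ref{lem:z_value}.
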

\begin{proof}
By the chain rule,
\begin{align*}
	F(&\vz^{(i)}) - F(\vz^{(i - 1)})
	=
	\int_0^\eps \inner{\vb^{(i)} - \vc^{(i)}}{\nabla F(\vz^{(i - 1)} + \tau (\vb^{(i)} - \vc^{(i)}))} d\tau\\
	={} &
	\eps \cdot \inner{\vb^{(i)} - \vc^{(i)}}{\nabla F(\vz^{(i - 1)})} + \int_0^\eps \inner{\vb^{(i)} - \vc^{(i)}}{\nabla F(\vz^{(i - 1)} + \tau (\vb^{(i)} - \vc^{(i)})) - \nabla F(\vz^{(i - 1)})} d\tau\\
	\geq{} &
	\eps \cdot \inner{\vb^{(i)} - \vc^{(i)}}{\nabla F(\vz^{(i - 1)})} - \int_0^\eps \|\vb^{(i)} - \vc^{(i)}\|_2 \cdot \|\nabla F(\vz^{(i - 1)} + \tau (\vb^{(i)} - \vc^{(i)})) - \nabla F(\vz^{(i - 1)})\|_2 d\tau\\
	\geq{} &
	\eps \cdot \inner{\vb^{(i)} - \vc^{(i)}}{\nabla F(\vz^{(i - 1)})} - \beta \|\vb^{(i)} - \vc^{(i)}\|_2^2 \cdot \int_0^\eps \tau d\tau
	\geq
	\eps \cdot \inner{\vb^{(i)} - \vc^{(i)}}{\nabla F(\vz^{(i - 1)})} - \frac{2\eps^2 \beta D^2}{(1 - m)^2}
	\enspace,
\end{align*}
where the first inequality follows from the Cauchy–Schwarz inequality, the second inequality holds by the $\beta$-smoothness of $F$, and the last inequality uses the fact that both $\vb^{(i)}$ and $\vc^{(i)}$ are vectors in $\DM$ (in the case of $\vc^{(i)}$ this holds by the down-monotonicity of $\DM$ since $\vc^{(i)} \leq \vz^{(i - 1)}$), and thus, the $\ell_2$ norm of both these vectors is at most $D / (1 - m)$ according to the proof of Lemma~\ref{lem:z_value}.

Using the last inequality, we now get
\begin{align*}
	e^{\eps i}(&t_s-\eps i)\cdot F(\vz^{(i)}) - e^{\eps (i - 1)}(t_s-\eps (i - 1))\cdot F(\vz^{(i - 1)})\\\nonumber
	={} &
	[e^{\eps i}(t_s-\eps i) - e^{\eps (i - 1)}(t_s-\eps (i - 1))] \cdot F(\vz^{(i - 1)}) + e^{\eps i}(t_s-\eps i)\cdot [F(\vz^{(i)}) - F(\vz^{(i - 1)})]\\\nonumber
	\geq{} &
	\int_0^\eps e^{\eps (i - 1) + \tau}(t_s - 1 - \eps (i - 1) - \tau) \cdot F(\vz^{(i - 1)})d\tau + \eps e^{\eps i}(t_s-\eps i)\cdot \inner{\vb^{(i)} - \vc^{(i)}}{\nabla F(\vz^{(i - 1)})} \\[-2mm]\nonumber&\mspace{550mu}- 6\eps^2\beta D^2 / (1 - m)^2\\\nonumber
	\geq{} &
	-\eps e^{\eps i}(1 - t_s + \eps i) \cdot F(\vz^{(i - 1)}) + \eps e^{\eps i}(t_s-\eps i)\cdot \inner{\vb^{(i)} - \vc^{(i)}}{\nabla F(\vz^{(i - 1)})} - 6\eps^2\beta D^2 / (1 - m)^2
	\enspace,
\end{align*}
where  the second inequality uses $F$'s non-negativity. 
\end{proof}

We are now ready to prove Lemma~\ref{lem:basic_lower_bound} in the context of Algorithm~\ref{alg:OfflineEmp}. Like in the case of Lemma~\ref{lem:increase_second_part}, the version of Lemma~\ref{lem:basic_lower_bound} that we prove here has slightly worse error terms compared to the version from Section~\ref{ssc:unknown_optp}, but the difference again affects the proof of Theorem~\ref{thm:Offline} only by changing the constants hidden by the big $O$ notation as long as $\eps \leq 1/60$.

\lemBasicLowerBound*
\begin{proof}
Adding $(1 - m)(1 - \eps)$ times the guarantee of Lemma~\ref{lem:z_guarantee_emp} to the guarantee of Lemma~\ref{lem:comb_diff_bound}, we get
\begin{align} \label{eq:diff_first_bound_emp}
	\eps^{-1}&e^{2t_s}[\phi(i) - \phi(i - 1)]\\\nonumber
	\geq{}&
	2 e^{2\eps (i - 1)} \cdot F(\comb{i-1}) - (1 - m)(1 - \eps) \cdot e^{\eps i}(1 - t_s + \eps i) \cdot F(\vz^{(i - 1)}) - \tfrac{62\eps\beta D^2}{1 - m}\\ \nonumber
	&+ (1 \mspace{-1mu}-\mspace{-1mu} \eps) e^{2\eps i} \cdot \langle \nabla F(\comb{i-1}), (\vone \mspace{-1mu}-\mspace{-1mu} \vy^{(i - 1)}) \odot (\vb^{(i)} \mspace{-1mu}-\mspace{-1mu} \vc^{(i - 1)}) + (\va^{(i)} \mspace{-1mu}-\mspace{-1mu} \vy^{(i - 1)}) \odot (\vone \mspace{-1mu}-\mspace{-1mu} \vz^{(i - 1)}) \rangle \\\nonumber&+  (1 - m)(1 - \eps) \cdot e^{\eps i}(t_s-\eps i)\cdot \inner{\vb^{(i)} - \vc^{(i)}}{\nabla F(\vz^{(i - 1)})} \\\nonumber&+ \eps e^{2\eps i} \cdot \langle \nabla F(\comb{i-1}), (\va^{(i)} - \vy^{(i - 1)}) \odot (\vone - \vz^{(i - 1)} - (\vb^{(i)} - \vc^{(i)})) \rangle
	\enspace.
\end{align}

To bound the last term on the rightmost side of the last inequality, we need to make a few observations. First, $\vone - \vz^{(i - 1)} - \vb^{(i)} + \vc^{(i)} \in [0, 1]^n$ because $\vc^{(i)} \leq \vz^{(i - 1)}$ and $\vb^{(i)} \leq \vone - \vz^{(i - 1)}$; second, $\comb{i - 1} \geq \vy^{(i - 1)} \geq \vy^{(i - 1)} \odot (\vone - \va^{(i)}) \odot (\vone - \vz^{(i - 1)} - \vb^{(i)} + \vc^{(i)})$; and finally, by Lemma~\ref{lem:normxy}, $\|\vz^{i - 1} + \va^{(i)} \odot \vc^{(i)}/2\|_\infty \leq \|\vz^{i - 1} + \vc^{(i)}/2\|_\infty \leq (3/2) \cdot \|\vz^{(i - 1)}\|_\infty \leq (3/2) \cdot (1 - 1/e) < 1$. Given all these observations, we can use Properties~\ref{prop:dr_bound2_up} and~\ref{prop:dr_bound2_down} of Lemma~\ref{lem:DR_properties} to get
\begin{align*}
	\langle \nabla F(&\comb{i-1}), (\va^{(i)} - \vy^{(i - 1)}) \odot (\vone - \vz^{(i - 1)} - (\vb^{(i)} - \vc^{(i)})) \rangle\\
	={} &
	\langle \nabla F(\comb{i-1}), \va^{(i)} \odot (\vone - \vy^{(i - 1)}) \odot (\vone - \vz^{(i - 1)} - \vb^{(i)}) \rangle \\& +2 \cdot \langle \nabla F(\comb{i-1}), \va^{(i)} \odot (\vone - \vy^{(i - 1)}) \odot \vc^{(i)}/2 \rangle \\&- \langle \nabla F(\comb{i-1}), \vy^{(i - 1)} \odot (\vone - \va^{(i)}) \odot (\vone - \vz^{(i - 1)} - \vb^{(i)} + \vc^{(i)}) \rangle\\
	\geq{} &
	F(\vy^{(i - 1)} \psum (\va^{(i)} \odot (\vone - \vb^{(i)}) + (\vone - \va^{(i)}) \odot \vz^{(i - 1)})) + 2F(\vy^{(i - 1)} \psum (\vz^{(i - 1)} + \va^{(i)}\odot \vc^{(i - 1)}/2)) \\&+ F(\comb{i - 1} - \vy^{(i - 1)} \odot (\vone - \va^{(i)}) \odot (\vone - \vz^{(i - 1)} - \vb^{(i)} + \vc^{(i)}))\\&- 4 \cdot F(\comb{i - 1})
	\geq
	- 4 \cdot F(\comb{i - 1})
	\enspace,
\end{align*}
where the last inequality holds by the non-negativity of $F$. Plugging this inequality into Inequality~\eqref{eq:diff_first_bound_emp} yields
\begin{align*}
	\eps^{-1}&e^{2t_s}[\phi(i) - \phi(i - 1)]\\\nonumber
	\geq{}&
	2 e^{2\eps (i - 1)} \cdot F(\comb{i-1}) - (1 - m)(1 - \eps) \cdot e^{\eps i}(1 - t_s + \eps i) \cdot F(\vz^{(i - 1)}) - \tfrac{62\eps\beta D^2}{1 - m}\\ \nonumber
	&+ (1 \mspace{-1mu}-\mspace{-1mu} \eps) e^{2\eps i} \cdot \langle \nabla F(\comb{i-1}), (\vone \mspace{-1mu}-\mspace{-1mu} \vy^{(i - 1)}) \odot (\vb^{(i)} \mspace{-1mu}-\mspace{-1mu} \vc^{(i - 1)}) + (\va^{(i)} \mspace{-1mu}-\mspace{-1mu} \vy^{(i - 1)}) \odot (\vone \mspace{-1mu}-\mspace{-1mu} \vz^{(i - 1)}) \\\nonumber&+  (1 - m)(1 - \eps) \cdot e^{\eps i}(t_s-\eps i)\cdot \inner{\vb^{(i)} - \vc^{(i)}}{\nabla F(\vz^{(i - 1)})}  - 30\eps\cdot F(\comb{i - 1})
	\enspace.
\end{align*}
The last two terms on the right hand side of the last inequality are related to the objective function of the first linear program of Algorithm~\ref{alg:OfflineEmp}. Specifically, to get from them to the objective function of this linear program, it is necessary to multiply by $(1 - \eps)^{-1}$, and then remove the additive term $e^{2\eps i} \cdot \langle \nabla F(\comb{i-1}), - \vy^{(i - 1)} \odot (\vone - \vz^{(i - 1)}) \rangle$, which does not depend on the variables $\va^{(i)}$, $\vb^{(i)}$ and $\vc^{(i)}$. Therefore, we can lower bound the right hand side of the last inequality by plugging in feasible solutions for the first linear program of Algorithm~\ref{alg:OfflineEmp}. Specifically, we plug in the solutions $(\va^{(i)}, \vb^{(i)}, \vc^{(i)}) = (\optq, (\vone - \vz^{(i - 1)}) \odot \optpone, \vzero)$ and $(\va^{(i)}, \vb^{(i)}, \vc^{(i)}) = (\vy^{(i - 1)}, \vzero, \vzero)$, which implies the two lower bounds stated in the lemma. Notice that both these solutions are guaranteed to be feasible since $\DM$ is down-closed and $(\vone - \vz^{(i - 1)}) \odot \optpone \leq (\vone - \vz^{(i - 1)}) \odot (\vone - \optq)$.
\end{proof}

\bibliographystyle{plain}
\bibliography{main}

\end{document}